\begin{document}

\title{Estimate Sequences for Stochastic Composite Optimization: Variance Reduction, Acceleration, and Robustness to Noise}

\author{\name Andrei Kulunchakov \email andrei.kulunchakov@inria.fr \\
       \name Julien Mairal \email  julien.mairal@inria.fr \\
       \addr 
      Univ. Grenoble Alpes, Inria, CNRS, Grenoble INP, LJK, 38000 Grenoble,
      France}

\editor{Sathiya Keerthi}

\maketitle

\begin{abstract}%   <- trailing '%' for backward compatibility of .sty file
In this paper, we propose a unified view of gradient-based algorithms for
stochastic convex composite optimization by extending the concept of estimate
sequence introduced by Nesterov. More precisely, we interpret a large class of stochastic
optimization methods as procedures that iteratively minimize a surrogate of
the objective, which 
covers the stochastic gradient descent method and variants of 
the incremental approaches SAGA, SVRG, and MISO/Finito/SDCA.
This point of view has several advantages: (i) we provide a simple generic
proof of convergence for all of the aforementioned methods; (ii) we naturally
obtain new algorithms with the same guarantees; (iii) we derive generic
strategies to make these algorithms robust to stochastic noise, which is useful
when data is corrupted by small random perturbations. Finally, we propose a new
accelerated stochastic gradient descent algorithm and
a new accelerated SVRG algorithm that is robust to stochastic noise.
\end{abstract}

\begin{keywords}
  convex optimization, variance reduction, stochastic optimization
\end{keywords}

\section{Introduction}
We consider convex composite optimization problems of the form
\begin{equation}
\min_{x \in \Real^p} \left\{ F(x) \defin f(x)  + \psi(x) \right\}, \label{eq:prob}
\end{equation}
where $f$ is convex and $L$-smooth\footnote{A function is $L$-smooth when it is differentiable and its derivative is Lipschitz continuous with constant~$L$.},
and we call $\mu$ its strong convexity modulus with respect to the Euclidean norm.\footnote{Then, $\mu=0$ means that the function is convex but not strongly convex.}
The function $\psi$ is convex lower
semi-continuous and is not assumed to be necessarily differentiable.
For instance, $\psi$ may be the $\ell_1$-norm, which is very popular in signal
processing and machine learning for its sparsity-inducing properties~\citep[see][and references therein]{mairal2014sparse}; $\psi$ may also be the
extended-valued indicator function of a convex set~$\Ccal$ that takes the value
$+\infty$ outside of~$\Ccal$ and $0$ inside such that the previous setting
encompasses constrained problems~\citep[see][]{hiriart_urruty_lemarechal_1993ii}.

More specifically, we focus on stochastic objective functions, which are of
utmost importance in machine learning, where $f$ is an expectation or a finite
sum of smooth functions
\begin{equation}
f(x) = \E_\xi\left[\tilde{f}(x,\xi)\right] \qquad \text{or} \qquad f(x) = \frac{1}{n} \sum_{i=1}^n f_i(x). 
\label{eq:f}
\end{equation}
On the left, $\xi$ is a random variable representing a data point drawn according
to some distribution and $\tilde{f}(x,\xi)$ measures the fit of some model
parameter~$x$ to the data point~$\xi$. Whereas the
explicit form of the data distribution is unknown, we assume that we can draw random i.i.d. samples $\xi_1, \xi_2, \ldots$
Either an infinite number of such samples are available and the
problem of interest is to minimize~(\ref{eq:prob}) with $f(x)=\E_\xi[\tilde{f}(x,\xi)]$, or
one has access to a finite training set only, leading to the finite-sum setting
on the right of~(\ref{eq:f}), called empirical risk~\citep{vapnik2013nature}.

While the finite-sum setting is obviously a particular case of expectation with
a discrete probability distribution, the \emph{deterministic} nature of the
resulting cost function drastically changes performance guarantees.
In particular, when
an algorithm is only allowed to access unbiased measurements of the objective function
and gradient---which we assume is the case when~$f$ is an expectation---it may
be shown that the worst-case convergence rate in expected function value cannot
be better than $O(1/k)$ in general, where $k$ is the number of
iterations~\cite{nemirovski,agarwal2009information}.  Such a sublinear rate of
convergence is notably achieved by stochastic gradient descent (SGD) algorithms or
their variants~\citep[see][]{bottou2018optimization}.

Even though this pessimistic result applies to the general stochastic case, linear
convergence rates can be obtained for the finite-sum setting~\cite{schmidt2017minimizing}.
Specifically, a large body of work in machine learning has
led to many randomized incremental approaches obtaining linear convergence rates, such as
SAG~\cite{schmidt2017minimizing}, SAGA~\cite{saga},
SVRG~\cite{johnson2013accelerating,proxsvrg}, SDCA~\cite{accsdca}, MISO~\cite{miso}, Katyusha~\cite{accsvrg}, MiG~\cite{MiG2018}, SARAH~\cite{sarah}, directly accelerated SAGA~\cite{zhou2018direct} or the method of~\citet{conjugategradient}.
For non-convex objectives, recent approaches have also improved known convergence rates for finding first-order stationary points~\cite{spider2018,paquette2018catalyst,scsg2017}, which is however beyond the scope of our paper.
These algorithms have about the same cost per-iteration as the stochastic
gradient descent method, since they access only a single or two gradients $\nabla f_i(x)$
at each iteration, and they may achieve lower computational complexity  than accelerated gradient descent
methods~\cite{nesterov1983,nesterov,nesterov2013gradient,fista} in expectation.
A common interpretation is to see these algorithms as performing SGD steps with an estimate of the full gradient that has lower variance~\cite{proxsvrg}.

In this paper, we are interested in providing a unified view of stochastic optimization
algorithms, but we also want to investigate their \emph{robustness} to random
perturbations. Specifically, we may consider objective functions with an explicit
finite-sum structure such as~(\ref{eq:f}) when only noisy
estimates of the gradients~$\nabla f_i(x)$ are available.
Such a setting may occur for various reasons. For instance, perturbations may be injected during training in order to achieve better
generalization on new test data~\cite{srivastava_dropout:_2014}, perform stable feature selection~\citep{meinshausen2010stability},
improve the model robustness~\cite{zheng2016improving}, or for privacy-aware learning~\citep{wainwright2012privacy}.

Each training point indexed by~$i$ is
corrupted by a random perturbation $\rho_i$ and the resulting function~$f$ may
be written as
\begin{equation}
f(x) =  \frac{1}{n} \sum_{i=1}^n f_i(x) ~~~~\text{with}~~~~ f_i(x) = \E_{\rho_i} \left[ \tilde{f}_i(x, \rho_i)\right], \label{eq:f2}
\end{equation}
with convex terms~$f_i(x)$ for each index~$i$.
Whereas~(\ref{eq:f2}) is a finite sum of functions, we now assume that one has now
only access to unbiased estimates of the gradients $\nabla f_i(x)$ due to the
stochastic nature of $f_i$. Then, all the aforementioned variance-reduction
methods do not apply anymore and the standard approach to address this problem
is to ignore the finite-sum structure and use SGD or
one of its variants. At each iteration, an estimate of the full gradient is
obtained by randomly drawing an index $\hati$ in $\{1,\ldots,n\}$ along with a
perturbation. Typically, the variance of the gradient estimate then
decomposes into two parts $\sigma^2=\sigma_s^2 + \tilde{\sigma}^2$, where
$\sigma_s^2$ is due to the random sampling of the index $\hati$ and
$\tilde{\sigma}^2$ is due to the random data perturbation. In such a context,
variance reduction consists of building gradient estimates with variance
$\tilde{\sigma}^2$, which is potentially much smaller than~$\sigma^2$.
The SAGA and SVRG methods were adapted for such a purpose
by~\citet{hofmann_variance_2015}, though the resulting algorithms have non-zero
asymptotic error; the MISO method was adapted by~\citet{smiso} at the cost of a
memory overhead of $O(np)$, whereas other variants of SAGA and SVRG were proposed
by~\citet{zheng2018lightweight} for linear models in machine learning.

The framework we adopt is that of estimate sequences introduced by~\citet{nesterov},
which consists of building iteratively a quadratic model of the objective.
Typically, estimate sequences may be used to analyze the convergence of existing
algorithms, but also to design new ones, in particular with acceleration.
Our construction is however slightly different than the original one since it
is based on stochastic estimates of the gradients, and some classical
properties of estimate sequences are satisfied only approximately.
We note that estimate sequences have been used before for stochastic optimization~\citep{Lu2015,devolder_2011,lin_pena2014}, but not for the same generic purpose as ours. 

Specifically, our paper makes to the following contributions:
\begin{itemize}
\item We revisit many stochastic optimization algorithms dealing with composite convex
problems; we consider variants of incremental methods such as SVRG,
SAGA, SDCA, or MISO. We provide a common convergence proof for these methods and show that they can be modified and become adaptive to the strong convexity constant $\mu$, when only a lower bound
is available.
\item We provide improvements to the previous algorithms by making them
robust to stochastic perturbations.
We analyze these approaches under a non-uniform sampling strategy $Q=\{q_1,\ldots,q_n\}$ where~$q_i$ is the probability of drawing example~$i$ at each iteration. Typically, when the $n$ gradients~$\nabla f_i$ have different Lipschitz constants~$L_i$, the uniform distribution $Q$ yields complexities that depend on $L_Q=\max_i L_i$, whereas a non-uniform $Q$ may yield $L_Q=\frac{1}{n}\sum_i L_i$.
For strongly convex problems, we propose approaches with the following worst-case iteration complexity for minimizing~(\ref{eq:f2})---that is, the number of iterations to guarantee $\E[F(x_k)-F^\star] \leq \varepsilon$---is upper bounded~by
\begin{equation*}
O\left( \left(n + \frac{L_Q}{\mu}\right)\log\left(\frac{F(x_0)-F^\star}{\varepsilon} \right)  \right) + O\left( \frac{ \rho_Q\tilde{\sigma}^2}{\mu\varepsilon} \right), %\label{eq:rate_sum}
\end{equation*}
      where $L_Q = \max_i L_i/(q_i n)$ and $\rho_Q = 1/(n \min q_i) \geq 1$ (note that $\rho_Q=1$ for uniform distributions).
The term on the left corresponds to the
complexity of the variance-reduction methods for a deterministic objective
without perturbation, and
$O(\tilde{\sigma}^2/\mu \varepsilon)$ is the optimal sublinear rate of convergence
for a stochastic optimization problem when the gradient estimates have
variance~$\tilde{\sigma}^2$. In contrast, a variant of stochastic gradient descent
for composite optimization applied to~(\ref{eq:f2}) has worst-case
complexity $O(\sigma^2/\mu \varepsilon)$, with potentially
$\sigma^2 \gg \tilde{\sigma}^2$.
Note that the non-uniform sampling strategy potentially reduces~$L_Q$ and improves the left part, whereas it increases $\rho_Q$ and degrades the dependency on the noise $\tilde{\sigma}^2$.
Whereas non-uniform sampling strategies for incremental methods are now classical~\citep{proxsvrg,saganonu},
the robustness to stochastic perturbations has not been studied for all these methods and
existing approaches such as \citep{hofmann_variance_2015,smiso,zheng2018lightweight} have various limitations as discussed earlier.
\item We show that our construction of estimate sequence naturally leads to
an accelerated stochastic gradient method for composite optimization as
\cite{ghadimi2012optimal,ghadimi2013optimal,kwok2009}, but 
simpler as our approach requires to maintain two sequences of iterates instead of three. The resulting complexity in terms of gradient evaluations for $\mu$-strongly convex objectives~is
\begin{equation*}
O\left( \sqrt{\frac{L}{\mu}} \log\left(\frac{F(x_0)-F^\star}{\varepsilon} \right)  \right) + O\left( \frac{{\sigma}^2}{\mu \varepsilon} \right), 
\end{equation*}
      which has also been achieved by~\citet{ghadimi2013optimal,aybat2019universally,cohen2018acceleration}.
When the objective is convex, but non-strongly convex, we
also provide a sublinear convergence rate for finite horizon. Given a budget of $K$ iterations, the algorithm returns an iterate $x_K$ such that
\begin{equation}
\E[ F(x_K)-F^\star] \leq \frac{2 L \|x_0-x^\star\|^2}{(K+1)^2} + \sigma \sqrt{\frac{8\|x_0-x^\star\|^2}{K+1}}, \label{eq:acc_sgd_rate_convex}
\end{equation}
which is also optimal for stochastic first-order optimization~\citep{ghadimi2012optimal}.
\item We design a new accelerated algorithm for finite sums based on the SVRG gradient estimator,
with complexity, for $\mu$-strongly convex functions, 
\begin{equation}
O\left( \left(n + \sqrt{n\frac{L_Q}{\mu}}\right)\log\left(\frac{F(x_0)-F^\star}{\varepsilon} \right)  \right) + O\left( \frac{\rho_Q \tilde{\sigma}^2}{\mu\varepsilon} \right), \label{eq:acc_compl}
\end{equation}
where the term on the left is the classical optimal complexity for
deterministic finite sums, which has been well studied when
$\tilde{\sigma}^2=0$~\cite{arjevani2016dimension,accsvrg,MiG2018,zhou2018direct,kovalev2020don}.
To the best of our knowledge, our algorithm is nevertheless the first to achieve such a
complexity when $\tilde{\sigma}^2 > 0$. 
Most related to our work, the general case $\tilde{\sigma}^2>0$ was indeed
considered recently by~\citet{lan_zhou_distributed2018} in the context of
distributed optimization, with
an approach that was shown to be optimal in terms of communication rounds. Yet, when applied in
the same context as ours (in a non-distributed setting), the complexity they achieve is suboptimal.
Specifically, their dependence in $\tilde{\sigma}^2$ involves an additional logarithmic factor~$\mathcal{O}(\log(1/\mu\epsilon))$ and the deterministic part is sublinear in~$\mathcal{O}(1/\epsilon)$.

      When the problem is convex but not strongly convex, given a budget of $K$ greater than $O(n\log(n))$,  the algorithm returns a solution $x_K$ such that
\begin{equation}
\E[ F(x_K)-F^\star] \leq \frac{18 n L_Q \|x_0-x^\star\|^2}{(K+1)^2} + 9\tilde{\sigma} \|x_0-x^\star\|\sqrt{\frac{\rho_Q}{K+1}}, \label{eq:acc_svrg_rate_convex}
\end{equation}
where the term on the right is potentially better
than~(\ref{eq:acc_sgd_rate_convex}) for large $K$ when $\tilde{\sigma} \ll \sigma$ (see discussion above on full variance vs. variance due to stochastic perturbations).
When the objective is deterministic ($\tilde{\sigma}=0$), the term~(\ref{eq:acc_svrg_rate_convex}) yields the complexity $O(\sqrt{n L_Q}/\sqrt{\varepsilon})$, which is potentially
better than the $O(n\sqrt{L}/\sqrt{\varepsilon})$  complexity of accelerated
gradient descent, unless $L$ is significantly smaller than $L_Q$.
\end{itemize}
This paper is organized as follows. Section~\ref{sec:lower} introduces the proposed framework
based on stochastic estimate sequences; Section~\ref{sec:theory} is devoted to the convergence
analysis and Section~\ref{sec:acc} introduces accelerated stochastic
optimization algorithms; Section~\ref{sec:exp} presents various experiments to
compare the effectiveness of the proposed approaches, and Section~\ref{sec:ccl}
concludes the paper.

We note that a short version of this paper was presented
by~\cite{kulunchakov2019estimate} at the International Conference on Machine
Learning (ICML) in 2019. This paper extends this previous work by (i)
providing complexity results for convex but not strongly convex objectives
($\mu=0$),  (ii) extending the framework to variants of MISO/Finito/SDCA
algorithms, in the context of non-accelerated incremental methods, (iii)
providing more experiments with additional baselines and objective functions.

\section{Proposed Framework Based on Stochastic Estimate Sequences}\label{sec:lower}
In this section, we present two generic stochastic optimization algorithms to
address the composite problem~(\ref{eq:prob}). Then, we show their relation
to variance-reduction methods.

\subsection{A Classical Iteration Revisited}\label{subsec:dk}
Consider an algorithm that performs the following updates:
\custombox{
\begin{equation}
x_{k} \leftarrow \text{Prox}_{\eta_k\psi}\left[ x_\kmone - \eta_k g_k\right]~~~\text{with}~~~ \E[g_k | {\mathcal F}_\kmone] = \nabla f(x_\kmone), \tag{\textsf{A}} \label{eq:opt1}
\vspace*{0.1cm}
\end{equation}
}%
where $\Fcal_{\kmone}$ is the filtration representing all information up to iteration $\kmone$, $g_k$ is an unbiased estimate of the gradient $\nabla f(x_\kmone)$, $\eta_k > 0$ is a step size, and $\text{Prox}_{\eta \psi}[.]$ is the proximal operator~\citep{moreau1962fonctions} defined for any scalar $\eta > 0$ as the unique solution of
\begin{equation}
\text{Prox}_{\eta \psi}[u] \defin \argmin_{x \in \Real^p} \left\{ \eta \psi (x) + \frac{1}{2}\|x-u\|^2 \right\}. \label{eq:prox}
\end{equation}
The iteration~(\ref{eq:opt1}) is generic and encompasses many existing
algorithms, which we review later. Key to our analysis, we are interested in a simple interpretation corresponding to the iterative minimization of strongly convex surrogate functions.

\paragraph{Interpretation with stochastic estimate sequence.}
Consider now the function
\begin{equation}
d_0(x) =  d_0^\star + \frac{\gamma_0}{2}\|x-x_0\|^2, \label{eq:d0}
\end{equation}
with $\gamma_0 \geq \mu$ and $d_0^\star$ is a scalar value that is left unspecified at the moment. Then, it is easy to show that $x_k$ in~(\ref{eq:opt1}) minimizes the following quadratic function $d_k$ defined for $k \geq 1$  as
\begin{multline}
d_k(x) = (1-\delta_k)d_\kmone(x) \\ + \delta_k \left( f(x_\kmone) + g_k^\top( x-x_\kmone) + \frac{\mu}{2}\|x-x_\kmone\|^2 + \psi(x_k) + \psi'(x_k)^\top(x-x_{k}) \right),\label{eq:surrogate1}
\end{multline}
where $\delta_k,\gamma_k$ satisfy the system of equations
\begin{equation}
\delta_k = \eta_k\gamma_k ~~~\text{and}~~~
\gamma_k = (1-\delta_k)\gamma_\kmone +\mu \delta_k,
\label{eq:gamma}
\end{equation}
and
\begin{displaymath}
\psi'(x_k)  = \frac{1}{\eta_k} (x_{\kmone} - x_k) - g_k.
\end{displaymath}
We note that
$\psi'(x_k)$ is a subgradient in $\partial \psi(x_k)$. By simply using the definition of the
proximal operator~(\ref{eq:prox}) and considering first-order optimality conditions, we indeed have that $0
\in x_k - x_{\kmone} + \eta_k g_k + \eta_k \partial
\psi(x_k)$ and~$x_k$ coincides with the minimizer of~$d_k$. This allows us to write $d_k$ in the generic form
\begin{displaymath}
d_k(x) = d_k^\star + \frac{\gamma_k}{2}\|x-x_k\|^2~~~\text{for all}~k \geq 0.
\end{displaymath}
The construction~(\ref{eq:surrogate1}) is akin to that of estimate sequences
introduced by~\citet{nesterov}, which are typically used for
designing accelerated gradient-based optimization algorithms. In this section,
we are however not interested in acceleration, but instead in stochastic optimization
and variance reduction. One of the main property of estimate sequences that we will
use is their ability do behave asymptotically as a lower bound of the objective
function near the optimum. Indeed, we have
\begin{equation}
\begin{split}
\E[d_k(x^\star)]  \leq (1-\delta_k)\E[d_\kmone(x^\star)] + \delta_k F^\star
& \leq \Gamma_k d_0(x^\star) + (1-\Gamma_k)F^\star,
\end{split}\label{eq:est}
\end{equation}
where $\Gamma_k = \prod_{t=1}^k (1-\delta_t)$ and $F^\star=F(x^\star)$.
The first inequality comes from  a strong convexity inequality since $\E[g_k^\top (x^\star-x_\kmone)|\Fcal_\kmone] = \nabla f(x_\kmone)^\top (x^\star-x_\kmone)$, and the second inequality is obtained by unrolling the relation obtained between $\E[d_k(x^\star)]$ and $\E[d_\kmone(x^\star)]$. When~$\Gamma_k$ converges to zero, the contribution of the initial surrogate $d_0$ disappears and $\E[d_k(x^\star)]$ behaves as a lower bound of $F^\star$.

\paragraph{Relation with existing algorithms.} The iteration~(\ref{eq:opt1})
encompasses many approaches such as ISTA (proximal gradient descent), which
uses the exact gradient $g_k = \nabla f(x_{\kmone})$ leading to deterministic
iterates~$(x_k)_{k \geq 0}$ \citep{fista,nesterov2013gradient} or
proximal variants of the stochastic gradient descent method to deal with a
composite objective~\citep[see][for instance]{Lan2012}.
Of interest for us, the variance-reduced stochastic optimization approaches
SVRG~\citep{proxsvrg} and SAGA~\cite{saga} also follow the
iteration~(\ref{eq:opt1}) but with an unbiased gradient estimator whose
variance reduces over time.  Specifically, the basic form of these estimators is
\begin{equation}
g_k = \nabla f_{i_k}(x_{\kmone}) - z^{i_k}_{\kmone} + \bar{z}_{\kmone} ~~~\text{with}~~~ \bar{z}_\kmone = \frac{1}{n} \sum_{i=1}^n z^{i}_\kmone, \label{eq:svrg}
\end{equation}
where $i_k$ is an index chosen uniformly in $\{1,\ldots,n\}$ at random, and each
auxiliary variable~$z^{i}_k$ is equal to the gradient $\nabla
f_i(\tilde{x}_k^i)$, where~$\tilde{x}_k^i$ is one of the previous iterates.
The motivation is that given two random variables $X$ and $Y$,
it is possible to define a new variable $Z=X-Y + \E[Y]$ which has the same
expectation as~$X$ but potentially a lower variance if $Y$ is positively
correlated with $X$.  SVRG and SAGA are two different approaches to build such
positively correlated variables.
SVRG uses the same anchor point $\tilde{x}_k^i = \tilde{x}_k$ for all $i$, where~$\tilde{x}_k$ is
updated every $m$ iterations. Typically, the memory cost of SVRG is that of
storing the variable $\tilde{x}_k$ and the gradient $\bar{z}_k = \nabla f(\tilde{x}_k)$, which is thus $O(p)$.
On the other hand, SAGA updates only $z^{i_k}_k = \nabla f_{i_k}({x}_{\kmone})$ at iteration~$k$,
such that $z^{i}_{k}  =  z^{i}_{\kmone}$ if $i \neq i_k$.
Thus, SAGA requires storing $n$ gradients. While in general the
overhead cost in memory is of order $O(np)$, it may be reduced to $O(n)$ when
dealing with linear models in machine learning~\citep[see][]{saga}.
Note that variants with non-uniform sampling of the indices $i_k$ have been proposed~by \citet{proxsvrg,saganonu}.

In order to make our proofs consistent for all considered incremental methods, we analyze a variant of SVRG with a randomized gradient updating schedule \cite{hofmann_variance_2015}. Remarkably, this variant was recently used in a concurrent work~\cite{kovalev2020don} to get the accelerated rate when $\tilde{\sigma}^2=0$. 

\subsection{A Less Classical Iteration with a Different Estimate Sequence}\label{subsec:dk2}
In the previous section, we have interpreted the classical iteration~(\ref{eq:opt1}) as the iterative minimization of the stochastic surrogate~(\ref{eq:surrogate1}).
Here, we show that a slightly different construction leads to a new algorithm.
To obtain a lower bound, we have indeed used basic properties of the proximal operator to obtain a subgradient $\psi'(x_k)$ and we have exploited the following convexity inequality
$\psi(x) \geq \psi(x_k) + \psi'(x_k)^\top (x-x_k)$.
Another natural choice to build a lower bound consists then of using directly $\psi(x)$ instead of $\psi(x_k) + \psi'(x_k)^\top (x-x_k)$, leading to the construction
\begin{equation}
d_k(x) = (1-\delta_k)d_\kmone(x) + \delta_k \left( f(x_\kmone) + g_k^\top( x-x_\kmone) + \frac{\mu}{2}\|x-x_\kmone\|^2 + \psi(x) \right), \label{eq:surrogate2}
\end{equation}
where $x_\kmone$ is assumed to be the minimizer of the composite function
$d_\kmone$, $\delta_k$ is defined as in Section~\ref{subsec:dk}, and $x_k$ is a minimizer of $d_k$. To initialize the recursion, we define then~$d_0$ as
\begin{equation*}
d_0(x) = c_0 + \frac{\gamma_0}{2}\|x-\bar{x}_0\|^2 + \psi(x)  \geq d_0^\star + \frac{\gamma_0}{2}\|x-x_0\|^2, %\label{eq:d02}
\end{equation*}
with $x_0 = \text{Prox}_{\psi/\gamma_0}[\bar{x}_0]$ is the minimizer of $d_0$ and $d_0^\star=d_0(x_0)=c_0+\frac{\gamma_0}{2}\|x_0-\bar{x}_0\|^2 + \psi(x_0)$ is the minimum value of $d_0$; $c_0$ is left unspecified since it does not affect the algorithm.
Typically, one may choose $\bar{x}_0$ to be a minimizer of $\psi$ such that $x_0=\bar{x}_0$.
Unlike in the previous section, the surrogates~$d_k$ are not quadratic, but
they remain $\gamma_k$-strongly convex.  It is also easy to check that the
relation~(\ref{eq:est}) still holds.

\paragraph{The corresponding algorithm.}
It is also relatively easy to show that the iterative minimization of the stochastic lower bounds~(\ref{eq:surrogate2}) leads to the following iterations
\custombox{
\vspace*{-0.25cm}
\begin{equation}
\bar{x}_{k} \leftarrow (1-\mu \eta_k)\bar{x}_\kmone + \mu \eta_k x_{\kmone} - \eta_k g_k~~~\text{and}~~~x_k =\text{Prox}_{\frac{\psi}{\gamma_k}}\left[ \bar{x}_k \right] ~~~\text{with}~~~ \E[g_k | {\mathcal F}_\kmone] = \nabla f(x_\kmone). \tag{\textsf{B}} \label{eq:opt2}
\end{equation}
}
As we will see, the convergence analysis for algorithm~(\ref{eq:opt1}) also
holds for algorithm~(\ref{eq:opt2}) such that both variants enjoy similar
theoretical properties.
In one case, the function~$\psi(x)$
appears explicitly, whereas a lower bound $\psi(x_k) + \psi'(x_k)^\top(x-x_k)$
is used in the other case. The introduction of the variable $\bar{x}_k$ allows us to write the surrogates $d_k$ in the canonical form
\begin{displaymath}
d_k(x) = c_k + \frac{\gamma_k}{2}\|x-\bar{x}_k\|^2 + \psi(x) \geq d_k^\star + \frac{\gamma_k}{2}\|x-x_k\|^2,
\end{displaymath}
where $c_k$ is constant and the inequality on the right is due to the strong convexity of $d_k$.

\paragraph{Relation to existing approaches.}
The approach~(\ref{eq:opt2}) is related to several optimization methods. When
the objective is a deterministic finite sum, it is possible to
relate the update~(\ref{eq:opt2}) to 
the MISO~\cite{miso},
and Finito~\cite{defazio2014finito} algorithms, even though they were derived
from a significantly different point of view. 
This is also the case of a primal variant of SDCA~\citep{ShalevShwartz2016SDCAWD} 
For instance, SDCA is a dual coordinate ascent approach, whereas MISO and
Finito are explicitly derived from the iterative surrogate minimization we
adopt in this paper.  As the links between~(\ref{eq:opt2}) and these previous
approaches are not obvious at first sight, we detail them in
Appendix~\ref{appendix:miso.sdca}. 

\subsection{Gradient Estimators and Algorithms} \label{subsec:gradients}
In this paper, we consider the iterations~(\ref{eq:opt1}) and~(\ref{eq:opt2}) with the following gradient estimators. 
\begin{itemize}
\item {\bfseries exact gradient} with $g_k = \nabla f(x_\kmone)$, when the problem is deterministic and we have an access to the full gradient;
\item {\bfseries stochastic gradient}, when we just assume that $g_k$ has bounded variance. When $f(x) = \E_\xi[\tilde{f}(x,\xi)]$, a data point $\xi_k$ is drawn at iteration $k$ and $g_k = \nabla \tilde{f}(x,\xi_k)$.
\item {\bfseries random-SVRG}: for finite sums, we consider a variant of the SVRG gradient estimator with non-uniform sampling and a random update of the anchor point~$\tilde{x}_\kmone$, 
proposed originally by~\citet{hofmann_variance_2015}.
Specifically, $g_k$ is also an unbiased estimator of $\nabla f(x_\kmone)$, defined as
\begin{equation}
g_k = \frac{1}{q_{i_k} n}\left(\tildenabla f_{i_k} (x_\kmone) - z_{\kmone}^{i_k} \right) + \bar{z}_\kmone, \label{eq:gk}
\end{equation}
where $i_k$ is sampled from a distribution $Q=\{q_1,\ldots,q_n\}$ and $\tildenabla$ denotes that the gradient is perturbed by a zero-mean noise variable with variance $\tilde{\sigma}^2$.
More precisely, if $f_i(x)=\E_\rho[\tilde{f}_i(x,\rho)]$ for all $i$, where~$\rho$ is a stochastic perturbation,
instead of accessing $\nabla f_{i_k}(x_\kmone)$, we draw a perturbation $\rho_k$ and observe
\begin{displaymath}
\tildenabla f_{i_k}(x_\kmone) = \nabla \tilde{f}_{i_k}(x_\kmone,\rho_k)  = \nabla f_{i_k} (x_\kmone) + \underbrace{\nabla \tilde{f}_{i_k}(x_\kmone,\rho_k) - \nabla f_{i_k} (x_\kmone)}_{\zeta_k},
\end{displaymath}
where the perturbation $\zeta_k$ has zero mean given $\Fcal_\kmone$ and its variance is bounded by~$\tilde{\sigma}^2$. When there is no perturbation, we simply have $\tildenabla = \nabla$ and $\zeta_k=0$.

Then, the variables $z_k^{i}$ and $\bar{z}_k$ also involve noisy estimates of the gradients:
\begin{displaymath}
z_{k}^{i}  = \tildenabla f_{i}(\tilde{x}_k)  ~~~~\text{and}~~~ \bar{z}_k = \frac{1}{n}\sum_{i=1}^n z_k^i,
\end{displaymath}
where $\tilde{x}_k$ is an anchor point that is updated on average every $n$ iterations.
Whereas the classical SVRG approach~\cite{proxsvrg}
updates~$\tilde{x}_k$ on a fixed schedule, we perform random
updates: with probability $1/n$, we choose $\tilde{x}_k = x_k$ and
recompute~$\bar{z}_k=\tildenabla f(\tilde{x}_k)$; otherwise $\tilde{x}_k$ is
kept unchanged. In comparison with the fixed schedule, the analysis with the random one is
simplified and can be unified with that of SAGA/SDCA or MISO.  The use
of this estimator with iteration~(\ref{eq:opt1}) is illustrated in
Algorithm~\ref{alg:svrg}. It is then easy to modify it to use
variant~(\ref{eq:opt2}) instead.

In terms of memory, the random-SVRG gradient estimator requires to
store an anchor point~$\tilde{x}_\kmone$ and the average gradients
$\bar{z}_\kmone$. The variables $z_k^i$ do not need to be stored; only the~$n$
random seeds to produce the perturbations are kept into memory,
which allows us to compute
$z^{i_k}_\kmone = \tildenabla f_{i_k}(\tilde{x}_\kmone)$ at
iteration~$k$, with the same perturbation for index~$i_k$ that was used
to compute $\bar{z}_\kmone = \frac{1}{n}\sum_{i=1}^n z_\kmone^i$ when the anchor point was last updated.
The overall cost is thus $O(n+p)$.

\begin{algorithm}[hbtp]
\caption{Variant~(\ref{eq:opt1}) with random-SVRG estimator}\label{alg:svrg}
\begin{algorithmic}[1]
\State {\bfseries Input:} $x_0$ in $\Real^p$; $K$ (number of iterations); $(\eta_k)_{k \geq 0}$ (step sizes); $\gamma_0 \geq \mu$ (if averaging);
\State {\bfseries Initialization:} $\tilde{x}_0 = \hat{x}_0 = x_0$; $\bar{z}_0=\frac{1}{n}\sum_{i=1}^n \tildenabla f_i(\tilde{x}_0)$;
\For{$k=1,\ldots,K$}
\State Sample $i_k$ according to the distribution $Q=\{q_1,\ldots,q_n\}$;
\State Compute the gradient estimator, possibly corrupted by random perturbations:
\begin{equation*}
g_k = \frac{1}{q_{i_k} n}\left(\tildenabla f_{i_k} (x_\kmone) - \tildenabla f_{i_k} (\tilde{x}_\kmone) \right) + \bar{z}_\kmone; 
\end{equation*}
\State Obtain the new iterate
$x_{k} \leftarrow \text{Prox}_{\eta_k\psi}\left[ x_\kmone - \eta_k g_k\right];$
\State With probability $1/n$,
\begin{displaymath}
\tilde{x}_k = x_k~~~~\text{and}~~~~ \bar{z}_k =\frac{1}{n} \sum_{i=1}^n\tildenabla f_i(\tilde{x}_k);
\end{displaymath}
\State Otherwise, with probability $1-1/n$, keep $\tilde{x}_k = \tilde{x}_\kmone$ and $\bar{z}_k = \bar{z}_\kmone$;
\State {\bfseries Optional}: Use the online averaging strategy using $\delta_k$ obtained from~(\ref{eq:gamma}):
$$\hat{x}_k =
(1-\tau_k) \hat{x}_{\kmone} + \tau_k x_k~~~~\text{with}~~~~ \tau_k = \min\left( \delta_k, \frac{1}{5n}\right);$$
\EndFor
\State {\bfseries Output:} $x_K$ or $\hat{x}_K$ if averaging.
\end{algorithmic}
\end{algorithm}

\item {\bfseries SAGA}: The estimator has a form similar to~(\ref{eq:gk}) but with a different choice of variables
$z_k^i$.  Unlike SVRG that stores an anchor
point~$\tilde{x}_k$, the SAGA estimator requires storing and
incrementally updating the $n$ auxiliary variables $z_k^i$ for
$i=1,\ldots,n$, while maintaining the relation $\bar{z}_k =
\frac{1}{n}\sum_{i=1}^n z_k^i$.
We consider variants such that each time a gradient~$\nabla f_i(x)$ is computed,
it is corrupted by a zero-mean random perturbation with variance $\tilde{\sigma}^2$.
The procedure is described in Algorithm~\ref{alg:miso} for
variant~(\ref{eq:opt1}) when using uniform sampling. 
When $\beta=0$, we recover the original SAGA algorithm,
whereas the choice $\beta > 0$ corresponds to a more general estimator that we
will discuss next.

The case with non-uniform sampling is slightly different and is described in Algorithm~\ref{alg:miso2};
it requires an additional index~$j_k$ for updating a variable $z_k^{j_k}$.
The reason for that is to  remove a difficulty in the convergence
proof, a strategy also adopted by~\citet{saganonu} for a variant of SAGA
with non-uniform sampling.
\item {\bfseries SDCA/MISO}:
   To put SAGA, MISO and SDCA under the same umbrella, we introduce a lower bound $\beta$ on the strong convexity constant $\mu$,
and a correcting term involving $\beta$ that appears only when the sampling distribution $Q$ is not uniform:
\begin{equation}
   g_k = \frac{1}{q_{i_k} n}\left(\tildenabla f_{i_k} (x_\kmone) - z_{\kmone}^{i_k} \right) + \bar{z}_\kmone + \beta \left(1 - \frac{1}{q_{i_k} n}\right) x_\kmone. \label{eq:gk2}
\end{equation}
It is then possible to show that when
$Q$ is uniform and under the big data condition $L/\mu \leq n$ (used for
instanced by~\citealt{miso,defazio2014finito,schmidt2017minimizing}) and with $\beta=\mu$,
variant~(\ref{eq:opt2}) combined with the estimator~(\ref{eq:gk2}) yields the MISO algorithm,
which performs similar updates as a primal variant of SDCA~\citep{ShalevShwartz2016SDCAWD}.
These links are highlighted in Appendix~\ref{appendix:miso.sdca}.

The motivation for introducing the parameter~$\beta$ in $[0,\mu]$ comes
from empirical risk minimization problems, where the functions $f_i$ may have the form $f_i(x) =
\phi(a_i^\top x) + \frac{\beta}{2}\|x\|^2$, where~$a_i$ in~$\Real^p$ is
a data point; then, $\beta$ is a lower bound on the strong
convexity modulus~$\mu$, and $\nabla f_i(x)-\beta x$ is proportional to
$a_i$ and can be stored with a single additional scalar value, assuming $a_i$ is already in memory.
\end{itemize}
\paragraph{Summary of the new features.}
As we combine different types of iterations and gradient estimators, we 
recover both known and new algorithms. Specifically, we obtain the following new features:
\begin{itemize}
\item {\bfseries robustness to noise}: we introduce mechanisms to deal with stochastic perturbations and make all these previous approaches robust to noise.
\item {\bfseries adaptivity to the strong convexity when $\tilde{\sigma}=0$}: Algorithms~\ref{alg:svrg}, \ref{alg:miso}, and~\ref{alg:miso2} without averaging do not require knowing the strong convexity constant $\mu$ (it may only need a lower-bound~$\beta$, which is often trivial to obtain). 
\item {\bfseries new variants:} Whereas SVRG/SAGA were developed with the iterations~(\ref{eq:opt1}) and MISO in the context of~(\ref{eq:opt2}), we show that these gradient estimators are both compatible with~(\ref{eq:opt1}) and~(\ref{eq:opt2}), leading to new algorithms with similar guarantees.
\end{itemize}

\begin{algorithm}[htp!]
\caption{Variant~(\ref{eq:opt1}) with SAGA/SDCA/MISO estimator and uniform sampling}\label{alg:miso}
\begin{algorithmic}[1]
   \State {\bfseries Input:} $x_0$ in $\Real^p$; $K$ (num. iterations); $(\eta_k)_{k \geq 0}$ (step sizes); $\beta$ in $[0,\mu]$;  $\gamma_0 \geq \mu$ (optional).
\State {\bfseries Initialization:} $z_0^i = \tildenabla f_i(x_0) - \beta x_0$ for all $i=1,\ldots,n$ and $\bar{z}_0 = \frac{1}{n}\sum_{i=1}^n {z_0^i}$.
\For{$k=1,\ldots,K$}
\State Sample $i_k$ in $\{1,\ldots,n\}$ according to the uniform distribution;
\State Compute the gradient estimator, possibly corrupted by random perturbations:
\begin{displaymath}
g_k = \tildenabla f_{i_k} (x_\kmone) - z^{i_k}_\kmone + \bar{z}_\kmone;
\end{displaymath}
\State Obtain the new iterate
$x_{k} \leftarrow \text{Prox}_{\eta_k\psi}\left[ x_\kmone - \eta_k g_k\right];$
\State Update the auxiliary variables
\begin{displaymath}
z^{i_k}_k  = \tildenabla f_{i_k} (x_\kmone) - \beta x_\kmone ~~~\text{and}~~~ z^i_k  = z^i_\kmone~~~\text{for all}~~~~i \neq i_k;
\end{displaymath}
\State Update the average variable $\bar{z}_k = \bar{z}_\kmone + \frac{1}{n}(z^{j_k}_k - z^{j_k}_\kmone)$.
\State {\bfseries Optional}: Use the same averaging strategy as in Algorithm~\ref{alg:svrg}.
\EndFor
\State {\bfseries Output:} $x_K$ or $\hat{x}_K$ (if averaging).
\end{algorithmic}
\end{algorithm}

\begin{algorithm}[h!]
\caption{Variant~(\ref{eq:opt1}) with SAGA/SDCA/MISO estimator and non-uniform sampling}\label{alg:miso2}
\begin{algorithmic}[1]
   \State {\bfseries Input:} $x_0$ in $\Real^p$; $K$ (num. iterations); $(\eta_k)_{k \geq 0}$ (step sizes); $\beta$ in $[0,\mu]$;  $\gamma_0 \geq \mu$ (optional).
\State {\bfseries Initialization:} $z_0^i = \tildenabla f_i(x_0) - \beta x_0$ for all $i=1,\ldots,n$ and $\bar{z}_0 = \frac{1}{n}\sum_{i=1}^n {z_0^i}$.
\For{$k=1,\ldots,K$}
\State Sample $i_k$ according to the distribution $Q=\{q_1,\ldots,q_n\}$;
\State Compute the gradient estimator, possibly corrupted by random perturbations:
\begin{displaymath}
   g_k = \frac{1}{q_{i_k} n}\left(\tildenabla f_{i_k} (x_\kmone) - z^{i_k}_\kmone \right) + \bar{z}_\kmone + \beta\left(1 -  \frac{1}{q_{i_k} n}\right) x_\kmone;
\end{displaymath}
\State Obtain the new iterate
$x_{k} \leftarrow \text{Prox}_{\eta_k\psi}\left[ x_\kmone - \eta_k g_k\right];$
\State Draw $j_k$ from the uniform distribution in $\{1,\ldots,n\}$;
\State Update the auxiliary variables
\begin{displaymath}
z^{j_k}_k  = \tildenabla f_{j_k} (x_k) - \beta x_k ~~~\text{and}~~~ z^j_k  = z^j_\kmone~~~\text{for all}~~~~j \neq j_k;
\end{displaymath}
\State Update the average variable $\bar{z}_k = \bar{z}_\kmone + \frac{1}{n}(z^{j_k}_k - z^{j_k}_\kmone)$.
\State {\bfseries Optional}: Use the same averaging strategy as in Algorithm~\ref{alg:svrg}.
\EndFor
\State {\bfseries Output:} $x_K$ or $\hat{x}_K$ (if averaging).
\end{algorithmic}
\end{algorithm}

\section{Convergence Analysis and Robustness}\label{sec:theory}
We now present the convergence analysis for iterations~(\ref{eq:opt1}) or~(\ref{eq:opt2}). 
In Section~\ref{subsec:conv1}, we present a generic convergence result. Then, in Section~\ref{subsec:conv2},
we present specific results for the variance-reduction approaches in
including strategies to make them robust to stochastic noise.  Acceleration
is discussed in the next section.

\subsection{Generic Convergence Result Without Variance Reduction}\label{subsec:conv1}
Key to our complexity results, the following proposition gives a first relation between the quantity $F(x_k)$, the surrogate $d_k$, $d_\kmone$ and the variance of the gradient estimates.
\begin{proposition}[\bf Key relation]\label{prop:keyprop}
For either variant~(\ref{eq:opt1}) or~(\ref{eq:opt2}), when using the construction of $d_k$ from Sections~\ref{subsec:dk} or~\ref{subsec:dk2}, respectively, and assuming $\eta_k \leq 1/L$, we have for all $k \geq 1$,
\begin{equation}
\delta_k (\E[F(x_k)]-F^\star) +\E[d_k(x^\star)-d_k^\star]  \leq  (1-\delta_k)\E[d_\kmone(x^\star)-d_{\kmone}^\star] + \eta_k {\delta_k \omega_k^2}, \label{eq:relation}
\end{equation}
where $F^\star$ is the minimum of $F$, $x^\star$ is one of its minimizers, and
$\omega_k^2 = \E[ \|g_k - \nabla f(x_\kmone)\|^2]$.
\end{proposition}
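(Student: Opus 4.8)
The plan is to obtain~(\ref{eq:relation}) from two one-step estimates for the surrogate $d_k$: a \emph{lower} bound on its minimal value $d_k^\star$ in which $F(x_k)$ appears, together with the \emph{upper} bound on $d_k(x^\star)$ that already underlies the estimate-sequence inequality~(\ref{eq:est}). The key point is that this works identically for~(\ref{eq:opt1}) and~(\ref{eq:opt2}): the two constructions differ only in that for the quadratic surrogate of Section~\ref{subsec:dk} one has $d_\kmone(x_k)=d_\kmone^\star+\frac{\gamma_\kmone}{2}\|x_k-x_\kmone\|^2$ exactly, whereas for the surrogate of Section~\ref{subsec:dk2} this becomes $d_\kmone(x_k)\ge d_\kmone^\star+\frac{\gamma_\kmone}{2}\|x_k-x_\kmone\|^2$, coming from $\gamma_\kmone$-strong convexity of $d_\kmone$ about its minimizer $x_\kmone$ — and that inequality points in the direction we need. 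I will write $v_k\defin g_k-\nabla f(x_\kmone)$, so that $\E[v_k\mid\Fcal_\kmone]=0$ and $\E[\|v_k\|^2]=\omega_k^2$.

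First I would evaluate the defining recursion~(\ref{eq:surrogate1}) (resp.~(\ref{eq:surrogate2})) at $x=x_k$ (for~(\ref{eq:surrogate1}) the subgradient term then vanishes, carrying the factor $x_k-x_k$), use $d_\kmone(x_k)\ge d_\kmone^\star+\frac{\gamma_\kmone}{2}\|x_k-x_\kmone\|^2$, apply the recursion $\gamma_k=(1-\delta_k)\gamma_\kmone+\mu\delta_k$ to merge the $\|x_k-x_\kmone\|^2$ contributions into a single coefficient $\frac{\gamma_k}{2}$, and finally use $L$-smoothness to replace $f(x_\kmone)$ by $f(x_k)-\nabla f(x_\kmone)^\top(x_k-x_\kmone)-\frac L2\|x_k-x_\kmone\|^2$. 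This yields
\begin{equation*}
d_k^\star \;\ge\; (1-\delta_k)\,d_\kmone^\star \,+\, \delta_k F(x_k) \,+\, \tfrac{\gamma_k-L\delta_k}{2}\,\|x_k-x_\kmone\|^2 \,+\, \delta_k\, v_k^\top(x_k-x_\kmone),
\end{equation*}
and, since $\delta_k=\eta_k\gamma_k$, the assumption $\eta_k\le 1/L$ makes $\gamma_k-L\delta_k=\gamma_k(1-L\eta_k)\ge 0$, so the third term is a genuine nonnegative quadratic.

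Then I would evaluate the same recursion at $x=x^\star$: bounding $\psi(x_k)+\psi'(x_k)^\top(x^\star-x_k)\le\psi(x^\star)$ by convexity of $\psi$ (for~(\ref{eq:surrogate2}) this term is already $\psi(x^\star)$) and $f(x_\kmone)+\nabla f(x_\kmone)^\top(x^\star-x_\kmone)+\frac\mu2\|x^\star-x_\kmone\|^2\le f(x^\star)$ by strong convexity of $f$ gives the pointwise bound $d_k(x^\star)\le (1-\delta_k)\,d_\kmone(x^\star)+\delta_k F^\star+\delta_k\, v_k^\top(x^\star-x_\kmone)$. Subtracting the lower bound on $d_k^\star$ from this, moving $\delta_k F(x_k)$ to the left, and writing $v_k^\top(x^\star-x_k)=v_k^\top(x^\star-x_\kmone)-v_k^\top(x_k-x_\kmone)$, I am left with a bound on $\delta_k(F(x_k)-F^\star)+\bigl(d_k(x^\star)-d_k^\star\bigr)$ by $(1-\delta_k)\bigl(d_\kmone(x^\star)-d_\kmone^\star\bigr)+\delta_k v_k^\top(x^\star-x_\kmone)-\delta_k v_k^\top(x_k-x_\kmone)-\frac{\gamma_k-L\delta_k}{2}\|x_k-x_\kmone\|^2$. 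Taking the total expectation, the term $\delta_k v_k^\top(x^\star-x_\kmone)$ has mean zero because $\delta_k$ is deterministic, $x^\star-x_\kmone$ is $\Fcal_\kmone$-measurable, and $\E[v_k\mid\Fcal_\kmone]=0$; and completing the square in $x_k-x_\kmone$ bounds the last two terms by
\begin{equation*}
-\delta_k\, v_k^\top(x_k-x_\kmone)-\tfrac{\gamma_k-L\delta_k}{2}\|x_k-x_\kmone\|^2\;\le\;\tfrac{\delta_k^2}{2(\gamma_k-L\delta_k)}\|v_k\|^2\;=\;\tfrac{\eta_k\delta_k}{2(1-L\eta_k)}\|v_k\|^2\;\le\;\eta_k\delta_k\|v_k\|^2,
\end{equation*}
using $\delta_k=\eta_k\gamma_k$ and the step-size bound; taking expectations then gives~(\ref{eq:relation}).

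The hard part is the quadratic bookkeeping in the lower bound: one must check that after invoking $L$-smoothness the coefficient left in front of $\|x_k-x_\kmone\|^2$ is exactly $\frac{\gamma_k-L\delta_k}{2}$ (this is precisely where the $\gamma$-recursion and the relation $\delta_k=\eta_k\gamma_k$ are both used), that the step-size bound makes it nonnegative, and that it is just large enough to absorb the stochastic cross term $\delta_k v_k^\top(x_k-x_\kmone)$ at a cost of only $\eta_k\delta_k\omega_k^2$. The other thing to watch is that every step is phrased so as to hold verbatim for~(\ref{eq:opt1}) and~(\ref{eq:opt2}); as noted, the two constructions diverge only through $d_\kmone(x_k)\ge d_\kmone^\star+\frac{\gamma_\kmone}{2}\|x_k-x_\kmone\|^2$ (equality for the quadratic surrogate, strong-convexity inequality for the other), and this inequality is used in the favorable direction.
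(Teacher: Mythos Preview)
Your overall architecture is exactly the paper's: a lower bound on $d_k^\star$ featuring $\delta_k F(x_k)$, the estimate-sequence upper bound on $d_k(x^\star)$, and then subtracting. The derivation of the lower bound, including the identification of the coefficient $\tfrac{\gamma_k-L\delta_k}{2}$ and the observation that the same chain covers both variants, is correct.

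The gap is in the very last inequality. Completing the square gives
\[
-\delta_k v_k^\top(x_k-x_\kmone)-\tfrac{\gamma_k-L\delta_k}{2}\|x_k-x_\kmone\|^2\;\le\;\tfrac{\eta_k\delta_k}{2(1-L\eta_k)}\|v_k\|^2,
\]
and your claim that this is at most $\eta_k\delta_k\|v_k\|^2$ is equivalent to $2(1-L\eta_k)\ge 1$, i.e.\ $\eta_k\le 1/(2L)$. Under the stated hypothesis $\eta_k\le 1/L$ the quadratic coefficient $\gamma_k-L\delta_k=\gamma_k(1-L\eta_k)$ may vanish (take $\eta_k=1/L$), leaving nothing to absorb the cross term; your bound then blows up. So as written you have proved the proposition only for $\eta_k\le 1/(2L)$.

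The paper handles the cross term differently. It drops the nonnegative quadratic and is left, after taking expectation, with $\delta_k\,\E[v_k^\top(x_k-x_\kmone)]$. Since $x_k$ depends on $g_k$, this does not vanish, so one introduces the $\Fcal_\kmone$-measurable ``ghost iterate''
\[
w_\kmone=\text{Prox}_{\eta_k\psi}\!\left[x_\kmone-\eta_k\nabla f(x_\kmone)\right],
\]
writes $\E[v_k^\top(x_k-x_\kmone)]=\E[v_k^\top(x_k-w_\kmone)]$, and uses non-expansiveness of the proximal operator to get $\|x_k-w_\kmone\|\le\eta_k\|v_k\|$. Cauchy--Schwarz then gives $|\E[v_k^\top(x_k-w_\kmone)]|\le\eta_k\omega_k^2$, with no restriction beyond $\eta_k\le 1/L$. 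For variant~(\ref{eq:opt2}) the same trick works with $w_\kmone=\text{Prox}_{\psi/\gamma_k}\!\left[(1-\mu\eta_k)\bar{x}_\kmone+\mu\eta_k x_\kmone-\eta_k\nabla f(x_\kmone)\right]$.
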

\begin{proof}
We first consider the variant~(\ref{eq:opt1}) and later show how to modify the convergence proofs to accommodate the variant~(\ref{eq:opt2}).
\begin{displaymath}
\begin{split}
d_k^\star = d_k(x_k) & = (1-\delta_k) d_{\kmone}(x_k) + \delta_k \left( f(x_\kmone) + g_k^\top( x_k-x_\kmone) + \frac{\mu}{2}\|x_k-x_\kmone\|^2 + \psi(x_k) \right) \\
& \geq (1-\delta_k) d_{\kmone}^\star + \frac{\gamma_k}{2}\|x_k-x_{\kmone}\|^2 + \delta_k \left( f(x_\kmone)  + g_k^\top( x_k-x_\kmone) + \psi(x_k) \right)  \\
& \geq (1-\delta_k) d_{\kmone}^\star + \delta_k \left( f(x_\kmone)  + g_k^\top( x_k-x_\kmone) + \frac{L}{2}\|x_k-x_\kmone\|^2 + \psi(x_k) \right)  \\
& \geq (1-\delta_k) d_{\kmone}^\star + \delta_k F(x_k) +  \delta_k (g_k - \nabla f(x_\kmone))^\top (x_k-x_\kmone), \\
\end{split}
\end{displaymath}
where the first inequality comes from Lemma~\ref{lemma:second}---it is in fact an equality when considering Algorithm~(\ref{eq:opt1})---and the second inequality simply uses the assumption $\eta_k \leq 1/L$, which yields $\delta_k = \gamma_k\eta_k \leq \gamma_k/L$. Finally, the last inequality uses a classical upper-bound for $L$-smooth functions presented in Lemma~\ref{lemma:upper}.
Then, after taking expectations,
\begin{equation*}
\begin{split}
E[d_k^\star]      & \geq (1-\delta_k) \E[d_{\kmone}^\star] + \delta_k \E[F(x_k)] + \delta_k \E[(g_k-\nabla f(x_\kmone))^\top(x_k-x_\kmone)] \\
& = (1-\delta_k) \E[d_{\kmone}^\star] + \delta_k \E[F(x_k)] + \delta_k \E[(g_k-\nabla f(x_\kmone))^\top x_k]  \\
& = (1-\delta_k) \E[d_{\kmone}^\star] + \delta_k \E[F(x_k)] + \delta_k \E\left[(g_k-\nabla f(x_\kmone))^\top\left( x_k - w_\kmone \right) \right], \\
\end{split} %\label{eq:aux}
\end{equation*}
where we have defined the following quantity
\begin{displaymath}
w_{\kmone} = \text{Prox}_{\eta_k \psi}\left[ x_\kmone - \eta_k\nabla f(x_{\kmone})\right].
\end{displaymath}
In the previous relations, we have used twice the fact that
$\E[(g_k-\nabla f(x_\kmone))^\top y | {\Fcal_{\kmone}}]=0$, for all deterministic variable~$y$ given $x_{\kmone}$, such as $y=x_{\kmone}$ or $y=w_{\kmone}$.
We may now use the non-expansiveness property of the proximal operator~\citep{moreau1965} to control the quantity $\|x_k-w_\kmone\|$, which gives us
\begin{equation*}
\begin{split}
\E[d_k^\star]
& \geq (1-\delta_k) \E[d_{\kmone}^\star] + \delta_k \E[F(x_k)] - \delta_k \E\left[\|g_k-\nabla f(x_\kmone)\|\|x_k - w_\kmone\| \right] \\
& \geq (1-\delta_k) \E[d_{\kmone}^\star] + \delta_k \E[F(x_k)] - \delta_k \eta_k \E\left[\|g_k-\nabla f(x_\kmone)\|^2\right] \\
& = (1-\delta_k) \E[d_{\kmone}^\star] + \delta_k \E[F(x_k)] - \delta_k \eta_k\omega_k^2. \\
\end{split}
\end{equation*}
This relation can now be combined with~(\ref{eq:est}) when $z=x^\star$,
and we obtain~(\ref{eq:relation}).
It is also easy to see that the proof also works with variant~(\ref{eq:opt2}).
The convergence analysis is identical, except that we take $w_\kmone$ to be
\begin{displaymath}
w_\kmone = \text{Prox}_{\frac{\psi}{\gamma_k}}\left[ (1-\mu \eta_k)\bar{x}_\kmone + \mu \eta_k x_{\kmone} - \eta_k\nabla f(x_{\kmone})\right],
\end{displaymath}
and the same result follows.
\end{proof}
Then, without making further assumption on~$\omega_k$, we have the following general convergence result, which is a direct consequence of the averaging Lemma~\ref{lemma:averaging}, inspired by~\citet{ghadimi2012optimal}, and presented in Appendix~\ref{appendix:averaging}:
\begin{theorem}[\bf General convergence result]\label{thm:conv}
Under the same assumptions as in Proposition~\ref{prop:keyprop}, we have for all $k \geq 1$, and either variant~(\ref{eq:opt1}) or~(\ref{eq:opt2}),
\begin{equation}
\E[\delta_k\left(F(x_k)-F^\star\right)  + d_k(x^\star)-d_k^\star] \leq {\Gamma_k}\left( d_0(x^\star)-d_{0}^\star +  \sum_{t=1}^k \frac{\delta_t\eta_t\omega_t^2}{\Gamma_t}\right), \label{eq:conv1}
\end{equation}
where $\Gamma_k = \prod_{t=1}^k (1-\delta_t)$.
   Then, by using the averaging strategy $\hat{x}_k = (1-\delta_k) \hat{x}_{\kmone} + \delta_k x_k$ of Lemma~\ref{lemma:averaging}, for any point $\hat{x}_0$ (possibly equal to $x_0$), we have
\begin{equation}
   \E\left[F(\hat{x}_k)- F^\star + d_k(x^\star)-d_k^\star\right] \leq  {\Gamma_k}\left(F(\hat{x}_0)- F^\star + d_0(x^\star)-d_{0}^\star  +  \sum_{t=1}^k \frac{\delta_t\eta_t\omega_t^2}{\Gamma_t} \right). \label{eq:conv2}
\end{equation}
\end{theorem}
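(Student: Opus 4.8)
The plan is to obtain the first inequality~\eqref{eq:conv1} by unrolling the recursion from Proposition~\ref{prop:keyprop}, and then to derive the averaged version~\eqref{eq:conv2} by invoking the averaging Lemma~\ref{lemma:averaging}. First I would rewrite the key relation~\eqref{eq:relation} after dividing through by $\Gamma_k = \prod_{t=1}^k(1-\delta_t)$. Since $\Gamma_k = (1-\delta_k)\Gamma_{\kmone}$, dividing~\eqref{eq:relation} by $\Gamma_k$ turns it into
\begin{displaymath}
\frac{\delta_k(\E[F(x_k)]-F^\star) + \E[d_k(x^\star)-d_k^\star]}{\Gamma_k} \leq \frac{\E[d_\kmone(x^\star)-d_\kmone^\star]}{\Gamma_\kmone} + \frac{\delta_k\eta_k\omega_k^2}{\Gamma_k},
\end{displaymath}
so the quantity $u_k \defin \E[d_k(x^\star)-d_k^\star]/\Gamma_k$ satisfies $u_k \le u_\kmone + \delta_k\eta_k\omega_k^2/\Gamma_k - \delta_k(\E[F(x_k)]-F^\star)/\Gamma_k$. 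Telescoping from $t=1$ to $k$, using $\Gamma_0 = 1$ so that $u_0 = d_0(x^\star)-d_0^\star$, and dropping the nonnegative term $\delta_t(\E[F(x_t)]-F^\star)/\Gamma_t$ on the right while keeping only the last one on the left, gives
\begin{displaymath}
\frac{\delta_k(\E[F(x_k)]-F^\star) + \E[d_k(x^\star)-d_k^\star]}{\Gamma_k} \leq d_0(x^\star)-d_0^\star + \sum_{t=1}^k \frac{\delta_t\eta_t\omega_t^2}{\Gamma_t},
\end{displaymath}
and multiplying by $\Gamma_k$ yields exactly~\eqref{eq:conv1}. (A small point to check: $\delta_t(\E[F(x_t)]-F^\star)/\Gamma_t \ge 0$ requires $\E[F(x_t)] \ge F^\star$, which holds since $x^\star$ is a minimizer of $F$, and $\delta_t,\Gamma_t>0$.)

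For the second inequality~\eqref{eq:conv2}, I would apply Lemma~\ref{lemma:averaging} (the averaging lemma inspired by Ghadimi and Lan, stated in Appendix~\ref{appendix:averaging}). The structure of~\eqref{eq:conv1} is precisely of the form to which that lemma applies: a recursively-defined error bound with a weighted sum of the ``noise'' terms $\delta_t\eta_t\omega_t^2/\Gamma_t$, where one would like to replace the single iterate $x_k$ by the running average $\hat{x}_k = (1-\delta_k)\hat{x}_\kmone + \delta_k x_k$. The convexity of $F$ lets one bound $F(\hat{x}_k)-F^\star$ by a $\Gamma_k$-weighted combination of the per-iteration quantities $\delta_t(F(x_t)-F^\star)$ plus the initial contribution $\Gamma_k(F(\hat{x}_0)-F^\star)$; combining this with the already-established~\eqref{eq:conv1} (which controls exactly those weighted sums, along with the nonnegative surrogate gap $d_k(x^\star)-d_k^\star$ that we kept on the left) produces~\eqref{eq:conv2}. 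The only care needed is to track that the initial term picks up the extra $F(\hat{x}_0)-F^\star$ summand and that the $d_k(x^\star)-d_k^\star$ term survives on the left-hand side.

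The main obstacle, such as it is, is bookkeeping rather than depth: one must be careful that the telescoping is valid (all $\delta_t < 1$ so that $\Gamma_t > 0$, which is part of the standing assumptions via $\eta_t \le 1/L$ and the defining relations~\eqref{eq:gamma}), and that expectations are handled consistently — \eqref{eq:relation} is already an inequality between expectations, so no further conditioning arguments are needed here. The genuinely non-trivial content has been pushed into Proposition~\ref{prop:keyprop} and Lemma~\ref{lemma:averaging}; the theorem itself is the clean packaging of those two ingredients, and I expect the proof to be only a few lines invoking each.
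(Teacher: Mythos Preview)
Your proposal is correct and follows essentially the same route as the paper: the paper states that Theorem~\ref{thm:conv} is ``a direct consequence of the averaging Lemma~\ref{lemma:averaging}'' applied to the relation~\eqref{eq:relation} of Proposition~\ref{prop:keyprop}, and the proof of that lemma performs exactly the divide-by-$\Gamma_k$-and-telescope argument you spell out for~\eqref{eq:conv1}, then uses Jensen's inequality on the convex combination defining $\hat{x}_k$ to obtain~\eqref{eq:conv2}. The only cosmetic difference is that you carry out the telescoping explicitly rather than citing the packaged lemma.
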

Theorem~\ref{thm:conv} allows us to recover convergence rates for various algorithms.
Note that the effect of the averaging strategy is to remove the factor $\delta_k$ in front of $F(x_k)-F^\star$ on the left part of~(\ref{eq:conv1}), thus improving the convergence rate by a factor $1/\delta_k$. 
Regarding the quantity $d_0(x^\star)-d_0^\star$, we have the following relations
\begin{itemize}
  \item For variant~(\ref{eq:opt1}), $d_0(x^\star)-d_0^\star = \frac{\gamma_0}{2}\|x^\star-x_0\|^2$;
  \item For variant~(\ref{eq:opt2}), this quantity may be larger and we may simply say that $d_0(x^\star)-d_0^\star=\frac{\gamma_0}{2}\|x^\star-x_0\|^2 + \psi(x^\star) - \psi(x_0) - \psi'(x_0)^\top(x_0-x^\star)$ for variant~(\ref{eq:opt2}), where $\psi'(x_0)=\gamma_0(x_0-\bar{x}_0)$ is a subgradient in $\partial \psi(x_0)$.
     Note that if $\bar{x}_0$ is chosen to be a minimizer of $\psi$, then $d_0(x^\star)-d^\star =\frac{\gamma_0}{2}\|x^\star-x_0\|^2 + \psi(x^\star) - \psi(x_0)$.  
\end{itemize}

In the next section, we will focus on variance reduction mechanisms, which are
able to improve the previous convergence rates by better exploiting the
structure of the objective. By controlling the variance $\omega_k$ of the
corresponding gradient estimators, we will apply Theorem~\ref{thm:conv} to
obtain convergence rates. Before that, we remark that it is relatively
straightforward to use this theorem to recover complexity results for proximal
SGD, both for the usual variant~(\ref{eq:opt1}) or the new
one~(\ref{eq:opt2}). Since these results are classical, we present them in
Appendix~\ref{app.rec}. As a sanity check, we note that we recover the optimal
noise-dependency~\citep[see][]{nemirovski}, both for strongly convex cases, or when $\mu=0$.

\subsection{Faster Convergence with Variance Reduction}\label{subsec:conv2}

Stochastic variance-reduced gradient descent algorithms rely on gradient estimates whose variance
decreases as fast as the objective function value. Here, we provide a unified proof of
convergence for our variants of SVRG, SAGA, and MISO, and we
show how to make them robust to stochastic perturbations.
Specifically, we consider the minimization
of a finite sum of functions as in~(\ref{eq:f2}), but, as explained in Section~\ref{sec:lower}, each observation of the
gradient $\nabla f_i(x)$ is corrupted by a random noise variable.
The next proposition extends a proof for SVRG~\cite{proxsvrg} to stochastic perturbations, and characterizes the variance of~$g_k$.

As we now consider finite sums, we introduce the quantity $\tilde{\sigma}^2$, which is an upper-bound on the
noise variance due to stochastic perturbations for all $x$ in $\Real^p$ and for $i$ in $\{1,\ldots,n\}$:
\begin{equation}
   \E\left[\n{\tilde{\nb} {f}_i(x) - \nb f_i(x)}\right] \leq \tilde{\sigma}_i^2 
     ~~\text{with a related quantity}~~
     \tilde{\sigma}^2 = 
     \average \frac{1}{q_i n} \tilde{\sigma}_{i}^2,
     \label{eq:ts}
\end{equation}
where the expectation is with respect to the gradient perturbation, and
$Q=\{q_1,\ldots,q_n\}$ is the sampling distribution. As having the
variance to be bounded across the domain of $x$ may be a strong assumption,
even though classical, we also introduce the quantity 
 \eqm{
     \tilde{\sigma}_{i,\star}^2 = 
     \E\left[\n{\tilde{\nb} {f}_i(\xo) - \nb f_i(\xo)}\right]
     ~~\text{with a related quantity}~~
     \tilde{\sigma}_\star^2 = 
     \average \frac{1}{q_i n} \tilde{\sigma}_{i,\star}^2,
     \label{eq:tss}
}
where $x^\star$ is a solution of the optimization problem.  As we will show, in
this section, our complexity results for unaccelerated methods when $\mu > 0$ under the bounded variance assumption
$\tilde{\sigma}^2 < +\infty$ will also hold when simply assuming
$\tilde{\sigma}_\star^2 < +\infty$ at the cost of slightly degrading the complexity by
constant factors. The next proposition provides
an upper-bound on the variance of gradient estimators $g_k$,
which we have introduced earlier, as a first step to use Theorem~\ref{thm:conv}.

\begin{proposition}[\bf Generic variance reduction with non-uniform sampling]\label{prop:nonu} \hfill \break
Consider problem~(\ref{eq:prob}) when $f$ is a finite sum of functions $f=\frac{1}{n}\sum_{i=1}^n f_i$ where each~$f_i$ is convex and $L_i$-smooth with $L_i \geq \mu$. Then, the gradient estimates $g_k$ of the random-SVRG and MISO/SAGA/SDCA strategies defined in Section~\ref{subsec:gradients} satisfy
\begin{equation}
\E[\|g_k- \nabla f(x_{\kmone})\|^2] \leq 
4 L_Q \E[F(x_{\kmone})  - F^\star] + \frac{2}{n}\E\left[ \sum_{i=1}^n \frac{1}{n q_i}\| u^i_\kmone - u^i_\star\|^2\right] + 
3\rho_Q\tilde{\sigma}^2, 
\label{eq:var2}
\end{equation}
where $L_Q = \max_i L_i/(q_i n)$, $\rho_Q = 1/(n \min_i q_i)$, and for all $i$ and $k$, $u_k^i$ is equal to $z_k^i$ without noise---that is
\begin{displaymath}
\begin{split}
u_k^i & = \nabla f_i(\tilde{x}_k) ~~~\text{for random-SVRG}\\
u_{k}^{j_k} & = \nabla f_{j_k}(x_k) - \beta x_k ~~~\text{and}~~~ u_k^j  = u_\kmone^j ~~~\text{if}~~~j \neq j_k ~~~\text{for SAGA/MISO/SDCA},
\end{split}
\end{displaymath}
and $u^i_\star = \nabla f_i(x^\star)-\beta x^\star$ (with $\beta=0$ for random-SVRG).

   If we additionally assume that each function $f_i$ may be written as $f_i(x)=\E_{\xi}\brb{\tilde{f}_i(x,\xi)}$ where~$\tilde{f}_i(.,\xi)$ is $L_i$-smooth with $L_i \geq \mu$ for all~$\xi$,  then
\begin{equation}
\E[\|g_k- \nabla f(x_{\kmone})\|^2] \leq 
16 L_Q \E[F(x_{\kmone})  - F^\star] + \frac{2}{n}\E\left[ \sum_{i=1}^n \frac{1}{n q_i}\| u^i_\kmone - u^i_\star\|^2\right] + 
6\rho_Q\tilde{\sigma}_\star^2. 
\label{eq:var3}
\end{equation}
\end{proposition}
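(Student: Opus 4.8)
My plan is to adapt the proximal-SVRG variance estimate of \citet{proxsvrg} to the perturbed, non-uniformly-sampled setting; the argument rests on two conditional-orthogonality facts: the perturbation $\zeta_k$ drawn freshly at step $k$ is zero-mean given everything else in $g_k$, and any \emph{stored} perturbation is drawn \emph{after} the iterate it is attached to, hence is uncorrelated with it. First I would peel off the fresh noise: writing $\tildenabla f_{i_k}(x_\kmone) = \nabla f_{i_k}(x_\kmone) + \zeta_k$ gives $g_k = h_k + \tfrac{1}{q_{i_k}n}\zeta_k$, where $h_k$ does not involve $\zeta_k$, $\E[\zeta_k \mid \Fcal_\kmone, i_k] = 0$, $\E[\|\zeta_k\|^2] \le \tilde\sigma_{i_k}^2$, and $\E[h_k \mid \Fcal_\kmone] = \nabla f(x_\kmone)$. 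Conditional independence then yields $\E[\|g_k - \nabla f(x_\kmone)\|^2 \mid \Fcal_\kmone] = \E[\|h_k - \nabla f(x_\kmone)\|^2 \mid \Fcal_\kmone] + \E[\tfrac{1}{(q_{i_k}n)^2}\|\zeta_k\|^2 \mid \Fcal_\kmone]$, and the last term is at most $\sum_i q_i\,\tilde\sigma_i^2/(q_i n)^2 = \tilde\sigma^2$ by~(\ref{eq:ts}).

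Next I would treat $h_k - \nabla f(x_\kmone)$ as the variance of the single sampled summand. Everything in $h_k$ beyond the sampled coordinate ($\bar z_\kmone$ and the $\beta$-correction, both constant in $i_k$) drops out of the variance, so $h_k - \nabla f(x_\kmone) = Y_k - \E[Y_k \mid \Fcal_\kmone]$ with $Y_k = \tfrac{1}{q_{i_k}n}(\nabla f_{i_k}(x_\kmone) - \beta x_\kmone - z_\kmone^{i_k})$, hence $\E[\|h_k - \nabla f(x_\kmone)\|^2 \mid \Fcal_\kmone] \le \E[\|Y_k\|^2 \mid \Fcal_\kmone]$. Inserting $\pm u_\star^{i_k} = \pm(\nabla f_{i_k}(x^\star) - \beta x^\star)$ and using $\|a-b\|^2 \le 2\|a\|^2 + 2\|b\|^2$ splits this into $2\,\E[\tfrac{1}{(q_{i_k}n)^2}\|(\nabla f_{i_k}(x_\kmone) - \nabla f_{i_k}(x^\star)) - \beta(x_\kmone - x^\star)\|^2 \mid \Fcal_\kmone]$ plus $\tfrac2n\sum_{i=1}^n \tfrac{1}{nq_i}\|z_\kmone^i - u_\star^i\|^2$. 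For the first piece I would use $L_i/(q_i n) \le L_Q$, the smoothness--convexity inequality $\|\nabla f_i(x) - \nabla f_i(x^\star)\|^2 \le 2 L_i\big(f_i(x) - f_i(x^\star) - \nabla f_i(x^\star)^\top(x - x^\star)\big)$, averaging over $i$, and the optimality condition $\nabla f(x^\star) + \psi'(x^\star) = 0$ with convexity of $\psi$ to replace the term $f(x_\kmone) - f(x^\star) - \nabla f(x^\star)^\top(x_\kmone-x^\star)$ by $F(x_\kmone) - F^\star$; the correction term contributes $\beta^2\|x_\kmone - x^\star\|^2$ (present only for MISO/SDCA, $\beta > 0$), which is absorbed using $\beta \le \mu$ and $\tfrac\mu2\|x_\kmone - x^\star\|^2 \le F(x_\kmone) - F^\star$. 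This produces the $4 L_Q\,\E[F(x_\kmone) - F^\star]$ term.

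To finish~(\ref{eq:var2}) I would split the memory term: writing $z_\kmone^i = u_\kmone^i + \nu_\kmone^i$, the perturbation $\nu_\kmone^i$ was drawn when $z^i$ was last refreshed, i.e.\ after the iterate fixing $u_\kmone^i$, so $\E[\langle u_\kmone^i - u_\star^i, \nu_\kmone^i\rangle] = 0$ in full expectation and $\E[\|\nu_\kmone^i\|^2] \le \tilde\sigma_i^2$; hence $\tfrac2n\,\E\sum_i \tfrac{1}{nq_i}\|z_\kmone^i - u_\star^i\|^2 \le \tfrac2n\,\E\sum_i \tfrac{1}{nq_i}\|u_\kmone^i - u_\star^i\|^2 + 2\tilde\sigma^2$, and adding the $\tilde\sigma^2$ from the fresh perturbation gives the $3\tilde\sigma^2 \le 3\rho_Q\tilde\sigma^2$ term. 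The delicate point of the whole proof is exactly this accounting: keeping the \emph{noisy} memory $z_\kmone^i$ intact until after the $\pm u_\star^{i_k}$ split so that the perturbations enter only additively (a premature triangle inequality would multiply the $L_Q$ and memory constants), together with the handling of the $\beta$-correction.

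Finally, for~(\ref{eq:var3}), where only $\tilde\sigma_\star^2$ is assumed finite and $f_i = \E_\xi[\tilde f_i(\cdot,\xi)]$ with each $\tilde f_i(\cdot,\xi)$ convex and $L_i$-smooth, the single change is that $\E_\xi\|\tildenabla f_i(x) - \nabla f_i(x)\|^2$ is no longer a uniform constant; I would control it through the expected-smoothness bound $\E_\xi\|\nabla\tilde f_i(x,\xi) - \nabla\tilde f_i(x^\star,\xi)\|^2 \le 2 L_i\big(f_i(x) - f_i(x^\star) - \nabla f_i(x^\star)^\top(x - x^\star)\big)$, which, after a triangle inequality through $\nabla\tilde f_i(x^\star,\xi)$ and $\nabla f_i(x^\star)$, gives $\E_\xi\|\tildenabla f_i(x) - \nabla f_i(x)\|^2 \le O(L_i)\big(f_i(x) - f_i(x^\star) - \nabla f_i(x^\star)^\top(x - x^\star)\big) + O(1)\,\tilde\sigma_{i,\star}^2$. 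Substituting this wherever a noise variance appeared feeds the extra function-value terms into the $L_Q\,\E[F(x_\kmone) - F^\star]$ part, raising its constant to $16$ and the noise constant to $6$, while leaving the memory term $\tfrac2n\sum_i \tfrac{1}{nq_i}\|u_\kmone^i - u_\star^i\|^2$ unchanged.
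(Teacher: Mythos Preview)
Your plan is essentially the paper's: peel off the fresh perturbation $\zeta_k$ by conditional independence, drop the centering via $\E\|X-\E X\|^2\le\E\|X\|^2$, insert $\pm u_\star^{i_k}$, bound the first piece by the smoothness--convexity inequality and the second by the memory term plus the stored-noise variance, and finally replace $f(x_\kmone)-f(x^\star)-\nabla f(x^\star)^\top(x_\kmone-x^\star)$ by $F(x_\kmone)-F^\star$ using $-\nabla f(x^\star)\in\partial\psi(x^\star)$. Your treatment of~(\ref{eq:var3}) through expected smoothness of each $\tilde f_i(\cdot,\xi)$ also matches the paper's, including the arithmetic $4+3\cdot 4=16$ and $3\cdot 2=6$.

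The one place where your plan diverges from the paper---and would not deliver the stated constant $4L_Q$---is the $\beta$-correction. You cannot extract a clean additive ``$\beta^2\|x_\kmone-x^\star\|^2$'' from $\|(\nabla f_{i_k}(x_\kmone)-\nabla f_{i_k}(x^\star))-\beta(x_\kmone-x^\star)\|^2$: expanding leaves a cross term, and another $\|a-b\|^2\le 2\|a\|^2+2\|b\|^2$ would double the leading constant. The paper avoids this via Lemma~\ref{lemma:useful}: for $\beta\in[0,\mu]$ and $f_i$ that is $\mu$-strongly convex and $L_i$-smooth,
\[
\|\nabla f_i(x)-\nabla f_i(x^\star)-\beta(x-x^\star)\|^2 \;\le\; 2L_i\bigl(f_i(x)-f_i(x^\star)-\nabla f_i(x^\star)^\top(x-x^\star)\bigr),
\]
proved by applying the standard bound to the shifted function $\phi_i(x)=f_i(x)-\tfrac{\beta}{2}\|x\|^2$, which is convex and $(L_i-\beta)$-smooth. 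This absorbs the $\beta$-term at zero cost in the constant, yielding exactly $4L_Q$ with no separate strong-convexity absorption needed.
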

In particular, choosing the uniform distribution $q_i = 1/n$ gives $L_Q=\max_i L_i$; choosing $q_i =
L_i/\sum_j L_j$ gives $L_Q = \frac{1}{n} \sum_i L_i$, which may be
significantly smaller than the maximum Lipschitz constant. We note that
non-uniform sampling can significantly improve the dependency of the bound to
the Lipschitz constants since the average $\frac{1}{n}\sum_i L_i$ may be
significantly smaller than the maximum $\max_i L_i$, but it may worsen the dependency with
the variance $\tilde{\sigma}^2$ since $\rho_Q > 1$ unless $Q$ is the uniform distribution. The proof of the proposition is given in Appendix~\ref{appendix:prop.nonu}.

For simplicity,
we will present our complexity results in terms of~$\tilde{\sigma}^2$.
However, when the conditions for~(\ref{eq:var3}) are satisfied, it is easy to adapt all results of this section to replace~$\tilde{\sigma}^2$ by~$\tilde{\sigma}_{\star}^2$, by paying a small price in terms of constant factors.
Note that this substitution will not work for accelerated algorithms in the next section.
The general convergence result is given next; it applies to both variants~(\ref{eq:opt1}) and~(\ref{eq:opt2}).

\begin{proposition}[\bf Lyapunov function for variance-reduced algorithms]\label{thm:lyapunov}
Consider the same setting as Proposition~\ref{prop:nonu}.
For either variant~(\ref{eq:opt1}) or~(\ref{eq:opt2}) with the random-SVRG or SAGA/SDCA/MISO gradient estimators defined in Section~\ref{subsec:gradients}, when using the construction of $d_k$ from Sections~\ref{subsec:dk} or~\ref{subsec:dk2}, respectively, and assuming $\gamma_0 \geq \mu$ and $(\eta_k)_{k \geq 0}$ is non-increasing with $\eta_k \leq \frac{1}{12 L_Q}$, we have for all $k \geq 1$,
\begin{equation}
\frac{\delta_k}{6}\E[F(x_k)-F^\star] + T_k  \leq \left( 1 - \tau_k \right)T_\kmone  + {3 \rho_Q \eta_k \delta_k\tilde{\sigma}^2}~~~~\text{with}~~~~ \tau_k = \min\left(\delta_k, \frac{1}{5n}\right), \label{eq:aux2}
\end{equation}
where
$$T_k = {5} L_Q\eta_k \delta_k \E[F(x_k)-F^\star] + \E[d_k(x^\star)-d_k^\star] + \frac{5 \eta_k \delta_k}{2}\E\left[\frac{1}{n} \sum_{i=1}^n \frac{1}{q_i n} \| u^i_k - u^i_\star\|^2\right].
$$
\end{proposition}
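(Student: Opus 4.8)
The plan is to combine the two prior ingredients: the ``key relation'' of Proposition~\ref{prop:keyprop}, which already controls $\delta_k(\E[F(x_k)]-F^\star)+\E[d_k(x^\star)-d_k^\star]$ in terms of $(1-\delta_k)\E[d_\kmone(x^\star)-d_\kmone^\star]$ plus the noise term $\eta_k\delta_k\omega_k^2$, and the variance bound of Proposition~\ref{prop:nonu}, which decomposes $\omega_k^2 = \E[\|g_k-\nabla f(x_\kmone)\|^2]$ into a term proportional to $\E[F(x_\kmone)-F^\star]$, a ``memory'' term $\frac{2}{n}\E[\sum_i \frac{1}{q_i n}\|u^i_\kmone-u^i_\star\|^2]$, and the irreducible noise $3\rho_Q\tilde{\sigma}^2$. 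First I would substitute~(\ref{eq:var2}) into~(\ref{eq:relation}) to get
\begin{displaymath}
\delta_k(\E[F(x_k)]-F^\star) + \E[d_k(x^\star)-d_k^\star] \leq (1-\delta_k)\E[d_\kmone(x^\star)-d_\kmone^\star] + 4L_Q\eta_k\delta_k\E[F(x_\kmone)-F^\star] + \tfrac{2\eta_k\delta_k}{n}\E[M_\kmone] + 3\rho_Q\eta_k\delta_k\tilde{\sigma}^2,
\end{displaymath}
writing $M_k = \sum_i \frac{1}{q_i n}\|u^i_k-u^i_\star\|^2$ for brevity. The idea is then to absorb the stray $\E[F(x_\kmone)-F^\star]$ term and the stray $\E[M_\kmone]$ term into the quantities $5L_Q\eta_\kmone\delta_\kmone\E[F(x_\kmone)-F^\star]$ and $\frac{5\eta_\kmone\delta_\kmone}{2}\frac1n\E[M_\kmone]$ that appear inside $T_\kmone$, so that the recursion closes in $T_k$ with the stated contraction factor $(1-\tau_k)$ and the stated residual $3\rho_Q\eta_k\delta_k\tilde{\sigma}^2$, at the cost of losing a factor on $\E[F(x_k)-F^\star]$ (hence the $\delta_k/6$ on the left).

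The key auxiliary step is a one-step recursion for $\E[M_k]$ itself. For both gradient-estimator variants, updating the memory variables $u^i_k$ happens with probability $1/n$ per coordinate (for random-SVRG, the whole anchor moves with probability $1/n$; for SAGA/MISO/SDCA, coordinate $j_k$ is refreshed, which is uniform, so each $u^i$ is touched with probability $1/n$). A standard computation gives something of the form $\E[M_k \mid \Fcal_\kmone] \leq (1-\tfrac1n)M_\kmone + \tfrac1n \cdot c\, L_Q (F(x_k)-F^\star)$ or a variant with $x_\kmone$, using $\|\nabla f_i(x)-\nabla f_i(x^\star)\|^2 \leq 2L_i(f_i(x)-f_i(x^\star)-\nabla f_i(x^\star)^\top(x-x^\star))$ summed against the weights $1/(q_i n)$, which produces exactly the $L_Q$ scaling after normalization. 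I would multiply this recursion by the coefficient $\frac{5\eta_k\delta_k}{2n}$ appearing in $T_k$, use that $(\eta_k\delta_k)$ is non-increasing (so $\eta_k\delta_k \leq \eta_\kmone\delta_\kmone$, allowing the ``new'' memory coefficient to be dominated by the ``old'' one), and add the result to the displayed inequality above.

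The arithmetic bookkeeping is the main obstacle: one must check that the constants actually fit. Concretely, the contraction factor on the memory term must come out to at least $(1-\tau_k)$ with $\tau_k=\min(\delta_k,\frac1{5n})$ — the $\frac1{5n}$ is there because when $\delta_k > \frac1{5n}$ the memory term only contracts at rate $1-\frac1n$, not $1-\delta_k$, so one cannot do better than $\tfrac1{5n}$ uniformly; and the $\E[F(x_\kmone)-F^\star]$ produced both by the variance bound ($4L_Q\eta_k\delta_k$) and by the memory recursion (another $O(L_Q\eta_k\delta_k)$) must be dominated by the $5L_Q\eta_\kmone\delta_\kmone\E[F(x_\kmone)-F^\star]$ sitting in $T_\kmone$ after it is discounted by $(1-\tau_k)$ — this is where the hypothesis $\eta_k \leq \frac1{12L_Q}$ enters, ensuring $4L_Q\eta_k\delta_k$ and friends are small enough relative to $\delta_k$ and to the $5L_Q\eta_\kmone\delta_\kmone$ budget. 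I expect the choice of the precise numerical constants ($5$, $6$, $12$, $3$) to be exactly tuned so that a chain of ``$\leq$'' with these crude bounds lands on~(\ref{eq:aux2}); the conceptual content is entirely in the previous two propositions and the memory recursion, and the rest is verifying that the inequalities compose. One subtlety worth flagging in the write-up is the $\psi$-handling for variant~(\ref{eq:opt2}): there $d_k$ is not quadratic, but since Proposition~\ref{prop:keyprop} and the identity $d_k(x)=c_k+\frac{\gamma_k}{2}\|x-\bar x_k\|^2+\psi(x) \geq d_k^\star + \frac{\gamma_k}{2}\|x-x_k\|^2$ were already established in Section~\ref{subsec:dk2}, the same $T_k$ recursion goes through verbatim with $d_k(x^\star)-d_k^\star$ interpreted via that inequality.
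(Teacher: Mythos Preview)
Your proposal is correct and follows essentially the same approach as the paper: substitute the variance bound~(\ref{eq:var2}) into the key relation~(\ref{eq:relation}), derive the one-step memory recursion $C_k \leq (1-\tfrac{1}{n})C_\kmone + \tfrac{2L_Q}{n}\E[F(x_k)-F^\star]$, add a multiple $\beta_k C_k$ (with $\beta_k/n = \tfrac{5}{2}\eta_k\delta_k$) of it to both sides, and then use monotonicity of $\eta_k\delta_k$ together with $4 \leq 5(1-\tfrac{1}{5n})$ and $\eta_k \leq \tfrac{1}{12L_Q}$ to absorb the stray $F_\kmone$, $C_\kmone$ terms into $(1-\tau_k)T_\kmone$. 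The only point you leave slightly vague is which iterate appears in the memory recursion; in the paper it is $F(x_k)$ (not $F(x_\kmone)$), which is why an explicit $5L_Q\eta_k\delta_k\E[F(x_k)-F^\star]$ sits inside $T_k$ and the leftover $\delta_k(1-10L_Q\eta_k)\E[F(x_k)-F^\star]\geq \tfrac{\delta_k}{6}\E[F(x_k)-F^\star]$ gives the stated coefficient.
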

The proof of the previous proposition is given in Appendix~\ref{appendix:prop.lyapunov}. From the Lyapunov function, we obtain a general convergence result for the variance-reduced stochastic algorithms. 
\begin{theorem}[\bf Convergence of variance-reduced algorithms]\label{thm:svrg}
   Consider the same setting as Proposition~\ref{thm:lyapunov}, which applies to both variants~(\ref{eq:opt1}) and~(\ref{eq:opt2}). Then, by using the averaging strategy of Lemma~\ref{lemma:averaging} with any point $\hat{x}_0$,
\begin{equation}
   \E\left[F(\hat{x}_k)-F^\star  + \frac{6 \tau_k}{\delta_k} T_k\right] \leq \Theta_k \left( F(\hat{x}_0)-F^\star + \frac{6 \tau_k}{\delta_k} T_0 + \frac{18 \rho_Q \tau_k \tilde{\sigma}^2}{\delta_k} \sum_{t=1}^k \frac{\eta_t \delta_t}{\Theta_t}     \right),
\label{eq:lyapunov.svrg}
\end{equation}
where $\Theta_k = \prod_{t=1}^k(1-\tau_t)$. Note that we also have
   \begin{equation}
      T_0 \leq   {10} L_Q\eta_0 \delta_0 (F(x_0)-F^\star) + d_0(x^\star)-d_0^\star.\label{eq:T0}
   \end{equation}
\end{theorem}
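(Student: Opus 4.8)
The plan is to unroll the one-step Lyapunov recursion of Proposition~\ref{thm:lyapunov} and then combine it with a convexity bound on the averaged iterate~$\hat{x}_k$, in the spirit of Lemma~\ref{lemma:averaging}, but with an extra rescaling step that is forced by the mismatch between the two rates $\delta_k$ and $\tau_k$. First I would start from the inequality of Proposition~\ref{thm:lyapunov},
\[
   \tfrac{\delta_k}{6}\,\E[F(x_k)-F^\star] + T_k \;\leq\; (1-\tau_k)\,T_\kmone + 3\rho_Q\eta_k\delta_k\tilde{\sigma}^2,
\]
divide both sides by $\Theta_k=\prod_{t=1}^k(1-\tau_t)$, use $(1-\tau_k)/\Theta_k = 1/\Theta_\kmone$ together with $\Theta_0=1$, and sum over $t=1,\dots,k$; the $T$-terms telescope, and after multiplying back by $\Theta_k$ this yields
\[
   \Theta_k\sum_{t=1}^k \frac{\delta_t}{6\Theta_t}\,\E[F(x_t)-F^\star] + T_k \;\leq\; \Theta_k\Bigl(T_0 + 3\rho_Q\tilde{\sigma}^2\sum_{t=1}^k \tfrac{\eta_t\delta_t}{\Theta_t}\Bigr);
\]
call this inequality~$(\star)$.

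The core of the argument is the averaging step. Unrolling $\hat{x}_k=(1-\tau_k)\hat{x}_\kmone+\tau_k x_k$ expresses $\hat{x}_k$ as the convex combination $\Theta_k\hat{x}_0+\sum_{t=1}^k(\tau_t\Theta_k/\Theta_t)\,x_t$, whose coefficients sum to one because $\sum_{t=1}^k\tau_t/\Theta_t = 1/\Theta_k-1$. Convexity of $F$ then gives $\E[F(\hat{x}_k)-F^\star]\leq \Theta_k\,\E[F(\hat{x}_0)-F^\star]+\sum_{t=1}^k(\tau_t\Theta_k/\Theta_t)\,\E[F(x_t)-F^\star]$. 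The key observation is that $\tau_t/\delta_t=\min\bigl(1,\tfrac{1}{5n\delta_t}\bigr)$ is non-decreasing in $t$: since $(\eta_t)$ is non-increasing and $\gamma_t=\gamma_\kmone/(1+\eta_t(\gamma_\kmone-\mu))\leq\gamma_\kmone$ (with $\gamma_t\geq\mu$ preserved along the recursion, as $\mu\eta_t\leq\mu/(12L_Q)<1$), the sequence $\delta_t=\eta_t\gamma_t$ is non-increasing. Hence $\tau_t/\delta_t\leq\tau_k/\delta_k$ for all $t\leq k$, so, since each $\E[F(x_t)-F^\star]\geq0$, I can pull this factor out and then invoke~$(\star)$:
\[
   \sum_{t=1}^k \frac{\tau_t\Theta_k}{\Theta_t}\,\E[F(x_t)-F^\star] \;\leq\; \frac{6\tau_k}{\delta_k}\,\Theta_k\sum_{t=1}^k \frac{\delta_t}{6\Theta_t}\,\E[F(x_t)-F^\star] \;\leq\; \frac{6\tau_k}{\delta_k}\Bigl[\,\Theta_k\bigl(T_0 + 3\rho_Q\tilde{\sigma}^2\textstyle\sum_{t=1}^k \tfrac{\eta_t\delta_t}{\Theta_t}\bigr) - T_k\Bigr].
\]
Plugging this into the convexity bound and moving $\tfrac{6\tau_k}{\delta_k}T_k$ to the left gives exactly~(\ref{eq:lyapunov.svrg}).

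For the bound~(\ref{eq:T0}) on $T_0$, I would expand $T_0$ from its definition and control only its last term. Since $u_0^i-u_\star^i=\nabla f_i(x_0)-\nabla f_i(x^\star)-\beta(x_0-x^\star)$ is the gradient difference of the convex, $(L_i-\beta)$-smooth function $f_i-\tfrac{\beta}{2}\|\cdot\|^2$, one has $\|u_0^i-u_\star^i\|^2\leq 2L_i\,D_i(x_0,x^\star)$ where $D_i$ denotes the corresponding Bregman divergence; averaging with weights $1/(q_in)$, using $L_i/(q_in)\leq L_Q$ and $\tfrac1n\sum_i D_i(x_0,x^\star)\leq f(x_0)-f(x^\star)-\nabla f(x^\star)^\top(x_0-x^\star)\leq F(x_0)-F^\star$ (the last inequality because $-\nabla f(x^\star)\in\partial\psi(x^\star)$), the third term of $T_0$ is at most $5L_Q\eta_0\delta_0(F(x_0)-F^\star)$, which combined with the first term of $T_0$ gives~(\ref{eq:T0}). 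Throughout, all steps are taken in expectation, the only non-deterministic step being the convexity bound, which survives applying $\E[\cdot]$.

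The main obstacle I anticipate is precisely the rate mismatch exploited above: Proposition~\ref{thm:lyapunov} carries the factor $\delta_k/6$ in front of the function-value gap, whereas the averaging weights carry $\tau_k$, so Lemma~\ref{lemma:averaging} cannot be applied as a black box. The remedy---pulling the non-constant factor $\tau_k/\delta_k$ out of the telescoped sum---is legitimate only because $t\mapsto\tau_t/\delta_t$ is monotone, a fact that must be traced back through the $\gamma_k$-recursion to the hypothesis that $(\eta_k)$ is non-increasing. Everything else (the telescoping in the first step, the convex-combination identity, and the smoothness estimate for $T_0$) is routine.
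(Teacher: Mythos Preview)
Your proof is correct and follows essentially the same approach as the paper: both hinge on the monotonicity of $\tau_t/\delta_t$ (equivalently, the paper uses that $\delta_t/\tau_t$ is non-increasing) to reconcile the $\delta_k$ appearing in Proposition~\ref{thm:lyapunov} with the $\tau_k$ weights of the averaging, and both handle $T_0$ via the same smoothness/Bregman bound combined with $-\nabla f(x^\star)\in\partial\psi(x^\star)$. The only cosmetic difference is that the paper first weakens Proposition~\ref{thm:lyapunov} to $\frac{\tau_k\delta_K}{6\tau_K}\E[F(x_k)-F^\star]+T_k\le(1-\tau_k)T_{k-1}+3\rho_Q\eta_k\delta_k\tilde{\sigma}^2$ and then invokes Lemma~\ref{lemma:averaging} as a black box, whereas you unroll the telescoping and Jensen steps by hand and apply the monotonicity inside the sum---same content, same constants.
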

The proof is given in Appendix~\ref{appendix:thm:svrg}.
From this generic convergence theorem, we now study particular cases. 
% Even though they will be stated for variant~(\ref{eq:opt1}), all
% these results can be adapted to variant~(\ref{eq:opt2}) when $\mu > 0$, see discussion following Theorem~\ref{thm:conv}. 
% For that, we recall that we simply need to
% replace $\frac{\gamma_0}{2}\|x_0-x^\star\|^2$---equal to $d_0(x^\star)-d_0^\star$ for variant~(\ref{eq:opt1})---by 
% $d_0(x^\star)-d_0^\star=\frac{\gamma_0}{2}\|x^\star-x_0\|^2 + \psi(x^\star) - \psi(x_0) - \psi'(x_0)^\top(x_0-x^\star)$ for variant~(\ref{eq:opt2}), where $\psi'(x_0)=\gamma_0(x_0-\bar{x}_0)$.
The first corollary studies the strongly-convex case with constant step size. 

\begin{corollary}[\bf Variance-reduction, $\mu > 0$, constant step size independent of $\mu$]\label{corollary:svrg0}~\newline
   Consider the same setting as in Theorem~\ref{thm:svrg}, where $f$ is $\mu$-strongly convex, $\gamma_0=\mu$, and $\eta_k = \frac{1}{12L_Q}$. Then, for any point $\hat{x}_0$, 
 \begin{equation}
    \E\left[F(\hat{x}_k)-F^\star  + \alpha T_k \right] \leq \Theta_k\left( F(\hat{x}_0)-F^\star + \alpha T_0 \right)  + \frac{3 \rho_Q \tilde{\sigma}^2}{2L_Q} 
 \label{eq:svrg_constant}
 \end{equation}
   with $\tau = \min\left( \frac{\mu}{12L_Q}, \frac{1}{5n}\right)$, $\Theta_k= (1-\tau)^k$, and $\alpha=6 \min\left(1, \frac{12 L_Q}{5\mu n} \right)$.
% \begin{equation}
% \begin{split}
% \E\left[F(\hat{x}_k)-F^\star  + \alpha \|x_k-x^\star\|^2\right] & \leq \Theta_k\left( (1+5 \tau) (F(x_0)-F^\star) + \alpha \|x_0-x^\star\|^2 \right)  + \frac{3 \rho_Q \tilde{\sigma}^2}{2L_Q} \\
% & \leq  8 \Theta_k\left( F(x_0)-F^\star \right)  + \frac{3 \rho_Q \tilde{\sigma}^2}{2L_Q},  \label{eq:svrg_constant}
% \end{split}
% \end{equation}
% with $\tau = \min\left( \frac{\mu}{12L_Q}, \frac{1}{5n}\right)$, $\Theta_k= (1-\tau)^k$, and $\alpha=36 L_Q \tau \leq 3\mu$.
   Note that $T_k \geq \frac{\mu}{2}\|x_k-x^\star\|^2$ and for Algorithm~(\ref{eq:opt1}), we also have $T_0 \leq (13/12)(F(x_0)-F^\star)$.
\end{corollary}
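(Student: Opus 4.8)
The plan is to derive~(\ref{eq:svrg_constant}) by specializing the generic bound~(\ref{eq:lyapunov.svrg}) of Theorem~\ref{thm:svrg}. The first step is to observe that the choices $\gamma_0=\mu$ and $\eta_k\equiv 1/(12L_Q)$ make all auxiliary quantities constant in~$k$: since $\gamma_k=(1-\delta_k)\gamma_\kmone+\mu\delta_k$ by~(\ref{eq:gamma}), the value $\gamma_k=\mu$ is an invariant of the recursion, so $\gamma_k=\mu$, $\delta_k=\eta_k\gamma_k=\mu/(12L_Q)$, $\tau_k=\min(\delta_k,1/(5n))=\tau$, and $\Theta_k=(1-\tau)^k$ for all $k\ge 1$. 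In particular $6\tau_k/\delta_k=6\min(1,12L_Q/(5\mu n))=\alpha$ is constant, so the left-hand side of~(\ref{eq:lyapunov.svrg}) is exactly $\E[F(\hat x_k)-F^\star+\alpha T_k]$ and the first two terms of its right-hand side become $\Theta_k(F(\hat x_0)-F^\star+\alpha T_0)$.

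It then remains to bound the noise term of~(\ref{eq:lyapunov.svrg}). Substituting the constant values gives $\frac{18\rho_Q\tau\tilde{\sigma}^2}{\delta}\sum_{t=1}^k\frac{\eta_t\delta_t}{\Theta_t}=\frac{3\rho_Q\tau\tilde{\sigma}^2}{2L_Q}\sum_{t=1}^k(1-\tau)^{-t}$; summing this geometric series and multiplying by $\Theta_k=(1-\tau)^k$ yields $\Theta_k\sum_{t=1}^k(1-\tau)^{-t}=(1-(1-\tau)^k)/\tau\le 1/\tau$, so the whole noise contribution is at most $\frac{3\rho_Q\tilde{\sigma}^2}{2L_Q}$. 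Combined with the previous paragraph, this establishes~(\ref{eq:svrg_constant}).

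For the two remaining assertions I would argue directly from the definitions. The inequality $T_k\ge\frac{\mu}{2}\|x_k-x^\star\|^2$ holds because $d_k(x)\ge d_k^\star+\frac{\gamma_k}{2}\|x-x_k\|^2$ for both variants (with equality for~(\ref{eq:opt1})), so $d_k(x^\star)-d_k^\star\ge\frac{\gamma_k}{2}\|x_k-x^\star\|^2=\frac{\mu}{2}\|x_k-x^\star\|^2$, while the other two terms in the definition of $T_k$ (Proposition~\ref{thm:lyapunov}) are nonnegative since $F(x_k)\ge F^\star$. For $T_0\le\frac{13}{12}(F(x_0)-F^\star)$ for variant~(\ref{eq:opt1}), I would bound the three terms of $T_0$ separately: (i)~$5L_Q\eta_0\delta_0=5\mu/(144L_Q)\le 5/144$, using $L_Q=\max_iL_i/(q_in)\ge\frac1n\sum_iL_i\ge\mu$; (ii)~$d_0(x^\star)-d_0^\star=\frac{\gamma_0}{2}\|x_0-x^\star\|^2=\frac{\mu}{2}\|x_0-x^\star\|^2\le F(x_0)-F^\star$ by $\mu$-strong convexity; (iii)~from $\|\nabla f_i(x_0)-\nabla f_i(x^\star)\|^2\le 2L_i(f_i(x_0)-f_i(x^\star)-\nabla f_i(x^\star)^\top(x_0-x^\star))$, the bound $L_i/(q_in)\le L_Q$, and $-\nabla f(x^\star)\in\partial\psi(x^\star)$ (which gives $\frac1n\sum_i(f_i(x_0)-f_i(x^\star)-\nabla f_i(x^\star)^\top(x_0-x^\star))\le F(x_0)-F^\star$), one obtains $\frac1n\sum_i\frac{1}{q_in}\|u_0^i-u_\star^i\|^2\le 2L_Q(F(x_0)-F^\star)$, so the third term of $T_0$ is at most $5L_Q\eta_0\delta_0(F(x_0)-F^\star)\le\frac{5}{144}(F(x_0)-F^\star)$. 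Adding the three pieces, $T_0\le(1+\frac{5}{144}+\frac{5}{144})(F(x_0)-F^\star)\le\frac{13}{12}(F(x_0)-F^\star)$.

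These computations are all elementary once the invariance $\gamma_k\equiv\mu$ is noticed; the only mildly delicate points I anticipate are the telescoping of $\Theta_k\sum_t\Theta_t^{-1}$ in the noise term, and, for the SAGA/MISO estimators with $\beta>0$, the fact that $u_0^i-u_\star^i=(\nabla f_i(x_0)-\nabla f_i(x^\star))-\beta(x_0-x^\star)$ does not reduce directly to a Bregman divergence—there I would use $\|v-\beta w\|^2\le\|v\|^2+\beta^2\|w\|^2$ (monotonicity of $\nabla f_i$ kills the cross term), then $\|x_0-x^\star\|^2\le\frac{2}{\mu}(F(x_0)-F^\star)$ and $\beta\le\mu$, which only affects constants.
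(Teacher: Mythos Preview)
Your proof is correct and follows essentially the same route as the paper: specialize Theorem~\ref{thm:svrg} using $\gamma_k\equiv\mu$, then bound the noise sum (the paper invokes Lemma~\ref{lemma:simple} where you sum the geometric series explicitly, which is equivalent). One minor remark: your closing worry about the case $\beta>0$ is unnecessary, since Lemma~\ref{lemma:useful} already gives $\|\nabla f_i(x_0)-\nabla f_i(x^\star)-\beta(x_0-x^\star)\|^2\le 2L_i\bigl(f_i(x_0)-f_i(x^\star)-\nabla f_i(x^\star)^\top(x_0-x^\star)\bigr)$ directly for all $\beta\in[0,\mu]$, so part~(iii) of your $T_0$ bound goes through uniformly without any extra splitting.
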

The proof is given in Appendix~\ref{subsec.corollary:svrg0}. This corollary shows that the algorithm achieves a linear convergence rate to a noise-dominated region and produces converging iterates $(x_k)_{k \geq 0}$ that do not require to know the strong convexity constant~$\mu$. It shows that all estimators we consider can become \emph{adaptive} to~$\mu$. 
Note that the non-uniform strategy slightly degrades the dependency in $\tilde{\sigma}^2$: indeed, $L_Q/\rho_Q = \max_{i=1} L_i$ if~$Q$ is uniform, but if $q_i = \max_i L_i/\sum_{j} L_j$, we have instead $L_Q/\rho_Q = \min_{i=1} L_i$. The next corollary shows that a slightly better noise dependency can be achieved when the step sizes rely on $\mu$.
\begin{corollary}[\bf Variance-reduction, $\mu > 0$, constant step size depending on $\mu$]\label{corollary:svrg1}~\newline
   Consider the same setting as Theorem~\ref{thm:svrg}, where $f$ is $\mu$-strongly convex, $\gamma_0=\mu$, and $\eta_k = \eta = \min\left(\frac{1}{12L_Q}, \frac{1}{5\mu n}\right)$. Then, for all $\hat{x}_0$, 
\begin{equation}
\begin{split}
   \E\left[F(\hat{x}_k)-F^\star  + 6 T_k\right] & \leq \Theta_k\left(  F(\hat{x}_0)-F^\star + 6 T_0  \right)  + {18 \rho_Q \eta \tilde{\sigma}^2}. \\
%& \leq  8 \Theta_k\left( F(x_0)-F^\star \right)  + {18 \rho_Q \eta\tilde{\sigma}^2}.
\end{split}\label{eq:svrg_constant2}
\end{equation}
\end{corollary}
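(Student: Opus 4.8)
The plan is to specialize the general bound~(\ref{eq:lyapunov.svrg}) of Theorem~\ref{thm:svrg} to the constant step size $\eta_k = \eta = \min\!\left(\frac{1}{12L_Q}, \frac{1}{5\mu n}\right)$. This choice is admissible for that theorem: the sequence $(\eta_k)$ is constant, hence non-increasing, and $\eta \leq \frac{1}{12L_Q}$, while $\gamma_0 = \mu \geq \mu$. The first observation I would make is that, since $\gamma_0 = \mu$ and the recursion~(\ref{eq:gamma}) reads $\gamma_k = (1-\delta_k)\gamma_\kmone + \mu\delta_k$, the value $\gamma_k = \mu$ is a fixed point; by induction $\gamma_k = \mu$ and therefore $\delta_k = \eta_k\gamma_k = \eta\mu$ for every $k \geq 1$. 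It is important here that $\gamma_0 = \mu$ exactly, not merely $\gamma_0 \geq \mu$.

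Next I would verify that $\tau_k = \delta_k$ under this step size. By definition $\tau_k = \min(\delta_k, \frac{1}{5n})$, and $\delta_k = \eta\mu \leq \mu \cdot \frac{1}{5\mu n} = \frac{1}{5n}$ because $\eta \leq \frac{1}{5\mu n}$; hence $\tau_k = \delta_k = \eta\mu$ and the ratio $\tau_k/\delta_k$ that appears throughout~(\ref{eq:lyapunov.svrg}) is simply $1$. Moreover $\Theta_k = \prod_{t=1}^k(1-\tau_t) = (1-\eta\mu)^k$. Substituting these identities into~(\ref{eq:lyapunov.svrg}) collapses the left-hand side to $\E[F(\hat{x}_k) - F^\star + 6T_k]$ and the right-hand side to $(1-\eta\mu)^k\big(F(\hat{x}_0)-F^\star + 6T_0\big) + 18\rho_Q\tilde{\sigma}^2 (1-\eta\mu)^k \sum_{t=1}^k \frac{\eta_t\delta_t}{\Theta_t}$.

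It then remains only to bound the residual noise term. Plugging in $\eta_t = \eta$, $\delta_t = \eta\mu$, and $\Theta_t = (1-\eta\mu)^t$, this term is $18\rho_Q\tilde{\sigma}^2\,\eta^2\mu\,(1-\eta\mu)^k \sum_{t=1}^k (1-\eta\mu)^{-t}$. The geometric sum evaluates to $\sum_{t=1}^k (1-\eta\mu)^{-t} = \big((1-\eta\mu)^{-k} - 1\big)/(\eta\mu)$, so the whole expression equals $18\rho_Q\tilde{\sigma}^2\,\eta\,\big(1-(1-\eta\mu)^k\big) \leq 18\rho_Q\eta\tilde{\sigma}^2$, which is precisely the residual in~(\ref{eq:svrg_constant2}). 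No step here is genuinely difficult; the only points that require care are the fixed-point argument that forces $\delta_k \equiv \eta\mu$ and the verification $\delta_k \leq \frac{1}{5n}$, which together are what make the prefactor $6\tau_k/\delta_k$ simplify to the stated constant $6$ and the noise sum telescope cleanly.
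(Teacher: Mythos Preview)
Your proof is correct and follows essentially the same approach as the paper, which simply remarks that the argument for Corollary~\ref{corollary:svrg0} goes through once one observes that $\delta_k = \tau_k$ for this step size. Your explicit evaluation of the geometric sum is equivalent to the paper's use of Lemma~\ref{lemma:simple} (which gives $\Theta_k\sum_{t=1}^k \tau_t/\Theta_t = 1 - \Theta_k \leq 1$), and all the key observations---the fixed-point argument $\gamma_k\equiv\mu$, the inequality $\delta_k = \eta\mu \leq 1/(5n)$ forcing $\tau_k=\delta_k$, and the resulting simplification $6\tau_k/\delta_k = 6$---match exactly.
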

The proof follows similar steps as the proof of Corollary~\ref{corollary:svrg0}, after noting that
we have $\delta_k = \tau_k$ for all $k$ for this particular choice of step size.
We are now in shape to study a converging algorithm.

\begin{corollary}[\bf Variance-reduction, $\mu > 0$, decreasing step sizes]\label{corollary:svrg2}
Consider the same setting as Theorem~\ref{thm:svrg}, where $f$ is $\mu$-strongly convex and target an accuracy $\varepsilon \leq  {24 \rho_Q \eta\tilde{\sigma}^2}$,
with $\eta = \min\left(\frac{1}{12L_Q}, \frac{1}{5\mu n}\right)$.
   Then, we use the constant step-size strategy of Corollary~\ref{corollary:svrg1} with $\hat{x}_0=x_0$, and stop the optimization when we find points~$\hat{x}_k$ and~$x_k$ such that $\E[F(\hat{x}_k)-F^\star+6 T_k] \leq {24 \rho_Q \eta \tilde{\sigma}^2}$.
Then, we restart the optimization procedure with decreasing step-sizes
   $\eta_k = \min \left(\frac{1}{12 L_Q}, \frac{1}{5\mu n}, \frac{2}{\mu (k+2)}\right)$ and generate a new sequence~$(\hat{x}_k')_{k \geq 0}$. The
resulting number of gradient evaluations to achieve $\E[F(\hat{x}_k')- F^\star] \leq  \varepsilon$
is upper bounded by
   $$O\left( \left(n + \frac{L_Q}{\mu}\right) \log\left(\frac{F(x_0)-F^\star + d_0(x^\star)-d_0^\star}{\varepsilon} \right)  \right) + O\left( \frac{\rho_Q \tilde{\sigma}^2}{\mu \varepsilon}\right).$$
   Note that $d_0(x^\star)-d_0^\star \leq F(x_0)-F^\star$ for variant~(\ref{eq:opt1}).
\end{corollary}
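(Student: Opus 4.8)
The plan is to split the run into a linearly-convergent warm-up phase followed by a sublinear refinement phase, apply Theorem~\ref{thm:svrg} and its corollaries in each, and then add up the per-iteration gradient costs. Throughout I would exploit one structural fact: since $\gamma_0=\mu$, the recursion~(\ref{eq:gamma}) forces $\gamma_k=\mu$ and hence $\delta_k=\mu\eta_k$ for all $k$; and for every step size considered below $\delta_k=\mu\eta_k\le 1/(5n)$, so $\tau_k=\delta_k$, which makes the Lyapunov recursion of Proposition~\ref{thm:lyapunov} read $\tfrac{\delta_k}{6}\E[F(x_k)-F^\star]+T_k\le(1-\delta_k)T_{k-1}+3\rho_Q\eta_k\delta_k\tilde\sigma^2$ and turns the factor $6\tau_k/\delta_k$ in Theorem~\ref{thm:svrg} into $6$, with $\Theta_k=\Gamma_k=\prod_{t=1}^k(1-\delta_t)$.

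\textbf{Phase 1 (linear rate to the noise floor).} I would run Corollary~\ref{corollary:svrg1} verbatim with $\hat x_0=x_0$ and $\eta=\min(1/(12L_Q),1/(5\mu n))$, obtaining $\E[F(\hat x_k)-F^\star+6T_k]\le(1-\tau)^k(F(x_0)-F^\star+6T_0)+18\rho_Q\eta\tilde\sigma^2$ with $\tau=\min(\mu/(12L_Q),1/(5n))$. Using~(\ref{eq:T0}) one has $T_0=O(F(x_0)-F^\star+d_0(x^\star)-d_0^\star)$ (and $T_0\le\tfrac{13}{12}(F(x_0)-F^\star)$ for variant~(\ref{eq:opt1}), where $d_0(x^\star)-d_0^\star\le F(x_0)-F^\star$). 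Thus the stopping test $\E[F(\hat x_k)-F^\star+6T_k]\le 24\rho_Q\eta\tilde\sigma^2$ fires once $(1-\tau)^k(F(x_0)-F^\star+6T_0)\le 6\rho_Q\eta\tilde\sigma^2$, i.e.\ after $k_1=O\big(\tau^{-1}\log((F(x_0)-F^\star+d_0(x^\star)-d_0^\star)/(\rho_Q\eta\tilde\sigma^2))\big)$ iterations; since $\tau^{-1}=\max(12L_Q/\mu,5n)=O(n+L_Q/\mu)$ and the hypothesis $\varepsilon\le 24\rho_Q\eta\tilde\sigma^2$ gives $\rho_Q\eta\tilde\sigma^2\ge\varepsilon/24$, this is $O\big((n+L_Q/\mu)\log((F(x_0)-F^\star+d_0(x^\star)-d_0^\star)/\varepsilon)\big)$ iterations, each costing $O(1)$ gradient evaluations in expectation.

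\textbf{Phase 2 ($O(1/k)$ rate).} I would then restart with $x_0'=x_{k_1}$, the variables $u^i$ inherited, $\gamma_0'=\mu$, and $\eta_k=\min(1/(12L_Q),1/(5\mu n),2/(\mu(k+2)))$, noting that $\eta_0'$ equals the Phase-1 step size $\eta$, so that $\E[F(\hat x_0')-F^\star+6T_0']=O(\rho_Q\eta\tilde\sigma^2)$ (with the clean identity $T_0'=T_{k_1}$ for variant~(\ref{eq:opt1})). Theorem~\ref{thm:svrg} then gives $\E[F(\hat x_k')-F^\star+6T_k']\le\Gamma_k\big(O(\rho_Q\eta\tilde\sigma^2)+18\rho_Q\tilde\sigma^2\sum_{t=1}^k\eta_t\delta_t/\Gamma_t\big)$. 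Writing $k_0=\max(24L_Q/\mu,10n)$, the schedule is the constant $\eta$ with $\delta_t=\delta_0:=\mu\eta$ for $t<k_0$ and $\delta_t=2/(t+2)$ for $t\ge k_0$, so $\Gamma_t=(1-\delta_0)^{k_0}\tfrac{(k_0+1)(k_0+2)}{(t+1)(t+2)}$ for $t\ge k_0$ by the telescoping identity for $\prod_{s>k_0}(1-\tfrac2{s+2})$, with $(1-\delta_0)^{k_0}=\Theta(1)$ since $\delta_0k_0=2$. A routine two-regime computation then yields $\Gamma_k\sum_{t\le k}\eta_t\delta_t/\Gamma_t=O(1/(\mu k))$ and $\Gamma_k\cdot\rho_Q\eta\tilde\sigma^2=O(\rho_Q\tilde\sigma^2/(\mu k))$ for $k\ge k_0$ (using $\eta=\Theta(1/(\mu k_0))$), so $\E[F(\hat x_k')-F^\star]=O(\rho_Q\tilde\sigma^2/(\mu k))$. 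Solving for $\varepsilon$---and noting that $\varepsilon\le 24\rho_Q\eta\tilde\sigma^2$ forces $\rho_Q\tilde\sigma^2/(\mu\varepsilon)=\Omega(k_0)$, so the budget automatically exceeds $k_0$---Phase 2 needs $O(\rho_Q\tilde\sigma^2/(\mu\varepsilon))$ iterations, again $O(1)$ gradients each (for random-SVRG the cost-$n$ anchor refresh occurs with probability $1/n$). Summing the two phases gives the claimed bound.

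\textbf{Main obstacle.} The delicate point will be the hand-off: I must verify that the Lyapunov value $T_0'$ at the restart is genuinely bounded by the Phase-1 terminal value. This works precisely because $\eta_0'=\eta$, so all the $5L_Q\eta\delta$-, $\tfrac52\eta\delta$- and $d_0(x^\star)-d_0^\star$-contributions of $T_0'$ line up with those of $T_{k_1}$ (they coincide for variant~(\ref{eq:opt1}) since $\gamma_{k_1}=\mu$ and $d_{k_1}$ is quadratic); for variant~(\ref{eq:opt2}) the residual $d_0'(x^\star)-d_0'^\star$ needs the extra estimate~(\ref{eq:T0}) together with $F(x_0')-F^\star\le 24\rho_Q\eta\tilde\sigma^2$. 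Everything else is standard estimate-sequence bookkeeping for a step size that stays constant for $O(n+L_Q/\mu)$ iterations and then decays as $1/k$.
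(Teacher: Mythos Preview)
Your proposal is correct and follows essentially the same route as the paper: a constant step-size warm-up via Corollary~\ref{corollary:svrg1} contributing the $O((n+L_Q/\mu)\log(\cdot/\varepsilon))$ term, then a restart with the two-regime schedule $\eta_k=\min(\eta,2/(\mu(k+2)))$, the observation that $\tau_k=\delta_k$ throughout so that Theorem~\ref{thm:svrg} applies with $\Theta_k=\Gamma_k$, and the same Lemma~\ref{lemma:step}/Lemma~\ref{lemma:simple} computation yielding $\E[F(\hat x_k')-F^\star]=O(\rho_Q\tilde\sigma^2/(\mu k))$. Your discussion of the hand-off (that $T_0'$ inherits the Phase-1 terminal value because $\eta_0'=\eta$) is in fact more explicit than the paper, which simply asserts the restarted initial condition without comment.
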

The proof is given in Appendix~\ref{appendix:cor.svrg2} and shows that variance-reduction algorithms may exhibit an optimal dependency on the noise level $\tilde{\sigma}^2$ when the objective is strongly convex.
Next, we analyze the complexity of variant~(\ref{eq:opt1}) when $\mu=0$. Note that it is possible to conduct a similar analysis for variant~(\ref{eq:opt2}), which exhibits a slightly worse complexity (as the corresponding quantity $d_0(x^\star)-d_0^\star$ is larger).
\begin{corollary}[\bf Variance-reduced algorithms with constant step-size, $\mu = 0$]
\label{thm:svrg.mu0}~\newline
Consider the same setting as Theorem~\ref{thm:svrg}, where $f$ is convex  and proceed in two steps.
   First, run one iteration of~(\ref{eq:opt1}) with step-size $\frac{1}{12L_Q}$ with the
gradient estimator~$(1/n)\sum_{i=1}^n \tildenabla f_i(\xz)$.
Second, use the resulting point to initialize the 
   variant~(\ref{eq:opt1}) with the random-SVRG or SAGA/SDCA/MISO gradient
   estimators, with a constant step size $\eta \leq \frac{1}{12L_Q}$,
   $\gamma_0=1/\eta$, for a total of~$K \ge 5n\log(5n)$ iterations.
   Then,
\begin{equation*}
   \E\left[F(\hat{x}_K)-F^\star\right] \leq \frac{9 n}{\eta (K+1)}\|x_0 - x^\star\|^2 + 36 \eta \tilde{\sigma}^2 \rho_Q.
\end{equation*}
   If in addition we choose $\eta=\min\left( \frac{1}{12L_Q}, \frac{\|x_0-x^\star\|}{2 \tilde{\sigma}}\sqrt{ \frac{n}{\rho_Q (K+1)}}\right) $. 
\begin{equation}
   \E\left[F(\hat{x}_K)-F^\star\right] \leq \frac{108 n L_Q}{(K+1)}\|x_0 - x^\star\|^2 + 36 \tilde{\sigma}\|x_0 - x^\star\|\sqrt{ \frac{\rho_Q n}{K+1} }.
\label{eq:cor.lyapunov.svrg}
\end{equation}
\end{corollary}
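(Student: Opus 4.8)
\emph{Approach.} The plan is to specialize the master bound~(\ref{eq:lyapunov.svrg}) of Theorem~\ref{thm:svrg} to the regime $\mu=0$, $\eta_k\equiv\eta\le\frac1{12L_Q}$, $\gamma_0=1/\eta$, and then estimate its three right-hand-side terms after $K\ge 5n\log(5n)$ steps. The role of the preliminary iteration (a single proximal step with the full-batch estimate $\frac1n\sum_i\tilde\nabla f_i(x_0)$, step size $\tfrac1{12L_Q}$) is only to replace the starting point by a point $\hat x_0$ at which the gap $F(\hat x_0)-F^\star$ — which appears in~(\ref{eq:lyapunov.svrg}) and cannot in general be controlled by $\|x_0-x^\star\|^2$ — becomes bounded by $O(L_Q\|x_0-x^\star\|^2)+O(\tilde\sigma^2/L_Q)$ via a one-step analysis. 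First I would solve~(\ref{eq:gamma}) in this regime: $\delta_0=\eta\gamma_0=1$ and, by induction, $\delta_k=\eta\gamma_k=\frac1{k+1}$, so $\Gamma_k=\prod_{t\le k}(1-\delta_t)=\frac1{k+1}$. Then $\tau_k=\min(\delta_k,\frac1{5n})$ equals the constant $\frac1{5n}$ during a burn-in phase $k\le 5n-1$ and coincides with $\delta_k$ for $k\ge 5n$; in particular $\tau_k=\delta_k$ for all $k\ge 5n-1$. Hence $\Theta_k=\prod_{t\le k}(1-\tau_t)$ equals $(1-\tfrac1{5n})^k$ for $k\le 5n-1$ and $(1-\tfrac1{5n})^{5n-1}\tfrac{5n}{k+1}$ for $k\ge5n$, which gives $\Theta_K\le\frac{c_0 n}{K+1}$ for $K\ge 5n$ using $(1-\tfrac1{5n})^{5n-1}\le\tfrac5{4e}$. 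Evaluating~(\ref{eq:lyapunov.svrg}) at $k=K$, dropping the nonnegative term $\frac{6\tau_K}{\delta_K}\E[T_K]$ from the left and using $\tau_K/\delta_K=1$,
\[
\E[F(\hat x_K)-F^\star]\ \le\ \Theta_K\Big(F(\hat x_0)-F^\star+6T_0+18\rho_Q\eta\tilde\sigma^2\sum_{t=1}^{K}\frac{\delta_t}{\Theta_t}\Big).
\]

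\emph{The noise term.} The delicate quantity is $\Theta_K\sum_{t\le K}\delta_t/\Theta_t$. I would split at $t=5n$: for $t\le 5n-1$, $\delta_t/\Theta_t=\frac1{(t+1)(1-1/(5n))^t}\le\frac{c_1}{t+1}$, whose partial sum is a harmonic sum $O(\log(5n))$; for $t\ge 5n$, $\delta_t/\Theta_t=\frac1{5n(1-1/(5n))^{5n-1}}$ is a constant $O(1/n)$, whose partial sum up to $K$ is $O(K/n)$. Multiplying by $\Theta_K=O(n/(K+1))$ yields $\Theta_K\sum_{t\le K}\delta_t/\Theta_t=O\!\big(\tfrac{n\log(5n)}{K+1}\big)+O(1)$, and the hypothesis $K\ge 5n\log(5n)$ is precisely what makes the first summand $O(1)$ — this is the only place the logarithm in the iteration budget is used; the transient terms below only need $K\ge 5n$. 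Thus the noise contribution is $O(\eta\rho_Q\tilde\sigma^2)$, matching the $36\eta\tilde\sigma^2\rho_Q$ in the statement.

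\emph{The transient term and the first bound.} For $\Theta_K\big(F(\hat x_0)-F^\star+6T_0\big)$ I would invoke~(\ref{eq:T0}), $T_0\le 10L_Q\eta_0\delta_0(F(\hat x_0)-F^\star)+d_0(x^\star)-d_0^\star$, together with $\delta_0=1$, $10L_Q\eta\le\tfrac{10}{12}$, and the variant-(\ref{eq:opt1}) identity $d_0(x^\star)-d_0^\star=\tfrac{\gamma_0}{2}\|\hat x_0-x^\star\|^2=\tfrac1{2\eta}\|\hat x_0-x^\star\|^2$, to get $F(\hat x_0)-F^\star+6T_0\le 6(F(\hat x_0)-F^\star)+\tfrac3\eta\|\hat x_0-x^\star\|^2$. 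A standard one-step bound for proximal SGD applied to the preliminary iteration (step $\tfrac1{12L_Q}\le\tfrac1L$; the estimate $\tfrac1n\sum_i\tilde\nabla f_i(x_0)$ has variance $\le\tfrac1{n^2}\sum_i\tilde\sigma_i^2\le\tilde\sigma^2$ since $q_i\le1$) then gives $\E[F(\hat x_0)-F^\star]\le 6L_Q\|x_0-x^\star\|^2+O(\tilde\sigma^2/L_Q)$ and $\E\|\hat x_0-x^\star\|^2\le\|x_0-x^\star\|^2+O(\tilde\sigma^2/L_Q^2)$. Folding the $L_Q$-factors into $\tfrac1\eta$-factors via $\eta\le\tfrac1{12L_Q}$ and multiplying by $\Theta_K\le\tfrac{c_0n}{K+1}$, this term is at most $\tfrac{9n}{\eta(K+1)}\|x_0-x^\star\|^2+O(\eta\rho_Q\tilde\sigma^2)$ after calibrating $c_0$; adding the noise contribution yields the first displayed inequality.

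\emph{The second bound, and the main obstacle.} The second inequality follows by substituting $\eta=\min\big(\tfrac1{12L_Q},\tfrac{\|x_0-x^\star\|}{2\tilde\sigma}\sqrt{\tfrac n{\rho_Q(K+1)}}\big)$ into the first and checking the two cases: when $\eta=\tfrac1{12L_Q}$ the residual noise $\tfrac{3\rho_Q\tilde\sigma^2}{L_Q}$ is $\le 36\tilde\sigma\|x_0-x^\star\|\sqrt{\rho_Q n/(K+1)}$, while otherwise $\tfrac{9n}{\eta(K+1)}\|x_0-x^\star\|^2$ and $36\eta\rho_Q\tilde\sigma^2$ each equal $18\tilde\sigma\|x_0-x^\star\|\sqrt{\rho_Q n/(K+1)}$. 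The main obstacle is the asymptotic evaluation of $\Theta_K\sum_t\delta_t/\Theta_t$ across the burn-in/steady-state transition of $\tau_k$, since that is what forces the $5n\log(5n)$ threshold and controls the constant in front of $\tilde\sigma^2$; the remaining work — the one-step estimate for the preliminary iteration and the bookkeeping of absolute constants so as to land exactly on $9$, $36$, $108$, $18$ — is mechanical but fussy.
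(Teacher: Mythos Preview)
Your overall plan is correct and matches the paper's proof: specialize Theorem~\ref{thm:svrg} with $\mu=0$, $\gamma_0=1/\eta$, note $\delta_k=\tfrac1{k+1}$ so that $\tau_K=\delta_K$ for $K\ge 5n-1$, control $\Theta_K\sum_{t\le K}\delta_t/\Theta_t$ by splitting at the burn-in index $5n$, and use a single proximal step to replace $F(x_0)-F^\star$ by a quantity controlled by $\|x_0-x^\star\|^2$. Your treatment of the noise sum and of the optimized-$\eta$ case is fine; the paper simplifies the sum slightly by writing $\sum_t\delta_t/\Theta_t\le\sum_t\tau_t/\Theta_t+\sum_{t\le 5n-1}\delta_t/\Theta_t$ and using Lemma~\ref{lemma:simple} to get $\Theta_K\sum_t\tau_t/\Theta_t=1-\Theta_K$, but your direct estimate is equivalent.

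There is, however, a genuine gap in your handling of the preliminary iteration. You follow the statement literally and run it with step $\tfrac1{12L_Q}$, then bound $\E[F(\hat x_0)-F^\star]$ and $\E\|\hat x_0-x^\star\|^2$ \emph{separately}. The second of these picks up an additive $O(\tilde\sigma^2/L_Q^2)$, and after multiplying by $\tfrac3\eta\cdot\Theta_K=O(n/(\eta(K+1)))$ you are left with a term of order $n\tilde\sigma^2/(\eta L_Q^2(K+1))$. This cannot in general be absorbed into $36\eta\rho_Q\tilde\sigma^2$: that would require $K+1\gtrsim 1/(\eta^2L_Q^2\rho_Q)$, which fails for small $\eta$ (and small $\eta$ is exactly the regime relevant to the second part of the corollary). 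The paper avoids this by running the preliminary step with step size $\eta$ (not $\tfrac1{12L_Q}$; the statement is slightly loose here) and applying Theorem~\ref{thm:conv} with $\gamma_0=1/\eta$, $\delta_1=\Gamma_1=\tfrac12$, to obtain the \emph{joint} bound
\[
\E\!\left[F(x_0')-F^\star+\tfrac1{2\eta}\|x_0'-x^\star\|^2\right]\ \le\ \tfrac1{2\eta}\|x_0-x^\star\|^2+\tfrac{\eta\tilde\sigma^2}{n}.
\]
This is precisely the combination $F(x_0')-F^\star+\tfrac1{2\eta}\|x_0'-x^\star\|^2$ that appears (up to a factor $6$) in $F(x_0')-F^\star+6T_0$ via~(\ref{eq:T0}), so after multiplying by $\Theta_K\le 3n/(K+1)$ one lands directly on $\tfrac{9n}{\eta(K+1)}\|x_0-x^\star\|^2$ plus a noise term $\Theta_K\cdot\tfrac{6\eta\tilde\sigma^2}{n}$ that is trivially dominated by $36\eta\rho_Q\tilde\sigma^2$. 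Replace your separate one-step bounds by this joint estimate (equivalently, run the preliminary step with the same $\eta$ as the main phase) and the rest of your argument goes through with the stated constants. As a minor aside, the full-batch estimate has variance $\tilde\sigma^2/n$, not merely $\tilde\sigma^2$; the paper uses this tighter bound, though it is not what closes the gap above.
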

The proof is provided in Appendix~\ref{app.thm:svrg.mu0}. 
The second part of the corollary is not a practical result since the optimal step size depends on unknown quantities such as $\tilde{\sigma}^2$, but it allows us to highlight the best possible dependence between the budget of iterations~$K$, the initial point~$x_0$, and the noise~$\tilde{\sigma}^2$.
We will show in the next section that acceleration is useful to improve the previous complexity.

\section{Accelerated Stochastic Algorithms}\label{sec:acc}
We now consider the following iteration, involving an extrapolation sequence $(y_k)_{k \geq 1}$, which is a classical mechanism from accelerated first-order algorithms~\cite{fista,nesterov2013gradient}. Given a sequence of step-sizes $(\eta_k)_{k \geq 0}$ with $\eta_k \leq 1/L$  for all $k \geq 0$, and some parameter $\gamma_0 \geq \mu$, we consider the sequences $(\delta_k)_{k \geq 0}$ and $(\gamma_k)_{k \geq 0}$ that satisfy
\begin{displaymath}
\begin{split}
\delta_k & = \sqrt{\eta_k \gamma_k} ~~~\text{for all}~k \geq 0 \\
\gamma_k & = (1-\delta_k) \gamma_\kmone + \delta_k \mu ~~~\text{for all}~k \geq 1.\\
\end{split}
\end{displaymath}
Then, for $k \geq 1$, we consider the iteration
\custombox{
\vspace*{-0.2cm}
\begin{equation}
\begin{split}
x_{k}  & = \text{Prox}_{\eta_k\psi}\left[ y_\kmone - \eta_k g_k\right]~~~\text{with}~~~ \E[g_k | {\mathcal F}_\kmone] = \nabla f(y_\kmone) \\
y_k & = x_k + \beta_k(x_k - x_\kmone)~~~~\text{with}~~~ \beta_k = \frac{\delta_k(1-\delta_k)\eta_{k+1}}{\eta_k \delta_{k+1} + \eta_{k+1}\delta_k^2},
\end{split}
\tag{\textsf{C}} \label{eq:opt3}
\vspace*{0.1cm}
\end{equation}
}
where with constant step size $\eta_k=1/L$, we recover a classical extrapolation parameter of accelerated gradient based methods~\cite{nesterov}.
Traditionally, estimate sequences are used to analyze the convergence of
accelerated algorithms.
We show in this section how to proceed for stochastic composite optimization
and later, we show how to directly accelerate the random-SVRG approach we have introduced.
Note that Algorithm~(\ref{eq:opt3}) resembles the approaches introduced by~\citet{kwok2009,ghadimi2012optimal} but is simpler since our approach involves a single extrapolation step.

\subsection{Convergence Analysis Without Variance Reduction}
Consider then the stochastic estimate sequence for $k \geq 1$
\begin{equation*}
d_k(x) = (1-\delta_k) d_\kmone(x) + \delta_k l_k(x), 
\end{equation*}
with $d_0$ defined as in~(\ref{eq:d0}) and
\begin{equation}
l_k(x) = f(y_\kmone) + g_k^\top (x - y_\kmone) + \frac{\mu}{2}\|x-y_\kmone\|^2 + \psi(x_k) + \psi'(x_k)^\top (x-x_k), \label{eq:lk}
\end{equation}
and $\psi'(x_k)=\frac{1}{\eta_k}(y_\kmone-x_k) - g_k$ is in $\partial \psi(x_k)$ by definition of the proximal operator.
As in Section~\ref{sec:lower}, $d_k(x^\star)$ asymptotically becomes a lower bound on $F^\star$ since~(\ref{eq:est}) remains satisfied.
This time, the iterate $x_k$ does not minimize $d_k$, and we denote by $v_k$ instead its minimizer, allowing us to write $d_k$ in the canonical form
\begin{displaymath}
d_k(x) = d_k^\star + \frac{\gamma_k}{2}\|x- v_k\|^2.
\end{displaymath}
The first lemma highlights classical relations between the iterates $(x_k)_{k
\geq 0}$, $(y_k)_{k \geq 0}$ and the minimizers of the estimate sequences
$d_k$, which also appears in~\cite[][p. 78]{nesterov} for constant step sizes $\eta_k$. The proof is given in Appendix~\ref{appendix:cor.svrg2}.
\begin{lemma}[\bf Relations between $y_k$, $x_k$ and $d_k$]\label{lemma:acc}
The sequences $(x_k)_{k \geq 0}$ and $(y_k)_{k \geq 0}$ produced by
Algorithm~(\ref{eq:opt3}) satisfy for all $k \geq 0$, with $v_0=y_0=x_0$,
\begin{displaymath}
\begin{split}
y_k & = (1-\theta_k) x_k + \theta_k v_k ~~~~\text{with}~~~~ \theta_k = \frac{\delta_k\gamma_k}{ \gamma_k + \delta_{k+1}\mu}.
\end{split}
\end{displaymath}
\end{lemma}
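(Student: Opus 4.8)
The plan is to prove the identity $y_k = (1-\theta_k)x_k + \theta_k v_k$ by induction, or more simply by directly computing both $x_k$ and $v_k$ in terms of $y_{k-1}$, $v_{k-1}$ and $x_{k-1}$, and then checking that the extrapolation step in~(\ref{eq:opt3}) produces exactly the claimed convex combination. First I would record the three ingredients that tie the iterates together: (i) the canonical form $d_k(x) = d_k^\star + \frac{\gamma_k}{2}\|x-v_k\|^2$ together with the recursion $d_k = (1-\delta_k)d_{k-1} + \delta_k l_k$, which, by matching the linear terms of the quadratic, gives a formula for $v_k$ as a convex-combination-like update of $v_{k-1}$ and the "gradient step point"; (ii) the definition $x_k = \mathrm{Prox}_{\eta_k\psi}[y_{k-1} - \eta_k g_k]$ together with $\psi'(x_k) = \frac{1}{\eta_k}(y_{k-1}-x_k) - g_k$, so that $x_k = y_{k-1} - \eta_k(g_k + \psi'(x_k))$; and (iii) the parameter relations $\delta_k = \sqrt{\eta_k\gamma_k}$ and $\gamma_k = (1-\delta_k)\gamma_{k-1} + \delta_k\mu$.

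Concretely, differentiating the recursion for $d_k$ and using $\nabla l_k(x) = g_k + \mu(x-y_{k-1}) + \psi'(x_k)$ gives, at the minimizer $v_k$,
\begin{displaymath}
\gamma_k(v_k - v_{k-1}) = \delta_k\bigl(\gamma_{k-1}(v_{k-1}-v_k)/(1-\delta_k)\ \text{corrections}\bigr) - \delta_k\bigl(g_k + \mu(v_k - y_{k-1}) + \psi'(x_k)\bigr),
\end{displaymath}
which after using $\gamma_k = (1-\delta_k)\gamma_{k-1}+\delta_k\mu$ I would simplify to an expression of the form $v_k = (1-\text{something})\,v_{k-1} + \text{something}\cdot y_{k-1} - \frac{\delta_k}{\gamma_k}(g_k + \psi'(x_k))$. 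I would then substitute $g_k + \psi'(x_k) = \frac{1}{\eta_k}(y_{k-1}-x_k)$ from ingredient (ii), eliminating the gradient term entirely and leaving $v_k$ as an affine combination of $v_{k-1}$, $y_{k-1}$, and $x_k$ with coefficients built from $\delta_k,\gamma_k,\eta_k,\mu$. Solving this for $y_{k-1}$ (or $v_{k-1}$) expresses the "old" extrapolation point in terms of $x_k$ and $v_k$, and the claim $y_k = (1-\theta_k)x_k + \theta_k v_k$ with $\theta_k = \delta_k\gamma_k/(\gamma_k + \delta_{k+1}\mu)$ should then match the update $y_k = x_k + \beta_k(x_k - x_{k-1})$ once $\beta_k$ is substituted, using the shift $k \to k+1$ for the next step and the identity $\delta_{k+1} = \sqrt{\eta_{k+1}\gamma_{k+1}}$.

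The base case $v_0 = y_0 = x_0$ makes $\theta_0$ irrelevant and is immediate from the definition of $d_0$ in~(\ref{eq:d0}). The main obstacle I anticipate is purely bookkeeping: carefully propagating the step-size-dependent coefficients $\beta_k = \frac{\delta_k(1-\delta_k)\eta_{k+1}}{\eta_k\delta_{k+1} + \eta_{k+1}\delta_k^2}$ through the algebra, since unlike the constant-step-size case in~\cite{nesterov} the ratios $\eta_{k+1}/\eta_k$ do not cancel and one must keep track of which relation ($\delta_k^2 = \eta_k\gamma_k$ versus $\gamma_k = (1-\delta_k)\gamma_{k-1}+\delta_k\mu$) is being invoked at each simplification. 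I would organize the computation by first proving the identity in the $\psi \equiv 0$, $\mu > 0$ case to fix the structure, then noting that the presence of $\psi'(x_k)$ changes nothing because it is absorbed into the $\frac{1}{\eta_k}(y_{k-1}-x_k)$ substitution, and finally verifying the degenerate $\mu = 0$ case (where $\theta_k = \delta_k$) separately as a sanity check.
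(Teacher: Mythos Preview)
Your proposal is correct and follows essentially the same route as the paper: derive the closed-form update for $v_k$ from the quadratic estimate sequence, eliminate $g_k+\psi'(x_k)$ via the proximal identity $g_k+\psi'(x_k)=\frac{1}{\eta_k}(y_{k-1}-x_k)$, use the induction hypothesis $y_{k-1}=(1-\theta_{k-1})x_{k-1}+\theta_{k-1}v_{k-1}$ to replace $v_{k-1}$, and then verify that the extrapolation coefficient $\beta_k$ matches $\frac{(1-\delta_k)\theta_k}{\delta_k}$. The paper carries this out in one pass without the $\psi\equiv 0$ and $\mu=0$ warm-ups you suggest, arriving at the intermediate identity $v_k = x_{k-1} + \frac{1}{\delta_k}(x_k-x_{k-1})$, but the substance is the same.
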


Then, the next lemma is key to prove the convergence of Algorithm~(\ref{eq:opt3}). Its proof is given in Appendix~\ref{appendix:lemma.key_acc}.
\begin{lemma}[\bf Key lemma for stochastic estimate sequences with acceleration]\label{lemma:key_acc}~\newline
Assuming $(x_k)_{k \geq 0}$ and $(y_k)_{k \geq 0}$ are given by Algorithm~(\ref{eq:opt3}). Then, for all $k \geq 1$,
\begin{displaymath}
\E[F(x_k)]  \leq \E\left[l_k(y_\kmone)\right] + \left(\frac{L\eta_k^2}{2} - \eta_k\right)\E\left[\|\tilde{g}_k\|^2\right]  +  \eta_k\omega_k^2,
\end{displaymath}
with $\omega_k^2= \E[\|\nabla f(y_\kmone)-g_k\|^2]$ and $\tilde{g}_k=g_k + \psi'(x_k)$.
\end{lemma}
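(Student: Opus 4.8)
The plan is to start from the $L$-smoothness upper bound applied to $f$ at the iterate $x_k$ relative to the extrapolation point $y_\kmone$, namely
$$f(x_k) \leq f(y_\kmone) + \nabla f(y_\kmone)^\top(x_k - y_\kmone) + \frac{L}{2}\|x_k - y_\kmone\|^2,$$
and then add $\psi(x_k)$ to both sides so that the left-hand side becomes $F(x_k)$. The key observation, already recorded in the statement, is that $x_k = \text{Prox}_{\eta_k\psi}[y_\kmone - \eta_k g_k]$ gives the first-order optimality condition $x_k - y_\kmone = -\eta_k(g_k + \psi'(x_k)) = -\eta_k \tilde{g}_k$, so $\|x_k - y_\kmone\|^2 = \eta_k^2 \|\tilde g_k\|^2$ and the quadratic term is exactly $\frac{L\eta_k^2}{2}\|\tilde g_k\|^2$.

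Next I would rewrite the linear term $\nabla f(y_\kmone)^\top(x_k - y_\kmone)$ by splitting $\nabla f(y_\kmone) = g_k + (\nabla f(y_\kmone) - g_k)$ and using $x_k - y_\kmone = -\eta_k \tilde g_k$. The $g_k$ piece combines with $\psi(x_k)$ to reconstruct $l_k$ evaluated at $y_\kmone$: indeed $l_k(y_\kmone) = f(y_\kmone) + g_k^\top(y_\kmone - y_\kmone) + \frac{\mu}{2}\|y_\kmone - y_\kmone\|^2 + \psi(x_k) + \psi'(x_k)^\top(y_\kmone - x_k)$, which simplifies to $f(y_\kmone) + \psi(x_k) + \eta_k \psi'(x_k)^\top \tilde g_k$, and more usefully I will regroup $g_k^\top(x_k-y_\kmone) + \psi(x_k)$ together with the subgradient inequality $\psi(x_k) \geq \psi(x_k) + \psi'(x_k)^\top(x_k - x_k)$ — more precisely I should write $g_k^\top(x_k - y_\kmone) + \psi(x_k) = \tilde g_k^\top(x_k - y_\kmone) - \psi'(x_k)^\top(x_k - y_\kmone) + \psi(x_k)$ and recognize the resulting combination as $l_k(y_\kmone) - f(y_\kmone)$ shifted appropriately, so that after collecting terms the linear-in-$\tilde g_k$ contributions produce the coefficient $-\eta_k$ on $\|\tilde g_k\|^2$. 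The cross term $(\nabla f(y_\kmone) - g_k)^\top(x_k - y_\kmone) = -\eta_k(\nabla f(y_\kmone) - g_k)^\top \tilde g_k$ is handled by Young's inequality: $-\eta_k(\nabla f(y_\kmone) - g_k)^\top \tilde g_k \leq \frac{\eta_k}{2}\|\nabla f(y_\kmone) - g_k\|^2 + \frac{\eta_k}{2}\|\tilde g_k\|^2$. Finally, taking expectations and using $\E[\|\nabla f(y_\kmone) - g_k\|^2] = \omega_k^2$ together with the unbiasedness $\E[g_k \mid \Fcal_\kmone] = \nabla f(y_\kmone)$ (which is what lets one simplify the $l_k(y_\kmone)$ term in expectation, since any remaining $(\nabla f(y_\kmone)-g_k)^\top(\text{deterministic})$ vanishes) collapses everything into $\E[l_k(y_\kmone)] + (\frac{L\eta_k^2}{2} - \eta_k)\E[\|\tilde g_k\|^2] + \eta_k\omega_k^2$, where the factor $-\eta_k$ on $\|\tilde g_k\|^2$ ultimately comes from combining a $-\eta_k$ from the $\tilde g_k^\top(x_k-y_\kmone)$ regrouping with the $+\frac{\eta_k}{2}$ from Young and another $+\frac{\eta_k}{2}$... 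I would be careful here that the bookkeeping lands exactly on the stated coefficient.

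The main obstacle I anticipate is precisely this last bit of bookkeeping: correctly tracking how the proximal optimality identity $x_k - y_\kmone = -\eta_k\tilde g_k$ interacts with the definition of $l_k$ so that the coefficient of $\|\tilde g_k\|^2$ comes out to be exactly $\frac{L\eta_k^2}{2} - \eta_k$ and not something off by a constant. The cleanest route is probably to avoid splitting $\psi(x_k)$ artificially and instead directly expand $l_k(y_\kmone)$ using its definition, substitute $y_\kmone - x_k = \eta_k \tilde g_k$, and verify the algebraic identity $f(y_\kmone) + \nabla f(y_\kmone)^\top(x_k-y_\kmone) + \psi(x_k) = l_k(y_\kmone) - \eta_k\|\tilde g_k\|^2 + (\nabla f(y_\kmone) - g_k)^\top(x_k - y_\kmone)$ term by term before applying smoothness, Young, and expectation.
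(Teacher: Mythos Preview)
Your overall framework is right—start from $L$-smoothness, add $\psi(x_k)$, use the proximal optimality identity $x_k-y_\kmone=-\eta_k\tilde g_k$, and regroup via $l_k(y_\kmone)$. The algebraic identity you write at the end,
\[
f(y_\kmone)+\nabla f(y_\kmone)^\top(x_k-y_\kmone)+\psi(x_k)=l_k(y_\kmone)-\eta_k\|\tilde g_k\|^2+(\nabla f(y_\kmone)-g_k)^\top(x_k-y_\kmone),
\]
is correct. The problem is how you handle the cross term $(\nabla f(y_\kmone)-g_k)^\top(x_k-y_\kmone)=-\eta_k(\nabla f(y_\kmone)-g_k)^\top\tilde g_k$. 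Young's inequality gives at best
\[
-\eta_k(\nabla f(y_\kmone)-g_k)^\top\tilde g_k\le \tfrac{\eta_k}{2}\|\nabla f(y_\kmone)-g_k\|^2+\tfrac{\eta_k}{2}\|\tilde g_k\|^2,
\]
which after combining yields the coefficient $\tfrac{L\eta_k^2}{2}-\tfrac{\eta_k}{2}$ on $\|\tilde g_k\|^2$, not $\tfrac{L\eta_k^2}{2}-\eta_k$. Your own hesitation (``another $+\tfrac{\eta_k}{2}$\ldots'') is well-founded: there is no extra term to cancel, and the bookkeeping does not land on the stated coefficient. This matters downstream: in the proof of Theorem~\ref{thm:acc_sgd} one needs $\eta_k-\tfrac{L\eta_k^2}{2}-\tfrac{\delta_k^2}{2\gamma_k}\ge 0$ with $\delta_k^2/(2\gamma_k)=\eta_k/2$, which holds for $\eta_k\le 1/L$; with your weaker coefficient the analogous condition becomes $-\tfrac{L\eta_k^2}{2}\ge 0$, which fails.

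The missing idea is the one used in the paper (and already in the proof of Proposition~\ref{prop:keyprop}): introduce the \emph{deterministic} auxiliary point $w_k=\text{Prox}_{\eta_k\psi}[y_\kmone-\eta_k\nabla f(y_\kmone)]$. Since $w_k$ is $\mathcal F_\kmone$-measurable, $\E[(\nabla f(y_\kmone)-g_k)^\top w_k]=0$, so in expectation the cross term equals $\E[(\nabla f(y_\kmone)-g_k)^\top(x_k-w_k)]$. Then Cauchy--Schwarz plus the non-expansiveness of the proximal operator give $\|x_k-w_k\|\le\eta_k\|g_k-\nabla f(y_\kmone)\|$, hence the cross term is bounded by $\eta_k\omega_k^2$ \emph{without} touching the $\|\tilde g_k\|^2$ coefficient. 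That is what preserves the full $-\eta_k$ in the statement. Unbiasedness alone cannot kill $\E[(\nabla f(y_\kmone)-g_k)^\top x_k]$ because $x_k$ depends on $g_k$; the $w_k$ trick is precisely the surrogate that makes this work.
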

Finally, we obtain the following convergence result.
% Convergence of the accelerated stochastic optimization algorithm
\begin{theorem}[\bf Accelerated stochastic optimization algorithm]\label{thm:acc_sgd}
Under the assumptions of Lemma~\ref{lemma:acc}, we have for all $k \geq 1$,
\begin{equation}
\E\left[F(x_k) - F^\star + \frac{\gamma_k}{2}\|v_k-x^\star\|^2\right] \leq \Gamma_k \left(F(x_0)-F^\star + \frac{\gamma_0}{2}\|x_0-x^\star\|^2 + \sum_{t=1}^k \frac{\eta_t \omega_t^2}{\Gamma_t}\right), \label{eq:acc1}
\end{equation}
where, as before, $\Gamma_t = \sum_{i=1}^t (1-\delta_i)$.
\end{theorem}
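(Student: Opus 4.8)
The plan is to combine Lemma~\ref{lemma:key_acc} with the estimate-sequence machinery in the same way as the proof of Proposition~\ref{prop:keyprop}, but keeping track of the extra term $\left(\frac{L\eta_k^2}{2}-\eta_k\right)\E[\|\tilde g_k\|^2]$ and showing it is nonpositive. First I would establish the one-step recursion for $d_k^\star$. Since $d_k = (1-\delta_k)d_\kmone + \delta_k l_k$ and $d_{\kmone}(x) = d_\kmone^\star + \frac{\gamma_\kmone}{2}\|x-v_\kmone\|^2$, evaluating both sides at the minimizer $v_k$ of $d_k$ and using that $l_k$ is the quadratic lower model with linear-in-$x$ gradient part, one gets an identity of the shape
\begin{displaymath}
d_k^\star = (1-\delta_k)d_\kmone^\star + \delta_k l_k(v_k) + \tfrac{(1-\delta_k)\gamma_\kmone}{2}\|v_k - v_\kmone\|^2 + \cdots,
\end{displaymath}
and then I would substitute the relation from Lemma~\ref{lemma:acc}, $y_\kmone = (1-\theta_\kmone)x_\kmone + \theta_\kmone v_\kmone$, to re-express $l_k(v_k)$ in terms of $l_k(y_\kmone)$ plus the gradient step; the point of this bookkeeping is to turn the lower bound on $d_k^\star$ into the inequality $d_k^\star \ge (1-\delta_k)d_\kmone^\star + \delta_k F(x_k) - \delta_k\bigl[(\frac{L\eta_k^2}{2}-\eta_k)\|\tilde g_k\|^2 + \eta_k\|\nabla f(y_\kmone)-g_k\|^2\bigr]$ after invoking Lemma~\ref{lemma:key_acc} to bound $\E[F(x_k)]$ by $\E[l_k(y_\kmone)]$ plus those two terms. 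This is precisely the accelerated analogue of the chain of inequalities in Proposition~\ref{prop:keyprop}.

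Next I would use the step-size condition $\eta_k \le 1/L$, which gives $\frac{L\eta_k^2}{2}-\eta_k \le \eta_k(\frac{1}{2}-1) = -\frac{\eta_k}{2} \le 0$, so that term can simply be dropped (it only helps). Taking expectations, using $\E[g_k\mid\Fcal_\kmone]=\nabla f(y_\kmone)$ so that the cross terms vanish and $\omega_k^2 = \E[\|\nabla f(y_\kmone)-g_k\|^2]$, I obtain
\begin{displaymath}
\E[d_k^\star] \ge (1-\delta_k)\E[d_\kmone^\star] + \delta_k\E[F(x_k)] - \delta_k\eta_k\omega_k^2.
\end{displaymath}
Combining this with the upper bound~(\ref{eq:est}) on $\E[d_k(x^\star)]$ (which holds verbatim here since the argument behind~(\ref{eq:est}) only used the structure $d_k=(1-\delta_k)d_\kmone+\delta_k l_k$ with $l_k$ a lower model of $F$ in expectation and $\delta_k\le 1$), and writing $d_k(x^\star)-d_k^\star = \frac{\gamma_k}{2}\|v_k-x^\star\|^2$, produces a one-step inequality
\begin{displaymath}
\E\Bigl[F(x_k)-F^\star + \tfrac{\gamma_k}{2}\|v_k-x^\star\|^2\Bigr] \le (1-\delta_k)\,\E\Bigl[F(x_\kmone)-F^\star + \tfrac{\gamma_\kmone}{2}\|v_\kmone-x^\star\|^2\Bigr] + \eta_k\omega_k^2,
\end{displaymath}
provided the left-hand function value $F(x_\kmone)-F^\star$ appears with coefficient $(1-\delta_k)$ — which is where one must be slightly careful, because $d_\kmone$ only controls $\frac{\gamma_\kmone}{2}\|v_\kmone-x^\star\|^2$, not $F(x_\kmone)-F^\star$. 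The resolution is to track the combined Lyapunov quantity $\phi_k := F(x_k)-F^\star + \frac{\gamma_k}{2}\|v_k-x^\star\|^2$ directly: the $\delta_k F(x_k)$ gained from $d_k^\star$ combines with the $(1-\delta_k)$-weighted carryover, and the missing $(1-\delta_k)(F(x_\kmone)-F^\star)$ is exactly supplied by Lemma~\ref{lemma:key_acc}'s bound applied at the previous step, or equivalently absorbed because $F(x_k) \ge l_k(y_\kmone) + \cdots$ already encodes descent from $F(x_\kmone)$. Finally, unrolling the recursion $\phi_k \le (1-\delta_k)\phi_\kmone + \eta_k\omega_k^2$ by dividing through by $\Gamma_k = \prod_{t=1}^k(1-\delta_t)$ and telescoping, then multiplying back by $\Gamma_k$, yields exactly~(\ref{eq:acc1}) with the initial term $\phi_0 = F(x_0)-F^\star + \frac{\gamma_0}{2}\|x_0-x^\star\|^2$ since $v_0=x_0$.

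The main obstacle I anticipate is the algebraic verification that the extrapolation parameter $\beta_k$ in Algorithm~(\ref{eq:opt3}) is precisely the value that makes $y_\kmone = (1-\theta_\kmone)x_\kmone + \theta_\kmone v_\kmone$ hold — i.e.\ fully exploiting Lemma~\ref{lemma:acc} — and then showing the resulting $l_k(v_k)$ expansion collapses to give the clean coefficient $\delta_k$ on $F(x_k)$ without leftover cross terms in $\|v_k-v_\kmone\|^2$ and $\|x_k-y_\kmone\|^2$. This is the step where the specific formula for $\beta_k$ (and hence the system defining $\delta_k,\gamma_k$ with $\delta_k=\sqrt{\eta_k\gamma_k}$) is actually used, and it is the delicate part where a sign error or a mismatched $\gamma_k$-vs-$\gamma_\kmone$ would break the telescoping. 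Everything else — dropping the nonpositive $\|\tilde g_k\|^2$ term, taking conditional expectations, invoking~(\ref{eq:est}), and the final division-by-$\Gamma_k$ telescoping — is routine and parallels Theorem~\ref{thm:conv}.
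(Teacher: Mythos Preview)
Your overall plan---combine the estimate-sequence expansion of $d_k^\star$ with Lemma~\ref{lemma:key_acc} and Lemma~\ref{lemma:acc}, then telescope---matches the paper. But the specific recursion you write down,
\[
\E[d_k^\star] \ge (1-\delta_k)\E[d_\kmone^\star] + \delta_k\E[F(x_k)] - \delta_k\eta_k\omega_k^2,
\]
is exactly the \emph{non-accelerated} relation of Proposition~\ref{prop:keyprop}: combined with~(\ref{eq:est}) it yields $\delta_k(F(x_k)-F^\star)+\frac{\gamma_k}{2}\|v_k-x^\star\|^2$ on the left, not the full $F(x_k)-F^\star$, and unrolling gives Theorem~\ref{thm:conv} rather than~(\ref{eq:acc1}). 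You notice this (``which is where one must be slightly careful''), but your resolution is not a valid argument. Lemma~\ref{lemma:key_acc} bounds $F(x_k)$ from \emph{above}, so it cannot ``supply the missing $(1-\delta_k)(F(x_\kmone)-F^\star)$'' from below; and ``$F(x_k)\ge l_k(y_\kmone)+\cdots$ already encodes descent from $F(x_\kmone)$'' has the inequality backwards.

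The paper's fix is an induction on the stronger relation $\E[d_\kmone^\star]\ge \E[F(x_\kmone)]-\xi_\kmone$ (with $\xi_0=0$, $d_0^\star=F(x_0)$). From the expansion
\[
d_k^\star \ge (1-\delta_k)d_\kmone^\star + \delta_k l_k(y_\kmone) - \tfrac{\delta_k^2}{2\gamma_k}\|\tilde g_k\|^2 + \tfrac{\delta_k(1-\delta_k)\gamma_\kmone}{\gamma_k}\tilde g_k^\top(v_\kmone-y_\kmone),
\]
one replaces $(1-\delta_k)d_\kmone^\star$ by $(1-\delta_k)(F(x_\kmone)-\xi_\kmone)$ and then uses the \emph{convexity} lower bound $F(x_\kmone)\ge l_k(x_\kmone)\ge l_k(y_\kmone)+\tilde g_k^\top(x_\kmone-y_\kmone)$. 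This produces a full (not $\delta_k$-weighted) $l_k(y_\kmone)$ plus a combined linear term $(1-\delta_k)\tilde g_k^\top\bigl(\frac{\delta_k\gamma_\kmone}{\gamma_k}(v_\kmone-y_\kmone)+(x_\kmone-y_\kmone)\bigr)$, and it is \emph{this} combined term that Lemma~\ref{lemma:acc} kills. After invoking Lemma~\ref{lemma:key_acc} one is left with the $\|\tilde g_k\|^2$ coefficient $\eta_k-\tfrac{L\eta_k^2}{2}-\tfrac{\delta_k^2}{2\gamma_k}$; the identity $\delta_k=\sqrt{\eta_k\gamma_k}$ makes $\tfrac{\delta_k^2}{2\gamma_k}=\tfrac{\eta_k}{2}$, so the coefficient equals $\tfrac{\eta_k}{2}(1-L\eta_k)\ge 0$ and can be dropped. (So $\delta_k=\sqrt{\eta_k\gamma_k}$ is used here, not in ``collapsing $l_k(v_k)$'' as you suggest.) This closes the induction with $\xi_k=(1-\delta_k)\xi_\kmone+\eta_k\omega_k^2$, and~(\ref{eq:acc1}) follows from $\E[d_k(x^\star)]-F^\star\le\Gamma_k(d_0(x^\star)-F^\star)$ together with $F(x_k)-d_k^\star\le\xi_k$.
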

\begin{proof}
First, the minimizer $v_k$ of the quadratic surrogate $d_k$ may be written as
\begin{displaymath}
v_k = \frac{(1-\delta_k)\gamma_\kmone}{\gamma_k} v_\kmone + \frac{\mu \delta_k}{\gamma_k} y_\kmone - \frac{\delta_k}{\gamma_k} \tilde{g}_k 
 = y_\kmone + \frac{(1-\delta_k)\gamma_\kmone}{\gamma_k}(v_\kmone - y_\kmone) - \frac{\delta_k}{\gamma_k} \tilde{g}_k.
\end{displaymath}
Then, we characterize the quantity $d_k^\star$:
\begin{displaymath}
\begin{split}
d_k^\star  & = d_k(y_\kmone) - \frac{\gamma_k}{2}\|v_k - y_\kmone\|^2 \\
& = (1-\delta_k)d_\kmone(y_\kmone) + \delta_k l_k(y_\kmone) - \frac{\gamma_k}{2}\|v_k - y_\kmone\|^2 \\
& = (1-\delta_k)\left(d_\kmone^\star + \frac{\gamma_\kmone}{2}\|y_\kmone- v_\kmone\|^2\right) + \delta_k l_k(y_\kmone) - \frac{\gamma_k}{2}\|v_k - y_\kmone\|^2 \\
& = (1-\delta_k)d_\kmone^\star  + \left(\frac{\gamma_\kmone(1-\delta_k)(\gamma_k - (1-\delta_k)\gamma_\kmone)}{2\gamma_k} \right)\|y_\kmone- v_\kmone\|^2 + \delta_k l_k(y_\kmone)  \\
& ~~~~~~~~~~~  - \frac{\delta_k^2}{2\gamma_k}\|\tilde{g}_k\|^2+ \frac{\delta_k (1-\delta_k)\gamma_\kmone}{\gamma_k} \tilde{g}_k^\top (v_\kmone - y_\kmone) \\
& \geq  (1-\delta_k)d_\kmone^\star   + \delta_k l_k(y_\kmone) - \frac{\delta_k^2}{2\gamma_k}\|\tilde{g}_k\|^2
+ \frac{\delta_k (1-\delta_k)\gamma_\kmone}{\gamma_k} \tilde{g}_k^\top (v_\kmone - y_\kmone).
\end{split}
\end{displaymath}
Assuming by induction that $\E[d_\kmone^\star] \geq \E[F(x_\kmone)] - \xi_\kmone$ for some $\xi_\kmone \geq 0$, we have after taking expectation
\begin{multline*}
\E[d_k^\star]  \geq  (1-\delta_k)(\E[F(x_\kmone)] - \xi_\kmone) + \\ \delta_k \E[l_k(y_\kmone)] - \frac{\delta_k^2}{2\gamma_k}\E\|\tilde{g}_k\|^2
+ \frac{\delta_k (1-\delta_k)\gamma_\kmone}{\gamma_k} \E[\tilde{g}_k^\top (v_\kmone - y_\kmone)].
\end{multline*}
Then, note that $\E[F(x_\kmone)] \geq \E[l_k(x_\kmone)] \geq \E[l_k(y_\kmone)] + \E[\tilde{g}_k^\top(x_\kmone-y_\kmone)]$, and
\begin{multline*}
\E[d_k^\star] \geq  \E[l_k(y_\kmone)] - (1-\delta_k)\xi_\kmone  - \frac{\delta_k^2}{2\gamma_k}\E\|\tilde{g}_k\|^2 \\
+ (1-\delta_k)\E\left[\tilde{g}_k^\top \left(\frac{\delta_k\gamma_\kmone}{\gamma_k} (v_\kmone - y_\kmone) + (x_\kmone-y_\kmone)\right)\right].
\end{multline*}
By Lemma~\ref{lemma:acc}, we can show that the last term is equal to zero, and we are left with
\begin{equation*}
\E[d_k^\star] \geq  \E[l_k(y_\kmone)] - (1-\delta_k)\xi_\kmone -
\frac{\delta_k^2}{2\gamma_k}\E\|\tilde{g}_k\|^2.
\end{equation*}
We may then use Lemma~\ref{lemma:key_acc}, which gives us
\begin{equation*}
\begin{split}
\E[d_k^\star] & \geq  \E[F(x_k)] - (1-\delta_k)\xi_\kmone - \eta_k \omega_k^2
+ \left( \eta_k - \frac{L \eta_k^2}{2} - \frac{\delta_k^2}{2\gamma_k}\right)\E\|\tilde{g}_k\|^2 \\
& \geq \E[F(x_k)] - \xi_k~~~~\text{with}~~~~ \xi_k = (1-\delta_k)\xi_\kmone + \eta_k \omega_k^2,
\end{split}
\end{equation*}
where we used the fact that $\eta_k \leq 1/L$ and $\delta_k=\sqrt{\gamma_k \eta_k}$.

It remains to choose $d_0^\star = F(x_0)$ and $\xi_0=0$ to initialize the induction at $k=0$ and we conclude that
\begin{equation*}
\E\left[F(x_k) - F^\star + \frac{\gamma_k}{2}\|v_k-x^\star\|^2\right] \leq \E[d_k(x^\star)-F^\star] + \xi_k \leq \Gamma_k (d_0(x^\star) - F^\star) + \xi_k,  
\end{equation*}
which gives us~(\ref{eq:acc1}) when noticing that $\xi_k = \Gamma_k \sum_{t=1}^k \frac{\eta_t \omega_t^2}{\Gamma_t}$.
\end{proof}

Next, we specialize the theorem to various practical cases. 
For the corollaries below, we assume the variances~$\brc{\omega_k\sq}_{k\ge1}$ to be upper bounded by~$\sa\sq$.

\begin{corollary}[\bf Proximal accelerated SGD with constant step-size, $\mu > 0$]\label{corollary:acc_sgd2a}
Assume that $f$ is $\mu$-strongly convex, 
and choose $\gamma_0=\mu$ and $\eta_k = 1/L$ with Algorithm~(\ref{eq:opt3}). Then,
\begin{equation}
\E\left[F(x_k)- F^\star\right] \leq   \left(1-\sqrt{\frac{\mu}{L}}\right)^k\left(F(x_0)- F^\star + \frac{\mu}{2} \|x_0 - x^\star\|^2\right) + \frac{\sigma^2}{\sqrt{\mu L}}. \label{eq:acc_sgd}
\end{equation}
\end{corollary}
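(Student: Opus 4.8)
The plan is to specialize Theorem~\ref{thm:acc_sgd} to the stated parameter choices and bound the noise accumulation term. First, I would observe that with $\gamma_0=\mu$ and the recursion $\gamma_k=(1-\delta_k)\gamma_{\kmone}+\delta_k\mu$, a trivial induction gives $\gamma_k=\mu$ for all $k$; then $\delta_k=\sqrt{\eta_k\gamma_k}=\sqrt{\mu/L}$ is constant, so $\Gamma_k=(1-\delta_k)^k=(1-\sqrt{\mu/L})^k$. Plugging $\gamma_0=\mu$, $\eta_t=1/L$, and $\omega_t^2\le\sigma^2$ into~(\ref{eq:acc1}) and dropping the nonnegative term $\frac{\gamma_k}{2}\|v_k-x^\star\|^2$ on the left yields
\begin{displaymath}
\E[F(x_k)-F^\star]\le\Gamma_k\left(F(x_0)-F^\star+\frac{\mu}{2}\|x_0-x^\star\|^2\right)+\Gamma_k\sum_{t=1}^k\frac{\eta_t\sigma^2}{\Gamma_t}.
\end{displaymath}

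The only remaining work is to control $\Gamma_k\sum_{t=1}^k\eta_t/\Gamma_t=\frac{\sigma^2}{L}\Gamma_k\sum_{t=1}^k(1-\delta)^{-t}$ with $\delta=\sqrt{\mu/L}$. This is a geometric sum: $\sum_{t=1}^k(1-\delta)^{-t}=\frac{(1-\delta)^{-1}\left((1-\delta)^{-k}-1\right)}{(1-\delta)^{-1}-1}=\frac{(1-\delta)^{-k}-1}{\delta/(1-\delta)}\cdot(1-\delta)^{-1}$, which after multiplying by $\Gamma_k=(1-\delta)^k$ and simplifying is at most $\frac{1-\delta}{\delta}\cdot\frac{1}{1-\delta}=\frac{1}{\delta}$, hence the noise term is bounded by $\frac{\sigma^2}{L}\cdot\frac{1}{\sqrt{\mu/L}}=\frac{\sigma^2}{\sqrt{\mu L}}$. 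Combining with the previous display gives exactly~(\ref{eq:acc_sgd}).

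I do not anticipate a genuine obstacle here; the result is essentially a direct substitution into Theorem~\ref{thm:acc_sgd} followed by summing a geometric series, so the only thing to be careful about is the bookkeeping in the geometric-sum bound (getting the constant $1/\delta$ rather than something weaker, and verifying $\delta<1$, which holds since $\mu\le L$). One minor point worth checking is that $\eta_k=1/L$ indeed satisfies the standing hypothesis $\eta_k\le 1/L$ of Lemma~\ref{lemma:acc} and Theorem~\ref{thm:acc_sgd}, which it does with equality, so the theorem applies verbatim.
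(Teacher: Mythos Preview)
Your proposal is correct and matches the paper's approach: the paper does not give an explicit proof of this corollary, treating it as an immediate specialization of Theorem~\ref{thm:acc_sgd} with constant $\delta_k=\sqrt{\mu/L}$ and then bounding $\Gamma_k\sum_{t=1}^k \eta_t/\Gamma_t\le \eta/\delta$ (the same bound it invokes explicitly for Corollary~\ref{corollary:accsvrg_constant} via Lemma~\ref{lemma:simple}). Your direct geometric-sum computation is equivalent to applying Lemma~\ref{lemma:simple}; the intermediate algebra you wrote is a bit tangled, but the final bound $\Gamma_k\sum_{t=1}^k 1/\Gamma_t=(1-(1-\delta)^k)/\delta\le 1/\delta$ is exactly right.
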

We now show that with decreasing step sizes, we obtain an algorithm with optimal complexity similar to~\citep{ghadimi2013optimal}.
\begin{corollary}[\bf Proximal accelerated SGD with decreasing step-sizes and $\mu > 0$] \label{corollary:acc_sgd2}
Assume that $f$ is $\mu$-strongly convex and that we target an accuracy
$\varepsilon$ smaller than $2\sigma^2/\sqrt{\mu L}$. First, use a constant step-size $\eta_k=1/L$ with $\gamma_0=\mu$ within Algorithm~(\ref{eq:opt3}),
leading to the convergence rate~(\ref{eq:acc_sgd}),  until $\E[F({x}_k)- F^\star] \leq  2 \sigma^2/\sqrt{\mu L}$.
Then, we restart the optimization procedure with decreasing step-sizes $\eta_k
= \min \left(\frac{1}{L},\frac{4}{\mu (k+2)^2}\right)$  and generate a new sequence~$(\hat{x}_k)_{k \geq 0}$. The resulting number of gradient evaluations to achieve $\E[F({x}_k)- F^\star] \leq  \varepsilon$ is upper bounded by
$$O\left( \sqrt{\frac{L}{\mu}} \log\left(\frac{F(x_0)- F^\star}{\varepsilon}\right)\right) + O\left( \frac{\sigma^2}{\mu \varepsilon}\right).$$
\end{corollary}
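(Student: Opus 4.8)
The plan is a two-phase restart argument along the lines of~\citet{ghadimi2013optimal}: a first phase with constant step size exploiting the linear rate of Corollary~\ref{corollary:acc_sgd2a}, followed by a restart with decreasing step sizes analyzed through Theorem~\ref{thm:acc_sgd}.

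\emph{Phase one.} I would first apply Corollary~\ref{corollary:acc_sgd2a}. The transient term in~(\ref{eq:acc_sgd}) equals $(1-\sqrt{\mu/L})^k(F(x_0)-F^\star + \frac{\mu}{2}\|x_0-x^\star\|^2)$, which by $\mu$-strong convexity is at most $2(1-\sqrt{\mu/L})^k(F(x_0)-F^\star) \leq 2e^{-k\sqrt{\mu/L}}(F(x_0)-F^\star)$, so it drops below $\sigma^2/\sqrt{\mu L}$ once $k \geq \sqrt{L/\mu}\log(2(F(x_0)-F^\star)\sqrt{\mu L}/\sigma^2)$; at that point $\E[F(x_k)-F^\star] \leq 2\sigma^2/\sqrt{\mu L}$. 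Since $\varepsilon \leq 2\sigma^2/\sqrt{\mu L}$ gives $\sqrt{\mu L}/\sigma^2 \leq 2/\varepsilon$, the number of iterations of phase one is $O(\sqrt{L/\mu}\,\log((F(x_0)-F^\star)/\varepsilon))$, and its output $\hat{x}_0$ satisfies, again by strong convexity, $\E[F(\hat{x}_0)-F^\star+\frac{\mu}{2}\|\hat{x}_0-x^\star\|^2] \leq 2\,\E[F(\hat{x}_0)-F^\star] \leq 4\sigma^2/\sqrt{\mu L}$.

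\emph{Phase two.} I would restart Algorithm~(\ref{eq:opt3}) from $\hat{x}_0$ with $\gamma_0=\mu$, which forces $\gamma_k=\mu$ and $\delta_k=\sqrt{\mu\eta_k}$ for all $k$, and with $\eta_k = \min(1/L,\,4/(\mu(k+2)^2))$. The central quantity is $\Gamma_k=\prod_{t=1}^k(1-\delta_t)$, which I would bound by splitting at $k_0=\lceil 2\sqrt{L/\mu}\,\rceil$: for $t\leq k_0$ one has $\eta_t=1/L$, $\delta_t=\sqrt{\mu/L}$, contributing $\prod_{t=1}^{k_0}(1-\sqrt{\mu/L})\leq e^{-2}$; for $t>k_0$ one has $\eta_t=4/(\mu(t+2)^2)$, hence $\delta_t=2/(t+2)$, $1-\delta_t=t/(t+2)$, and this part telescopes to $(k_0+1)(k_0+2)/((k+1)(k+2))$. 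Therefore $\Gamma_k \leq C_1\,(L/\mu)/((k+1)(k+2))$ for $k\geq k_0$ and an absolute constant $C_1$. For the noise term in~(\ref{eq:acc1}) I would use $\Gamma_k\sum_{t=1}^k \eta_t\omega_t^2/\Gamma_t \leq \sigma^2\Gamma_k\sum_{t=1}^k \eta_t/\Gamma_t$; in the decreasing regime $\Gamma_t=\Gamma_{k_0}(k_0+1)(k_0+2)/((t+1)(t+2))$, so $\eta_t/\Gamma_t$ is bounded uniformly by $4/(\mu\Gamma_{k_0}(k_0+1)(k_0+2))$, its sum over $t\leq k$ is $O(k/(\mu\Gamma_{k_0}(k_0+1)(k_0+2)))$, and after multiplying by $\Gamma_k$ the factor $\Gamma_{k_0}(k_0+1)(k_0+2)$ cancels, leaving $\Gamma_k\sum_t \eta_t/\Gamma_t = O(1/(\mu k))$ (the $t\leq k_0$ part is a geometric sum, of the same order as the transient term and handled identically). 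Plugging into Theorem~\ref{thm:acc_sgd},
$$\E[F(\hat{x}_k)-F^\star] \;\leq\; \Gamma_k\cdot\frac{4\sigma^2}{\sqrt{\mu L}} + O\!\left(\frac{\sigma^2}{\mu k}\right) \;=\; O\!\left(\frac{\sigma^2}{\mu k^2}\sqrt{\frac{L}{\mu}}\right) + O\!\left(\frac{\sigma^2}{\mu k}\right).$$

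\emph{Conclusion and main obstacle.} Requiring each term above to be at most $\varepsilon/2$ and using $\varepsilon\leq 2\sigma^2/\sqrt{\mu L}$ shows the $1/k^2$ term needs only $k=\Omega(\sigma\,(L/\mu)^{1/4}/\sqrt{\mu\varepsilon})$, which is dominated by the $k=\Omega(\sigma^2/(\mu\varepsilon))$ demanded by the $1/k$ term, since their ratio is $\sigma/(\sqrt{\mu\varepsilon}\,(L/\mu)^{1/4})\geq 1/\sqrt{2}$ by the same inequality; likewise $\sigma^2/(\mu\varepsilon)\geq \tfrac12\sqrt{L/\mu}$, so running phase two for $O(\sigma^2/(\mu\varepsilon))$ iterations both exceeds $k_0$ (validating the bound on $\Gamma_k$) and reaches $\E[F(\hat{x}_k)-F^\star]\leq\varepsilon$. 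Summing the two phase counts yields the claimed complexity $O(\sqrt{L/\mu}\log((F(x_0)-F^\star)/\varepsilon))+O(\sigma^2/(\mu\varepsilon))$. The main obstacle is the careful bookkeeping of $\Gamma_k$ and of the weighted noise sum across the two-regime step-size schedule, so as not to lose the $L/\mu$ factor, together with the comparison that makes the transient contribution negligible — and it is precisely the hypothesis $\varepsilon\leq 2\sigma^2/\sqrt{\mu L}$ that makes that comparison valid.
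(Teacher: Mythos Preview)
Your proposal is correct and follows essentially the same two-phase restart argument as the paper's proof: a constant-step phase handled via Corollary~\ref{corollary:acc_sgd2a}, then a restart analyzed through Theorem~\ref{thm:acc_sgd} with the sum split at $k_0\approx 2\sqrt{L/\mu}$, using the telescoping $\prod_{t>k_0}(1-2/(t+2))=(k_0+1)(k_0+2)/((k+1)(k+2))$. The paper streamlines the bookkeeping slightly by invoking Lemmas~\ref{lemma:step} and~\ref{lemma:simple} and by merging the transient term and the $t\le k_0$ noise sum into a single $2\sigma^2/\sqrt{\mu L}$ bound (so that only one $O(\sigma^2/(\mu k))$ term remains), but the structure and the final complexity are identical; your extra step showing the $O(\sigma^2\sqrt{L/\mu}/(\mu k^2))$ term is dominated is simply absorbed in the paper by the inequality $k_0/(k+1)\le 1$.
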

The proof is provided in Appendix~\ref{appendix:corollary:acc_sgd2}.
We note that despite the ``optimal'' theoretical complexity, we have observed
that Algorithm~(\ref{eq:opt3}) with the parameters of Corollaries~\ref{corollary:acc_sgd2a} and~\ref{corollary:acc_sgd2} could be relatively unstable, as shown in Section~\ref{sec:exp}, due to the large radius
$\sigma^2/\sqrt{\mu L}$ of the noise region. When $\mu$ is small, such a
quantity may be indeed arbitrarily larger than $F(x_0)-F^\star$. Instead, we have found a minibatch strategy to be more effective in practice.
When using a minibatch of size $b=\lceil L/\mu\rceil$, the theoretical complexity becomes the same as SGD, given in Corollary~\ref{corollary:sgd}, but the algorithm enjoys the benefits of easy parallelization.

\begin{corollary}[\bf Proximal accelerated SGD with with $\mu = 0$] \label{corollary:acc_sgd3}
   Assume that $f$ is convex.
   Consider a step-size $\eta \leq 1/L$ and run one iteration of Algorithm~(\ref{eq:opt1}) with a stochastic gradient estimate. 
   Use the resulting point to initialize Algorithm~(\ref{eq:opt3}) still with constant step size $\eta$, and choose $\gamma_0=1/\eta$.
   Then,
   \begin{displaymath}
      \E[F(x_k)-F^\star] \leq \frac{2\|x_0-x^\star\|^2}{(1+K)^2\eta} + \sigma^2 \eta (K+1)
   \end{displaymath}
   If in addition we choose $\eta 
= \min\left(\frac{1}{L}, \sqrt{\frac{2 \|x_0-x^\star\|^2}{\sigma^2}}\frac{1}{(K+1)^{3/2}}\right)$, then
   \begin{equation}
      \E[F(x_k)-F^\star] \leq \frac{2L \|x_0-x^\star\|^2}{(1+K)^2}  + 2 \|x_0-x^\star\| \sigma \sqrt{\frac{2}{1+K}}.    \label{eq:sgd_simple2}
   \end{equation}

\end{corollary}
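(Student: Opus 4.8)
The plan is to instantiate Theorem~\ref{thm:acc_sgd} with the constant step size $\eta_k=\eta$ and $\gamma_0=1/\eta$, and then track how the quantities $\delta_k$, $\gamma_k$, and $\Gamma_k$ behave in the non-strongly convex regime $\mu=0$. First I would observe that with $\mu=0$ the recursion $\gamma_k=(1-\delta_k)\gamma_\kmone$ together with $\delta_k=\sqrt{\eta\gamma_k}$ gives $\gamma_k=\Gamma_k\gamma_0$, hence $\delta_k=\sqrt{\eta\gamma_0}\sqrt{\Gamma_k}=\delta_0\sqrt{\Gamma_k}$; since $\gamma_0=1/\eta$ we get $\delta_0=1$ and the clean identity $\delta_k=\sqrt{\Gamma_k}$ and $\gamma_k=\Gamma_k/\eta$. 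The standard estimate-sequence lemma (used by Nesterov and appearing implicitly earlier in the paper) then yields $\Gamma_k\le \frac{4}{(k+1)^2}$ — I would prove this by checking $1/\sqrt{\Gamma_k}-1/\sqrt{\Gamma_\kmone}\ge \delta_k/(2\sqrt{\Gamma_\kmone})\ge 1/2$ from the relation $\Gamma_k=(1-\delta_k)\Gamma_\kmone=(1-\sqrt{\Gamma_k})\Gamma_\kmone$, telescoping, and using that the extra preliminary iteration of Algorithm~(\ref{eq:opt1}) lets us relabel so the count starts appropriately.

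Next I would bound the noise term $\Gamma_k\sum_{t=1}^k \eta_t\omega_t^2/\Gamma_t$. With $\eta_t=\eta$, $\omega_t^2\le\sigma^2$, and $1/\Gamma_t=\delta_t^{-2}\le$ something controlled, the key is $\sum_{t=1}^k 1/\Gamma_t$. Using $\Gamma_t\ge$ a quadratic lower bound is the wrong direction; instead I would use $\Gamma_k/\Gamma_t\le 1$ is too weak, so the right move is the telescoping bound $\sum_{t=1}^k \Gamma_k/\Gamma_t \le \sum_{t=1}^k (\text{something})$. Actually the cleanest route: from $\delta_t=\sqrt{\Gamma_t}$ one has $\Gamma_k\sum_{t=1}^k 1/\Gamma_t = \Gamma_k\sum_{t=1}^k 1/\delta_t^2$, and since $1/\delta_t \le (t+1)/2\cdot(\text{const})$ grows linearly while $\Gamma_k\le 4/(k+1)^2$, the sum $\sum 1/\delta_t^2 = O((k+1)^3)$, giving $\Gamma_k\sum_t\eta/\Gamma_t = O(\eta(k+1))$; pinning down the constant $1$ in front of $\sigma^2\eta(K+1)$ requires being careful, using $\Gamma_k\le 4/(k+1)^2$ and $1/\Gamma_t\le (t+1)^2/4\cdot(\text{matching const})$ so that the product telescopes to exactly $\sigma^2\eta(K+1)$. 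Combined with the initial term $\Gamma_K(F(x_0)-F^\star+\frac{\gamma_0}{2}\|x_0-x^\star\|^2)$, and using that the preliminary SGD step makes $F(x_0)-F^\star$ negligible compared to $\frac{\gamma_0}{2}\|x_0-x^\star\|^2 = \frac{1}{2\eta}\|x_0-x^\star\|^2$ (or can be absorbed), one arrives at $\E[F(x_k)-F^\star]\le \frac{2\|x_0-x^\star\|^2}{(K+1)^2\eta}+\sigma^2\eta(K+1)$.

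Finally, the second inequality follows by plugging in the stated $\eta=\min(1/L,\sqrt{2\|x_0-x^\star\|^2/\sigma^2}\,(K+1)^{-3/2})$: in the first branch the second term becomes $\sqrt{2}\|x_0-x^\star\|\sigma\sqrt{1/(K+1)}\le 2\|x_0-x^\star\|\sigma\sqrt{2/(K+1)}$ and the first term is $\le 2L\|x_0-x^\star\|^2/(K+1)^2$; in the branch $\eta=1/L$ the bound $\eta\le$ the other expression forces $\sigma^2\eta(K+1)\le \|x_0-x^\star\|\sigma\sqrt{2/(K+1)}$ and the first term is exactly $2L\|x_0-x^\star\|^2/(K+1)^2$, so in both cases~(\ref{eq:sgd_simple2}) holds. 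The main obstacle I anticipate is not conceptual but bookkeeping: getting the constant in the noise term to come out to exactly $1$ (and then $2$ in the final display) requires the sharp two-sided control $\Gamma_k\le 4/(k+1)^2$ together with the matching lower bound on $1/\Gamma_t$, and verifying that the single preliminary iteration of~(\ref{eq:opt1}) indeed allows the index shift that produces $(K+1)$ rather than $K$ in the denominators — this is exactly the role that extra iteration plays, mirroring the device already used in Corollary~\ref{thm:svrg.mu0}.
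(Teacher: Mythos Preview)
Your overall plan---instantiate Theorem~\ref{thm:acc_sgd} with $\mu=0$, $\gamma_0=1/\eta$, use the accelerated rate $\Gamma_k\le 4/(k+1)^2$, then optimize over $\eta$---matches the paper. But two specific points are misjudged and will cost you the clean constants.

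First, you explicitly dismiss the trivial bound $\Gamma_K/\Gamma_t\le 1$ as ``too weak'' and then try to recover the constant by matching quadratic upper and lower bounds on $\Gamma$. In fact the trivial bound is exactly what the paper uses: $\Gamma_K\sum_{t=1}^K 1/\Gamma_t=\sum_{t=1}^K\Gamma_K/\Gamma_t\le K$, and the extra $\eta\sigma^2$ coming from the preliminary step, multiplied by $\Gamma_K\le 1$, supplies the missing ``$+1$'' to reach $\sigma^2\eta(K+1)$. Your alternative route (bounding $\sum 1/\delta_t^2$) would work in order of magnitude but will not give the constant $1$ without additional effort, which is precisely the ``bookkeeping obstacle'' you anticipate; it is self-inflicted.

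Second, you misread the role of the preliminary iteration of~(\ref{eq:opt1}). It is not an index-shift device. The right-hand side of Theorem~\ref{thm:acc_sgd} contains $F(x_0)-F^\star+\frac{\gamma_0}{2}\|x_0-x^\star\|^2$, and when $\mu=0$ there is no way to bound $F(x_0)-F^\star$ by $\|x_0-x^\star\|^2$ without introducing an extra factor of~$L$. The single step of~(\ref{eq:opt1}), analyzed via Theorem~\ref{thm:conv} with $\delta_1=\Gamma_1=1/2$, yields a point $x_0'$ with
\[
\E\Bigl[F(x_0')-F^\star+\tfrac{1}{2\eta}\|x_0'-x^\star\|^2\Bigr]\le \tfrac{1}{2\eta}\|x_0-x^\star\|^2+\eta\sigma^2,
\]
which is exactly the combination needed to initialize~(\ref{eq:opt3}). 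Your description (``makes $F(x_0)-F^\star$ negligible'' or ``can be absorbed'') does not capture this; without the step, the first display of the corollary would carry an extra $\Gamma_K(F(x_0)-F^\star)$ term you cannot remove.
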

The proof is given in Appendix~\ref{appendix:corollary:acc_sgd3}.
These convergence results are relatively similar to those obtained in~\cite{ghadimi2013optimal} for a different algorithm and
is optimal for convex functions.

\subsection{An Accelerated Algorithm with Variance Reduction}
In this section, we show how to combine the previous methodology with variance reduction, and introduce Algorithm~\ref{alg:acC_svrg} based on random-SVRG. Then, we present the convergence analysis, which requires controlling the variance of the estimator in a similar manner to~\cite{accsvrg}, as stated in the next proposition. Note that the estimator does not require storing the seed of the random perturbations, unlike in the previous section.

\begin{algorithm}[tp]
\caption{Accelerated algorithm with random-SVRG estimator}\label{alg:acC_svrg}
\begin{algorithmic}[1]
\State {\bfseries Input:} $x_0$ in $\Real^p$ (initial point); $K$ (number of iterations); $(\eta_k)_{k \geq 0}$ (step sizes); $\gamma_0 \geq \mu$;
\State {\bfseries Initialization:} $\tilde{x}_0 = v_0 = x_0$; $\bar{z}_0= \tildenabla f(x_0)$;
\For{$k=1,\ldots,K$}
\State Find $(\delta_k,\gamma_k)$ such that
\begin{displaymath}
\gamma_k = (1-\delta_k)\gamma_\kmone + \delta_k \mu ~~~~\text{and}~~~\delta_k = \sqrt{\frac{5\eta_k\gamma_k}{3n}};
\end{displaymath}
\State Choose
\begin{displaymath}
y_\kmone =  \theta_k v_\kmone + (1-\theta_k) \tilde{x}_\kmone~~~\text{with}~~~ \theta_k = \frac{3 n \delta_k - 5 \mu \eta_k}{3-5\mu \eta_k};
\end{displaymath}
\State Sample $i_k$ according to the distribution $Q=\{q_1,\ldots,q_n\}$;
\State Compute the gradient estimator, possibly corrupted by stochastic perturbations:
\begin{equation*}
g_k = \frac{1}{q_{i_k} n}\left(\tildenabla f_{i_k} (y_\kmone) - \tildenabla f_{i_k}(\tilde{x}_\kmone) \right) + \bar{z}_\kmone; 
\end{equation*}
\State Obtain the new iterate
$x_{k} \leftarrow \text{Prox}_{\eta_k\psi}\left[ y_\kmone - \eta_k g_k\right];$
\State Find the minimizer $v_k$ of the estimate sequence $d_k$:
\begin{displaymath}
v_k = \left(1- \frac{\mu \delta_k}{\gamma_k}\right)v_\kmone + \frac{\mu\delta_k}{\gamma_k}y_\kmone + \frac{\delta_k}{\gamma_k \eta_k}(x_k-y_\kmone);
\end{displaymath}
\State With probability $1/n$, update the anchor point
\begin{displaymath}
\tilde{x}_k = x_k ~~~~\text{and}~~~~ \bar{z}_k = \tildenabla f(\tilde{x}_k);
\end{displaymath}
\State Otherwise, keep the anchor point unchanged $\tilde{x}_k = \tilde{x}_\kmone$ and $\bar{z}_k = \bar{z}_\kmone$;
\EndFor
\State {\bfseries Output:} ${x}_K$.
\end{algorithmic}
\end{algorithm}

\begin{proposition}[\bf Variance reduction for random-SVRG estimator]\label{prop:nonu2}
Consider problem~(\ref{eq:prob}) when $f$ is a finite sum of
functions $f=\frac{1}{n}\sum_{i=1}^n f_i$ where each $f_i$ is
$L_i$-smooth with $L_i \geq \mu$ and $f$ is $\mu$-strongly convex.
Then, the variance of $g_k$ defined in Algorithm~\ref{alg:acC_svrg}
satisfies
\begin{displaymath}
\omega_k^2 \leq {2 L_Q}\left[  f(\tilde{x}_\kmone) - f(y_{\kmone}) -  g_k^\top (\tilde{x}_\kmone-y_\kmone)\right]  + {3 \rho_Q \tilde{\sigma}^2}.
\end{displaymath}
\end{proposition}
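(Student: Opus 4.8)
The plan is to bound $\omega_k^2 = \E[\|\nabla f(y_{k-1}) - g_k\|^2]$ by exploiting the variance-reduction structure of the random-SVRG estimator and the fact that the anchor point $\tilde{x}_{k-1}$ plays the role of a reference point. First I would write $g_k = \frac{1}{q_{i_k}n}(\tilde{\nabla} f_{i_k}(y_{k-1}) - \tilde{\nabla} f_{i_k}(\tilde{x}_{k-1})) + \bar{z}_{k-1}$, and separate the stochastic perturbations from the "clean" gradients: set $g_k = h_k + (\text{noise terms})$ where $h_k = \frac{1}{q_{i_k}n}(\nabla f_{i_k}(y_{k-1}) - \nabla f_{i_k}(\tilde{x}_{k-1})) + \nabla f(\tilde{x}_{k-1})$ is the unperturbed SVRG estimator. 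Using $\E[\|X+Y\|^2] \le 2\E[\|X\|^2] + 2\E[\|Y\|^2]$ (or a variant with better constants), I would decompose $\omega_k^2$ into a term coming from the variance of $h_k$ around $\nabla f(y_{k-1})$ and a term coming from the perturbations $\zeta$. The perturbation term, after taking conditional expectations and using that the noise in $\tilde{\nabla} f_{i_k}(y_{k-1})$, in $\tilde{\nabla} f_{i_k}(\tilde{x}_{k-1})$, and in $\bar{z}_{k-1}$ are zero-mean with variance controlled by $\tilde{\sigma}_i^2$, should yield a bound proportional to $\rho_Q \tilde{\sigma}^2$ with a small constant — this mirrors the $3\rho_Q\tilde{\sigma}^2$ appearing in Proposition~\ref{prop:nonu}.

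For the clean part, the key is the standard inequality $\E[\|h_k - \nabla f(y_{k-1})\|^2 \mid \Fcal_{k-1}] \le \E\bigl[\frac{1}{(q_{i_k}n)^2}\|\nabla f_{i_k}(y_{k-1}) - \nabla f_{i_k}(\tilde{x}_{k-1})\|^2\bigr]$, since $h_k$ is an unbiased estimator of $\nabla f(y_{k-1})$ and variance is bounded by the second moment. Then I would apply a co-coercivity / smoothness bound: for each $i$, $\|\nabla f_i(y_{k-1}) - \nabla f_i(\tilde{x}_{k-1})\|^2 \le 2 L_i [f_i(\tilde{x}_{k-1}) - f_i(y_{k-1}) - \nabla f_i(y_{k-1})^\top(\tilde{x}_{k-1} - y_{k-1})]$, which is the convexity-plus-smoothness inequality for the Bregman divergence of $f_i$. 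Summing over $i$ weighted by $\frac{1}{q_i n}$ and using $\frac{L_i}{(q_i n)^2} = L_i/(q_i n) \cdot \frac{1}{q_i n} \le L_Q \cdot \frac{1}{q_i n}$ with $L_Q = \max_i L_i/(q_i n)$, the average collapses: $\frac{1}{n}\sum_i \frac{1}{q_i n} \cdot 2 L_i [\dots]_i \le 2 L_Q \cdot \frac{1}{n}\sum_i [\dots]_i = 2 L_Q [f(\tilde{x}_{k-1}) - f(y_{k-1}) - \nabla f(y_{k-1})^\top(\tilde{x}_{k-1} - y_{k-1})]$. This gives exactly the leading term $2 L_Q[f(\tilde{x}_{k-1}) - f(y_{k-1}) - g_k^\top(\tilde{x}_{k-1} - y_{k-1})]$ once one notes that $\E[g_k^\top(\tilde{x}_{k-1}-y_{k-1}) \mid \Fcal_{k-1}] = \nabla f(y_{k-1})^\top(\tilde{x}_{k-1}-y_{k-1})$, so replacing $\nabla f(y_{k-1})$ by $g_k$ inside the expectation is free.

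The main obstacle I anticipate is bookkeeping the constants so that the clean-gradient coefficient comes out to exactly $2L_Q$ rather than $4L_Q$ — this requires being careful not to throw away a factor of $2$ when splitting perturbation from signal (one should keep the clean estimator $h_k$ exact rather than applying $\E\|X+Y\|^2 \le 2\E\|X\|^2 + 2\E\|Y\|^2$ to it), and using that $h_k$ is genuinely unbiased so variance $=$ second moment minus squared mean, which discards a nonnegative $\|\nabla f(y_{k-1})\|^2$-type term. A secondary subtlety is handling the perturbation in $\bar{z}_{k-1} = \tilde{\nabla} f(\tilde{x}_{k-1})$, which was frozen at the last anchor update and whose noise is correlated across iterations; but since we only need the bound in conditional expectation given $\Fcal_{k-1}$ and all these noises are independent zero-mean with the stated variance bounds, a routine expansion of $\E[\|\text{noise}\|^2 \mid \Fcal_{k-1}]$ using $\frac{1}{n}\sum_i \frac{1}{q_i n}\tilde{\sigma}_i^2 = \tilde{\sigma}^2$ and $\frac{1}{q_{i_k}n} \le \rho_Q$ gives the $3\rho_Q\tilde{\sigma}^2$ term. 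Full details are deferred to the appendix.
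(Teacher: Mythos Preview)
Your proposal is correct and follows essentially the same route as the paper: separate the fresh perturbations $\zeta_k,\zeta'_k$ (yielding $2\rho_Q\tilde{\sigma}^2$), then use the unbiasedness of the clean estimator $h_k$ to bound its variance by the second moment $\E\bigl[\tfrac{1}{(q_{i_k}n)^2}\|\nabla f_{i_k}(y_{k-1})-\nabla f_{i_k}(\tilde{x}_{k-1})\|^2\bigr]$, apply the smoothness--convexity inequality componentwise, and finally swap $\nabla f(y_{k-1})$ for $g_k$ inside the expectation. One minor correction: the frozen noise $\bar{\zeta}_{k-1}$ in $\bar{z}_{k-1}$ is not zero-mean given $\Fcal_{k-1}$ but rather $\Fcal_{k-1}$-measurable, so the cross term with $h_k-\nabla f(y_{k-1})$ vanishes for that reason, and its contribution is controlled by the unconditional bound $\E[\|\bar{\zeta}_{k-1}\|^2]\le \tilde{\sigma}^2/n\le \rho_Q\tilde{\sigma}^2$ --- exactly as the paper does via the identity $\E[\|X-\E[X]-Y\|^2]\le \E[\|X\|^2]+\|Y\|^2$.
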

The proof is given in Appendix~\ref{appendix:prop.nonu2}. Then, we extend Lemma~\ref{lemma:key_acc} that was used in the previous analysis to the variance-reduction setting.
\begin{lemma}[\bf Lemma for accelerated variance-reduced stochastic optimization]\label{lemma:key_acc_svrg}~\newline
Consider the iterates provided by Algorithm~\ref{alg:acC_svrg} and
call $a_k = 2L_Q \eta_k$. Then,
\begin{multline*}
\E[F(x_k)]   \leq  \E\left[a_k F(\tilde{x}_\kmone) + (1- a_k) l_k(y_\kmone)\right] \\ 
+ \E\left[a_k \tilde{g}_k^\top (y_\kmone - \tilde{x}_\kmone) + \left(\frac{L\eta_k^2}{2} - \eta_k\right)\|\tilde{g}_k\|^2\right] + {3\rho_Q \eta_k\tilde{\sigma}^2}.
\end{multline*}
\end{lemma}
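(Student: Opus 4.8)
The plan is to build on Lemma~\ref{lemma:key_acc}, which I would first argue applies verbatim to the iterates of Algorithm~\ref{alg:acC_svrg}, and then to inject the variance bound of Proposition~\ref{prop:nonu2} together with one convexity inequality for $\psi$.

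First, observe that the one-step estimate of Lemma~\ref{lemma:key_acc} does not use the extrapolation rule of Algorithm~(\ref{eq:opt3}): its proof relies only on the prox update $x_k = \text{Prox}_{\eta_k\psi}[y_{k-1}-\eta_k g_k]$, on the identity $\psi'(x_k)=\frac{1}{\eta_k}(y_{k-1}-x_k)-g_k\in\partial\psi(x_k)$, on the definition~(\ref{eq:lk}) of $l_k$, and on $L$-smoothness of $f$. All of these hold verbatim for Algorithm~\ref{alg:acC_svrg} with its own extrapolation point $y_{k-1}$, so
\[
\E[F(x_k)] \;\le\; \E[l_k(y_{k-1})] + \Big(\tfrac{L\eta_k^2}{2}-\eta_k\Big)\E[\|\tilde g_k\|^2] + \eta_k\omega_k^2,
\]
with $\tilde g_k = g_k+\psi'(x_k)$ and $\omega_k^2=\E[\|\nabla f(y_{k-1})-g_k\|^2]$.

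Next I would plug in Proposition~\ref{prop:nonu2}. Setting $a_k=2L_Q\eta_k$, and using that $\tilde x_{k-1}$ and $y_{k-1}$ are ${\mathcal F}_{k-1}$-measurable so that $\E[g_k^\top(\tilde x_{k-1}-y_{k-1})]=\E[\nabla f(y_{k-1})^\top(\tilde x_{k-1}-y_{k-1})]$, the proposition reads $\eta_k\omega_k^2 \le a_k\,\E[\,f(\tilde x_{k-1})-f(y_{k-1})-g_k^\top(\tilde x_{k-1}-y_{k-1})\,] + 3\rho_Q\eta_k\tilde\sigma^2$. Substituting this into the display above leaves me with the expectation of $l_k(y_{k-1}) + a_k\big(f(\tilde x_{k-1})-f(y_{k-1})-g_k^\top(\tilde x_{k-1}-y_{k-1})\big)$, together with the $\big(\tfrac{L\eta_k^2}{2}-\eta_k\big)\|\tilde g_k\|^2$ and $3\rho_Q\eta_k\tilde\sigma^2$ contributions, so it only remains to rewrite this integrand.

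The final step is a pointwise identity. Evaluating~(\ref{eq:lk}) at $x=y_{k-1}$ kills the linear and quadratic terms, giving $l_k(y_{k-1})=f(y_{k-1})+\psi(x_k)+\psi'(x_k)^\top(y_{k-1}-x_k)$. I claim
\[
l_k(y_{k-1}) + a_k\big(f(\tilde x_{k-1})-f(y_{k-1})-g_k^\top(\tilde x_{k-1}-y_{k-1})\big) \le a_k F(\tilde x_{k-1}) + (1-a_k)l_k(y_{k-1}) + a_k\,\tilde g_k^\top(y_{k-1}-\tilde x_{k-1}).
\]
To see this, cancel $(1-a_k)l_k(y_{k-1})$ and $a_k f(\tilde x_{k-1})$ from both sides and divide by $a_k>0$; writing $\tilde g_k=g_k+\psi'(x_k)$, the two $g_k$ contributions combine to $-g_k^\top(\tilde x_{k-1}-y_{k-1})-g_k^\top(y_{k-1}-\tilde x_{k-1})=0$, and the difference of the two sides collapses to $\psi(x_k)+\psi'(x_k)^\top(\tilde x_{k-1}-x_k)-\psi(\tilde x_{k-1})\le 0$ by the subgradient inequality for $\psi$ at $x_k$. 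Taking expectations and recollecting the remaining terms yields the lemma; note that no restriction on $\eta_k$ is needed here, as the coefficient $\tfrac{L\eta_k^2}{2}-\eta_k$ is kept symbolic.

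I expect the main obstacle to be the bookkeeping in this last step: matching the $-g_k^\top(\tilde x_{k-1}-y_{k-1})$ produced by Proposition~\ref{prop:nonu2} against the $g_k$ buried inside $\tilde g_k$ on the right-hand side, and recognizing that the leftover is exactly convexity of $\psi$ at $x_k$ tested against $\tilde x_{k-1}$. A secondary subtlety is stating Proposition~\ref{prop:nonu2} cleanly, i.e.\ whether the bound is read conditionally on ${\mathcal F}_{k-1}$ or under full expectation; since $\tilde x_{k-1}-y_{k-1}$ is ${\mathcal F}_{k-1}$-measurable, this causes no real difficulty.
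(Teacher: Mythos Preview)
Your proposal is correct and follows essentially the same route as the paper: invoke Lemma~\ref{lemma:key_acc} (whose proof only uses the prox step, not the particular extrapolation rule), plug in the variance bound of Proposition~\ref{prop:nonu2} with $a_k=2L_Q\eta_k$, and then reduce the leftover integrand via the subgradient inequality $\psi(\tilde{x}_{k-1})\ge\psi(x_k)+\psi'(x_k)^\top(\tilde{x}_{k-1}-x_k)$. The only cosmetic difference is that the paper separates the last step into two pieces---first bounding $l_k(y_{k-1})+f(\tilde{x}_{k-1})-f(y_{k-1})\le F(\tilde{x}_{k-1})+\psi'(x_k)^\top(y_{k-1}-\tilde{x}_{k-1})$ and then recombining with $a_k g_k^\top(y_{k-1}-\tilde{x}_{k-1})$ to form $a_k\tilde g_k^\top(y_{k-1}-\tilde{x}_{k-1})$---whereas you do the whole cancellation in one go.
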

The proof of this lemma is given in Appendix~\ref{appendix:lemma:key_acc_svrg}.
With this lemma in hand, we may now state our main convergence result.
\begin{theorem}[\bf Convergence of the accelerated SVRG algorithm]\label{thrm:acc_svrg}
Consider the iterates provided by Algorithm~\ref{alg:acC_svrg} and assume that the step sizes satisfy
$\eta_k \leq \min \left( \frac{1}{3L_Q}, \frac{1}{15 \gamma_k n} \right)$ for all $k \geq 1$. Then,
\begin{equation}
\E\left[F(x_k)-F^\star+ \frac{\gamma_k}{2}\|v_k-x^\star\|^2\right] \leq \Gamma_k \left(F(x_0)-F^\star + \frac{\gamma_0}{2}\|x_0-x^\star\|^2 + \frac{3\rho_Q\tilde{\sigma}^2 }{n}\sum_{t=1}^k \frac{\eta_t}{\Gamma_t} \right). \label{eq:acc_rate_svrg}
\end{equation}
\end{theorem}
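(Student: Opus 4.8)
Looking at this theorem, I want to sketch a proof following the same strategy used for the non-accelerated variance-reduced case (Theorem~\ref{thm:svrg}) combined with the accelerated estimate sequence analysis (Theorem~\ref{thm:acc_sgd}).

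\textbf{The plan.}
The proof will mirror the proof of Theorem~\ref{thm:acc_sgd}, but carrying an extra Lyapunov term to absorb the variance-reduction error, exactly as the passage from Proposition~\ref{prop:keyprop} to Proposition~\ref{thm:lyapunov} did in the unaccelerated case. First I would write the minimizer $v_k$ of the quadratic estimate sequence $d_k$ in recursive form, using $\tilde{g}_k = g_k + \psi'(x_k)$, and then expand $d_k^\star = d_k(y_\kmone) - \frac{\gamma_k}{2}\|v_k - y_\kmone\|^2$ just as in Theorem~\ref{thm:acc_sgd} to obtain the lower bound
\begin{displaymath}
d_k^\star \geq (1-\delta_k)d_\kmone^\star + \delta_k l_k(y_\kmone) - \frac{\delta_k^2}{2\gamma_k}\|\tilde{g}_k\|^2 + \frac{\delta_k(1-\delta_k)\gamma_\kmone}{\gamma_k}\tilde{g}_k^\top(v_\kmone - y_\kmone).
\end{displaymath}
The choice $\delta_k = \sqrt{5\eta_k\gamma_k/(3n)}$ and $\theta_k = (3n\delta_k - 5\mu\eta_k)/(3-5\mu\eta_k)$ in Algorithm~\ref{alg:acC_svrg} is exactly what is needed so that, via the analogue of Lemma~\ref{lemma:acc}, the cross term $\tilde{g}_k^\top(\cdots)$ cancels and the residual coefficient of $\|\tilde{g}_k\|^2$ is controlled.

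\textbf{Key steps in order.}
(1) Set up the induction hypothesis $\E[d_\kmone^\star] \geq \E[F(x_\kmone)] - \xi_\kmone$ with $\xi_0=0$, $d_0^\star = F(x_0)$. (2) Take expectations in the $d_k^\star$ lower bound; use that $y_\kmone$ is $\Fcal_\kmone$-measurable so $\E[\tilde{g}_k^\top(v_\kmone - y_\kmone)|\Fcal_\kmone]$ splits correctly. (3) Apply Lemma~\ref{lemma:key_acc_svrg} in place of Lemma~\ref{lemma:key_acc}: this replaces $\E[l_k(y_\kmone)]$ by a combination of $\E[F(x_k)]$, $\E[F(\tilde{x}_\kmone)]$, the term $a_k\tilde{g}_k^\top(y_\kmone - \tilde{x}_\kmone)$, the $\|\tilde{g}_k\|^2$ term, and the noise floor $3\rho_Q\eta_k\tilde{\sigma}^2/n$ — wait, actually $3\rho_Q\eta_k\tilde{\sigma}^2$; I need to track the $1/n$ carefully since $\delta_k^2 = 5\eta_k\gamma_k/(3n)$. (4) Use $\eta_k \leq 1/(3L_Q)$ so that $a_k = 2L_Q\eta_k \leq 2/3 < 1$, making the $F(\tilde{x}_\kmone)$ coefficient manageable, and $\eta_k \leq 1/(15\gamma_k n)$ together with $\delta_k = \sqrt{5\eta_k\gamma_k/(3n)}$ so that $\delta_k^2/(2\gamma_k) = 5\eta_k/(6n) \leq \eta_k - L\eta_k^2/2$ (using also $\eta_k \leq 1/L$, which follows from $L \leq L_Q$ and $\eta_k \leq 1/(3L_Q)$), so the $\|\tilde{g}_k\|^2$ coefficient is nonnegative and can be dropped. (5) The crucial extra piece: the $a_k\tilde{g}_k^\top(y_\kmone-\tilde{x}_\kmone)$ term from Lemma~\ref{lemma:key_acc_svrg} must be made to telescope. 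I expect this needs the relation $y_\kmone = \theta_k v_\kmone + (1-\theta_k)\tilde{x}_\kmone$ so that $y_\kmone - \tilde{x}_\kmone = \theta_k(v_\kmone - \tilde{x}_\kmone)$, allowing the cross terms in $\tilde{g}_k$ to be regrouped; together with step (2)'s cross term this should close up, possibly introducing a $\|v_\kmone - \tilde{x}_\kmone\|^2$-type quantity that I would need to either bound or show is absorbed. (6) Conclude $\E[d_k^\star] \geq \E[F(x_k)] - \xi_k$ with $\xi_k = (1-\delta_k)\xi_\kmone + 3\rho_Q\eta_k\tilde{\sigma}^2/n$, unroll to get $\xi_k = \Gamma_k\sum_{t=1}^k 3\rho_Q\eta_t\tilde{\sigma}^2/(n\Gamma_t)$. (7) Combine with $\E[d_k(x^\star)] \leq \Gamma_k d_0(x^\star) + (1-\Gamma_k)F^\star$ from~(\ref{eq:est}) and $d_k(x^\star) = d_k^\star + \frac{\gamma_k}{2}\|v_k - x^\star\|^2$ to obtain~(\ref{eq:acc_rate_svrg}).

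\textbf{Main obstacle.}
The hard part will be step (5): handling the additional term $a_k \tilde{g}_k^\top(y_\kmone - \tilde{x}_\kmone)$ coming from Lemma~\ref{lemma:key_acc_svrg}, which has no counterpart in the pure SGD analysis of Theorem~\ref{thm:acc_sgd}. Unlike there, where Lemma~\ref{lemma:acc}'s identity made a single cross term vanish, here I must show that the combination of this new cross term with $\frac{\delta_k(1-\delta_k)\gamma_\kmone}{\gamma_k}\tilde{g}_k^\top(v_\kmone - y_\kmone)$ and the contribution from $\E[F(x_\kmone)] \geq \E[l_k(x_\kmone)] + \E[\tilde{g}_k^\top(x_\kmone - y_\kmone)]$ still cancels under the specific choices of $\delta_k$ and $\theta_k$ in Algorithm~\ref{alg:acC_svrg} — which is precisely why $\theta_k$ has that peculiar form $(3n\delta_k - 5\mu\eta_k)/(3-5\mu\eta_k)$ rather than the simpler expression in Lemma~\ref{lemma:acc}. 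Getting the algebra of these cancellations to work out, while keeping the residual $\|\tilde{g}_k\|^2$ coefficient nonnegative under the stated step-size constraint, is the technical crux; everything else is a routine adaptation of the two earlier proofs.
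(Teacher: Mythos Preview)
Your overall architecture is right, and you have correctly identified the two ingredients (the $d_k^\star$ lower bound from the accelerated analysis and Lemma~\ref{lemma:key_acc_svrg}) and the role of the specific $\theta_k$ in making cross terms vanish. However, there is a genuine gap in step~(1) that propagates into step~(5) and prevents the cancellation you are hoping for.

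The induction hypothesis should be $\E[d_\kmone^\star] \geq \E[F(\tilde{x}_\kmone)] - \xi_\kmone$, on the \emph{anchor point} $\tilde{x}_\kmone$, not on $x_\kmone$. This is what the paper does, and it is essential for two reasons. First, $y_\kmone$ in Algorithm~\ref{alg:acC_svrg} is a convex combination of $v_\kmone$ and $\tilde{x}_\kmone$ (not $x_\kmone$), so the convexity step must produce $\tilde{g}_k^\top(\tilde{x}_\kmone - y_\kmone)$ to have any chance of combining with the other cross terms. Second, and more importantly, the induction \emph{closes} because of the random anchor update: $\E[F(\tilde{x}_k)\mid\Fcal_\kmone] = (1-\tfrac{1}{n})F(\tilde{x}_\kmone) + \tfrac{1}{n}F(x_k)$. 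The paper exploits this by splitting $(1-\delta_k)\E[F(\tilde{x}_\kmone)]$ as $\bigl(1-\tfrac{1-a_k}{n}\bigr)\E[F(\tilde{x}_\kmone)] + \bigl(\tfrac{1-a_k}{n}-\delta_k\bigr)\E[F(\tilde{x}_\kmone)]$ (valid since $\delta_k \leq \tfrac{1}{3n} \leq \tfrac{1-a_k}{n}$ under the step-size constraint), applying the convexity inequality $F(\tilde{x}_\kmone) \geq l_k(y_\kmone) + \tilde{g}_k^\top(\tilde{x}_\kmone - y_\kmone)$ only to the second piece, and then plugging in Lemma~\ref{lemma:key_acc_svrg} multiplied by $\tfrac{1}{n}$. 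After this substitution one lands exactly on $(1-\tfrac{1}{n})\E[F(\tilde{x}_\kmone)] + \tfrac{1}{n}\E[F(x_k)] = \E[F(\tilde{x}_k)]$, the $a_k$'s cancel inside the cross term to leave $\bigl(\tfrac{1}{n}-\delta_k\bigr)(\tilde{x}_\kmone - y_\kmone)$, the combined linear term in $\tilde{g}_k$ vanishes precisely for the stated $\theta_k$, and the noise contribution becomes $\tfrac{3\rho_Q\eta_k\tilde{\sigma}^2}{n}$ (which resolves your uncertainty in step~(3): the $1/n$ comes from this multiplication, not from $\delta_k^2$). No auxiliary $\|v_\kmone - \tilde{x}_\kmone\|^2$ term is needed. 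With your hypothesis on $x_\kmone$ instead, none of this algebra lines up and the argument cannot close.
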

\begin{proof}
Following similar steps as in the proof of Theorem~\ref{thm:acc_sgd}, we have
\begin{displaymath}
d_k^\star \geq  (1-\delta_k)d_\kmone^\star   + \delta_k l_k(y_\kmone) - \frac{\delta_k^2}{2\gamma_k}\|\tilde{g}_k\|^2
+ \frac{\delta_k (1-\delta_k)\gamma_\kmone}{\gamma_k} \tilde{g}_k^\top (v_\kmone - y_\kmone).
\end{displaymath}
Assume now by induction that $\E[d_\kmone^\star] \geq \E[F(\tilde{x}_\kmone)] - \xi_\kmone$ for some $\xi_\kmone \geq 0$ and
note that $\delta_k \leq \frac{1-a_k}{n}$ since $a_k = 2L_Q\eta_k \leq \frac{2}{3}$ and $\delta_k = \sqrt{\frac{5\eta_k\gamma_k}{3n}} \leq \frac{1}{3n} \leq \frac{1-a_k}{n}$. Then,
\begin{displaymath}
\begin{split}
\E[d_k^\star] & \geq (1-\delta_k) (\E[F(\tilde{x}_\kmone)] - \xi_\kmone) + \delta_k\E[l_k(y_\kmone)] - \frac{\delta_k^2}{2\gamma_k}\E[\|\tilde{g}_k\|^2] \\ 
   & \qquad \qquad + \E\left[\tilde{g}_k^\top \left(\frac{\delta_k (1-\delta_k)\gamma_\kmone}{\gamma_k} (v_\kmone - y_\kmone)\right)\right] \\
& \geq \left(1-\frac{1-a_k}{n}\right) \E[F(\tilde{x}_\kmone)] + \left(\frac{1-a_k}{n}-\delta_k\right) \E[F(\tilde{x}_\kmone)]  + \delta_k\E[l_k(y_\kmone)] - \frac{\delta_k^2}{2\gamma_k}\E\|\tilde{g}_k\|^2 \\
& \qquad \qquad\qquad\qquad
+ \E\left[\tilde{g}_k^\top \left(\frac{\delta_k (1-\delta_k)\gamma_\kmone}{\gamma_k} (v_\kmone - y_\kmone)\right)\right] - (1-\delta_k)\xi_\kmone.
\end{split}
\end{displaymath}
Note that
\begin{displaymath}
\E[F(\tilde{x}_\kmone)] \geq \E[l_k(\tilde{x}_\kmone)] \geq \E[l_k(y_\kmone)] + \E[\tilde{g}_k^\top(\tilde{x}_\kmone - y_\kmone)].
\end{displaymath}
Then,
\begin{multline*}
\E[d_k^\star] \geq \left(1-\frac{1-a_k}{n}\right)\E[F(\tilde{x}_\kmone)] + \frac{1-a_k}{n}\E[l_k(y_\kmone)] - \frac{\delta_k^2}{2\gamma_k}\E[\|\tilde{g}_k\|^2] \\ + \E\left[\tilde{g}_k^\top \left(\frac{\delta_k (1-\delta_k)\gamma_\kmone}{\gamma_k} (v_\kmone - y_\kmone) + \left(\frac{1-a_k}{n}-\delta_k\right)(\tilde{x}_\kmone - y_\kmone) \right)\right] - (1-\delta_k)\xi_\kmone.
\end{multline*}
We may now use Lemma~\ref{lemma:key_acc_svrg}, which gives us
\begin{multline}
\E[d_k^\star] \geq \left(1-\frac{1}{n}\right) \E[F(\tilde{x}_\kmone)] + \frac{1}{n}\E[F(x_k)] + \left(\frac{1}{n}\left(\eta_k - \frac{L \eta_k^2}{2}\right)- \frac{\delta_k^2}{2\gamma_k}\right)\E[\|\tilde{g}_k\|^2] \\ + \E\left[\tilde{g}_k^\top \left(\frac{\delta_k (1-\delta_k)\gamma_\kmone}{\gamma_k} (v_\kmone - y_\kmone) + \left(\frac{1}{n}-\delta_k\right)(\tilde{x}_\kmone - y_\kmone)\right) \right] - \xi_k, \label{eq:aux_svrg_acc}
\end{multline}
with $\xi_k = (1-\delta_k)\xi_\kmone + \frac{3 \rho_Q \eta_k \tilde{\sigma}^2}{n}$.
Then, since $\delta_k = \sqrt{\frac{5\eta_k\gamma_k}{3n}}$ and $\eta_k \leq \frac{1}{3L_Q} \leq \frac{1}{3L}$,
\begin{displaymath}
\frac{1}{n}\left(\eta_k - \frac{L \eta_k^2}{2}\right)- \frac{\delta_k^2}{2\gamma_k} \geq \frac{5\eta_k}{6n}- \frac{\delta_k^2}{2\gamma_k} = 0,
\end{displaymath}
and the term in~(\ref{eq:aux_svrg_acc}) involving $\|\tilde{g}_k\|^2$ may disappear. Similarly, we have
\begin{displaymath}
\frac{\delta_k(1-\delta_k)\gamma_{\kmone}}{\delta_k(1-\delta_k)\gamma_{\kmone} + \gamma_k/n - \delta_k \gamma_k}  = \frac{\delta_k\gamma_k - \delta_k^2\mu}{\gamma_k/n - \delta_k^2\mu} = \frac{3n\delta_k^3/5 \eta_k - \delta_k^2\mu}{3 \delta_k^2/5\eta_k - \delta_k^2\mu}= \frac{3n - 5\mu\eta_k}{3 - 5\mu\eta_k}= \theta_k,
\end{displaymath}
and the term in~(\ref{eq:aux_svrg_acc}) that is linear in $\tilde{g}_k$ may disappear as well.
Then, we are left with
$\E[d_k^\star] \geq \E[F(\tilde{x}_k)] - \xi_k$. Initializing the induction requires choosing $\xi_0=0$ and $d_0^\star = F(x_0)$. Ultimately, we note that $\E[d_k(x^\star)-F^\star] \leq (1-\delta_k)\E[d_\kmone(x^\star)-F^\star]$ for all $k \geq 1$, and
$$ \E\left[ F(\tilde{x}_k) - F^\star \!+\! \frac{\gamma_k}{2}\|x^\star-v_k\|^2\right] \leq \E[d_k(x^\star)-F^\star] + \xi_k \leq \Gamma_k\left(F(x_0) - F^\star \!+\! \frac{\gamma_0}{2}\|x^\star \!-\! x_0\|^2 \right) + \xi_k,$$
and we obtain~(\ref{eq:acc_rate_svrg}).
\end{proof}
We may now derive convergence rates of our accelerated SVRG algorithm under various settings. The proofs of the following corollaries, when not straightforward, are given in the appendix. The first corollary simply uses Lemma~\ref{lemma:simple}.
\begin{corollary}[\bf Accelerated proximal SVRG - constant step size - $\mu > 0$]\label{corollary:accsvrg_constant}\hfill \break
With $\eta_k =  \min \left( \frac{1}{3L_Q}, \frac{1}{15 \mu n} \right)$ and $\gamma_0 = \mu$,
the iterates produced by Algorithm~\ref{alg:acC_svrg} satisfy
\begin{itemize}
\item if $\frac{1}{3L_Q} \leq \frac{1}{15 \mu n}$,
\begin{equation*}
\E\left[F(x_k)-F^\star\right] \leq \left( 1- \sqrt{\frac{5\mu}{9L_Qn}}\right)^k \left(F(x_0)-F^\star + \frac{\mu}{2}\|x_0-x^\star\|^2 \right) + \frac{3\rho_Q \tilde{\sigma}^2}{\sqrt{5 \mu L_Q n}};
\end{equation*}
\item otherwise,
\begin{equation*}
\E\left[F(x_k)-F^\star\right] \leq \left( 1- \frac{1}{3n} \right)^k \left(F(x_0)-F^\star + \frac{\mu}{2}\|x_0-x^\star\|^2 \right) + \frac{3\rho_Q \tilde{\sigma}^2}{5\mu n}.
\end{equation*}
\end{itemize}
\end{corollary}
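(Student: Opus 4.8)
The plan is to specialize Theorem~\ref{thrm:acc_svrg} to the constant step-size regime. First I would observe that the choice $\gamma_0=\mu$ together with the recursion $\gamma_k=(1-\delta_k)\gamma_\kmone+\delta_k\mu$ forces $\gamma_k=\mu$ for all $k\geq 0$, by a one-line induction. Consequently the defining relation $\delta_k=\sqrt{5\eta_k\gamma_k/(3n)}$ collapses to the constant $\delta=\sqrt{5\eta\mu/(3n)}$, and $\Gamma_k=\prod_{t=1}^k(1-\delta_t)=(1-\delta)^k$. I would also check that the step-size hypothesis $\eta_k\leq\min\!\left(\tfrac{1}{3L_Q},\tfrac{1}{15\gamma_k n}\right)$ of Theorem~\ref{thrm:acc_svrg} reduces, since $\gamma_k=\mu$, to $\eta\leq\min\!\left(\tfrac{1}{3L_Q},\tfrac{1}{15\mu n}\right)$, which is exactly the prescribed value, so the theorem applies.

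Next I would invoke~(\ref{eq:acc_rate_svrg}), discard the nonnegative term $\tfrac{\gamma_k}{2}\|v_k-x^\star\|^2$ on the left-hand side, and evaluate the residual sum $\Gamma_k\sum_{t=1}^k\eta_t/\Gamma_t=\eta(1-\delta)^k\sum_{t=1}^k(1-\delta)^{-t}$. A routine geometric-series bound — which is the content of Lemma~\ref{lemma:simple} — gives $(1-\delta)^k\sum_{t=1}^k(1-\delta)^{-t}\leq 1/\delta$, so the noise contribution is at most $\tfrac{3\rho_Q\tilde{\sigma}^2}{n}\cdot\tfrac{\eta}{\delta}=3\rho_Q\tilde{\sigma}^2\sqrt{\tfrac{3\eta}{5\mu n}}$, using $\eta/\delta=\sqrt{3n\eta/(5\mu)}$. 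Together with $F(x_0)-F^\star+\tfrac{\gamma_0}{2}\|x_0-x^\star\|^2=F(x_0)-F^\star+\tfrac{\mu}{2}\|x_0-x^\star\|^2$, this already yields the claimed form $\E[F(x_k)-F^\star]\leq(1-\delta)^k\bigl(F(x_0)-F^\star+\tfrac{\mu}{2}\|x_0-x^\star\|^2\bigr)+3\rho_Q\tilde{\sigma}^2\sqrt{3\eta/(5\mu n)}$.

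Finally I would substitute the two possible values of $\eta$. If $\tfrac{1}{3L_Q}\leq\tfrac{1}{15\mu n}$ then $\eta=\tfrac{1}{3L_Q}$, so $\delta=\sqrt{5\mu/(9L_Q n)}$ and the noise term simplifies to $3\rho_Q\tilde{\sigma}^2/\sqrt{5\mu L_Q n}$, giving the first inequality; otherwise $\eta=\tfrac{1}{15\mu n}$, so $\delta=\sqrt{1/(9n^2)}=\tfrac{1}{3n}$ and the noise term simplifies to $3\rho_Q\tilde{\sigma}^2/(5\mu n)$, giving the second. I do not expect any real obstacle: the only mild subtlety is verifying that $\gamma_k$ is genuinely constant — which is what lets both $\delta_k$ and the step-size condition collapse cleanly — and then carrying the elementary algebra through the two cases without slips.
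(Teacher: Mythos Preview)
Your proposal is correct and matches the paper's own argument essentially line for line: the paper states that the corollary ``simply uses Lemma~\ref{lemma:simple}'' together with the identity $\Gamma_k\sum_{t=1}^k \eta/\Gamma_t \leq \eta/\delta = \sqrt{3 n \eta/5\mu}$, which is exactly the computation you carry out after specializing Theorem~\ref{thrm:acc_svrg} with $\gamma_k\equiv\mu$. The only difference is that you make explicit the one-line induction $\gamma_k=\mu$ and the verification of the step-size hypothesis, both of which the paper leaves implicit.
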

The corollary uses the fact that $\Gamma_k\sum_{t=1}^k \eta/\Gamma_t \leq \eta/\delta = \sqrt{3 n \eta/5\mu}$ and thus
the algorithm converges linearly to an area of radius $3\rho_Q \tilde{\sigma}^2 \sqrt{3 \eta/ 5 \mu n}  = O\left( \rho_Q{\tilde{\sigma}^2}\min \left( \frac{1}{\sqrt{n \mu L_Q}}, \frac{1}{\mu n} \right) \right)$, where as before, $\rho_Q=1$ if the distribution $Q$ is uniform. When $\tilde{\sigma}^2=0$, the corresponding algorithm achieves the optimal complexity for finite sums~\cite{arjevani2016dimension}.
Interestingly, we see that here non-uniform sampling may hurt the convergence guarantees in some situations. Whenever $\frac{1}{\max_i L_i}  > \frac{1}{5\mu n}$, the optimal sampling strategy is indeed the uniform one.
Next, we show how to obtain a converging algorithm in the next corollary.
\begin{corollary}[\bf Accelerated proximal SVRG - diminishing step sizes - $\mu > 0$]\label{corollary:acc_svrg}~\newline
Assume that $f$ is $\mu$-strongly convex and that we target an accuracy
$\varepsilon$ smaller than $B=3\rho_Q \tilde{\sigma}^2\sqrt{\eta/\mu}$ with the same step size $\eta$ as in the previous corollary.
First, use such a constant step-size strategy $\eta_k=\eta$ with $\gamma_0=\mu$ within Algorithm~\ref{alg:acC_svrg},
leading to the convergence rate of the previous corollary,  until $\E[F({x}_k)- F^\star] \leq B$.
Then, we restart the optimization procedure with decreasing step-sizes $\eta_k
= \min \left(\eta ,\frac{12 n}{5 \mu (k+2)^2}\right)$  and generate a new sequence~$(\hat{x}_k)_{k \geq 0}$. The resulting number of gradient evaluations to achieve $\E[F({x}_k)- F^\star] \leq  \varepsilon$ is upper bounded by
$$O\left( \left(n + \sqrt{\frac{{nL_Q}}{\mu}}\right) \log\left(\frac{F(x_0)- F^\star}{\varepsilon}\right)\right) + O\left( \frac{\rho_Q \sigma^2}{\mu \varepsilon}\right).$$
\end{corollary}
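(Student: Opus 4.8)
The plan is to chain the two phases of the statement, controlling each with results already established. \emph{Phase~1} is the constant–step–size regime, which is exactly the setting of Corollary~\ref{corollary:accsvrg_constant}: the iterates converge linearly at rate $1-\tau$ with $\tau=\min\bigl(\sqrt{5\mu/(9L_Qn)},\,1/(3n)\bigr)=\Theta\bigl(1/(n+\sqrt{nL_Q/\mu})\bigr)$ towards a ball of radius $r=\Theta(B/\sqrt n)$ about $F^\star$. Because the threshold $B=3\rho_Q\tilde{\sigma}^2\sqrt{\eta/\mu}$ equals $\Theta(\sqrt n)$ times $r$, the stopping test $\E[F(x_k)-F^\star]\le B$ is met after $O\bigl((n+\sqrt{nL_Q/\mu})\log\frac{F(x_0)-F^\star+\frac{\mu}{2}\|x_0-x^\star\|^2}{B}\bigr)$ iterations, and, using strong convexity ($\frac{\mu}{2}\|x_0-x^\star\|^2\le F(x_0)-F^\star$) together with $\varepsilon\le B$, this is at most the first announced term. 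Call $x_0'$ the final Phase-1 iterate, so $\E[F(x_0')-F^\star]\le B$.

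\emph{Phase~2} is analysed with Theorem~\ref{thrm:acc_svrg} restarted from $x_0'$ with $\gamma_0=\mu$ and the schedule $\eta_k=\min(\eta,\frac{12n}{5\mu(k+2)^2})$. Two remarks make~(\ref{eq:acc_rate_svrg}) explicit. First, since $\gamma_0=\mu$, the recursion $\gamma_k=(1-\delta_k)\gamma_\kmone+\delta_k\mu$ forces $\gamma_k=\mu$ for all $k$, hence $\delta_k=\sqrt{5\eta_k\mu/(3n)}$; and once the decreasing branch of the $\min$ is active, i.e.\ for $k\ge k_0$ with $k_0=\Theta(n+\sqrt{nL_Q/\mu})$ the threshold $(k_0+2)^2\approx 12n/(5\mu\eta)$, one has $\delta_k=2/(k+2)$ exactly. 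Second, the hypotheses of Theorem~\ref{thrm:acc_svrg}, namely $\eta_k\le\min(1/(3L_Q),1/(15\gamma_k n))$ and $(\eta_k)$ non-increasing, hold trivially since $\gamma_k=\mu$ and $\eta_k\le\eta=\min(1/(3L_Q),1/(15\mu n))$. Consequently $\prod_{t=k_0+1}^{k}(1-\delta_t)$ telescopes to $(k_0+1)(k_0+2)/((k+1)(k+2))$; and since for $t\le k_0$ one has $\delta_t=\sqrt{5\eta\mu/(3n)}\approx 2/(k_0+2)$, the constant–step prefix multiplies $\Gamma$ by only a constant, $(1-\sqrt{5\eta\mu/(3n)})^{k_0}=\Theta(1)$. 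Hence $\Gamma_k=\Theta(k_0^2/k^2)$ for $k\ge k_0$.

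Plugging into~(\ref{eq:acc_rate_svrg}) and bounding the Phase-2 initial value by $\E[F(x_0')-F^\star]+\frac{\mu}{2}\E\|x_0'-x^\star\|^2\le 2B$ (strong convexity and the Phase-1 stopping test), the bound splits into a \emph{restart penalty} $\Gamma_k\cdot 2B$ and a \emph{noise term} $\frac{3\rho_Q\tilde{\sigma}^2}{n}\,\Gamma_k\sum_{t=1}^{k}\eta_t/\Gamma_t$. For the sum I would split at $k_0$: for $t>k_0$, $\eta_t\,\Gamma_k/\Gamma_t=\frac{12n(t+1)}{5\mu(t+2)(k+1)(k+2)}\le\frac{12n}{5\mu(k+1)(k+2)}$, so this part sums to $O(n/(\mu k))$, while the prefix $t\le k_0$ contributes a geometric sum times $\Gamma_k$ that does not dominate it; hence the noise term is $O(\rho_Q\tilde{\sigma}^2/(\mu k))$, which drops below $\varepsilon$ after $O(\rho_Q\tilde{\sigma}^2/(\mu\varepsilon))$ further iterations, and the restart penalty is $O(k_0^2 B/k^2)$. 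Since each iteration of Algorithm~\ref{alg:acC_svrg} costs $O(1)$ gradient evaluations in expectation (the full recomputation of $\bar z_k$ occurring only with probability $1/n$), adding the Phase-1 count yields the claimed bound.

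The main obstacle is the control of the restart penalty $\Gamma_k\cdot 2B$. Forcing it below $\varepsilon$ requires $k=\Omega\bigl(k_0\sqrt{B/\varepsilon}\bigr)$ iterations of Phase~2, and the delicate point is to show that this count does not exceed the sum of the two announced terms; this is precisely where the specific value $B=3\rho_Q\tilde{\sigma}^2\sqrt{\eta/\mu}$ and the hypothesis $\varepsilon\le B$ enter, through the relation between $k_0$, $B$, $\eta$ and $\mu$ (in particular $k_0=\Theta(\sqrt{n/(\mu\eta)})$ and $B=\Theta(\rho_Q\tilde{\sigma}^2\sqrt{\eta/\mu})$). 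A secondary technicality is the bookkeeping of $\sum_{t=1}^k\eta_t/\Gamma_t$ across the transition between the two step-size regimes at $t=k_0$, which proceeds exactly as in the diminishing–step proofs of Corollaries~\ref{corollary:svrg2} and~\ref{corollary:acc_sgd2}.
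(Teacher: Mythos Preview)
Your proposal is essentially the paper's proof: Phase~1 via Corollary~\ref{corollary:accsvrg_constant}, Phase~2 via Theorem~\ref{thrm:acc_svrg} with $\gamma_k\equiv\mu$, $\delta_k=\sqrt{5\mu\eta_k/(3n)}=\min(\sqrt{5\mu\eta/(3n)},2/(k+2))$, splitting the noise sum at $k_0$, and invoking Lemma~\ref{lemma:step} for $\Gamma_k$. The only substantive difference is in how you dispose of what you call the ``main obstacle,'' the restart penalty $\Gamma_k\cdot 2B$. The paper does not treat this as a separate obstacle at all: after bounding the whole bracket by $\frac{2k_0(k_0+1)B}{(k+1)(k+2)}\le\frac{2k_0B}{k+2}$ (using $k\ge k_0$), it simply observes that $k_0B=O(\rho_Q\tilde{\sigma}^2/\mu)$, so this term is already of the form $O(\rho_Q\tilde{\sigma}^2/(\mu k))$ and merges with the noise term. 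No separate count $k=\Omega(k_0\sqrt{B/\varepsilon})$ and no AM--GM are needed.

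One caveat: the statement writes $B=3\rho_Q\tilde{\sigma}^2\sqrt{\eta/\mu}$, but the paper's own proof (and the radius computed after Corollary~\ref{corollary:accsvrg_constant}) uses $B=3\rho_Q\tilde{\sigma}^2\sqrt{\eta/(\mu n)}$. With the latter value, $k_0B\approx\sqrt{12n/(5\mu\eta)}\cdot 3\rho_Q\tilde{\sigma}^2\sqrt{\eta/(\mu n)}=O(\rho_Q\tilde{\sigma}^2/\mu)$ and the merge above is clean; with the value you copied from the statement, you would pick up an extra $\sqrt{n}$ and your ``delicate point'' would not close as stated. So your final paragraph is right that the exact relation between $k_0$, $B$, $\eta$ and $\mu$ is what makes everything work, but the identity to use is $k_0B=O(\rho_Q\tilde{\sigma}^2/\mu)$, not an indirect argument.
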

The proof is given in Appendix~\ref{appendix:corollary:acc_svrg}. Next, we study the case when $\mu=0$.

\begin{corollary}[\bf Accelerated proximal SVRG - $\mu =0$]\label{corollary:acc_svrg_convex2}
   Consider the same setting as in Theorem \ref{thrm:acc_svrg}, where $f$ is convex
   and proceed in two steps.
   First, run one iteration of~(\ref{eq:opt1}) with step-size $\eta \leq \frac{1}{3L_Q}$ with the
gradient estimator~$(1/n)\sum_{i=1}^n \tildenabla f_i(\xz)$.
Second, use the resulting point to initialize Algorithm~\ref{alg:acC_svrg} and 
   use step size $\eta_t = \min\left( \frac{1}{3 L_Q}, \frac{1}{15\gamma_t n}\right)$, with
   $\gamma_0=1/\eta$, for a total of~$K \geq 6n\log(15n)+1$ iterations.
Then
\begin{equation*}
   \E\left[F({x}_K)-F^\star\right] \leq \frac{6n\|x_0-x^\star\|^2}{\eta (K+1)^2} + \frac{3\eta\rho_Q\tilde{\sigma}^2(K + 1)}{n}.
\end{equation*}
   If in addition we choose $\eta=\min \left(\frac{1}{3L_Q}, \frac{\sqrt{2}n \|x_0-x^\star\|}{\tilde{\sigma} \sqrt{\rho_Q} (K+1)^{3/2}}  \right)$, 
\begin{equation}
   \E\left[F({x}_K)-F^\star\right] \leq  \frac{18 L_Q n\|x_0-x^\star\|^2}{(K+1)^2} + \frac{6 \tilde{\sigma} \|x_0-x^\star\|\sqrt{2\rho_Q}}{\sqrt{K+1}}.  \label{eq:svrg.convex2}
\end{equation}
\end{corollary}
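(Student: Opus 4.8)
The plan is to follow the same two‑phase template used for Corollaries~\ref{corollary:acc_sgd3} and~\ref{thm:svrg.mu0}: a single preliminary proximal‑gradient step to absorb the initial function gap, then an application of Theorem~\ref{thrm:acc_svrg}, and finally a careful estimate of the resulting right‑hand side when $\mu=0$. Concretely, I would first run one iteration of~\eqref{eq:opt1} from the input point $x_0$ with the estimator $g=(1/n)\sum_i\tildenabla f_i(x_0)$, which is unbiased for $\nabla f(x_0)$ with variance at most $\tilde\sigma^2$. Since $\eta\le 1/(3L_Q)\le 1/L$, the standard one‑step proximal‑gradient inequality (controlling $\|x_0^+-\mathrm{Prox}_{\eta\psi}[x_0-\eta\nabla f(x_0)]\|$ by non‑expansiveness, exactly as in the recovery of proximal SGD in Appendix~\ref{app.rec}) gives, for the output $x_0^+$,
\[
\E\big[F(x_0^+)-F^\star+\tfrac{1}{2\eta}\|x_0^+-x^\star\|^2\big]\le \tfrac{1}{2\eta}\|x_0-x^\star\|^2+\eta\tilde\sigma^2 .
\]
With $\gamma_0=1/\eta$ the left‑hand side is precisely the initial potential $F(x_0^+)-F^\star+\tfrac{\gamma_0}{2}\|x_0^+-x^\star\|^2$ in~\eqref{eq:acc_rate_svrg}; since the sequences $(\delta_t,\gamma_t,\eta_t,\Gamma_t)$ are deterministic and the randomness of the preliminary step is independent of Algorithm~\ref{alg:acC_svrg}, applying Theorem~\ref{thrm:acc_svrg} conditionally on $x_0^+$ and averaging gives
\[
\E[F(x_K)-F^\star]\le \Gamma_K\Big(\tfrac{1}{2\eta}\|x_0-x^\star\|^2+\eta\tilde\sigma^2+\tfrac{3\rho_Q\tilde\sigma^2}{n}\sum_{t=1}^K\eta_t/\Gamma_t\Big).
\]

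The next step is to estimate $\Gamma_K$. For $\mu=0$ we have $\gamma_k=\Gamma_k\gamma_0=\Gamma_k/\eta$, so the choice $\eta_k=\min(\tfrac{1}{3L_Q},\tfrac{1}{15\gamma_k n})$ forces $\delta_k=\sqrt{5\eta_k\gamma_k/(3n)}=\min(\sqrt{5\gamma_k/(9L_Qn)},\tfrac{1}{3n})$. Hence there is a ``burn‑in'' phase $k\le t_0$ on which $\gamma_k>L_Q/(5n)$, $\delta_k=1/(3n)$, and $\Gamma_k=(1-1/(3n))^k$ shrinks geometrically down to $\Gamma_{t_0}=\eta L_Q/(5n)$; beyond it $\delta_k=\sqrt{5\gamma_k/(9L_Qn)}$ and the classical telescoping inequality $\tfrac{1}{\sqrt{\Gamma_k}}-\tfrac{1}{\sqrt{\Gamma_{k-1}}}\ge \tfrac{\delta_k}{2\sqrt{\Gamma_k}}=\tfrac12\sqrt{5/(9L_Qn\eta)}$ yields $\Gamma_k\le 36nL_Q\eta/(5(6n+k-t_0)^2)$. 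The role of the hypothesis $K\ge 6n\log(15n)+1$ is to ensure, in both cases, that $\Gamma_K\le\tfrac{12n}{(K+1)^2}$: if the iteration is still in the burn‑in one uses that $k\mapsto(1-1/(3n))^k(k+1)^2$ is decreasing past $k\approx 6n$ and already below $12n$ at $k=6n\log(15n)$; if past the burn‑in one uses $\eta L_Q\le 1/3$, noting that a small $\eta$ lengthens the burn‑in but shrinks $\Gamma_{t_0}$ proportionally, so that $6n+K-t_0$ is still large enough. Then the first term contributes at most $\tfrac{6n\|x_0-x^\star\|^2}{\eta(K+1)^2}$, and $\Gamma_K\eta\tilde\sigma^2\le \tfrac{12n\eta\tilde\sigma^2}{(K+1)^2}$ is absorbed into the noise term below since $(K+1)^3$ exceeds $n^2$.

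It remains to bound $\Gamma_K\sum_{t=1}^K\eta_t/\Gamma_t$, and here too I would split the sum at $t_0$. Using $\eta_t\gamma_t=3n\delta_t^2/5$, $\gamma_t=\Gamma_t/\eta$ and the identity $\delta_t/\Gamma_t=1/\Gamma_t-1/\Gamma_{t-1}$, the burn‑in contribution is a geometric sum of size $O(\eta/\Gamma_{\min(t_0,K)}^2)$, whose $\Gamma_K$‑weighted value is $O(n/L_Q)$ thanks to $\Gamma_{\min(t_0,K)}\ge\eta L_Q/(5n)$; on the polynomial part $\eta_t=1/(3L_Q)$ and the telescoping estimate for $1/\sqrt{\Gamma_t}$ gives $\Gamma_K\sum_{t>t_0}\eta_t/\Gamma_t=O((6n+K-t_0)/L_Q)$. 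Combining these with $\eta L_Q\le 1/3$ and $\Gamma_K\le\tfrac{12n}{(K+1)^2}$ bounds everything by $\tfrac{3\eta\rho_Q\tilde\sigma^2(K+1)}{n}$ after choosing absolute constants, which proves the first displayed inequality of the corollary. The second one then follows immediately by substituting $\eta=\min\big(\tfrac{1}{3L_Q},\tfrac{\sqrt2\,n\|x_0-x^\star\|}{\tilde\sigma\sqrt{\rho_Q}(K+1)^{3/2}}\big)$ and bounding each of the two terms by replacing $\eta$ with the relevant argument of the minimum.

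The main obstacle is the second and third steps: one must control $\Gamma_k$ and $\sum_t\eta_t/\Gamma_t$ uniformly over all admissible step sizes $\eta\in(0,1/(3L_Q)]$, and for very small $\eta$ the burn‑in phase lasts $\Theta(n\log(5n/(L_Q\eta)))$ iterations, which is where essentially all the constant‑tracking lives. The saving observation that makes the two regimes balance is the identity $\Gamma_{t_0}=\eta L_Q/(5n)$, tying the length of the burn‑in to its endpoint; this is precisely what keeps the final rate linear rather than quadratic in $n$.
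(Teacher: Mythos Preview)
Your template is exactly the paper's: a single proximal step to trade the initial function gap for $\tfrac{1}{2\eta}\|x_0-x^\star\|^2$, then Theorem~\ref{thrm:acc_svrg}, then control of $\Gamma_K$ via a burn-in/polynomial split. Two points deserve mention.

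First, the noise sum is where you work much harder than the paper. The paper simply uses the monotonicity $\Gamma_K\le\Gamma_t$ to write
\[
\Gamma_K\sum_{t=1}^K\frac{\eta_t}{\Gamma_t}\;\le\;\sum_{t=1}^K\eta_t\;\le\;K\eta\;\le\;(K+1)\eta,
\]
which immediately yields the second term of the first display; no splitting at $t_0$, no geometric sums, no appeal to $\Gamma_{t_0}=\eta L_Q/(5n)$. Your elaborate two-phase accounting is correct in spirit but unnecessary for the stated bound. (The last inequality $\eta_t\le\eta$ uses that $\eta_t=\min(1/(3L_Q),\eta/(15\Gamma_t n))$ with $\Gamma_t\ge 1/(15n)$ on the burn-in, which is how the paper's choice of the switch point $k_0$ at $\Gamma_{k_0}\approx 1/(15n)$ and $k_0\le 3n\log(15n)$ enters; this is cleanest when $\eta=1/(3L_Q)$, and indeed the paper's formula $\delta_k=\min(\sqrt{5\Gamma_k/(3n)},1/(3n))$ is written for that case.) Your observation that for very small $\eta$ the burn-in extends to $\Theta(n\log(5n/(L_Q\eta)))$ is accurate, but the paper sidesteps this by anchoring the transition at $1/(15n)$ rather than at $\eta L_Q/(5n)$.

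Second, a small slip: the preliminary estimator $(1/n)\sum_i\tildenabla f_i(x_0)$ has variance at most $\tilde\sigma^2/n$, not $\tilde\sigma^2$, since the $n$ perturbations are independent; this is what the paper uses (and what lets the extra $\Gamma_K\cdot\eta\tilde\sigma^2/n$ be absorbed cleanly into the $(K+1)$ term). Your looser bound still works but costs a constant.
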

The proof is provided in Appendix~\ref{appendix:corollary:acc_svrg_convex2}.
When $\tilde{\sigma}^2=0$ (deterministic setting), the 
first part of the corollary with $\eta=1/3L_Q$ gives us the same complexity as Katyusha~\citep{accsvrg}, and in the stochastic case, we obtain a significantly better complexity than
the same algorithm without acceleration, which was analyzed in Corollary~\ref{thm:svrg.mu0}.

\section{Experiments}\label{sec:exp}
In this section, we evaluate numerically the approaches introduced in the previous sections.

\subsection{Datasets, Formulations, and Methods}
Following classical benchmarks in optimization methods for machine learning~\citep[see, \eg][]{schmidt2017minimizing},
we consider empirical risk minimization formulations. Given training data
$(a_i,b_i)_{i=1,\ldots,n}$, with $a_i$ in $\Real^p$ and $b_i$ in $\{-1,+1\}$, we consider the optimization problem
\begin{displaymath}
\min_{x \in \Real^p} \frac{1}{n}\sum_{i=1}^n \phi(b_i a_i^\top x)  + \frac{\lambda}{2}\|x\|^2,
\end{displaymath}
where $\phi$ is either the logistic loss $\phi(u)=\log(1+e^{-u})$, or the squared hinge loss $\phi(u)=\max(0,1-u)^2$.
Both functions are $L$-smooth; when the vectors $a_i$ have unit norm, we may indeed choose $L=0.25$ for the logistic loss
and $L=1$ for the squared hinge loss. Studying the squared hinge loss is interesting: whereas the
logistic loss has bounded gradients on $\Real^p$, this is not the case for the squared hinge loss. With unbounded optimization domain,
the gradient norms may be indeed large in some regions of the solution space, which may lead in turn to large variance $\sigma^2$ of
the gradient estimates obtained by SGD, causing instabilities.

The scalar $\lambda$ is a regularization parameter that acts as a lower bound on the
strong convexity constant of the problem. We consider the parameters
$\mu=\lambda=1/10n$ in our problems, which is of the order of the smallest
values that one would try when doing a parameter search, \eg, by cross-validation.
For instance, this is empirically observed for the dataset~\textrm{cifar-ckn} described below, where a test set is available, allowing us to check that the ``optimal'' regularization parameter leading to the lowest generalization error is indeed of this order.
We also report an experiment with $\lambda=1/100n$ in order to study the effect of the problem conditioning on the method's performance.

Following~\citet{smiso,zheng2018lightweight}, we consider DropOut
perturbations~\citep{srivastava_dropout:_2014} to illustrate the robustness to noise of the algorithms. DropOut consists of randomly
setting to zero each entry of a data point with probability $\delta$, leading to the optimization problem
\begin{equation}
\min_{x \in \Real^p} \frac{1}{n}\sum_{i=1}^n \E_\rho\left[ \phi(b_i (\rho \circ a_i)^\top x)\right]  + \frac{\lambda}{2}\|x\|^2, \label{eq:expectation}
\end{equation}
where $\rho$ is a binary vector in $\{0,1\}^p$ with i.i.d. Bernoulli entries, and $\circ$ denotes the elementwise multiplication between two vectors.
We consider two DropOut regimes, with $\delta$ in $\{0.01,0.1\}$, representing small and medium perturbations, respectively.

% Then, we consider three datasets with various number of points~$n$ and dimension~$p$, coming from different scientific fields:
We consider the following three datasets coming from different scientific fields
\begin{itemize}
\item \textrm{alpha} is from  the  Pascal  Large  Scale Learning Challenge
website\footnote{\url{http://largescale.ml.tu-berlin.de/}} and contains $n=250\,000$
points in dimension $p=500$.
\item \textrm{gene} consists of gene expression data and the binary labels $b_i$ characterize two different types of breast cancer. This is a small dataset with $n=295$ and $p=8\,141$.
\item \textrm{ckn-cifar} is an image classification task where each image from the CIFAR-10 dataset\footnote{\url{https://www.cs.toronto.edu/~kriz/cifar.html}} is represented by using a two-layer unsupervised convolutional neural network~\citep{mairal2016end}. Since CIFAR-10 originally contains 10 different classes, we consider the binary classification task consisting of predicting the class 1 vs. other classes. The dataset contains $n=50\,000$ images and the dimension of the representation is $p=9\,216$.
\end{itemize}
For simplicity, we normalize the features of all datasets and thus we use a uniform sampling strategy~$Q$ in all algorithms.
Then, we consider several methods with their theoretical step sizes, described in Table~\ref{table:algs}.
Note that we also evaluate the strategy \textrm{random-SVRG} with step size $1/3L$, even though our analysis requires $1/12L$, in order to get a fair comparison with the accelerated SVRG method.
In all figures, we consider that $n$ iterations of SVRG
count as $2$ effective passes over the data since it appears to be a good
proxy of the computational time.
Indeed, (i) if one is allowed to store the variables $z_i^k$, then $n$ iterations exactly correspond to two passes over the data; (ii) the gradients $\tildenabla f_i(x_\kmone)- \tildenabla f_i(\tilde{x}_\kmone)$ access the same training point which reduces the data access overhead; (iii) computing the full gradient~$\bar{z}_k$ can be done in practice in a much more efficient manner than computing individually the~$n$ gradients $\tildenabla f_i(x_k)$, either through parallelization or by using more efficient routines (\eg, BLAS2 vs BLAS1 routines for linear algebra).
Each experiment is conducted five times and we always report the average of the five experiments in each figure.
We also include in the comparison two baselines from the literature: \textrm{AC-SA} is the accelerated stochastic gradient descent method of~\citet{ghadimi2013optimal}, and \textrm{adam-heur} is the Adam method of~\citet{kingma2014adam} with its recommended step size. 
As Adam is not converging, we adopt a standard heuristics from the deep learning literature, consisting of reducing the step size by $10$ after $50$ and $150$ passes over the data, respectively, which performs much better than using a constant step size in practice.

\begin{table}
\definecolor{alizarin}{rgb}{0.82, 0.1, 0.26}
\centering
\addtolength{\tabcolsep}{-0.07cm}
\begin{tabular}{|c|c|c|c|c|}
\hline
Algorithm & step size $\eta_k$ & Theory & Complexity $O(.)$ & Bias $O(.)$ \\
\hline
\textrm{SGD} & $\frac{1}{L}$ &  Cor.~\ref{corollary:sgd_constant}  & $\frac{L}{\mu}\log\left(\frac{C_0}{\varepsilon}  \right)$ & $\frac{\sigma^2}{L}$ \\
\hline
\textrm{SGD-d} & $\min
\left(\frac{1}{L},\frac{2}{\mu (k+2)}\right)$ &  Cor.~\ref{corollary:sgd}  & $ \frac{L}{\mu}\log\left(\frac{C_0}{\varepsilon}\right) + \frac{\sigma^2}{\mu \varepsilon}  $ & 0 \\
\hline
\textrm{acc-SGD} & $\frac{1}{L}$ &  Cor.~\ref{corollary:acc_sgd2a}  & $ \sqrt{\frac{L}{\mu}}\log\left(\frac{C_0}{\varepsilon}\right)$ & $\frac{\sigma^2}{\sqrt{\mu L}}$ \\
\hline
\textrm{acc-SGD-d} & $\min
\left(\frac{1}{L},\frac{4}{\mu (k+2)^2}\right)$ &  Cor.~\ref{corollary:acc_sgd2}  & $ \sqrt{\frac{L}{\mu}}\log\left(\frac{C_0}{\varepsilon}\right) + \frac{\sigma^2}{\mu \varepsilon}  $ & 0 \\
\hline
\textrm{acc-mb-SGD-d} & $\min
\left(\frac{1}{L},\frac{4}{\mu (k+2)^2}\right)$ &  Cor.~\ref{corollary:acc_sgd2}  & $ \frac{L}{\mu}\log\left(\frac{C_0}{\varepsilon}\right) + \frac{\sigma^2}{\mu \varepsilon}  $ & 0 \\
\hline
\textrm{rand-SVRG} & $\frac{1}{3 L}$ &  Cor.~\ref{corollary:svrg0}  & $ \left(n + \frac{L}{\mu}\right)\log\left(\frac{C_0}{\varepsilon}\right) $ & $\frac{{\color{alizarin} \tilde{\sigma}^2}}{L}$ \\
\hline
\textrm{rand-SVRG-d} & $\min\left(\frac{1}{12L_Q}, \frac{1}{5\mu n}, \frac{2}{\mu(k+2)}\right)$ &  Cor.~\ref{corollary:svrg2}  & $ \left(n + \frac{L}{\mu}\right)\log\left(\frac{C_0}{\varepsilon}\right) + \frac{{\color{alizarin}\tilde{\sigma}^2}}{\mu \varepsilon}  $ & 0 \\
\hline
\textrm{acc-SVRG} & $\min \left( \frac{1}{3L_Q}, \frac{1}{15 \mu n} \right)$ &  Cor.~\ref{corollary:accsvrg_constant}  & $ \left(n + \sqrt{\frac{nL}{\mu}}\right)\log\left(\frac{C_0}{\varepsilon}\right) $ & $\frac{{\color{alizarin}\tilde{\sigma}^2}}{\sqrt{n \mu L}+n \mu}$ \\
\hline
\textrm{acc-SVRG-d} & $\min \left( \frac{1}{3L_Q}, \frac{1}{15 \mu n} ,\frac{12 n}{5 \mu (k+2)^2}\right)$ &  Cor.~\ref{corollary:acc_svrg}  & $ \left(n + \sqrt{\frac{nL}{\mu}}\right)\log\left(\frac{C_0}{\varepsilon}\right)  + \frac{{\color{alizarin}\tilde{\sigma}^2}}{\mu \varepsilon}$ & 0 \\
\hline
\end{tabular}
\caption{List of algorithms used in the experiments, along with the step size used and the pointer to the corresponding convergence guarantees, with $C_0=F(x_0)-F^\star$. In the experiments, we also use the method \textrm{rand-SVRG} with step size $\eta=1/3L$, even though our analysis requires $\eta \leq 1/12L$.
The approach \textrm{acc-mb-SGD-d} uses minibatches of size $\lceil\sqrt{L/\mu}\rceil$ and could thus easily be parallelized.
Note that we potentially have ${\color{alizarin}\tilde{\sigma}} \ll \sigma$.}\label{table:algs}
\end{table}

\subsection{Evaluation of Algorithms without Perturbations}
First, we study the behavior of all methods when $\tilde{\sigma}^2=0$. We
report the corresponding results in Figures~\ref{fig:nodropout}, \ref{fig:nodropout2}, and~\ref{fig:nodropout3}.  Since the
problem is deterministic, we can check that the value $F^\star$ we consider is
indeed optimal by computing a duality gap using Fenchel duality.
For \textrm{SGD} and random-SVRG, we do not use any averaging strategy, which we found
to empirically slow down convergence, when used from the start;
knowing when to start averaging is indeed not easy and requires heuristics which we do not evaluate here.

\begin{figure}[h!]
\centering
\subfloat[Dataset \textrm{gene}]{\includegraphics[width=0.33\linewidth]{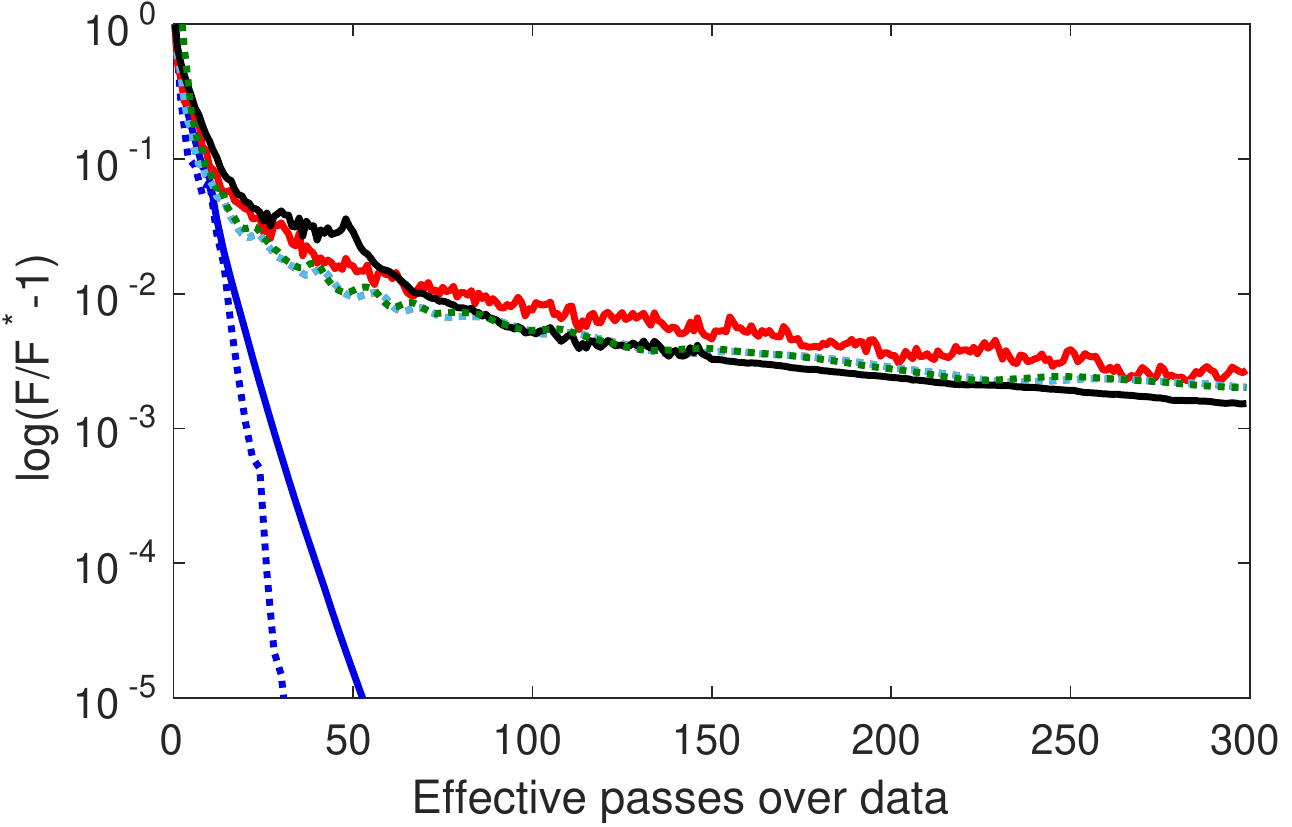}}
\subfloat[Dataset \textrm{ckn-cifar}]{\includegraphics[width=0.33\linewidth]{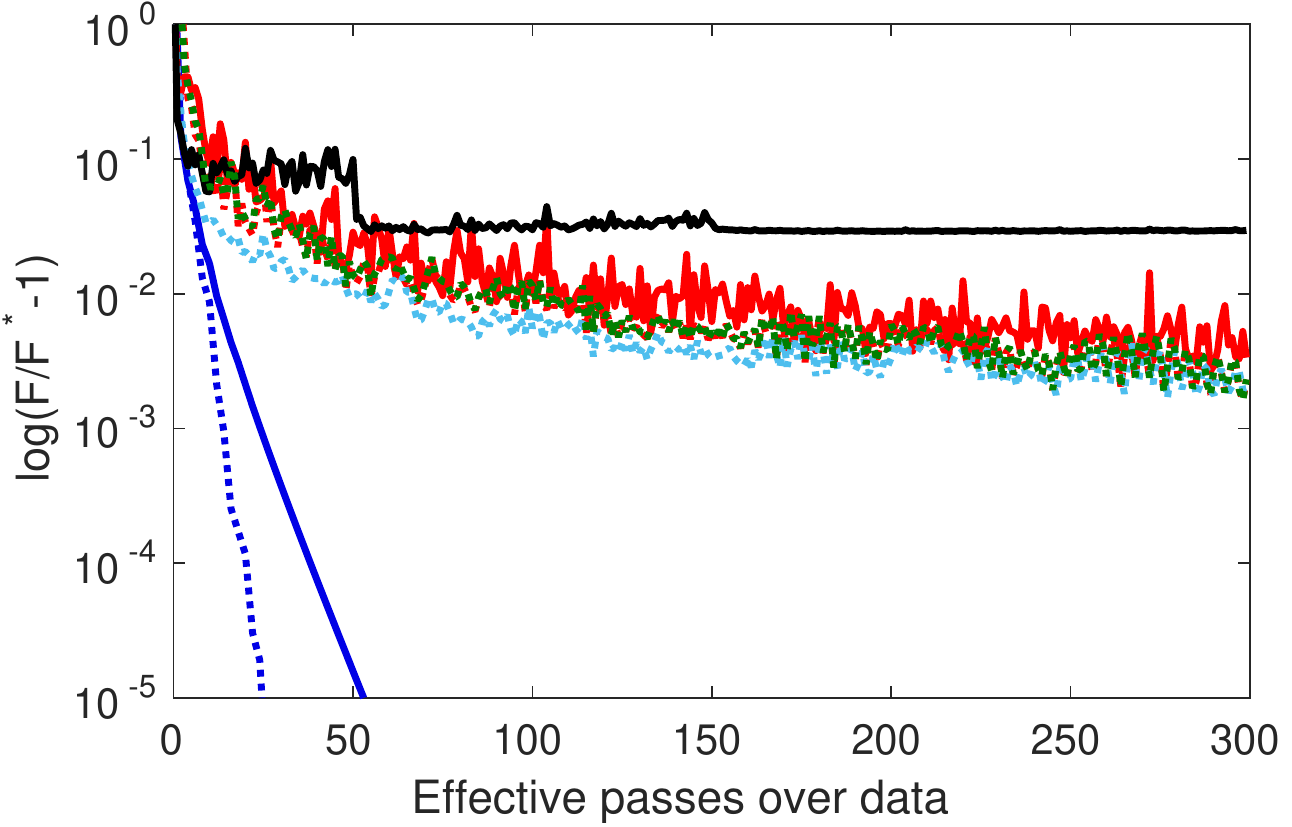}}
\subfloat[Dataset \textrm{alpha}]{\includegraphics[width=0.33\linewidth]{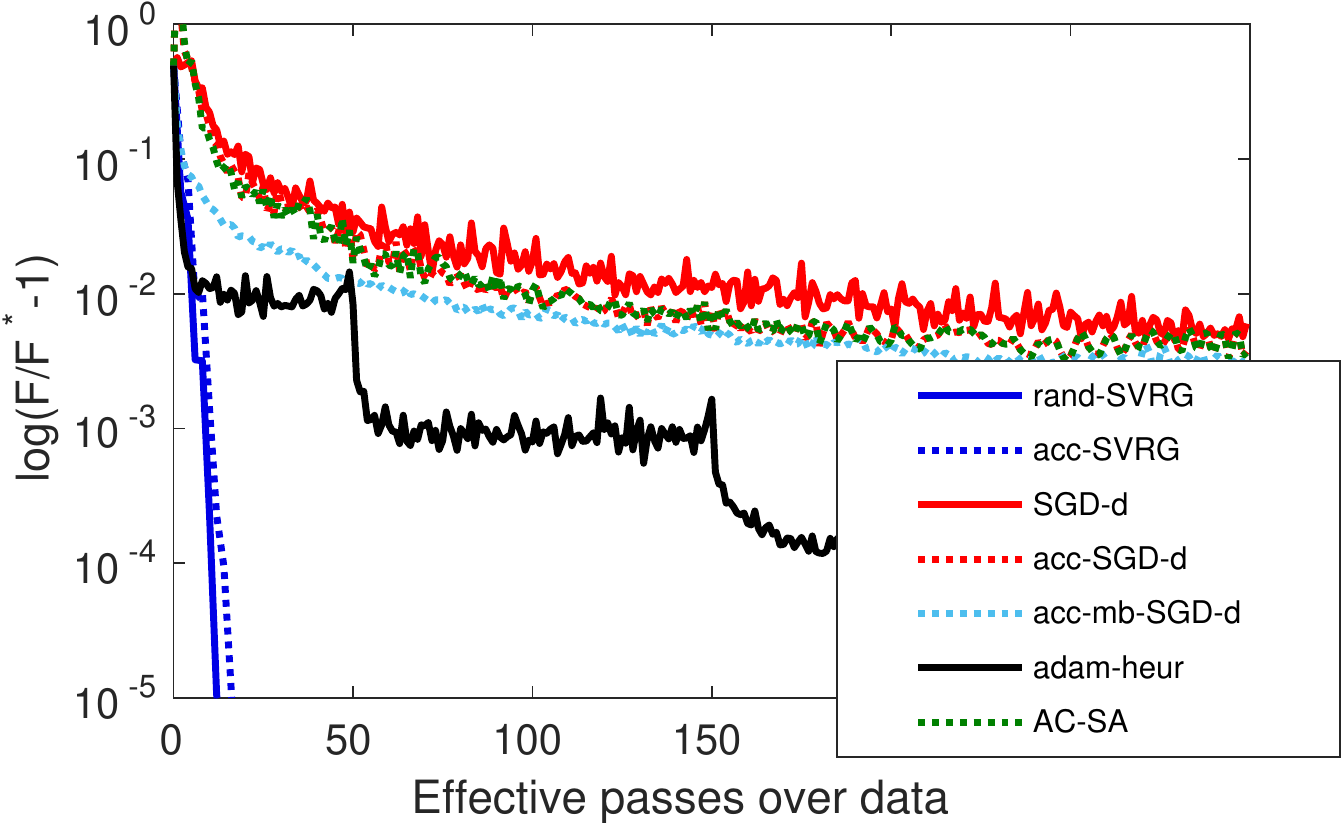}}
   \caption{Optimization curves without perturbations when using the logistic loss and the parameter $\lambda=1/10n$. We plot the value of the objective function on a logarithmic scale as a function of the effective passes over the data (see main text for details). Best seen in color by zooming on a computer screen. Note that the method \textrm{Adam} is not converging.}\label{fig:nodropout}
\end{figure}

\begin{figure}[h!]
\centering
\subfloat[Dataset \textrm{gene}]{\includegraphics[width=0.33\linewidth]{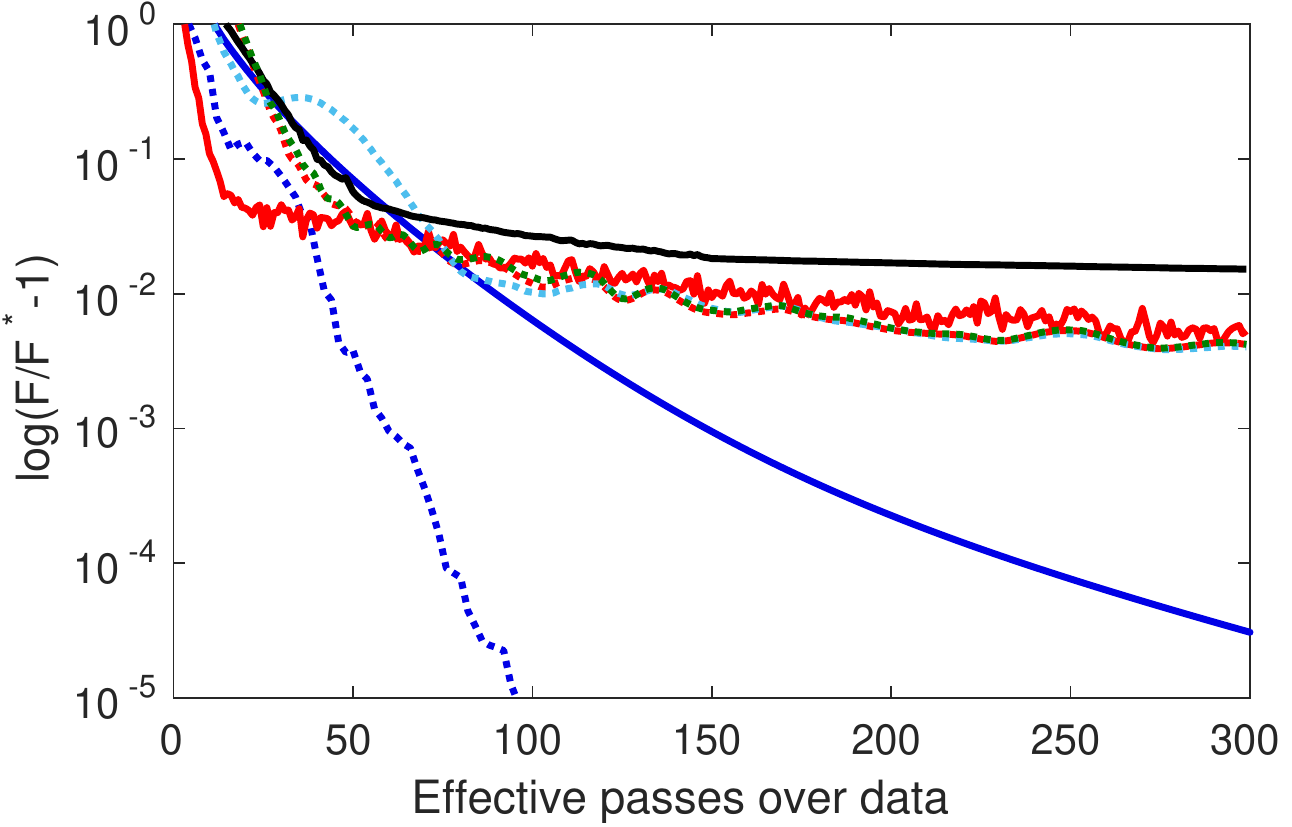}}
\subfloat[Dataset \textrm{ckn-cifar}]{\includegraphics[width=0.33\linewidth]{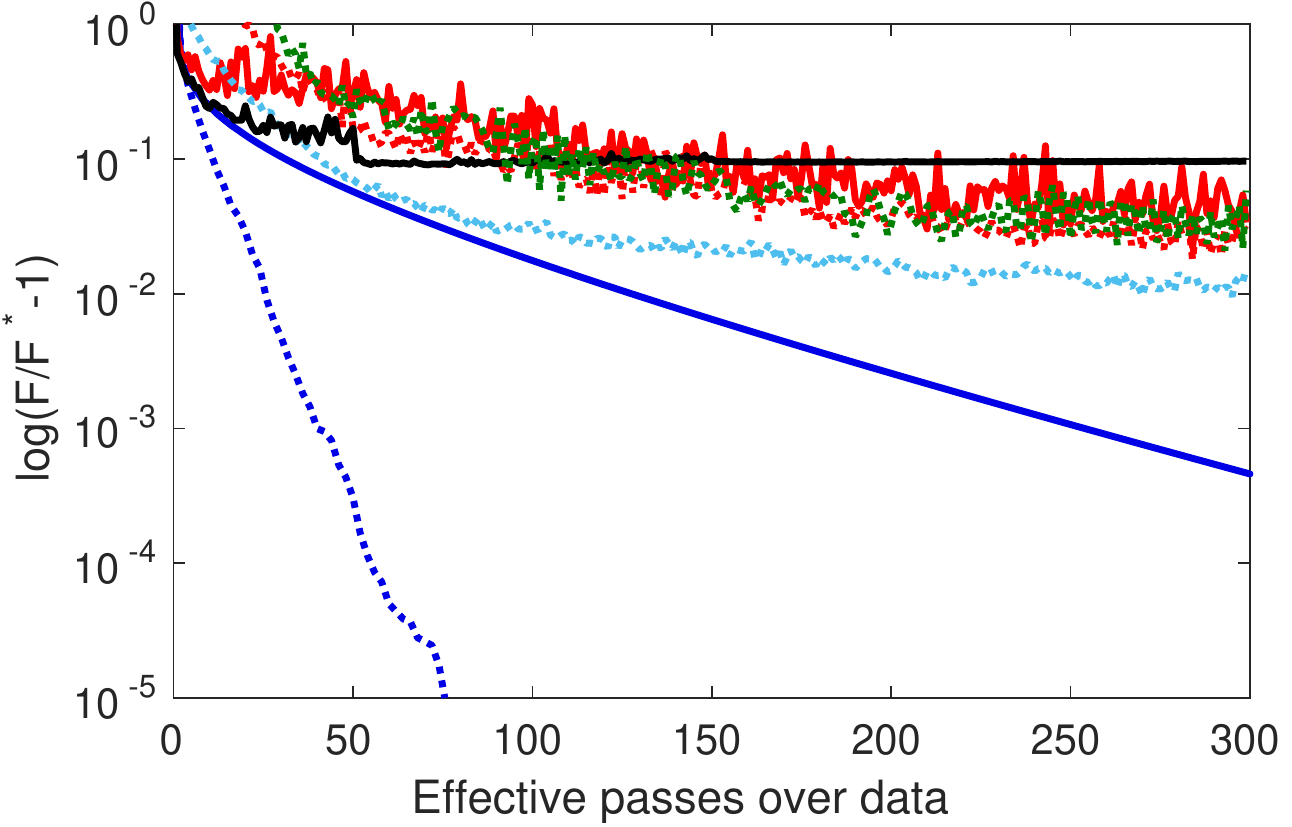}}
\subfloat[Dataset \textrm{alpha}]{\includegraphics[width=0.33\linewidth]{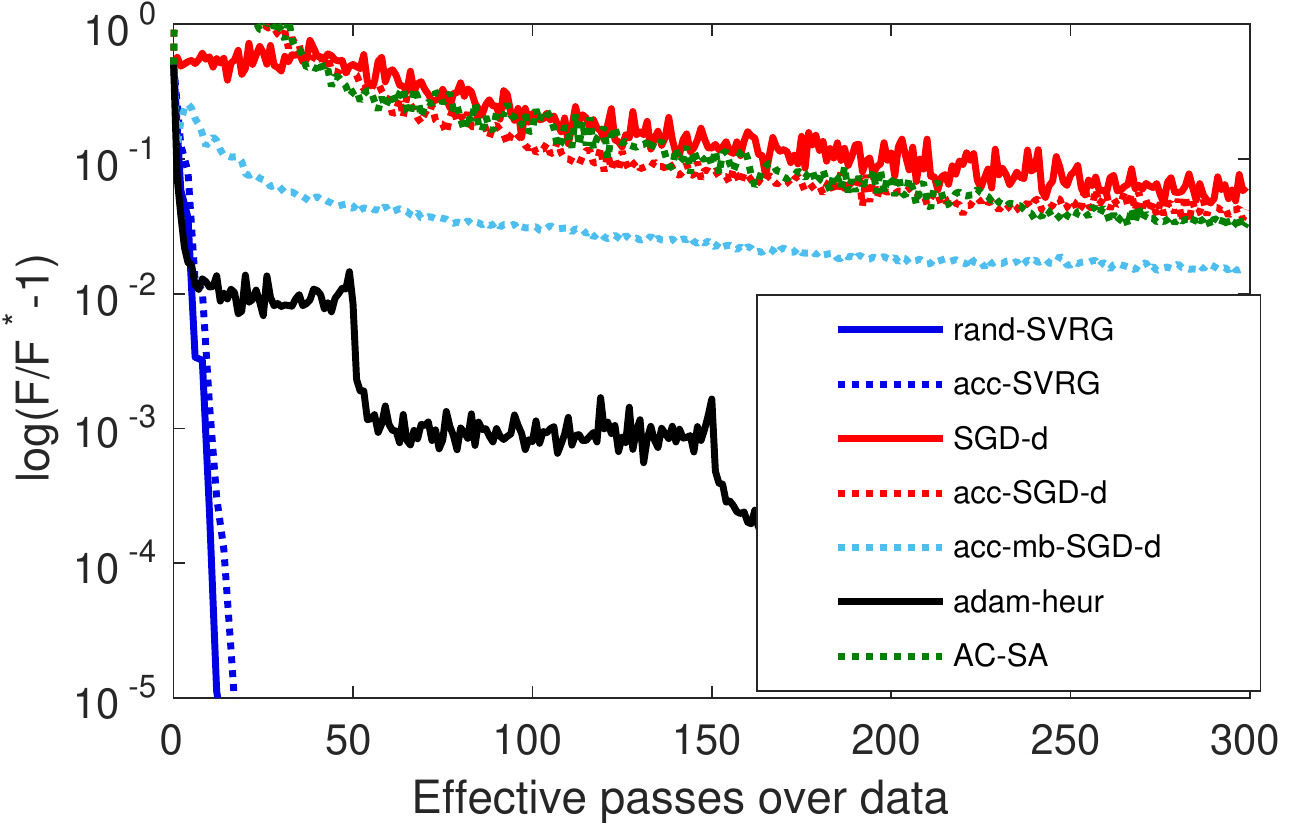}}
\caption{Same experiment as in Figure~\ref{fig:nodropout} with $\lambda=1/100n$.}\label{fig:nodropout2}
\end{figure}

\begin{figure}[h!]
\centering
\subfloat[Dataset \textrm{gene}]{\includegraphics[width=0.33\linewidth]{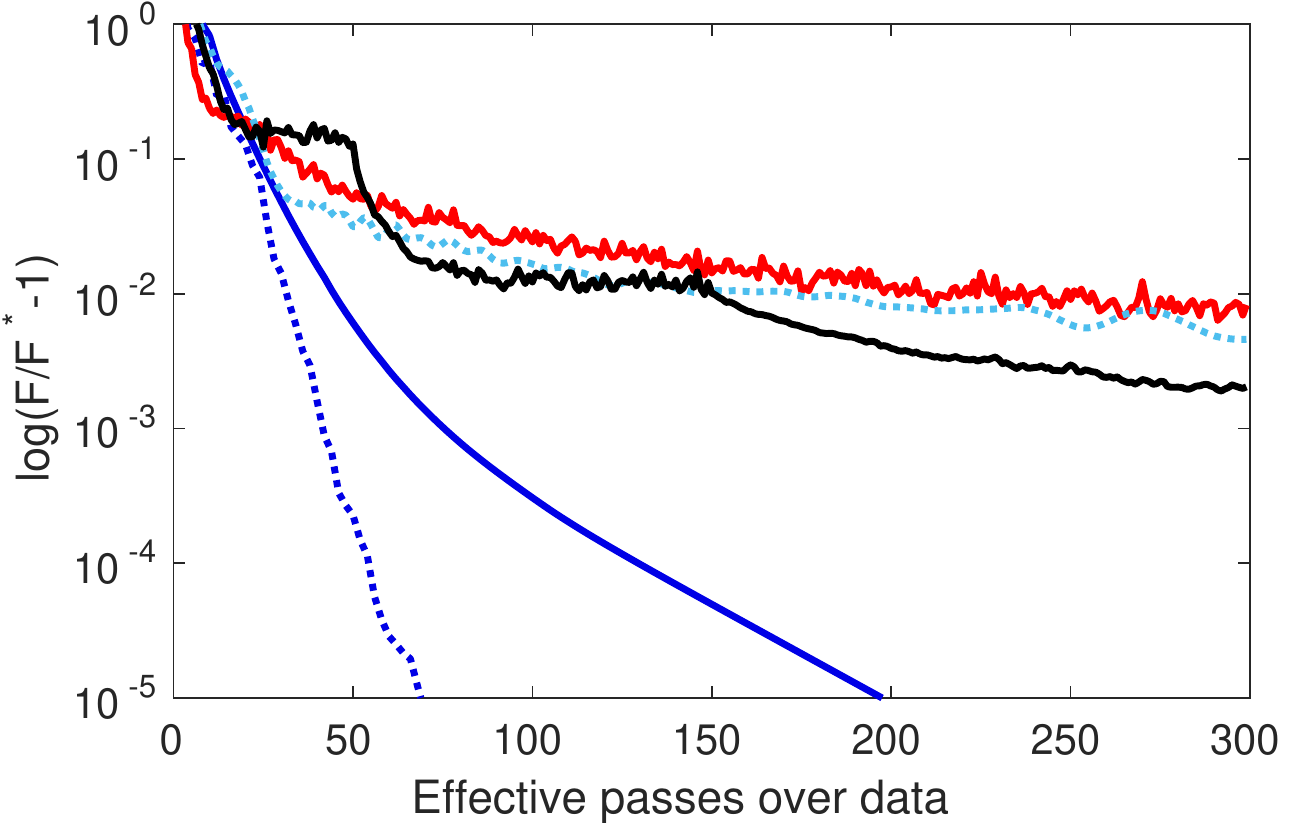}}
\subfloat[Dataset \textrm{ckn-cifar}]{\includegraphics[width=0.33\linewidth]{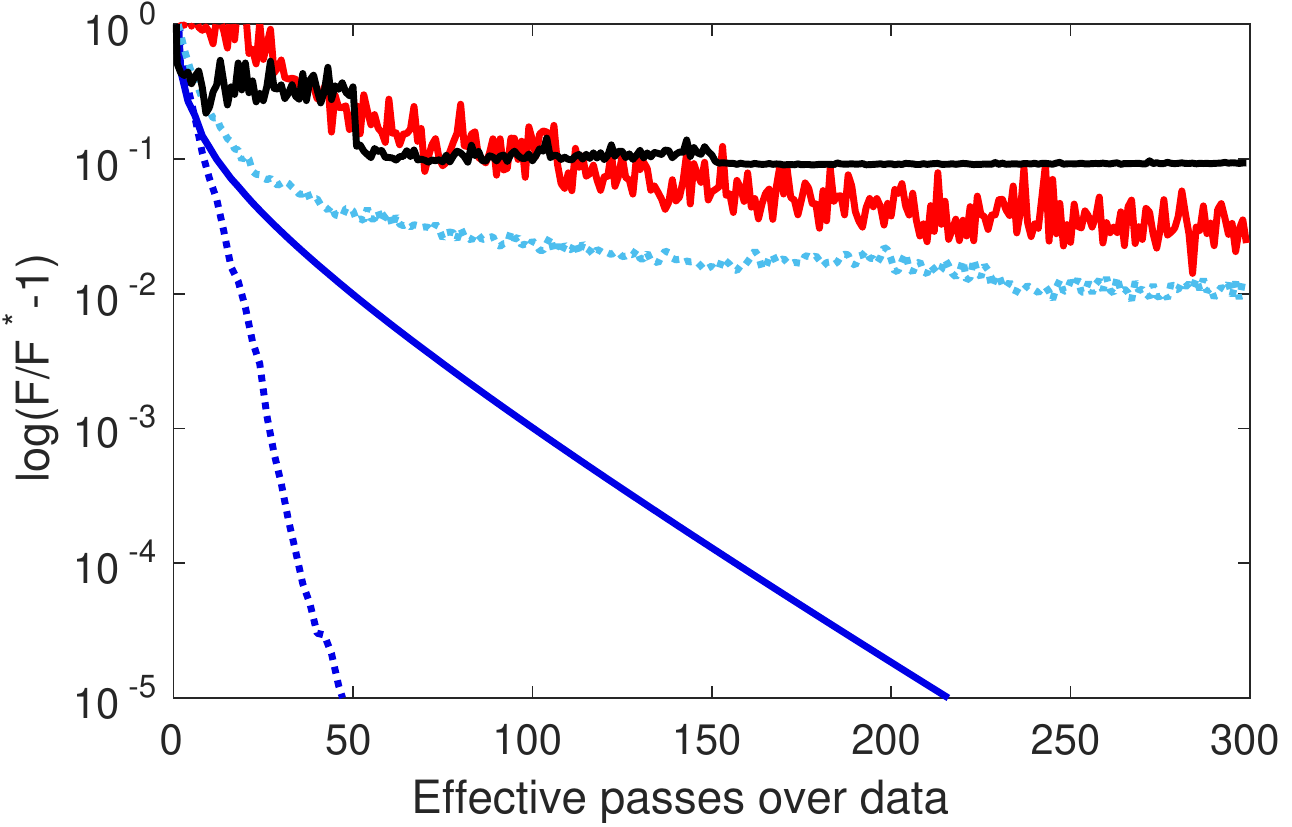}}
\subfloat[Dataset \textrm{alpha}]{\includegraphics[width=0.33\linewidth]{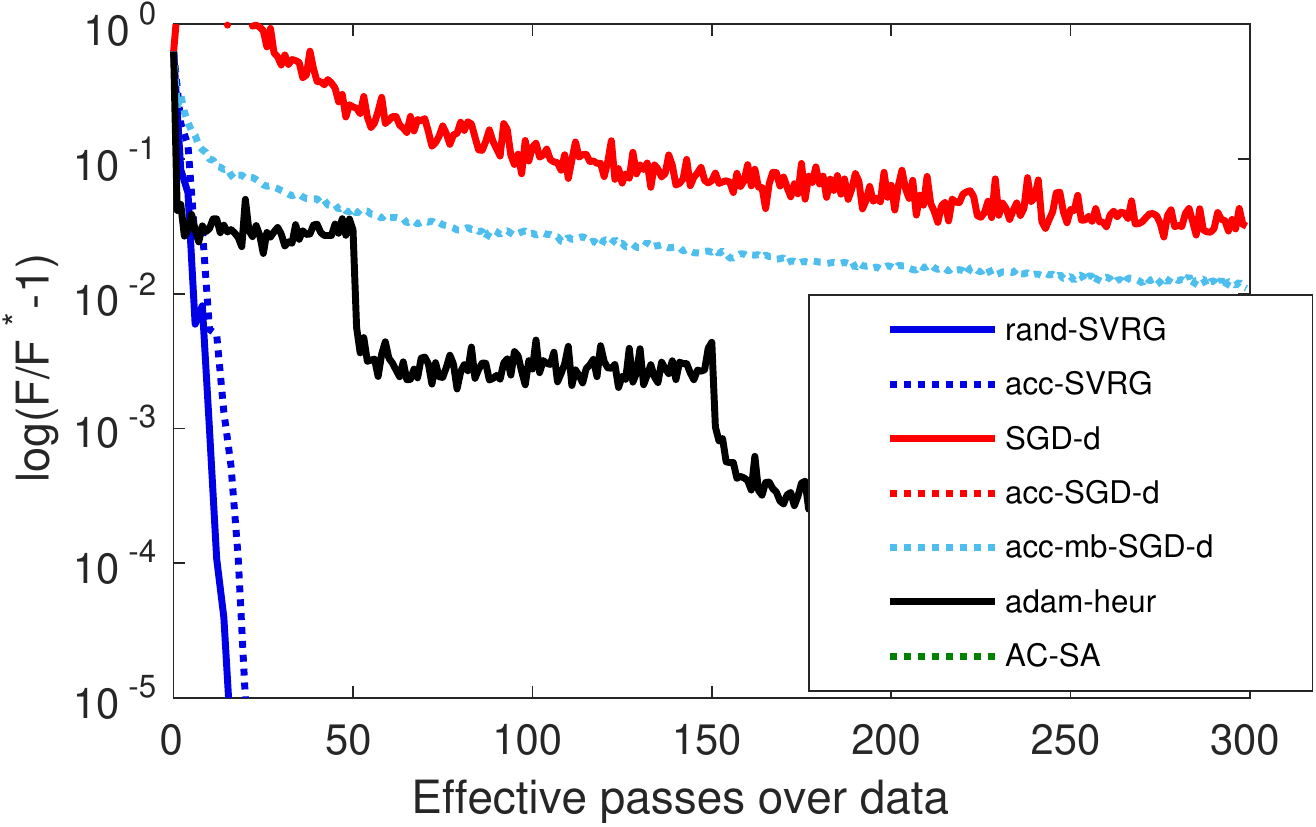}}
   \caption{Same experiment as in Figure~\ref{fig:nodropout} with squared hinge loss instead of logistic. \textrm{ACC-SA} and \textrm{acc-SGD-d} were unstable for this setting due to the large size of the noise region $\sigma^2/\sqrt{\mu L} = \sqrt{10n}\sigma^2$ and potentially large gradients of the loss function over the optimization domain.}\label{fig:nodropout3}
\end{figure}

From these experiments, we obtain the following conclusions:
\begin{itemize}
\item {Acceleration for SVRG is effective} on the datasets \textrm{gene}
and~\textrm{ckn-cifar} except on~\textrm{alpha}, where all SVRG-like methods
perform already well. This may be due to strong convexity hidden in~\textrm{alpha} leading to a regime where acceleration does not occur---that is, when the complexity is $O(n \log(1/\varepsilon))$, which is independent of the condition number.
Note that this algorithm is now implemented in the open-source Cyanure toolbox~\cite{mairal2019cyanure}\footnote{http://julien.mairal.org/cyanure/}.
\item {Acceleration is more effective when the problem is badly conditioned}. When $\lambda=1/100n$, acceleration brings several orders of magnitude improvement in complexity.
\item {Accelerated SGD is unstable} with the squared hinge loss. During the initial phase with constant step size $1/L$, the expected primal gap is in a region of radius $O(\sigma^2/\sqrt{\mu L}) \approx \sqrt{n} \sigma^2$, which is potentially huge, causing large gradients and instabilities.
\item {Accelerated minibatch SGD} performs best among the SGD methods and is competitive with SVRG in the low precision regime. The performance of Adam on these datasets is inconsistent; it performs best among SGD methods on \textrm{alpha}, but is significantly worse on \textrm{ckn-cifar}. Note also that \textrm{AC-SA} performs in general similarly to \textrm{acc-SGD-d}.  

\end{itemize}

\subsection{Evaluation of Algorithms with Perturbations}
We now consider the same setting as in the previous section, but we add DropOut
perturbations with rate $\delta$ in $\{0.01,0.1\}$.  As predicted by theory,
all approaches with constant step size do not converge. Therefore, we only
report the results for decreasing step sizes
in Figures~\ref{fig:dropout}, \ref{fig:dropout2}, and~\ref{fig:dropout3}.
We evaluate the loss function every $5$ data passes and we estimate the expectation~(\ref{eq:expectation}) by drawing $5$ random
perturbations per data point, resulting in $5n$ samples. The optimal value $F^\star$ is estimated by letting the methods
run for $1000$ epochs and selecting the best point found as a proxy of $F^\star$.

\begin{figure}[hbtp]
\centering
\subfloat[Dataset \textrm{gene}, $\delta=0.01$]{\includegraphics[width=0.33\linewidth]{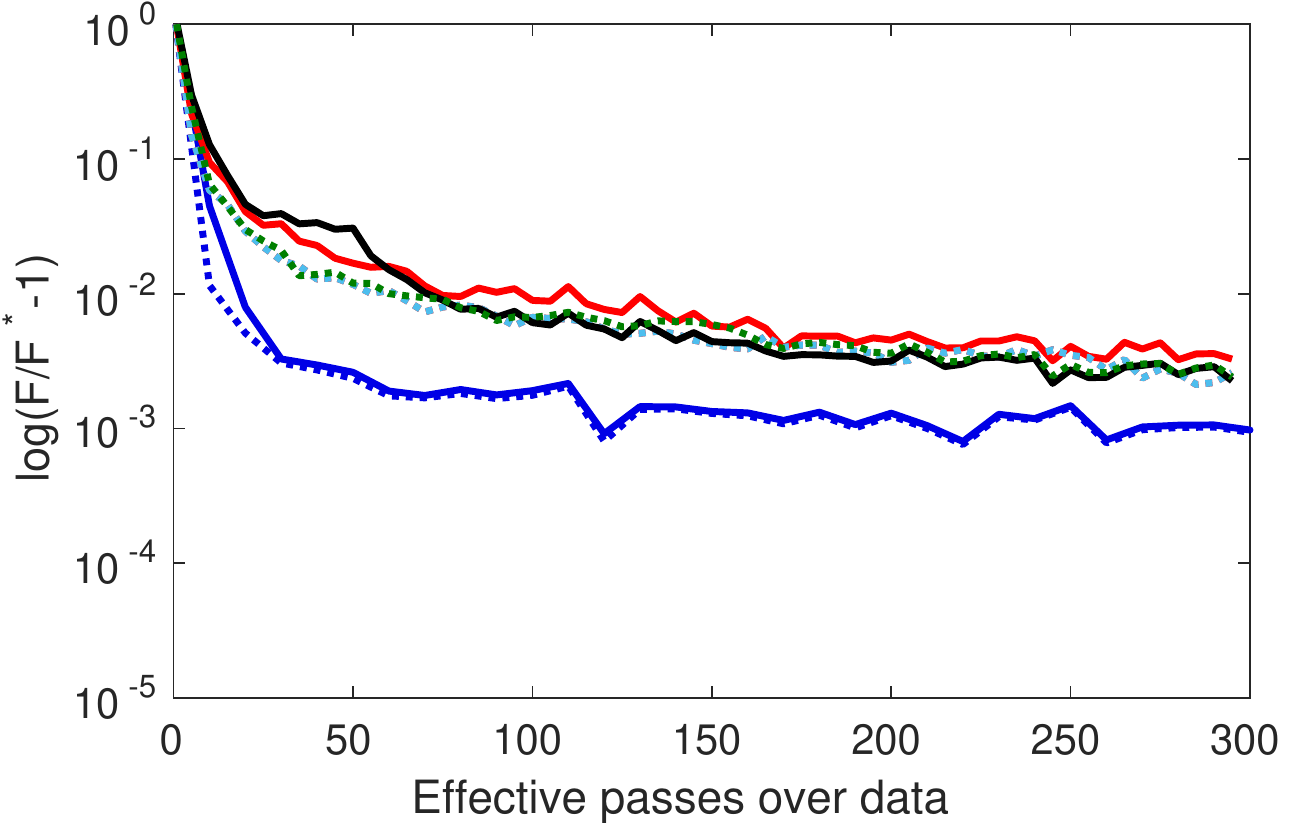}}
\subfloat[Dataset \textrm{ckn-cifar}, $\delta=0.01$]{\includegraphics[width=0.33\linewidth]{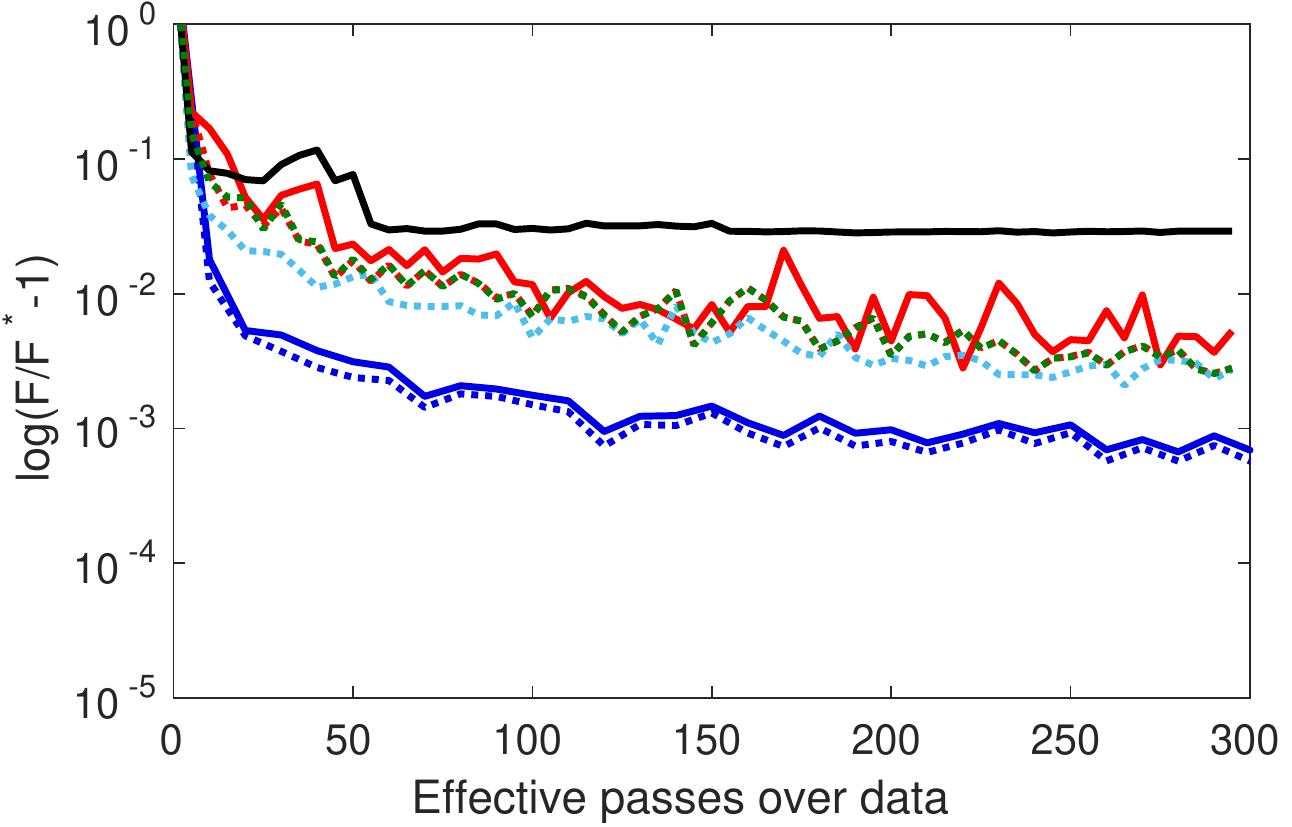}}
\subfloat[Dataset \textrm{alpha}, $\delta=0.01$]{\includegraphics[width=0.33\linewidth]{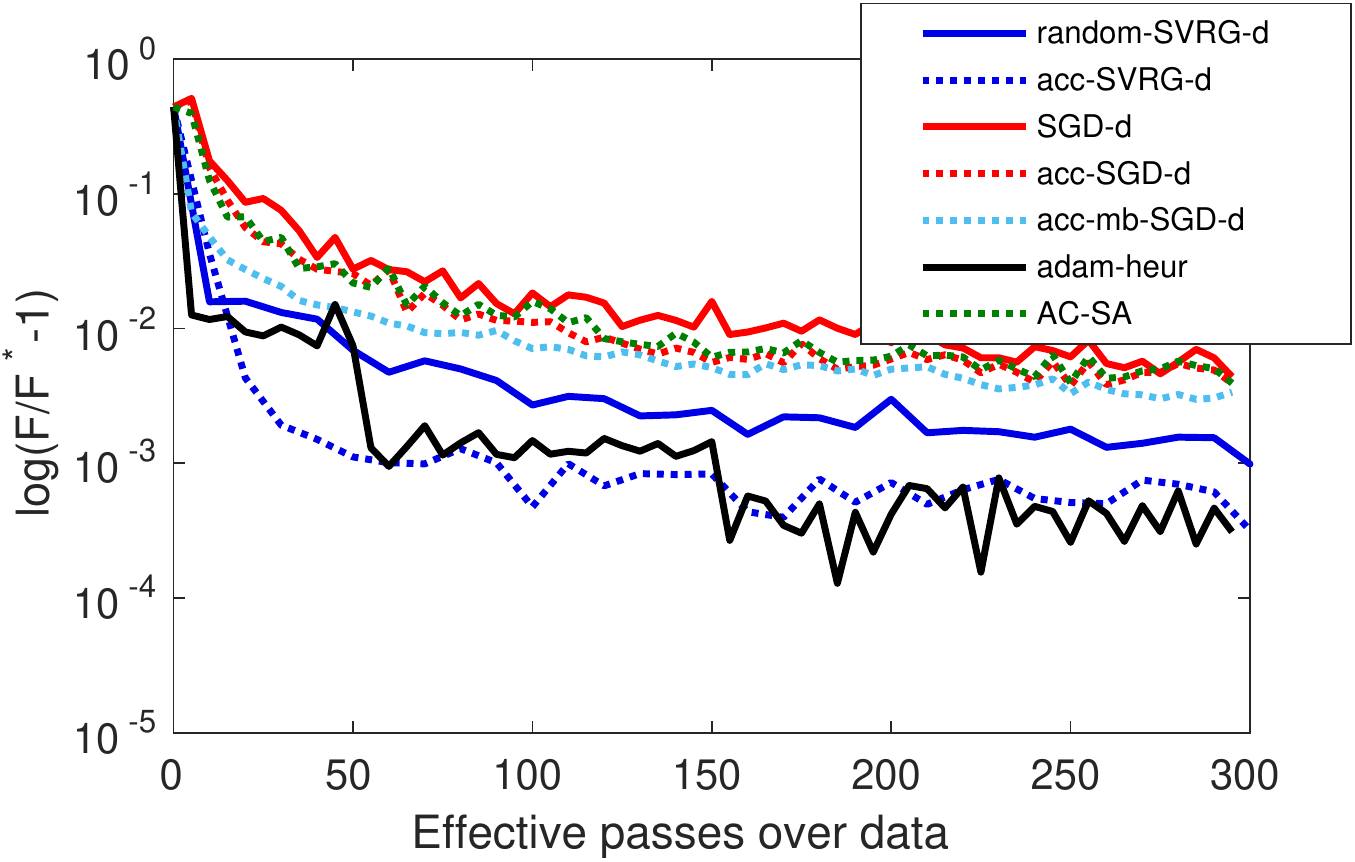}} \\
\subfloat[Dataset \textrm{gene}, $\delta=0.1$]{\includegraphics[width=0.33\linewidth]{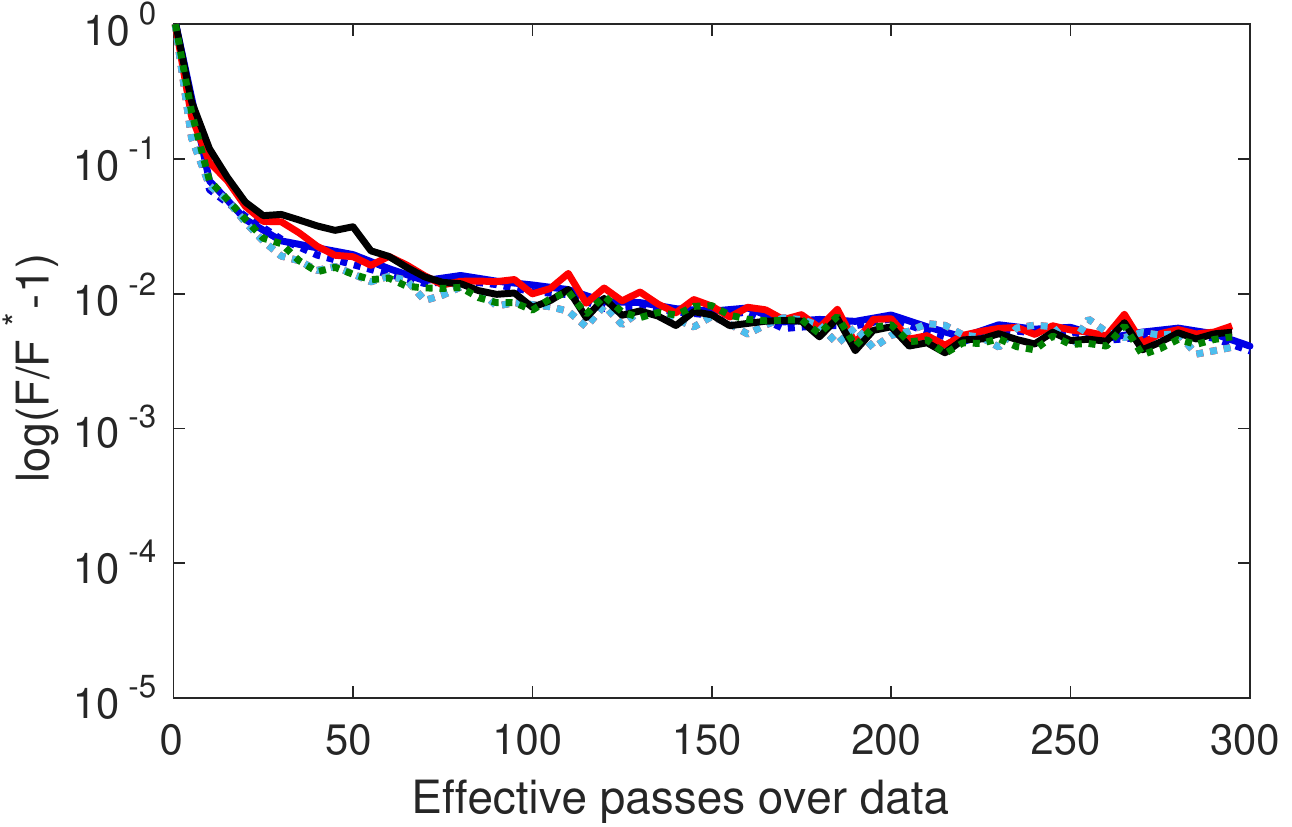}}
\subfloat[Dataset \textrm{ckn-cifar}, $\delta=0.1$]{\includegraphics[width=0.33\linewidth]{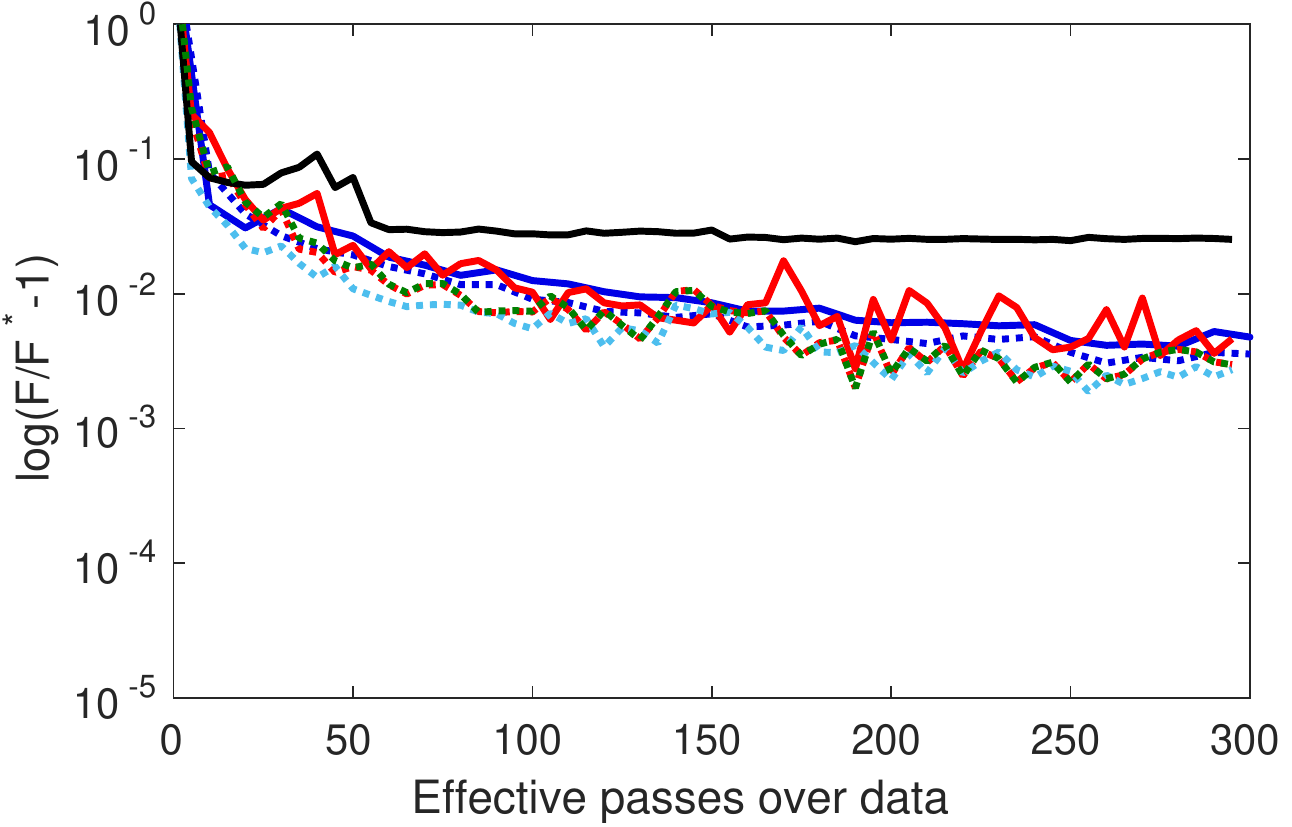}}
\subfloat[Dataset \textrm{alpha}, $\delta=0.1$]{\includegraphics[width=0.33\linewidth]{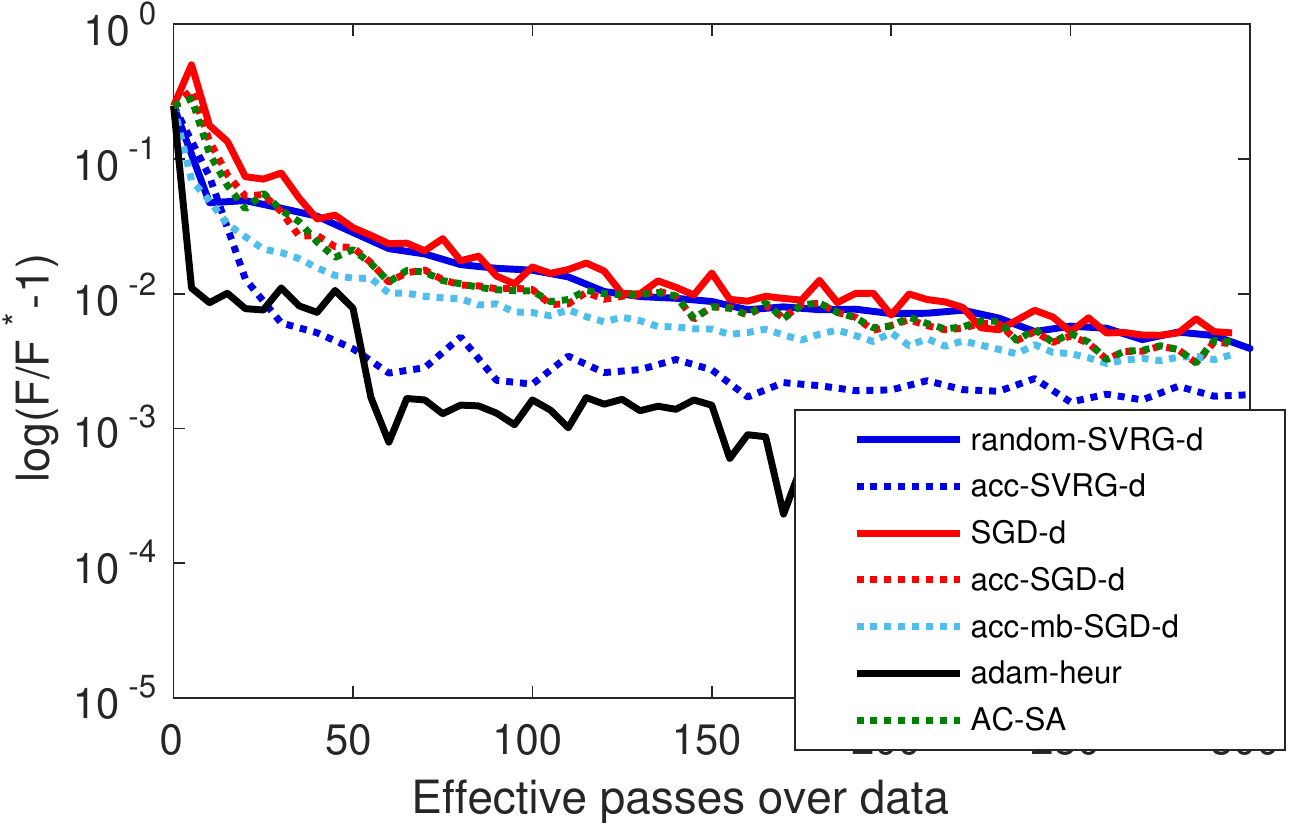}}\\
\caption{Optimization curves with DropOut rate $\delta$ when using the logistic loss and $\lambda=1/10n$. We plot the value of the objective function on a logarithmic scale as a function of the effective passes over the data. Best seen in color by zooming on a computer screen.}\label{fig:dropout}
\end{figure}

\begin{figure}[hbtp]
\centering
\subfloat[Dataset \textrm{gene}, $\delta=0.01$]{\includegraphics[width=0.33\linewidth]{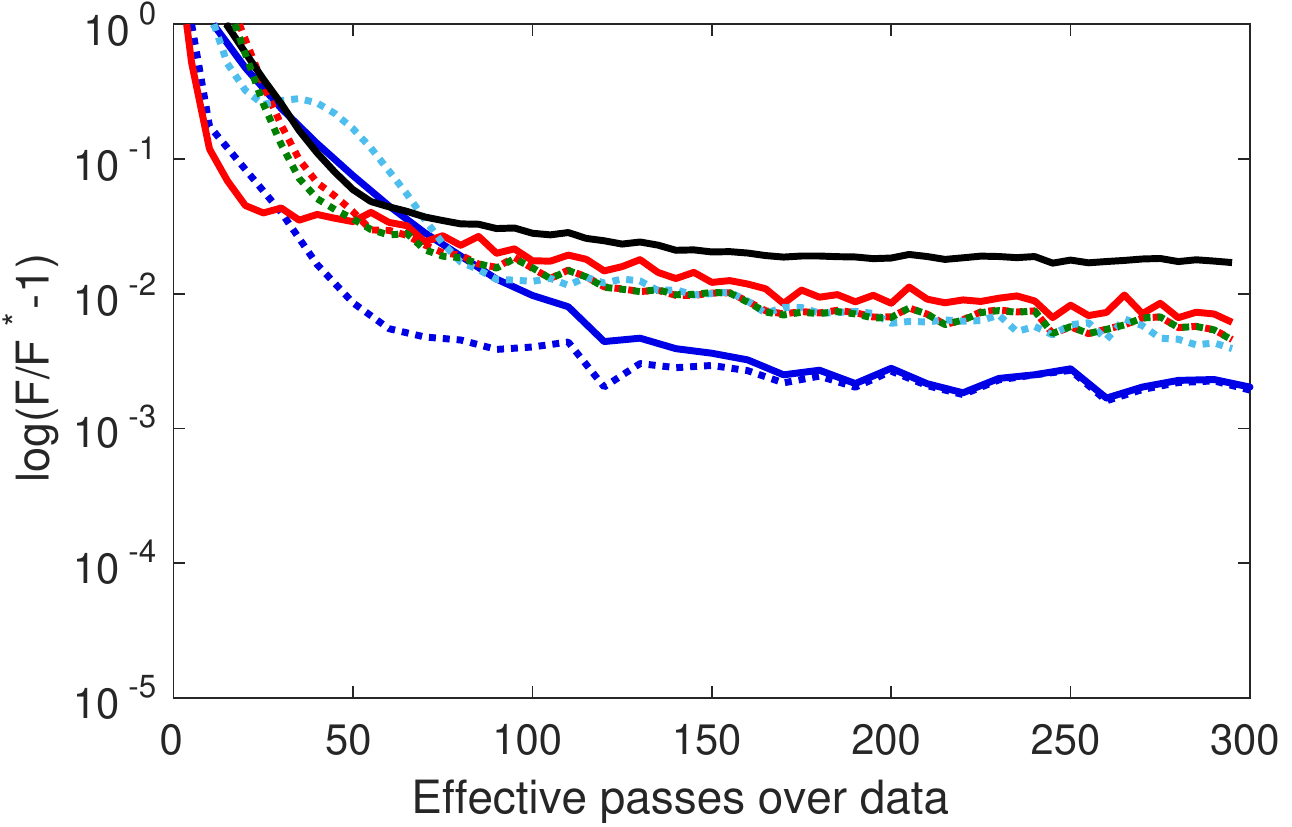}}
\subfloat[Dataset \textrm{ckn-cifar}, $\delta=0.01$]{\includegraphics[width=0.33\linewidth]{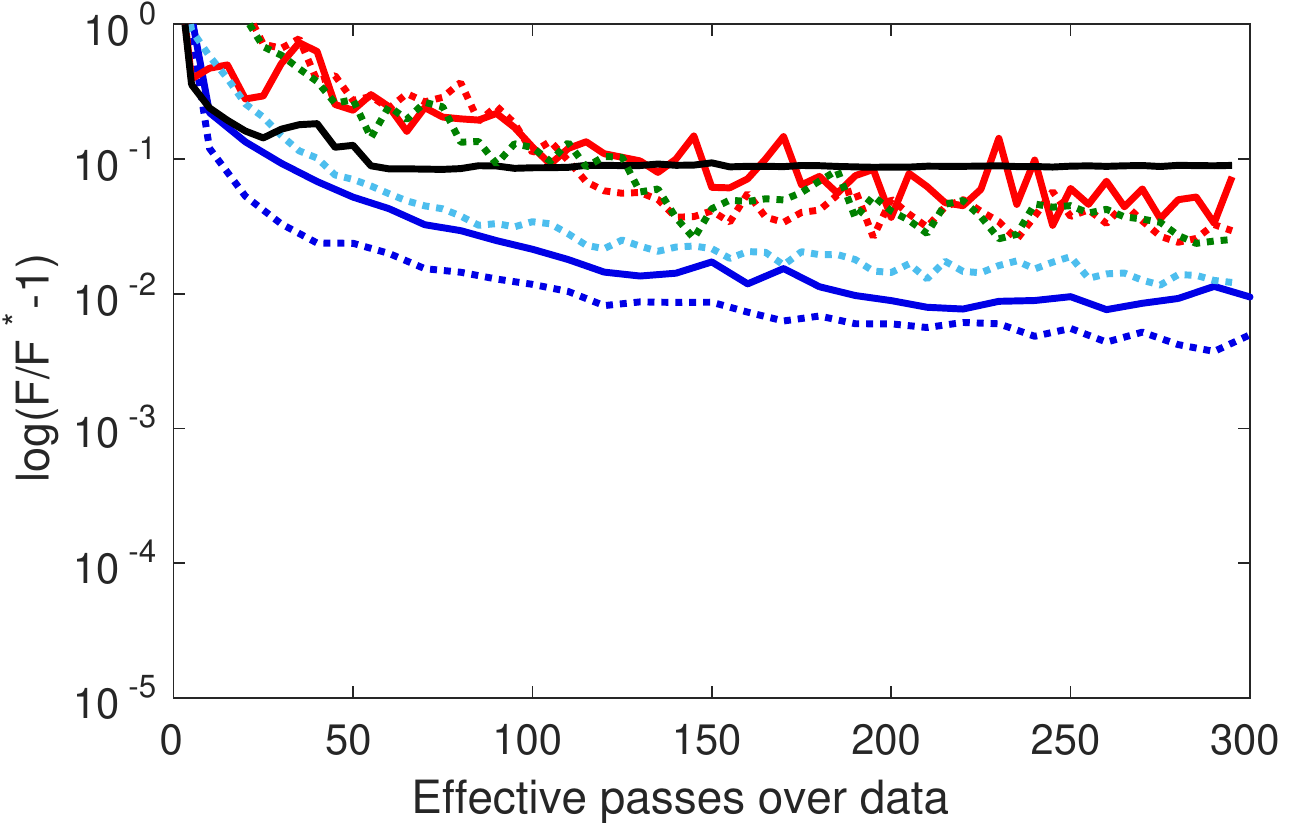}}
\subfloat[Dataset \textrm{alpha}, $\delta=0.01$]{\includegraphics[width=0.33\linewidth]{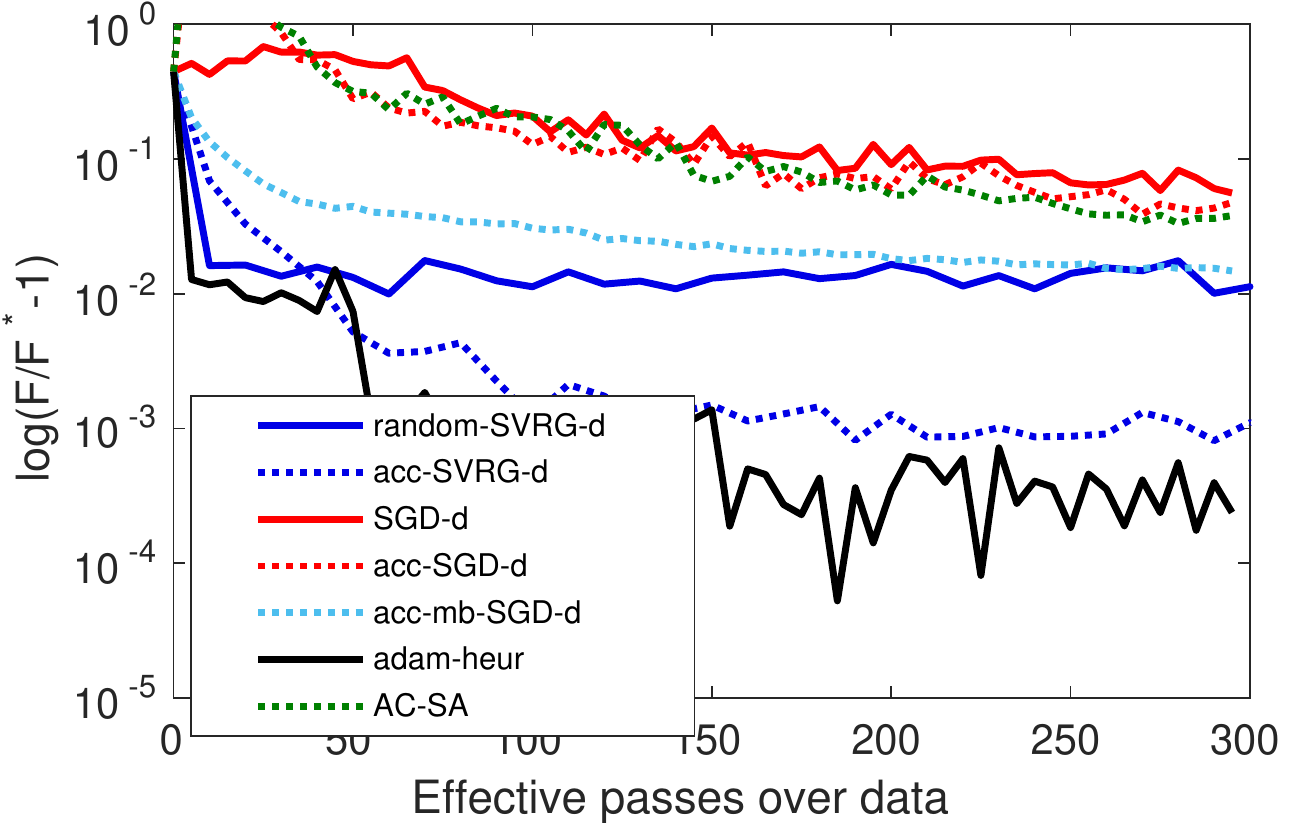}} \\
\subfloat[Dataset \textrm{gene}, $\delta=0.1$]{\includegraphics[width=0.33\linewidth]{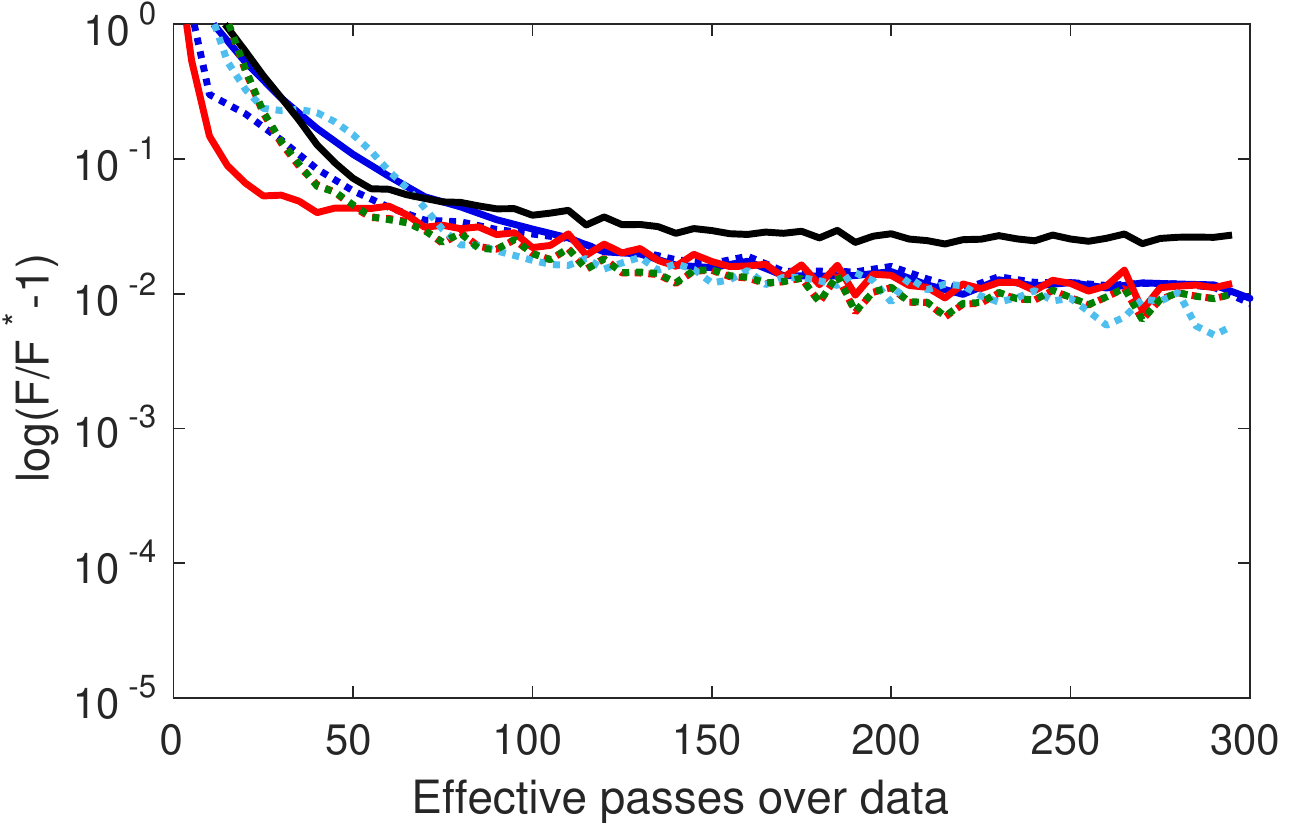}}
\subfloat[Dataset \textrm{ckn-cifar}, $\delta=0.1$]{\includegraphics[width=0.33\linewidth]{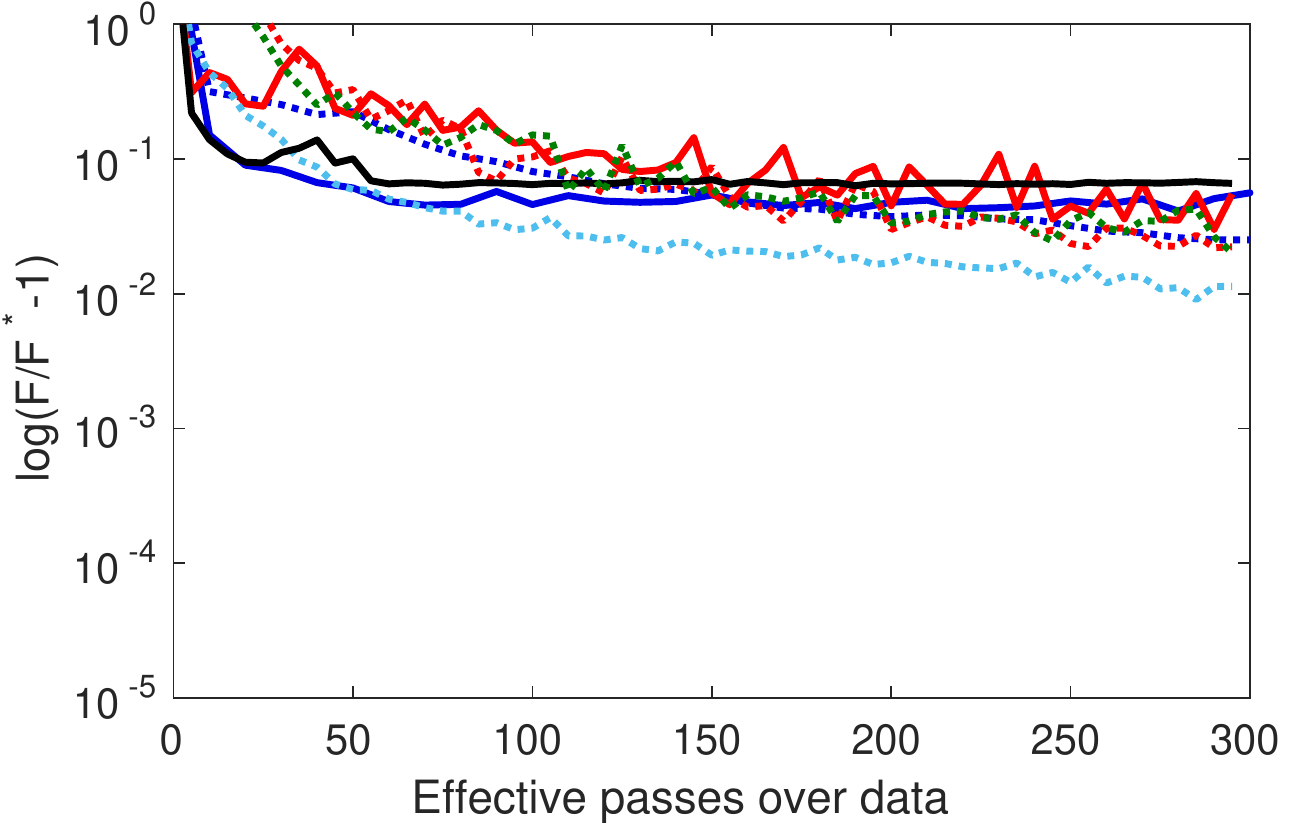}}
\subfloat[Dataset \textrm{alpha}, $\delta=0.1$]{\includegraphics[width=0.33\linewidth]{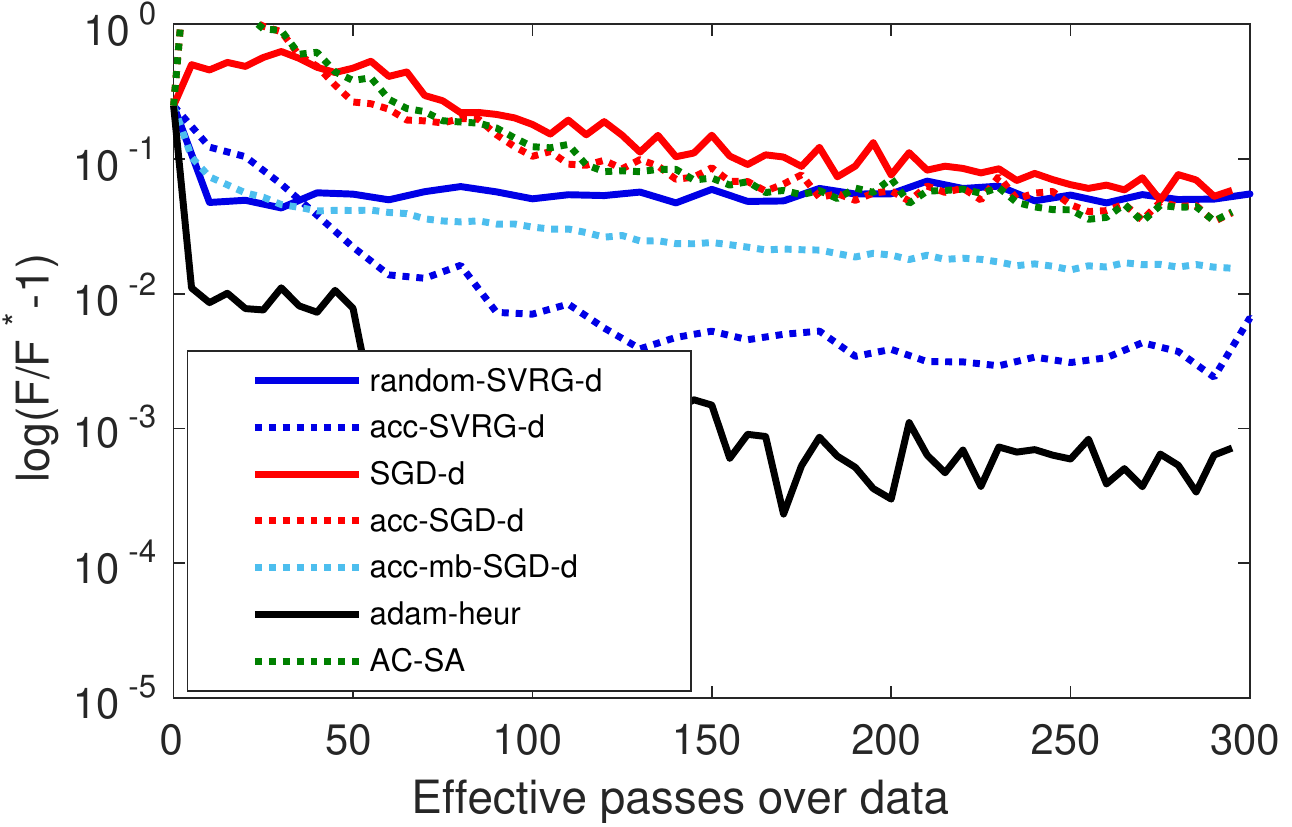}}\\
\caption{Same setting as in Figure~\ref{fig:dropout} but with $\lambda=1/100n$.}\label{fig:dropout2}
\end{figure}

\begin{figure}[hbtp]
\centering
\subfloat[Dataset \textrm{gene}, $\delta=0.01$]{\includegraphics[width=0.33\linewidth]{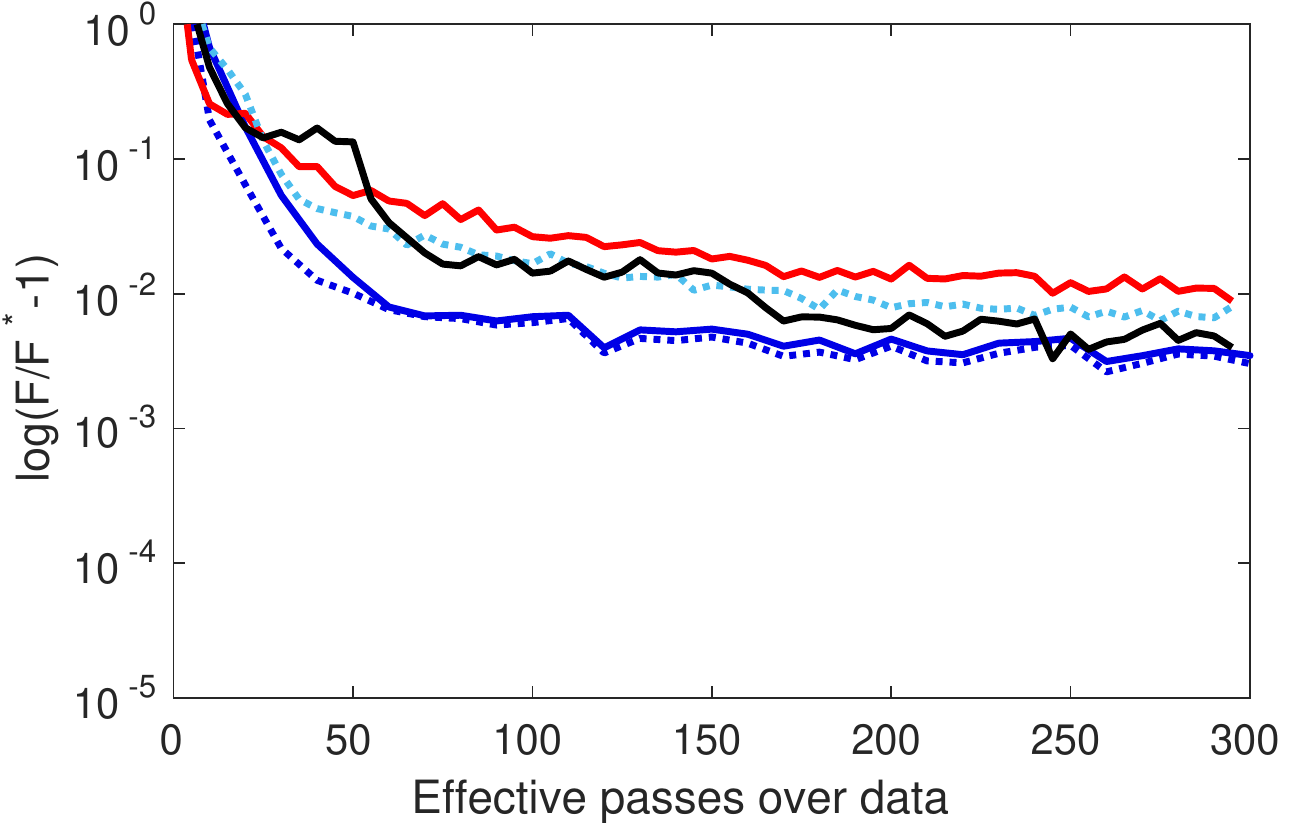}}
\subfloat[Dataset \textrm{ckn-cifar}, $\delta=0.01$]{\includegraphics[width=0.33\linewidth]{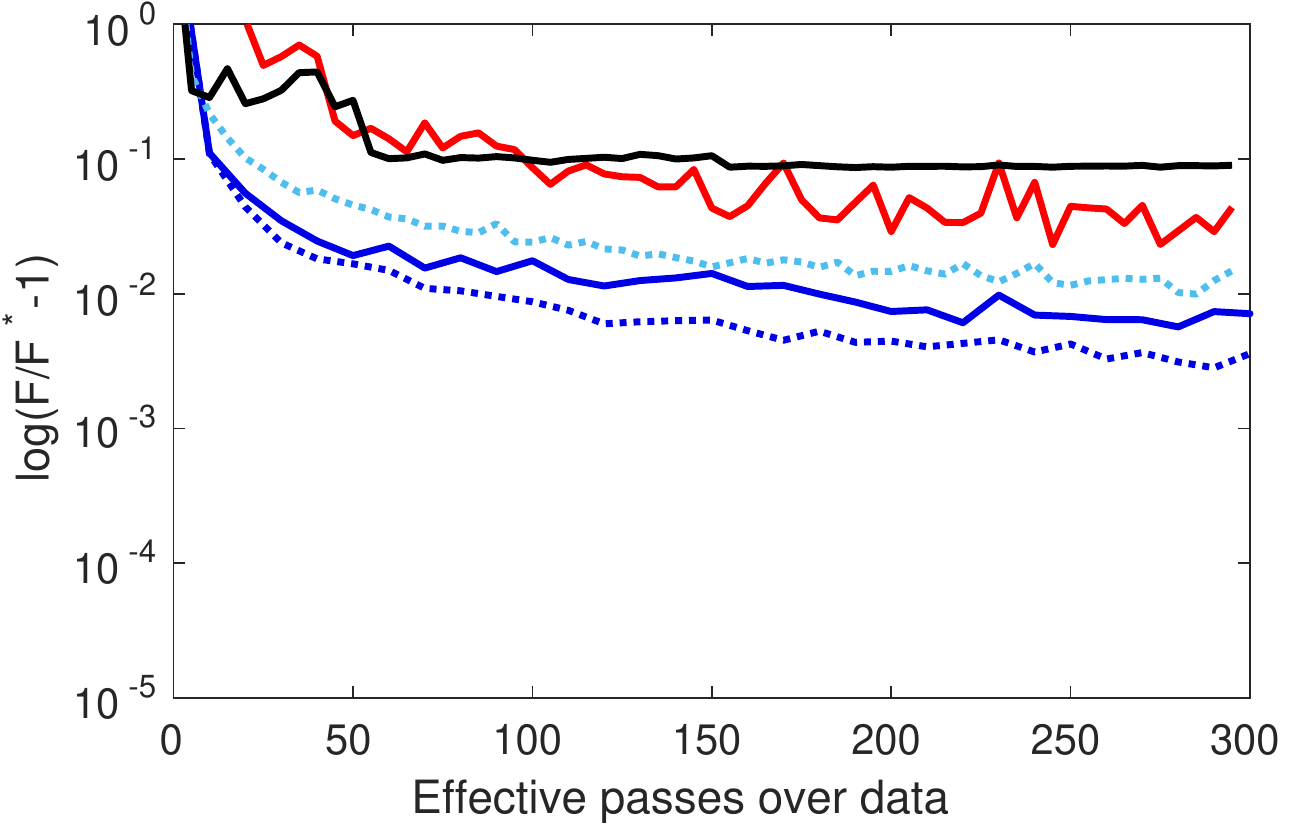}}
\subfloat[Dataset \textrm{alpha}, $\delta=0.01$]{\includegraphics[width=0.33\linewidth]{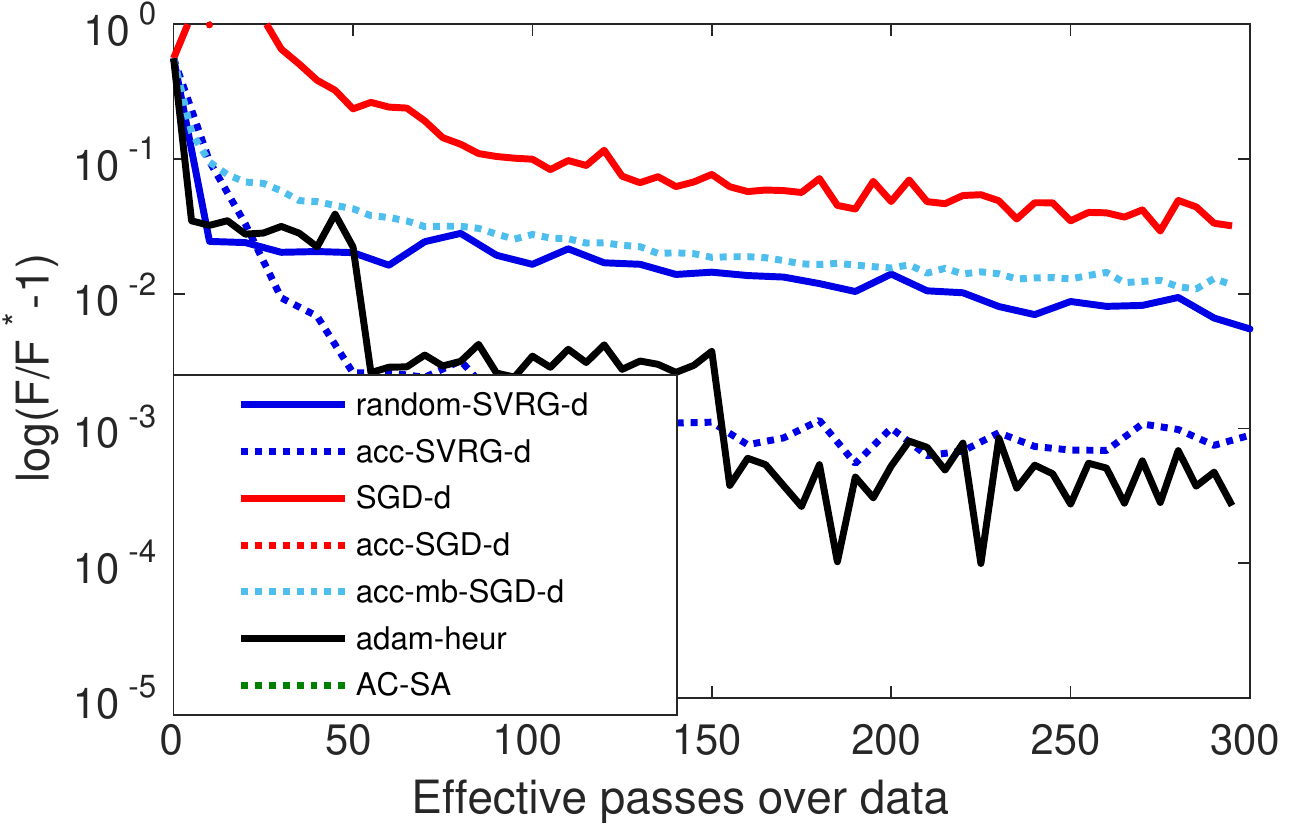}} \\
\subfloat[Dataset \textrm{gene}, $\delta=0.1$]{\includegraphics[width=0.33\linewidth]{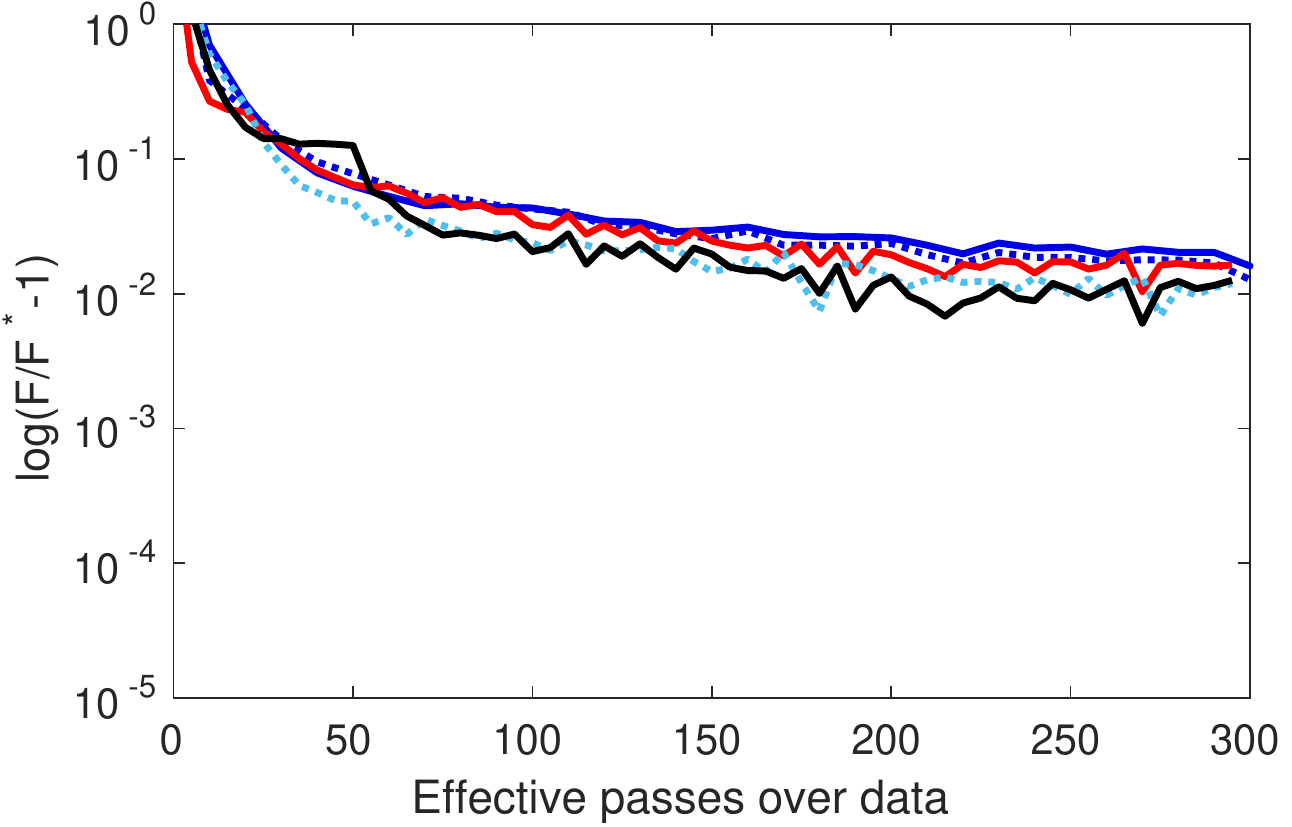}}
\subfloat[Dataset \textrm{ckn-cifar}, $\delta=0.1$]{\includegraphics[width=0.33\linewidth]{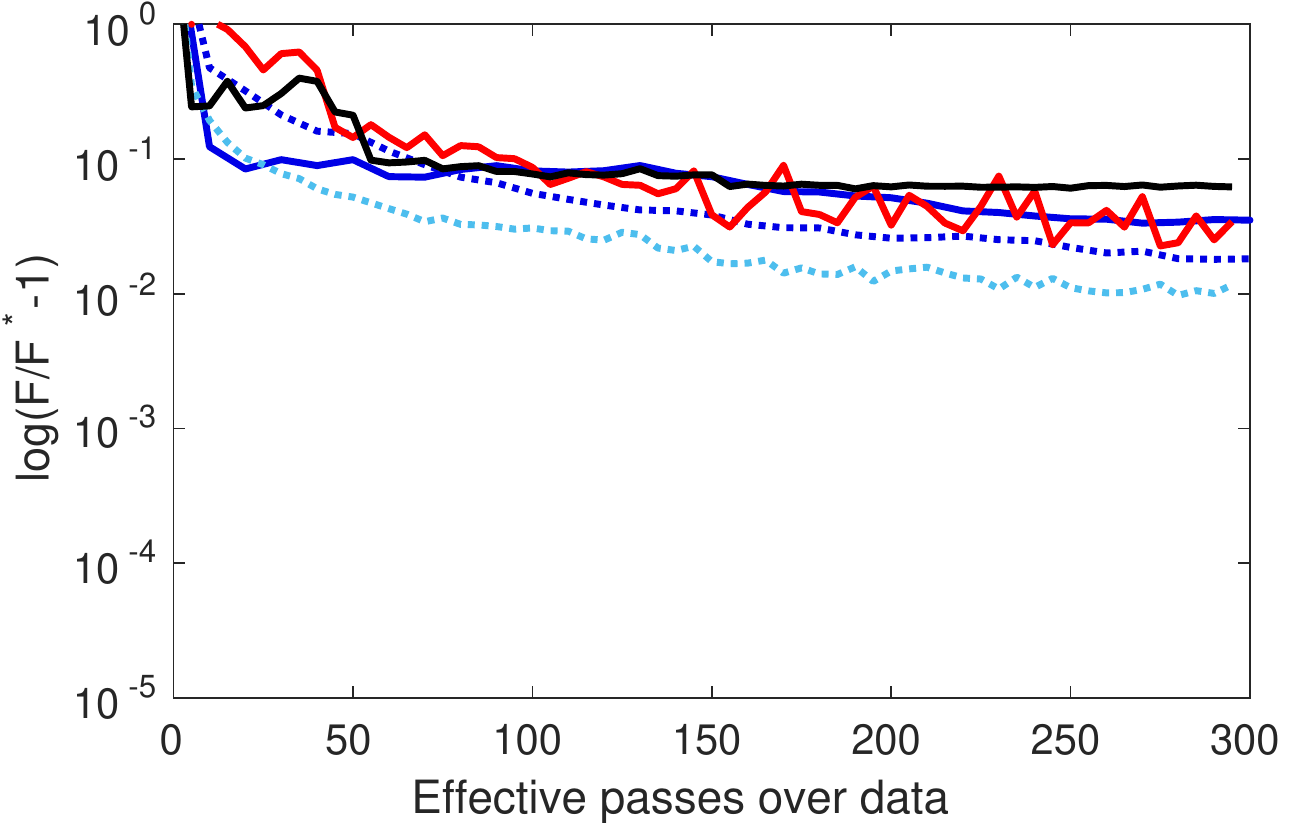}}
\subfloat[Dataset \textrm{alpha}, $\delta=0.1$]{\includegraphics[width=0.33\linewidth]{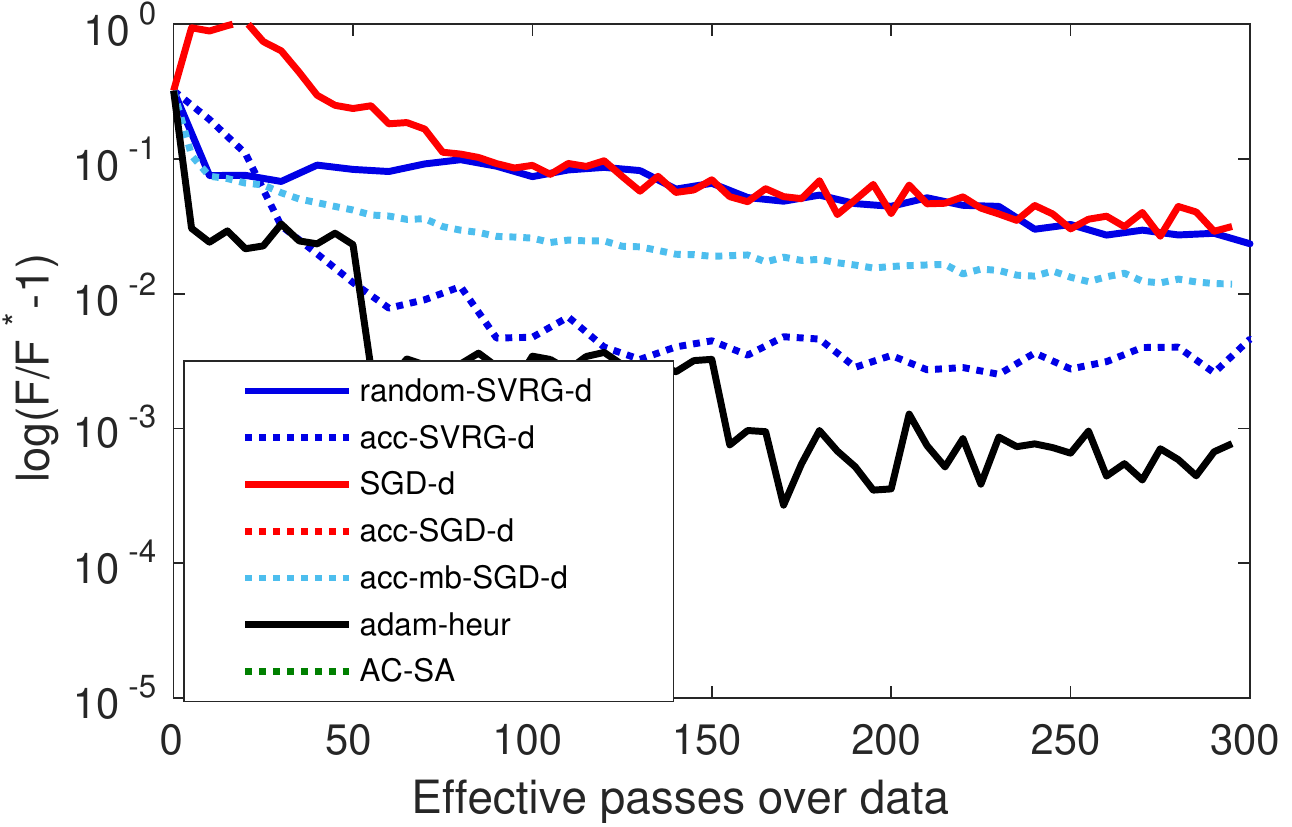}}\\
\caption{Same setting as in Figure~\ref{fig:dropout} but with the squared hinge loss.}\label{fig:dropout3}
\end{figure}

The conclusions of these experiments are the following:
\begin{itemize}
   \item \textrm{accelerated minibatch SGD} performs the best among SGD approaches in general except on \textrm{alpha} where \textrm{Adam} performs best.
\item \textrm{accelerated SVRG} performs better than \textrm{SVRG} in general, or they achieve the same performance. As in the deterministic case, the gains are typically more important in ill-conditioned cases.
\item \textrm{accelerated SVRG} performs better than SGD approaches in the low perturbation regime $\delta=0.01$ and only on the \textrm{alpha} dataset when $\delta=0.1$. Otherwise, the methods perform similarly.
\item not reported on these figures, high perturbation regimes, \eg, $\delta=0.3$ make variance reduction less useful since the noise due to data sampling becomes potentially of the same order as $\tilde{\sigma}^2$; Yet, benefits are still seen on the \textrm{alpha} dataset, whereas SGD approaches perform slightly better than SVRG approaches on \textrm{ckn-cifar} and~\textrm{gene}.
\end{itemize}

\section{Discussion}\label{sec:ccl}
In this paper, we have studied simple stochastic gradient-based rules with
or without variance reduction, and presented an accelerated algorithm dedicated
to finite-sums minimization under the presence of stochastic perturbations. The
approach we propose achieves the classical optimal worst-case
complexities for finite-sum optimization when there is no perturbation~\citep{arjevani2016dimension},
and exhibits an optimal dependency in the noise variance $\tilde{\sigma}^2$ for
convex and strongly convex problems.

Our work is based on stochastic variants of estimate sequences introduced
by~\citet{nesterov1983,nesterov}. The framework leads naturally to many algorithms
with relatively generic proofs of convergence, where convergence is proven
at the same time as the algorithm's design. With iterate averaging
techniques inspired by~\citet{ghadimi2013optimal}, we show that a large class
of variance-reduction stochastic optimization methods can be made robust
to stochastic perturbations. Estimate sequences also naturally lead to
several accelerated algorithms, some of them we did not present in this paper.
For instance, it is possible to show that replacing in~(\ref{eq:lk}) the lower
bound $\psi(x_k)+\psi'(x_k)^\top(x-x_k)$ by $\psi(x)$ itself---in a similar way
as we proceeded to obtain iteration~(\ref{eq:opt2}) from
iteration~(\ref{eq:opt1})---also leads to an accelerated algorithm with
similar guarantees as~(\ref{eq:opt3}).

Possibilities offered by estimate sequences are large, but our framework also
admits a few limitations, paving the way for future work.
In particular, our results are currently limited to Euclidean
metrics---meaning that our convergence rates typically depend on quantities
involving the Euclidean norm (e.g., strong convexity or $L$-smooth inequalities),
and one may expect extensions of our work to other metrics such as Bregman distances.
Estimate sequences admit indeed known extensions to such metrics, and can
also deal with higher-order smoothness assumptions than Lipschitz continuity of
the gradient~\citep{baes2009estimate}---\eg, cubic regularization~\citep{nesterov2006cubic}. We leave such directions for the future.

Another limitation we encountered was the inability to propose robust
accelerated variants of SAGA, MISO, or SDCA based on our stochastic estimate
sequences framework.  To address this problem, 
after the first version of this manuscript was made publicly available, 
we investigated in~\citep{kulunchakov2019generic} a significantly different
approach based on the Catalyst method~\citep{catalyst_jmlr}, allowing us to
accelerate stochastic first-order methods in a generic fashion, at the price of
a logarithmic factor in the optimal complexity---in other words, we were able
to obtain for SAGA, MISO, and SDCA a complexity close to~(\ref{eq:acc_compl}) up to a logarithmic factor in the condition number~$L_Q/\mu$.
We believe that estimate sequences may be useful to obtain the optimal complexity without this logarithmic term, but the construction
would be non-trivial and would rely on a different lower bound than the one we used in Section~\ref{sec:acc}. 

Finally, we note that the optimal complexities we have obtained with diminishing
step-sizes for strongly convex objectives can also be achieved by using instead
a constant step-size combined with mini-batch and restart strategies. As a constant
step-size yields a linear rate of convergence to a noise-dominated
region of radius $O(\tilde{\sigma}^2)$, we can indeed use the restart procedure
described in Section 3 of~\cite{kulunchakov2019generic}, which would yield 
the optimal complexity as well.

% Acknowledgements should go at the end, before appendices and references

\acks{
 This work was supported by the ERC grant SOLARIS (number 714381) and by ANR 3IA MIAI@Grenoble Alpes, (ANR-19-P3IA-0003).
The authors would like to thank Anatoli Juditsky and the anonymous reviewers for interesting discussions that greatly improved the quality of this manuscript.}

% Manual newpage inserted to improve layout of sample file - not
% needed in general before appendices/bibliography.

%\newpage

\appendix
\section{Useful Mathematical Results}
\subsection{Simple Results about Convexity and Smoothness}
The next three lemmas are classical upper and lower bounds for smooth or strongly convex functions~\cite{nesterov}.
\begin{lemma}[\bfseries Quadratic upper bound for $L$-smooth functions]\label{lemma:upper}
Let $f: \Real^p \to \Real$ be $L$-smooth. Then, for all $x, x'$ in $\Real^p$,
\begin{equation*}
|f(x') - f(x) - \nabla f(x)^\top (x'-x)| \leq \frac{L}{2}\|x-x'\|_2^2.\label{eq:lipschitz}
\end{equation*}
\end{lemma}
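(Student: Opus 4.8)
The plan is to reduce the multivariate statement to a one-dimensional integration along the segment joining $x$ and $x'$. Concretely, I would fix $x,x'$ in $\Real^p$, set $\phi(t) = f\bigl(x + t(x'-x)\bigr)$ for $t \in [0,1]$, and note that $\phi$ is differentiable with $\phi'(t) = \nabla f\bigl(x+t(x'-x)\bigr)^\top (x'-x)$ since $f$ is differentiable. By the fundamental theorem of calculus,
\begin{displaymath}
f(x') - f(x) = \phi(1) - \phi(0) = \int_0^1 \nabla f\bigl(x+t(x'-x)\bigr)^\top (x'-x)\, dt,
\end{displaymath}
so that, subtracting $\nabla f(x)^\top(x'-x) = \int_0^1 \nabla f(x)^\top (x'-x)\,dt$,
\begin{displaymath}
f(x') - f(x) - \nabla f(x)^\top(x'-x) = \int_0^1 \bigl(\nabla f(x+t(x'-x)) - \nabla f(x)\bigr)^\top (x'-x)\, dt.
\end{displaymath}

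Next I would bound the absolute value of the right-hand side. Taking absolute values inside the integral and applying the Cauchy--Schwarz inequality to each integrand gives the upper bound
\begin{displaymath}
\int_0^1 \bigl\|\nabla f(x+t(x'-x)) - \nabla f(x)\bigr\|\,\|x'-x\|\, dt.
\end{displaymath}
Now the $L$-Lipschitz continuity of $\nabla f$ yields $\|\nabla f(x+t(x'-x)) - \nabla f(x)\| \leq L\,\|t(x'-x)\| = Lt\,\|x'-x\|$, hence the integrand is at most $Lt\,\|x'-x\|^2$, and $\int_0^1 Lt\,dt = L/2$. This establishes $|f(x')-f(x)-\nabla f(x)^\top(x'-x)| \le \frac{L}{2}\|x-x'\|^2$, which is exactly the claim. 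Note that bounding the absolute value of the integrand handles both the upper and lower bound simultaneously, so no separate argument is needed for the two sides of the inequality.

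There is essentially no hard step here: the only point requiring a little care is justifying differentiating under/along the segment (which is immediate since $f$ is differentiable everywhere and $\phi$ is a composition with an affine map) and the interchange of absolute value with the integral, which is the standard estimate $\bigl|\int h\bigr| \le \int |h|$. Everything else is routine calculus plus one application of Cauchy--Schwarz and one application of the Lipschitz hypothesis.
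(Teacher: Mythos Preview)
Your proof is correct and is exactly the standard argument for this classical result. The paper does not actually give its own proof of this lemma; it simply states it as a well-known fact and cites Nesterov's textbook, so there is nothing further to compare.
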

\begin{lemma}[\bfseries Lower bound for strongly convex
functions]\label{lemma:lower}
Let $f: \Real^p \to \Real$ be a $\mu$-strongly convex function. Let $z$ be in $\partial f(x)$ for some $x$ in $\Real^p$. Then, the following inequality holds for all $x'$ in $\Real^p$:
\begin{displaymath}
f(x') \geq f(x) + z^\top(x'-x) + \frac{\mu}{2}\|x-x'\|_2^2.
\end{displaymath}
\end{lemma}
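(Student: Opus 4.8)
The plan is to reduce to the purely convex case by the standard quadratic shift. First I would introduce the function $g:\Real^p\to\Real$ defined by $g(y)=f(y)-\frac{\mu}{2}\|y\|_2^2$; by the very definition of $\mu$-strong convexity of $f$ with respect to the Euclidean norm, $g$ is convex. Since $y\mapsto\frac{\mu}{2}\|y\|_2^2$ is differentiable with gradient $\mu y$, the sum rule for subdifferentials gives $\partial f(x)=\partial g(x)+\mu x$, so that $z\in\partial f(x)$ implies $w:=z-\mu x\in\partial g(x)$.

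Next I would apply the first-order characterization of convexity to $g$: for any $w\in\partial g(x)$ and any $x'\in\Real^p$, one has $g(x')\ge g(x)+w^\top(x'-x)$. Substituting $g(y)=f(y)-\frac{\mu}{2}\|y\|_2^2$ and $w=z-\mu x$ yields
\[
f(x')-\frac{\mu}{2}\|x'\|_2^2\ \ge\ f(x)-\frac{\mu}{2}\|x\|_2^2+(z-\mu x)^\top(x'-x).
\]

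Finally I would rearrange and collect the quadratic terms, using the algebraic identity
\[
\frac{\mu}{2}\|x'\|_2^2-\frac{\mu}{2}\|x\|_2^2-\mu x^\top(x'-x)=\frac{\mu}{2}\|x-x'\|_2^2,
\]
which gives exactly $f(x')\ge f(x)+z^\top(x'-x)+\frac{\mu}{2}\|x-x'\|_2^2$, as claimed. The only step requiring some care is the subdifferential identity $\partial f(x)=\partial g(x)+\mu x$, which is the standard additivity rule for the subdifferential of a sum of a convex function and an everywhere-finite smooth function; alternatively, one could avoid subdifferential calculus entirely by writing the strong-convexity inequality along the segment joining $x$ and $x'$, dividing by the step size and letting it tend to zero, but the shift argument is cleaner and self-contained. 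I do not expect any genuine obstacle here, as the result is classical.
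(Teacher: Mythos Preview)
Your argument is correct and entirely standard: the quadratic shift $g(y)=f(y)-\tfrac{\mu}{2}\|y\|_2^2$ is convex by definition of $\mu$-strong convexity, the subdifferential sum rule applies because the quadratic is finite and differentiable everywhere, and the algebraic identity you use to recombine the terms is valid. There is no gap.

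Note, however, that the paper itself does \emph{not} prove this lemma. It is grouped with two other classical inequalities under the sentence ``The next three lemmas are classical upper and lower bounds for smooth or strongly convex functions,'' with a bare citation to Nesterov's monograph and no further argument. So there is nothing to compare against: your proof simply supplies what the paper omits, and the route you take---quadratic shift plus the convex subgradient inequality---is precisely the textbook derivation one would find behind that citation.
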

\begin{lemma}[\bfseries Second-order growth property]\label{lemma:second}
Let $f: \Real^p \to \Real$ be a $\mu$-strongly convex function and $\Xcal \subseteq \Real^p$ be a convex set.
Let $x^\star$ be the minimizer of $f$ on~$\Xcal$. Then, the following condition holds for all $x$ in $\Xcal$:
\begin{displaymath}
f(x) \geq f(x^\star) + \frac{\mu}{2}\|x-x^\star\|_2^2.
\end{displaymath}
\end{lemma}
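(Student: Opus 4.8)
The plan is to exploit strong convexity along the segment joining $x^\star$ to an arbitrary feasible point, using convexity of $\Xcal$ to keep that segment inside $\Xcal$ and using the fact that $f(x^\star)$ is the minimum value of $f$ on $\Xcal$. First I would fix $x$ in $\Xcal$ and, for $t$ in $(0,1]$, set $x_t = (1-t)x^\star + t x$; convexity of $\Xcal$ guarantees $x_t \in \Xcal$, hence $f(x_t) \geq f(x^\star)$ since $x^\star$ is the constrained minimizer.

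Next, I would apply the definition of $\mu$-strong convexity to the pair $(x^\star,x)$ with weight $t$, namely
\begin{equation*}
f(x_t) \leq (1-t) f(x^\star) + t f(x) - \frac{\mu}{2}\, t(1-t)\,\|x - x^\star\|_2^2 .
\end{equation*}
Combining this with $f(x_t) \geq f(x^\star)$ and cancelling the common term $(1-t)f(x^\star)$ gives $t f(x^\star) \leq t f(x) - \frac{\mu}{2} t(1-t)\|x-x^\star\|_2^2$; dividing by $t > 0$ yields $f(x^\star) \leq f(x) - \frac{\mu}{2}(1-t)\|x-x^\star\|_2^2$, and letting $t \to 0^+$ produces the claimed bound $f(x) \geq f(x^\star) + \frac{\mu}{2}\|x-x^\star\|_2^2$.

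An equivalent route, closer to how the lemma is used in the main text, is to first record the first-order optimality condition for a constrained convex minimum: there exists $z \in \partial f(x^\star)$ with $z^\top(x - x^\star) \geq 0$ for all $x$ in $\Xcal$. Substituting this into the strong-convexity lower bound of Lemma~\ref{lemma:lower}, $f(x) \geq f(x^\star) + z^\top(x-x^\star) + \frac{\mu}{2}\|x-x^\star\|_2^2$, immediately gives the result. The only mildly delicate point in either version is this optimality characterization (equivalently, the one-sided limit $t \to 0^+$ above, which is really just the directional-derivative form of the same fact); the remaining manipulations are routine, so I do not expect a genuine obstacle here.
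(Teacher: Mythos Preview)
Your proof is correct. The paper does not actually supply its own proof of this lemma: it is grouped with Lemmas~\ref{lemma:upper} and~\ref{lemma:lower} under the heading of classical results and attributed to \cite{nesterov} without further argument. Both of your routes are standard and sound; the second one (first-order optimality condition for a constrained convex minimum plugged into Lemma~\ref{lemma:lower}) is essentially how the result is derived in Nesterov's textbook treatment, so there is nothing to add.
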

\begin{lemma}[\bfseries Useful inequality for smooth and convex functions]\label{lemma:useful}
Consider an \hfill \break $L$-smooth $\mu$-strongly convex function~$f$ defined on $\Real^p$ and a parameter $\beta$ in $[0,\mu]$. Then, for all $x,y$ in $\Real^p$,
\begin{displaymath}
\| \nabla f(x) - \nabla f(y) - \beta(x-y)\|^2 \leq 2L (f(x)-f(y)-\nabla f(y)^\top(x-y)).
\end{displaymath}
\end{lemma}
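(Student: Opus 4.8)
The plan is to reduce the statement to the well-known special case $\beta=0$ by a quadratic shift. I would set $g(x) = f(x) - \frac{\beta}{2}\|x\|^2$. Since $f$ is $\mu$-strongly convex and $0 \le \beta \le \mu$, the function $g$ is convex (in fact $(\mu-\beta)$-strongly convex); and since $f$ is $L$-smooth and $\beta \ge 0$, the function $g$ is $(L-\beta)$-smooth, hence in particular $L$-smooth. A direct computation gives $\nabla g(x) = \nabla f(x) - \beta x$, so that
\[
\nabla g(x) - \nabla g(y) = \nabla f(x) - \nabla f(y) - \beta(x-y),
\]
and, expanding the squared norms,
\[
g(x) - g(y) - \nabla g(y)^\top(x-y) = f(x) - f(y) - \nabla f(y)^\top(x-y) - \tfrac{\beta}{2}\|x-y\|^2 \,\le\, f(x) - f(y) - \nabla f(y)^\top(x-y).
\]

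It therefore suffices to prove, for any convex $L$-smooth function $g$, the bound $\|\nabla g(x) - \nabla g(y)\|^2 \le 2L\bigl(g(x) - g(y) - \nabla g(y)^\top(x-y)\bigr)$. To do so I would introduce $\phi(z) = g(z) - \nabla g(y)^\top z$, which is convex, $L$-smooth, and satisfies $\nabla\phi(y) = 0$, so that $y$ is a global minimizer of $\phi$. Applying the quadratic upper bound of Lemma~\ref{lemma:upper} at the point $z = x - \frac{1}{L}\nabla\phi(x)$ yields $\phi(z) \le \phi(x) - \frac{1}{2L}\|\nabla\phi(x)\|^2$, and combining with $\phi(y) \le \phi(z)$ gives $\frac{1}{2L}\|\nabla\phi(x)\|^2 \le \phi(x) - \phi(y) = g(x) - g(y) - \nabla g(y)^\top(x-y)$. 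Since $\nabla\phi(x) = \nabla g(x) - \nabla g(y)$, this is exactly the desired inequality; substituting back $g = f - \frac{\beta}{2}\|\cdot\|^2$ and using the two identities above concludes the proof.

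The only delicate point is the claim that $g = f - \frac{\beta}{2}\|\cdot\|^2$ is $(L-\beta)$-smooth (equivalently, that $\nabla f - \beta\,\mathrm{Id}$ is $(L-\beta)$-Lipschitz). When $f$ is twice differentiable this is immediate from $0 \preceq \nabla^2 g = \nabla^2 f - \beta I \preceq (L-\beta) I$, where we use $\beta \le \mu \le L$; in the general case it follows from the standard equivalent characterizations of smoothness and strong convexity (co-coercivity of the gradient), or by a routine mollification argument reducing to the twice-differentiable case. I expect this to be the main obstacle, though it is entirely classical; the remaining steps are one-line computations and a single application of the descent lemma.
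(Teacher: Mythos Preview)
Your proof is correct and follows essentially the same route as the paper: define $\phi(x)=f(x)-\tfrac{\beta}{2}\|x\|^2$, observe it is convex and $(L-\beta)$-smooth, apply the standard inequality $\|\nabla\phi(x)-\nabla\phi(y)\|^2\le 2L(\phi(x)-\phi(y)-\nabla\phi(y)^\top(x-y))$ (which the paper cites as Theorem~2.1.5 of Nesterov, while you re-derive it), and then drop the nonpositive term $-\tfrac{\beta}{2}\|x-y\|^2$. The only difference is that the paper handles your ``delicate point'' directly---it verifies the quadratic upper bound $\phi(x)\le\phi(y)+\nabla\phi(y)^\top(x-y)+\tfrac{L-\beta}{2}\|x-y\|^2$ by a one-line algebraic rewriting of the upper bound for $f$, which together with convexity of $\phi$ is already enough (no twice differentiability, co-coercivity, or mollification needed).
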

\begin{proof}
Let us define the function $\phi(x) = f(x) - \frac{\beta}{2}\|x\|^2$, which is $(\mu-\beta)$-strongly convex. It is then easy to show that $\phi$ is $(L-\beta)$-smooth, according to~Theorem 2.1.5 in~\cite{nesterov}: indeed, for all $x, y$ in~$\Real^p$,
\begin{displaymath}
\begin{split}
\phi(x) = f(x) -  \frac{\beta}{2}\|x\|^2 & \leq f(y) + \nabla f(y)^\top(x-y) + \frac{{L}}{2}\|x-y\|^2  -  \frac{\beta}{2}\|x\|^2 \\
& = \phi(y) + \nabla \phi(y)^\top (x-y) + \frac{{L}-\beta}{2}\|x-y\|^2,
\end{split}
\end{displaymath}
and again according to Theorem 2.1.5 of~\cite{nesterov},
\begin{displaymath}
\begin{split}
\left\| \nabla \phi(x) - \nabla \phi(y)\right \|^2 & \leq 2{L}( \phi(x) - \phi(y) - \nabla \phi(y)^\top(x -y)) \\
& = 2{L}\left( f(x) - f(y) - \nabla f(y)^\top(x -y) - \frac{\beta}{2}\| x - y\|^2 \right)  \\
& \leq 2{L}\left( f(x) - f(y) - \nabla f(y)^\top(x -y)\right).
\end{split}
\end{displaymath}

\end{proof}

\subsection{Useful Results to Select Step Sizes}\label{appendix:step}
In this section, we present basic mathematical results regarding the choice of step sizes. The proofs of the first two lemmas are trivial by induction.
\begin{lemma}[\bf Relation between $(\delta_k)_{k \geq 0}$ and $(\Gamma_k=\prod_{t=1}^k(1-\delta_t))_{k \geq 0}$]\label{lemma:step}
Consider the following cases:
\begin{itemize}
\item $\delta_k=\delta$ (constant). Then $\Gamma_k= (1-\delta)^k$;
\item $\delta_k=1/(k+1)$. Then, $\Gamma_k = \delta_k = \frac{1}{(k+1)}$;
\item $\delta_k=2/(k+2)$. Then, $\Gamma_k = \frac{2}{(k+1)(k+2)}$;
\item $\delta_k = \min( 1/(k+1), \delta)$. then,
$$ \Gamma_k = \left\{ \begin{array}{ll}
(1-\delta)^k & ~\text{if}~k < k_0 ~~~\text{with}~~~ k_0 = \left\lceil \frac{1}\delta - 1 \right\rceil \\
\Gamma_{k_0-1} \frac{k_0}{k+1} & ~\text{otherwise}.
\end{array}
\right.$$
\item $\delta_k = \min( 2/(k+2), \delta)$. then,
$$ \Gamma_k = \left\{ \begin{array}{ll}
(1-\delta)^k & ~\text{if}~k < k_0 ~~~\text{with}~~~ k_0 = \left\lceil \frac{2}\delta - 2 \right\rceil \\
\Gamma_{k_0-1} \frac{k_0(k_0+1)}{(k+1)(k+2)} & ~\text{otherwise}.
\end{array}
\right.$$
\end{itemize}
\end{lemma}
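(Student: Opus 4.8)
The plan is a direct induction on $k$, treating the five cases in turn; the paper already flags that the first two are ``trivial by induction,'' and the $\min$-type cases reduce to the pure cases once the switching index is identified. For the constant case $\delta_k\equiv\delta$, the claim $\Gamma_k=(1-\delta)^k$ is immediate from the definition $\Gamma_k=\prod_{t=1}^k(1-\delta_t)$.

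For $\delta_k=1/(k+1)$, I would check the base case $\Gamma_1=1-1/2=1/2$ and then use $\Gamma_k=(1-\delta_k)\Gamma_{k-1}=\frac{k}{k+1}\cdot\frac1k=\frac1{k+1}$; the identity $\Gamma_k=\delta_k$ is then tautological. The case $\delta_k=2/(k+2)$ is handled the same way: base case $\Gamma_1=1/3=2/(2\cdot3)$, inductive step $\Gamma_k=\frac{k}{k+2}\cdot\frac{2}{k(k+1)}=\frac{2}{(k+1)(k+2)}$. In both cases the product telescopes, so one may alternatively just record $\prod_{t=1}^k\frac{t}{t+1}=\frac1{k+1}$ and $\prod_{t=1}^k\frac{t}{t+2}=\frac{2}{(k+1)(k+2)}$ directly.

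For $\delta_k=\min(1/(k+1),\delta)$, the key observation is that $1/(k+1)\le\delta$ exactly when $k\ge 1/\delta-1$, i.e.\ when $k\ge k_0:=\lceil 1/\delta-1\rceil$; hence $\delta_k=\delta$ for $k<k_0$ and $\delta_k=1/(k+1)$ for $k\ge k_0$. For $k<k_0$ this gives $\Gamma_k=(1-\delta)^k$ as in the constant case, and for $k\ge k_0$ one splits $\Gamma_k=\Gamma_{k_0-1}\prod_{t=k_0}^{k}(1-1/(t+1))=\Gamma_{k_0-1}\prod_{t=k_0}^{k}\frac{t}{t+1}=\Gamma_{k_0-1}\frac{k_0}{k+1}$ by telescoping. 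The case $\delta_k=\min(2/(k+2),\delta)$ is identical with the threshold $2/(k+2)\le\delta\iff k\ge 2/\delta-2$, hence $k_0=\lceil 2/\delta-2\rceil$, and the telescoping product $\prod_{t=k_0}^k\frac{t}{t+2}=\frac{k_0(k_0+1)}{(k+1)(k+2)}$ yields the stated formula.

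The only subtlety, and the part I would double-check, is the precise definition of the switching index $k_0$ together with the consistency of the two branches at the boundary: one should verify that the formula $\Gamma_{k_0-1}\frac{k_0}{k+1}$ (resp.\ $\Gamma_{k_0-1}\frac{k_0(k_0+1)}{(k+1)(k+2)}$) evaluated at $k=k_0-1$ reduces to $\Gamma_{k_0-1}$, and that the ceiling in $k_0$ correctly handles the degenerate case where $1/\delta-1$ (resp.\ $2/\delta-2$) is an integer, in which case $\delta_{k_0}=\delta$ and either branch applies. No genuinely hard step is expected; this is a bookkeeping verification of a telescoping product.
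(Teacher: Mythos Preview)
Your proposal is correct and follows exactly the approach the paper has in mind: the paper simply declares the lemma ``trivial by induction'' without spelling anything out, and your argument supplies precisely that induction/telescoping computation for each case. The boundary check you flag (consistency at $k=k_0-1$ and the integer edge case for the ceiling) is the only thing worth recording, and you handle it correctly.
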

\begin{lemma}[\bf Simple relation]\label{lemma:simple}
Consider a sequence of weights $(\delta_k)_{k \geq 0}$ in $(0,1)$. Then,
\begin{equation}
\sum_{t=1}^k \frac{\delta_t}{\Gamma_t} + 1 = \frac{1}{\Gamma_k} \qquad
\text{where} \qquad \Gamma_t \defin \prod_{i=1}^t (1-\delta_i).\label{eq:av}
\end{equation}
\end{lemma}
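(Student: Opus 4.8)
The plan is to prove the identity \eqref{eq:av} by a short telescoping argument, after rewriting each summand in a form that collapses. I would adopt the empty-product convention $\Gamma_0 = 1$, so that $\Gamma_t = (1-\delta_t)\,\Gamma_{t-1}$ for every $t \geq 1$; since each $\delta_t \in (0,1)$, all the $\Gamma_t$ are strictly positive and every fraction below is well defined.

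The key observation I would use is that for each $t \geq 1$,
\[
\frac{\delta_t}{\Gamma_t} = \frac{1 - (1-\delta_t)}{\Gamma_t} = \frac{1}{\Gamma_t} - \frac{1-\delta_t}{\Gamma_t} = \frac{1}{\Gamma_t} - \frac{1}{\Gamma_{t-1}},
\]
where the last step uses $\Gamma_t = (1-\delta_t)\,\Gamma_{t-1}$. Summing this over $t = 1, \dots, k$, the right-hand side telescopes to $1/\Gamma_k - 1/\Gamma_0 = 1/\Gamma_k - 1$, and rearranging gives exactly $\sum_{t=1}^k \delta_t/\Gamma_t + 1 = 1/\Gamma_k$.

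Alternatively, the same statement follows by a one-line induction on $k$: the base case $k=1$ is $\delta_1/(1-\delta_1) + 1 = 1/(1-\delta_1)$, which is immediate; for the inductive step, assuming the identity holds at $k-1$, one adds $\delta_k/\Gamma_k$ to both sides and substitutes $1/\Gamma_{k-1} = (1-\delta_k)/\Gamma_k$ to obtain $(1-\delta_k)/\Gamma_k + \delta_k/\Gamma_k = 1/\Gamma_k$. There is no genuine obstacle in this lemma — it is a routine manipulation; the only points requiring a small amount of care are the convention $\Gamma_0 = 1$ and the observation that $\delta_t \in (0,1)$ keeps every $\Gamma_t$ nonzero so that the divisions are legitimate.
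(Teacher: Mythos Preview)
Your proof is correct; the paper itself merely states that the result is ``trivial by induction'' without providing details, so your telescoping argument (and the induction alternative you sketch) is exactly in line with the intended approach, just more explicit.
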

\begin{lemma}[\bf Convergence rate of $\Gamma_k$]\label{eq:rate_gamma}
Consider the same quantities defined in the previous lemma and consider the sequence $\gamma_k = (1-\delta_k)\gamma_\kmone + \delta_k \mu = \Gamma_k\gamma_0 + (1-\Gamma_k) \mu$ with $\gamma_0 \geq \mu$, and assume the relation $\delta_k=\gamma_k \eta$.
Then, for all $k \geq 0$,
\begin{equation}
\Gamma_k \leq \min \left( \left( 1-  \mu \eta\right)^k ,  \frac{1}{1 + {\gamma_0 \eta k}} \right). \label{eq:rate_gamma2}
\end{equation}
Besides,
\begin{itemize}
\item when $\gamma_0=\mu$, then $\Gamma_k = (1-\mu\eta)^k$.
\item when $\mu=0$, $\Gamma_k = \frac{1}{1 + {\gamma_0 \eta k}}$.
\end{itemize}
\end{lemma}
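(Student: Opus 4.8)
The plan is to first derive the closed form $\gamma_k = \Gamma_k \gamma_0 + (1-\Gamma_k)\mu$ asserted in the statement, and then read off both bounds of~(\ref{eq:rate_gamma2}) from the two elementary consequences $\gamma_k \geq \mu$ and $\gamma_k \geq \Gamma_k \gamma_0$. For the closed form: subtracting $\mu$ from the recursion $\gamma_k = (1-\delta_k)\gamma_\kmone + \delta_k \mu$ gives $\gamma_k - \mu = (1-\delta_k)(\gamma_\kmone - \mu)$, so by induction $\gamma_k - \mu = \Gamma_k(\gamma_0 - \mu)$. Since every $\delta_t$ lies in $(0,1)$ we have $\Gamma_k \in (0,1]$, and since $\gamma_0 \geq \mu \geq 0$ this yields both $\gamma_k \geq \mu$ and $\gamma_k \geq \Gamma_k \gamma_0$; in particular each $\gamma_k$ is positive, so the reciprocals used below are well defined.

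For the first term of the minimum, combine $\gamma_k \geq \mu$ with $\delta_k = \eta \gamma_k$ to get $\delta_k \geq \mu\eta$, hence $1-\delta_k \leq 1 - \mu\eta$; multiplying these inequalities over $t = 1,\dots,k$ gives $\Gamma_k \leq (1-\mu\eta)^k$. For the second term, I would use the telescoping identity $\frac{1}{\Gamma_k} - \frac{1}{\Gamma_\kmone} = \frac{1}{\Gamma_\kmone}\cdot\frac{\delta_k}{1-\delta_k} = \frac{\delta_k}{\Gamma_k}$, and bound its right-hand side from below using $\delta_k = \eta\gamma_k \geq \eta\Gamma_k\gamma_0$, so that $\frac{\delta_k}{\Gamma_k} \geq \eta\gamma_0$. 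Summing from $t=1$ to $k$ and using $\Gamma_0 = 1$ yields $\frac{1}{\Gamma_k} \geq 1 + \gamma_0 \eta k$, that is $\Gamma_k \leq \frac{1}{1+\gamma_0\eta k}$; combining the two estimates proves~(\ref{eq:rate_gamma2}).

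For the two special cases: if $\gamma_0 = \mu$, the closed form gives $\gamma_k = \mu$ for all $k$, so $\delta_k = \mu\eta$ is constant and $\Gamma_k = (1-\mu\eta)^k$ exactly (consistent with the first item of Lemma~\ref{lemma:step}); if $\mu = 0$, the closed form gives $\gamma_k = \Gamma_k\gamma_0$, so the inequality $\frac{\delta_k}{\Gamma_k} \geq \eta\gamma_0$ becomes an equality and the telescoping sum gives $\frac{1}{\Gamma_k} = 1 + \gamma_0\eta k$ exactly. I do not expect any real obstacle here: the argument is a one-line unrolling plus two elementary estimates. The only points needing a little care are checking that $\Gamma_k > 0$ so the reciprocals make sense (this is where $\delta_t < 1$ is used) and getting the direction of $\gamma_k \geq \Gamma_k\gamma_0$ right, which holds because $1 - \Gamma_k \geq 0$ and $\mu \geq 0$ rather than the reverse.
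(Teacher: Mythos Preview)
Your proposal is correct and follows essentially the same approach as the paper: both use $\gamma_k \geq \mu$ to get the exponential bound and $\gamma_k \geq \Gamma_k\gamma_0$ to obtain the one-step increment $\frac{1}{\Gamma_k} \geq \frac{1}{\Gamma_{\kmone}} + \gamma_0\eta$, then telescope. Your treatment of the $\mu=0$ case (observing that the increment inequality becomes an equality) is slightly slicker than the paper's separate induction, but the underlying idea is identical.
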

\begin{proof}
First, we have for all $k$, $\gamma_k \geq \mu$ such that $\delta_k \geq \eta \mu$, which leads then to $\Gamma_k \leq \left(1- \eta{\mu}\right)^k$. Besides,
$\gamma_k \geq \Gamma_k \gamma_0$ and thus $\Gamma_k = (1-\delta_k)\Gamma_\kmone \leq\left(1- {\Gamma_k \gamma_0}\eta\right)\Gamma_\kmone$.
Then,
$ \frac{1}{\Gamma_k} \left(1 - \Gamma_k{\gamma_0}\eta \right)  \geq  \frac{1}{\Gamma_\kmone},$ and
$$\frac{1}{\Gamma_k} \geq \frac{1}{\Gamma_\kmone} + {\gamma_0}\eta \geq 1 + \gamma_0 \eta k,$$ which is sufficient to obtain~(\ref{eq:rate_gamma2}).
Then, the fact that $\gamma_0=\mu$ leads to $\Gamma_k = (1-\mu\eta)^k$ is trivial, and the fact that $\mu=0$ yields $\Gamma_k = \frac{1}{1 + {\gamma_0 \eta k}}$ can be shown by induction.
Indeed, the relation is true for $\Gamma_0$ and then, assuming the relation is true for $k-1$, we have for $k \geq 1$,
\begin{displaymath}
\Gamma_k = (1-\delta_k)\Gamma_\kmone = (1-\eta \gamma_k) \Gamma_\kmone =(1-\eta \gamma_0 \Gamma_k) \Gamma_\kmone \geq \left(1-{\eta \gamma_0}\Gamma_k\right) \frac{1}{1+\gamma_0\eta (\kmone)},
\end{displaymath}
which leads to $\Gamma_k = \frac{1}{1 + {\gamma_0 \eta k}}$.
\end{proof}
\begin{lemma}[\bf Accelerated convergence rate of $\Gamma_k$]\label{eq:acc_rate_gamma}
Consider the same quantities defined in Lemma~\ref{lemma:simple} and consider the sequence $\gamma_k = (1-\delta_k)\gamma_\kmone + \delta_k \mu = \Gamma_k\gamma_0 + (1-\Gamma_k) \mu$ with $\gamma_0 \geq \mu$, and assume the relation $\delta_k=\sqrt{\gamma_k \eta}$.
Then, for all $k \geq 0$,
\begin{equation*}
\Gamma_k \leq \min \left( \left( 1- \sqrt{\mu \eta}\right)^k ,  \frac{4}{(2 + {\sqrt{\gamma_0 \eta} k})^2} \right). %\label{eq:rate_gamma2}
\end{equation*}
Besides,
when $\gamma_0=\mu$, then $\Gamma_k = (1-\sqrt{\mu\eta})^k$.
\end{lemma}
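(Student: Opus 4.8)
The plan is to follow the same scheme as the proof of Lemma~\ref{eq:rate_gamma}, simply replacing the linear relation $\delta_k=\gamma_k\eta$ there by the square-root relation $\delta_k=\sqrt{\gamma_k\eta}$ here. First I would record the elementary monotonicity facts $\mu\le\gamma_k\le\gamma_0$ for all $k$: this follows by induction, since $\gamma_k=(1-\delta_k)\gamma_\kmone+\delta_k\mu$ is a convex combination of $\gamma_\kmone$ and $\mu$ (with $\delta_k\in(0,1)$ inherited from the standing assumptions of Lemma~\ref{lemma:simple}), and $\gamma_0\ge\mu\ge0$. In particular $\delta_k=\sqrt{\gamma_k\eta}\ge\sqrt{\mu\eta}$, so that $\Gamma_k=\prod_{t=1}^k(1-\delta_t)\le(1-\sqrt{\mu\eta})^k$, which already gives the first term of the minimum.

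For the second term I would instead exploit the lower bound $\gamma_k=\Gamma_k\gamma_0+(1-\Gamma_k)\mu\ge\Gamma_k\gamma_0$, valid because $\mu\ge0$ and $\Gamma_k\le1$, which gives $\delta_k\ge\sqrt{\gamma_0\eta}\,\sqrt{\Gamma_k}$. Writing $a=\sqrt{\gamma_0\eta}$ and substituting into $\Gamma_k=(1-\delta_k)\Gamma_\kmone$, one gets $\Gamma_k\le(1-a\sqrt{\Gamma_k})\Gamma_\kmone$ with $1-a\sqrt{\Gamma_k}\ge1-\delta_k>0$, so dividing through by $\Gamma_k\Gamma_\kmone$ yields
\[
\frac{1}{\Gamma_\kmone}\le\frac{1}{\Gamma_k}-\frac{a}{\sqrt{\Gamma_k}}.
\]
Setting $u_k=1/\sqrt{\Gamma_k}$ — well defined and strictly increasing since $\delta_k\in(0,1)$ makes $\Gamma_k$ positive and strictly decreasing — this becomes $u_\kmone^2\le u_k^2-au_k$. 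Since $u_k>0$ and $u_k^2-au_k\ge u_\kmone^2\ge u_0^2=1>0$, the iterate $u_k$ must be at least the larger root of the quadratic, i.e. $u_k\ge\tfrac12\big(a+\sqrt{a^2+4u_\kmone^2}\big)\ge u_\kmone+\tfrac a2$, using $\sqrt{a^2+4u_\kmone^2}\ge 2u_\kmone$. Because $\Gamma_0=1$ (empty product) gives $u_0=1$, an immediate induction yields $u_k\ge1+\tfrac{ak}{2}=\tfrac12(2+ak)$, hence $\Gamma_k=u_k^{-2}\le 4/(2+\sqrt{\gamma_0\eta}\,k)^2$, and combining with the first estimate gives the stated minimum.

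For the equality case, I would observe that when $\gamma_0=\mu$ the recursion $\gamma_k=(1-\delta_k)\gamma_\kmone+\delta_k\mu$ preserves $\gamma_k=\mu$ for all $k$, so $\delta_k=\sqrt{\mu\eta}$ is constant and $\Gamma_k=(1-\sqrt{\mu\eta})^k$ exactly. The only step requiring a little care is the passage from the quadratic inequality $u_\kmone^2\le u_k^2-au_k$ to the clean linear bound $u_k\ge u_\kmone+a/2$: one has to select the correct (positive) root of the quadratic, justify this selection using $u_k^2-au_k\ge1>0$, and then lower-bound $\sqrt{a^2+4u_\kmone^2}$ by $2u_\kmone$; everything else is routine bookkeeping identical in spirit to the proof of Lemma~\ref{eq:rate_gamma}.
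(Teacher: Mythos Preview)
Your proof is correct. The paper itself does not give an argument here but simply cites Lemma~2.2.4 of Nesterov's book; the argument you wrote out---lower-bounding $\delta_k$ first by $\sqrt{\mu\eta}$ and then by $\sqrt{\gamma_0\eta}\,\sqrt{\Gamma_k}$, and passing to $u_k=1/\sqrt{\Gamma_k}$ to show it increases by at least $\tfrac12\sqrt{\gamma_0\eta}$ at each step---is precisely the classical proof from that reference, so there is nothing substantively different to compare.
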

\begin{proof}
see Lemma 2.2.4 of~\cite{nesterov}.
\end{proof}

\subsection{Averaging Strategies}
\label{appendix:averaging}
Next, we show a generic convergence result and an appropriate averaging strategy given a recursive relation between quantities acting as Lyapunov function.
\begin{lemma}[\bf Averaging strategy]\label{lemma:averaging}
Assume that there is a sequence 
% \hfill \break
$(x_k)_{k \geq 1}$ generated by an algorithm that minimizes a convex function~$F$, and that there exist non-negative sequences $(T_k)_{k \geq 0}$, $(\delta_k)_{k \geq 1}$ in~$(0,1)$, $(\beta_k)_{k \geq 1}$ and a scalar $\alpha >0$ such that  for all $k \geq 1$,
\begin{equation}
\frac{\delta_k}{\alpha} \E[ F(x_k) - F^\star]  + T_k \leq (1-\delta_k) T_{\kmone} + \beta_{k}, \label{eq:av1}
\end{equation}
where the expectation is taken with respect to any random parameter used by the algorithm.
Then,
\begin{equation}
\E[ F(x_k) - F^\star]  + \frac{\alpha}{\delta_k} T_k \leq  \frac{\alpha \Gamma_k}{\delta_k}\left(T_{0} + \sum_{t=1}^k \frac{\beta_{t}}{\Gamma_t}\right)~~~\text{where}~~~\Gamma_k \defin \prod_{t=1}^k (1-\delta_t). \label{eq:av2}
\end{equation}
\paragraph{Generic averaging strategy.}
   For any point $\hat{x}_0$, consider the averaging sequence $(\hat{x}_k)_{k \geq 0}$, 
\begin{displaymath}
   \hat{x}_k = \Gamma_k \left( \hat{x}_0 + \sum_{t=1}^k \frac{\delta_t}{\Gamma_t}x_t \right) = (1-\delta_k) \hat{x}_{\kmone} + \delta_k x_k~~~\text{(for $k \geq 1$)}, \label{eq:hatxk}
\end{displaymath}
then, 
\begin{equation}
   \E[ F(\hat{x}_k) - F^\star] + \alpha {T_k} \leq \Gamma_k \left(F(\hat{x}_0)-F^\star + \alpha T_0 + \alpha \sum_{t=1}^k \frac{\beta_{t}}{\Gamma_t}\right). \label{eq:av3}
\end{equation}
\paragraph{Uniform averaging strategy.}
   Assume that $\delta_k=\frac{1}{k+1}$ and consider the average sequence $\hat{x}_k=\frac{1}{k}\sum_{i=1}^k x_i$. Then,
\begin{equation}
   \E[ F(\hat{x}_k) - F^\star] + \alpha {T_k} \leq \frac{\alpha}{k} \left( T_0 +  \sum_{t=1}^k \frac{\beta_{t}}{\Gamma_t}\right). \label{eq:av4}
\end{equation}
\end{lemma}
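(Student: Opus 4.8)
The plan is to prove the three displayed inequalities (\ref{eq:av2}), (\ref{eq:av3}), (\ref{eq:av4}) in turn, all by the same device: divide the recursion by $\Gamma_k = \prod_{t=1}^k (1-\delta_t)$ so that it telescopes, using $\Gamma_k = (1-\delta_k)\Gamma_{k-1}$ and $\Gamma_0 = 1$.

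For (\ref{eq:av2}), I would first discard the nonnegative term $\frac{\delta_k}{\alpha}\E[F(x_k)-F^\star]$ in (\ref{eq:av1}) to get $T_k \le (1-\delta_k)T_{k-1}+\beta_k$, divide by $\Gamma_k$, and telescope down to $T_{k-1} \le \Gamma_{k-1}\big(T_0 + \sum_{t=1}^{k-1}\beta_t/\Gamma_t\big)$. Substituting this bound for $T_{k-1}$ back into the full inequality (\ref{eq:av1}) and using $(1-\delta_k)\Gamma_{k-1} = \Gamma_k$ gives $\frac{\delta_k}{\alpha}\E[F(x_k)-F^\star] + T_k \le \Gamma_k\big(T_0 + \sum_{t=1}^k \beta_t/\Gamma_t\big)$, which is (\ref{eq:av2}) after multiplying by $\alpha/\delta_k$.

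For the generic averaging claim (\ref{eq:av3}), I would first check that the two formulas for $\hat{x}_k$ agree and that $\hat{x}_k$ is a genuine convex combination: the recursive form $\hat{x}_k = (1-\delta_k)\hat{x}_{k-1} + \delta_k x_k$ makes convexity transparent, while Lemma~\ref{lemma:simple} ($\sum_{t=1}^k \delta_t/\Gamma_t + 1 = 1/\Gamma_k$) shows the closed form has coefficients summing to one. Convexity of $F$ then gives the pointwise bound $F(\hat{x}_k)-F^\star \le (1-\delta_k)(F(\hat{x}_{k-1})-F^\star) + \delta_k(F(x_k)-F^\star)$; taking expectations and substituting $\delta_k\E[F(x_k)-F^\star] \le \alpha\big((1-\delta_k)T_{k-1} + \beta_k - T_k\big)$ from (\ref{eq:av1}) yields the clean recursion $b_k \le (1-\delta_k)b_{k-1} + \alpha\beta_k$ for $b_k := \E[F(\hat{x}_k)-F^\star] + \alpha T_k$. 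Dividing by $\Gamma_k$ and telescoping, with $b_0 = F(\hat{x}_0)-F^\star + \alpha T_0$ since $\hat{x}_0$ is a fixed point, gives exactly (\ref{eq:av3}).

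For the uniform averaging claim (\ref{eq:av4}), here $\delta_k = 1/(k+1)$ gives $\Gamma_k = 1/(k+1)$ (Lemma~\ref{lemma:step}), so $1/\Gamma_t = t+1$, and the running average $\hat{x}_k = \frac1k\sum_{i=1}^k x_i$ does \emph{not} coincide with the generic one, so a direct argument is needed. Rescaling (\ref{eq:av1}) and multiplying by $k+1$ gives $\frac1\alpha\E[F(x_k)-F^\star] + (k+1)T_k \le kT_{k-1} + (k+1)\beta_k$; summing over $t=1,\dots,k$ the $T$-terms telescope, $\sum_{t=1}^k\big[(t+1)T_t - tT_{t-1}\big] = (k+1)T_k - T_0$, leaving $\frac1\alpha\sum_{t=1}^k\E[F(x_t)-F^\star] + (k+1)T_k \le T_0 + \sum_{t=1}^k \beta_t/\Gamma_t$. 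Finally Jensen's inequality, $\E[F(\hat{x}_k)-F^\star] \le \frac1k\sum_{t=1}^k\E[F(x_t)-F^\star]$, together with $\frac{k+1}{k}\ge 1$ and $T_k \ge 0$, yields (\ref{eq:av4}).

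None of the steps is genuinely hard; the only things to watch are the index bookkeeping in the two telescoping sums (in particular the $(k+1)T_k - T_0$ cancellation in the last part), the verification that the averaged iterates are convex combinations, and taking expectations at the right moment — $F$-convexity applied pointwise before averaging, $\hat{x}_0$ treated as a fixed point, and $T_k,\beta_k$ already deterministic numbers (expectations of Lyapunov quantities) in the intended applications.
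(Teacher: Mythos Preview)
Your proposal is correct. For (\ref{eq:av2}) and (\ref{eq:av4}) your argument matches the paper's essentially line for line (the paper is terser, but the moves are identical: drop the $F$-term, unroll, substitute back for (\ref{eq:av2}); and for (\ref{eq:av4}), divide by $\Gamma_t$, sum to telescope, then apply Jensen).

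For (\ref{eq:av3}) you take a genuinely different route. The paper first divides (\ref{eq:av1}) by $\Gamma_k$ and sums over $t=1,\dots,k$ to obtain the intermediate inequality
\[
\frac{1}{\alpha}\sum_{t=1}^k \frac{\delta_t}{\Gamma_t}\E[F(x_t)-F^\star] + \frac{T_k}{\Gamma_k} \le T_0 + \sum_{t=1}^k \frac{\beta_t}{\Gamma_t},
\]
then adds $(1/\alpha)(F(\hat x_0)-F^\star)$ to both sides, multiplies by $\alpha\Gamma_k$, and applies Jensen \emph{once} to the closed-form convex combination for $\hat x_k$ (using Lemma~\ref{lemma:simple} to check the weights sum to one). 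You instead apply convexity at each step via the recursive form $\hat x_k = (1-\delta_k)\hat x_{k-1} + \delta_k x_k$, substitute (\ref{eq:av1}), and obtain a clean one-step recursion $b_k \le (1-\delta_k)b_{k-1} + \alpha\beta_k$ for the combined quantity $b_k = \E[F(\hat x_k)-F^\star] + \alpha T_k$, which you then telescope. Both are fine; the paper's version has the advantage that the single summed inequality above is reused verbatim for (\ref{eq:av4}) (just specialize $\delta_t/\Gamma_t=1$ and multiply by $\alpha/k$), whereas you effectively rederive it from scratch for the uniform case. Your version has the merit of making the Lyapunov structure of $b_k$ explicit.
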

\begin{proof}
Given  that $T_k \leq (1-\delta_k) T_{\kmone} + \beta_{k}$, we obtain~(\ref{eq:av1}) by simply unrolling the recursion.
To analyze the effect of the averaging strategies, divide now~(\ref{eq:av1})
by~$\Gamma_k$:
\begin{displaymath}
\frac{\delta_k}{\alpha \Gamma_k} \E[ F(x_k) - F^\star]  + \frac{T_k}{\Gamma_k} \leq \frac{T_{\kmone}}{\Gamma_{\kmone}} + \frac{\beta_{k}}{\Gamma_k}.
\end{displaymath}
Sum from $t=1$ to~$k$ and notice that we have a telescopic sum:
\begin{equation}
\frac{1}{\alpha} \sum_{t=1}^k \frac{\delta_t}{\Gamma_t} \E[ F(x_t) - F^\star]  + \frac{T_k}{\Gamma_k} \leq T_0 + \sum_{t=1}^k \frac{\beta_{t}}{\Gamma_t}. \label{eq:aux_averaging}
\end{equation}
   Then, add $(1/\alpha)\E[F(\hat{x}_0)-F^\star]$ on both sides and multiply by $\alpha \Gamma_k$:
\begin{displaymath}
   \sum_{t=1}^k \frac{\delta_t\Gamma_k}{\Gamma_t} \E[ F(x_t) - F^\star]  + \Gamma_k \E[F(\hat{x}_0)-F^\star] + \alpha {T_k} \leq \Gamma_k \left(\alpha T_0 + \E[F(\hat{x}_0)-F^\star] + \alpha \sum_{t=1}^k \frac{\beta_{t}}{\Gamma_t}\right).
\end{displaymath}
By exploiting the relation~(\ref{eq:av}), we may then use Jensen's inequality and we obtain~(\ref{eq:av3}).

   Consider now the specific case $\delta_k=\frac{1}{k+1}$, which yields $\Gamma_k=\frac{1}{k+1}$.
   Multiply then Eq.~(\ref{eq:aux_averaging}) by $\alpha/k$ and use Jensen's inequality; we obtain Eq.~(\ref{eq:av4}).
\end{proof}

\section{Relation Between Iteration~(\ref{eq:opt2}) and MISO/SDCA}
\label{appendix:miso.sdca}

In this section, we derive explicit links between the proximal MISO
algorithm~\cite{catalyst}, a primal version of
SDCA~\cite{ShalevShwartz2016SDCAWD}, and iteration~(\ref{eq:opt2}) when used
with the gradient estimator~(\ref{eq:gk2}) without stochastic perturbations.
Under the big data condition $L/\mu \leq n$, consider indeed $\beta=\mu$, constant step-sizes $\eta_k=\eta = \frac{1}{n\mu}$, $\gamma_k=\mu$, and a uniform sampling distribution~$Q$; then, we obtain the following algorithm
\begin{equation*}
   \begin{split}
      \bar{x}_{k}  & \leftarrow (1-\mu \eta)\bar{x}_\kmone + \mu \eta x_{\kmone} - \eta \left(\nabla f_{i_k}(x_{k-1}) - z_{k-1}^{i_k} + \bar{z}_{k-1}    \right) ~~~\text{and}~~~x_k =\text{Prox}_{\frac{\psi}{\mu}}\left[ \bar{x}_k \right] \\
      \bar{z}_k & = \bar{z}_{k-1} + \frac{1}{n}(z_k^{i_k} - z_{k-1}^{i_k})~~~~~\text{and}~~~~~ z_k^{i_k} = \nabla f_{i_k}(x_{k-1})  - \mu x_{k-1}, 
   \end{split}
\end{equation*}
with $\bar{z}_0 = \bar{x}_0 = 0$. Then, since $\mu \eta = \frac{1}{n}$, it is easy to show that in fact $\bar{z}_k = \mu\bar{x}_k$ for all $k \geq 0$.
This is then exactly the proximal MISO algorithm~\citep[see][]{smiso}.
For the relation between primal variants of SDCA and MISO, see page 4 and Equation~(3) of~\citet{smiso}.

\section{Recovering Classical Results for Proximal SGD}
\label{app.rec}

In this section, we present several corollaries of Theorem~\ref{thm:conv} to recover classical results for proximal variants of the stochastic gradient descent method.
Throughout the section, we assume that the gradient estimates have variance bounded by $\sigma^2$:
$$\omega_k^2 = \E[ \|g_k - \nabla f(x_\kmone)\|^2]  \leq \sigma^2.$$
Convergence results for the deterministic case~$\sa\sq=0$ can be also recovered naturally from the corollaries.
We start by applying Theorem~\ref{thm:conv} with a constant step-size strategy $\eta_k=1/L$, which shows convergence to a noise-dominated region of radius $\sigma^2/L$.
In all the corollaries below, we use the notation from Theorem~\ref{thm:conv}.
\begin{corollary}[\bf Proximal variants of SGD with constant step-size, $\mu > 0$]\label{corollary:sgd_constant}
   Assume \newline that~$f$ is $\mu$-strongly convex, choose $\gamma_0=\mu$ and $\eta_k = 1/L$ with Algorithm~(\ref{eq:opt1}) or~(\ref{eq:opt2}). Then, for any point $\hat{x}_0$,
\begin{equation}
   \E\left[F(\hat{x}_k)- F^\star + d_k(x^\star) - d_k^\star\right] \leq   \left(1-\frac{\mu}{L}\right)^k\left(F(\hat{x}_0)- F^\star + d_0(x^\star) - d_0^\star\right) + \frac{\sigma^2}{L},\label{eq:sgd}
\end{equation}
   when using the averaging strategy from Theorem~\ref{thm:conv}. Note that $d_k(x^\star)-d_k^\star \geq \frac{\mu}{2}\|x_k-x^\star\|^2$ for all $k \geq 0$ with equality for Algorithm~(\ref{eq:opt1}).
\end{corollary}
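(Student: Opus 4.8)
The plan is to obtain this as a direct specialization of the averaged bound~\eqref{eq:conv2} in Theorem~\ref{thm:conv}. First I would record the effect of the parameter choice $\gamma_0=\mu$, $\eta_k=1/L$ on the auxiliary sequences: from the recursion~\eqref{eq:gamma}, if $\gamma_\kmone=\mu$ then $\gamma_k=(1-\delta_k)\mu+\mu\delta_k=\mu$, so by induction $\gamma_k=\mu$ for all $k$, hence $\delta_k=\eta_k\gamma_k=\mu/L$ is constant and $\Gamma_k=\prod_{t=1}^k(1-\delta_t)=(1-\mu/L)^k$ (this is also exactly the statement of Lemma~\ref{eq:rate_gamma} with $\gamma_0=\mu$). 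The hypothesis $\eta_k\le 1/L$ required by Proposition~\ref{prop:keyprop}, and therefore by Theorem~\ref{thm:conv}, holds trivially here.

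Next I would bound the noise contribution in~\eqref{eq:conv2}. Using $\omega_t^2\le\sigma^2$ together with $\eta_t=1/L$ and $\delta_t=\mu/L$, and then the telescoping identity of Lemma~\ref{lemma:simple} ($\sum_{t=1}^k \delta_t/\Gamma_t = 1/\Gamma_k - 1$),
\[
\Gamma_k\sum_{t=1}^k\frac{\delta_t\eta_t\omega_t^2}{\Gamma_t}
\;\le\;\frac{\sigma^2}{L}\,\Gamma_k\sum_{t=1}^k\frac{\delta_t}{\Gamma_t}
\;=\;\frac{\sigma^2}{L}\,(1-\Gamma_k)\;\le\;\frac{\sigma^2}{L}.
\]
Substituting this bound and $\Gamma_k=(1-\mu/L)^k$ into the averaged estimate~\eqref{eq:conv2} of Theorem~\ref{thm:conv} (with the averaging sequence started at $\hat{x}_0$) yields precisely~\eqref{eq:sgd}.

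It remains to justify the closing remark that $d_k(x^\star)-d_k^\star\ge\frac{\mu}{2}\|x_k-x^\star\|^2$. For variant~\eqref{eq:opt1} the surrogate is the quadratic $d_k(x)=d_k^\star+\frac{\gamma_k}{2}\|x-x_k\|^2$ with $\gamma_k=\mu$, so the inequality is in fact an equality. For variant~\eqref{eq:opt2} the surrogate $d_k$ is $\gamma_k$-strongly convex with minimizer $x_k$, so the second-order growth property (Lemma~\ref{lemma:second}) gives $d_k(x^\star)-d_k^\star\ge\frac{\gamma_k}{2}\|x_k-x^\star\|^2=\frac{\mu}{2}\|x_k-x^\star\|^2$. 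Honestly, there is no real obstacle in this corollary: all the analytic work is already contained in Theorem~\ref{thm:conv}, and the only points requiring a little care are the one-line induction showing $\gamma_k\equiv\mu$ and the elementary summation of the geometric noise term; the presence of $F(\hat{x}_0)-F^\star$ on the right-hand side is harmless since one may take $\hat{x}_0=x_0$.
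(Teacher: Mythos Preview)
Your proposal is correct and follows exactly the approach the paper intends: the paper presents this corollary as an immediate application of Theorem~\ref{thm:conv} with the constant step-size choice $\eta_k=1/L$ and $\gamma_0=\mu$, without writing out a separate proof. Your derivation fills in precisely the details one would expect---the induction $\gamma_k\equiv\mu$ (also recorded in Lemma~\ref{eq:rate_gamma}), the identification $\delta_k=\mu/L$ and $\Gamma_k=(1-\mu/L)^k$, and the summation of the noise term via Lemma~\ref{lemma:simple}---and your justification of the closing remark via the canonical form of $d_k$ for variant~(\ref{eq:opt1}) and Lemma~\ref{lemma:second} for variant~(\ref{eq:opt2}) is exactly right.
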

Next, we show how to obtain converging algorithms by using decreasing step
sizes.
\begin{corollary}[\bf Proximal variants of SGD with decreasing step-sizes, $\mu > 0$] \label{corollary:sgd}~\newline
Assume that $f$ is $\mu$-strongly convex and that we target an accuracy
   $\varepsilon$ smaller than $2\sigma^2/L$. First, use a constant step-size $\eta_k=1/L$ with $\gamma_0=\mu$ within Algorithm~(\ref{eq:opt1}) or~(\ref{eq:opt2}), using $\hat{x}_0=x_0$,
   leading to the convergence rate~(\ref{eq:sgd}),  until $\E[F(\hat{x}_k)- F^\star + d_k(x^\star)-d_k^\star] \leq  2 \sigma^2/L$.
   Then, we restart the optimization procedure, using the previously obtained $\hat{x}_k, x_k$ as new initial points, with decreasing step-sizes $\eta_k
= \min \left(\frac{1}{L},\frac{2}{\mu (k+2)}\right)$,  and generate new sequences~$(\hat{x}_k', x_k')_{k \geq 0}$. The total number of iterations to achieve $\E[F(\hat{x}_k')- F^\star] \leq  \varepsilon$ is upper bounded by
   \begin{equation}
      O\left( \frac{L}{\mu} \log\left(\frac{F(x_0)- F^\star + d_0(x^\star)-d_0^\star}{\varepsilon}\right)\right) + O\left( \frac{\sigma^2}{\mu \varepsilon}\right). \label{eq:sgd:aux}
   \end{equation}
   Note that $d_0(x^\star)-d_0^\star= \frac{\mu}{2}\|x_0-x^\star\|^2 \leq F(x_0)-F^\star$ for Algorithm~(\ref{eq:opt1}).
\end{corollary}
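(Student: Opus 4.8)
The plan is to split the run into two phases, exactly as in the classical analysis of SGD with diminishing step sizes, reusing two facts already established: the linear-convergence-to-a-noise-region estimate~(\ref{eq:sgd}) of Corollary~\ref{corollary:sgd_constant} for a warm-up phase, and the generic averaging bound~(\ref{eq:conv2}) of Theorem~\ref{thm:conv} for the diminishing-step-size phase. The whole computation becomes transparent once one notes that the choice $\gamma_0=\mu$ in~(\ref{eq:gamma}) forces $\gamma_k=\mu$ for all $k$, hence $\delta_k=\mu\eta_k$ and $\delta_k\eta_k=\delta_k^2/\mu$; this decouples $(\gamma_k)$ from $(\eta_k)$ and makes every sum below explicit.

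\emph{Phase 1.} First I would run Algorithm~(\ref{eq:opt1}) or~(\ref{eq:opt2}) with $\eta_k=1/L$ and $\hat{x}_0=x_0$. By~(\ref{eq:sgd}), $\E[F(\hat{x}_k)-F^\star+d_k(x^\star)-d_k^\star]\le(1-\mu/L)^k(F(x_0)-F^\star+d_0(x^\star)-d_0^\star)+\sigma^2/L$. Since $\varepsilon\le 2\sigma^2/L$, it suffices to stop once the transient term falls below $\sigma^2/L$, i.e. once $k\ge(L/\mu)\log\!\big(L(F(x_0)-F^\star+d_0(x^\star)-d_0^\star)/\sigma^2\big)$; using $\sigma^2/L\ge\varepsilon/2$, this is $O\!\big((L/\mu)\log((F(x_0)-F^\star+d_0(x^\star)-d_0^\star)/\varepsilon)\big)$ iterations, at which point the Lyapunov value is at most $2\sigma^2/L$.

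\emph{Phase 2.} Then I would restart from the iterates produced above, so that (after taking expectation over the phase-1 randomness, i.e. reading~(\ref{eq:conv2}) as a conditional bound and then using the tower property) the new initial Lyapunov value is $\le 2\sigma^2/L$, and use $\eta_k=\min(1/L,2/(\mu(k+2)))$, which is non-increasing and $\le 1/L$; then $\delta_k=\min(\mu/L,2/(k+2))$. With $\omega_t^2\le\sigma^2$ and $\delta_t\eta_t=\delta_t^2/\mu$, bound~(\ref{eq:conv2}) becomes $\E[F(\hat{x}_k')-F^\star]\le\Gamma_k\big(2\sigma^2/L+(\sigma^2/\mu)\sum_{t=1}^k\delta_t^2/\Gamma_t\big)$. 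Next I would invoke the last case of Lemma~\ref{lemma:step} with $\delta=\mu/L$ and $k_0=\lceil 2L/\mu-2\rceil$: for $k\ge k_0$, $\Gamma_k=\Gamma_{k_0-1}k_0(k_0+1)/((k+1)(k+2))$, which is $O((L/\mu)^2/k^2)$. Splitting the sum at $k_0$: for $t\ge k_0$, $\delta_t^2/\Gamma_t\le 4/(\Gamma_{k_0-1}k_0(k_0+1))$, so the corresponding partial sum times $\Gamma_k$ simplifies to $O(1/k)$; for $t<k_0$ the sum is geometric and bounded by $(\mu/L)/\Gamma_{k_0-1}$, so that part times $\Gamma_k$ is $O((L/\mu)/k^2)$. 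Hence $\E[F(\hat{x}_k')-F^\star]=O(\sigma^2/(\mu k))+O(\sigma^2 L/(\mu^2 k^2))$, which is $\le\varepsilon$ as soon as $k\ge c\,(L/\mu+\sigma^2/(\mu\varepsilon))$ for a universal constant $c$. Since $\varepsilon\le 2\sigma^2/L$ already gives $L/\mu\le 2\sigma^2/(\mu\varepsilon)$, the phase-2 cost is $O(\sigma^2/(\mu\varepsilon))$, and adding the phase-1 cost yields the announced complexity.

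\emph{Expected main obstacle.} The delicate part is the two-regime bookkeeping of $\Gamma_k$ and of $\sum_t\delta_t^2/\Gamma_t$ — constant $\delta_t=\mu/L$ for $t<k_0$ versus diminishing $\delta_t=2/(t+2)$ for $t\ge k_0$ — together with checking that the transient term $O(\sigma^2 L/(\mu^2 k^2))$, coming both from the warm-started initial condition and from the first block of the noise sum, is dominated by $O(\sigma^2/(\mu k))$; this holds precisely because $\varepsilon\le 2\sigma^2/L$. One should also make the restart step explicit: the phase-1 output is a random initial point, so~(\ref{eq:conv2}) must be applied conditionally before the outer expectation is taken.
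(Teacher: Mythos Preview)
Your proposal is correct and follows essentially the same route as the paper: the same two-phase split, the same reduction $\gamma_k\equiv\mu$ giving $\delta_k=\mu\eta_k$, the same invocation of Lemma~\ref{lemma:step} for the two-regime behavior of $\Gamma_k$, and the same splitting of the noise sum at $k_0$. The only cosmetic difference is that the paper writes the summand as $\delta_t\eta_t$ and, for $t<k_0$, recognizes $\sum_{t<k_0}\delta_t/\Gamma_t$ directly via Lemma~\ref{lemma:simple} as $(1/\Gamma_{k_0-1}-1)$, whereas you rewrite $\delta_t\eta_t=\delta_t^2/\mu$ and bound that block as a geometric sum; the resulting estimates coincide.
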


\begin{proof}
   Given the linear convergence rate~(\ref{eq:sgd}), the number of iterations of the first the constant step-size strategy is upper bounded by the left term of~(\ref{eq:sgd:aux}).
   Then, after restarting the algorithm, we may apply Theorem~\ref{thm:conv} with $\E[F(\hat{x}_0)-F^\star + d_0(x^\star)-d_0^\star] \leq 2 \sigma^2/L$.
With $\gamma_0=\mu$, we have $\gamma_k=\mu$ for all $k \geq 0$, and
the rate of $\Gamma_k$ is given by Lemma~\ref{lemma:step}, which yields for $k \geq k_0 = \left\lceil \frac{2L}{\mu} - 2 \right\rceil$,
\begin{equation*}
\begin{split}
   \E[F(\hat{x}_k')- F^\star]  & \leq \Gamma_k\left( \frac{2\sigma^2}{L} + \sigma^2 \sum_{t=1}^k \frac{\delta_t \eta_t}{\Gamma_t} \right) \\
& =  \Gamma_k \left( \frac{2 \sigma^2}{L} + \frac{\sigma^2}{L} \sum_{t=1}^{k_0-1} \frac{\delta_t}{\Gamma_t} + \sigma^2 \sum_{t=k_0}^{k} \frac{2\delta_t}{\Gamma_t \mu(t+2)}\right) \\
& =  \frac{k_0(k_0+1)}{(k+1)(k+2)} \left( \Gamma_{k_0-1}\frac{2 \sigma^2}{L} + \frac{\sigma^2}{L} \Gamma_{k_0-1}\sum_{t=1}^{k_0-1} \frac{\delta_t}{\Gamma_t}\right) + \sigma^2 \sum_{t=k_0}^{k} \frac{2\delta_t \Gamma_k}{\Gamma_t \mu(t+2)} \\
& =  \frac{k_0(k_0+1)}{(k+1)(k+2)} \left( \Gamma_{k_0-1}\frac{2 \sigma^2}{L} + (1-\Gamma_{k_0 -1 })\frac{\sigma^2}{L}\right) + \sigma^2 \sum_{t=k_0}^{k} \frac{2\delta_t \Gamma_k}{\Gamma_t \mu(t+2)} \\
& \leq  \frac{k_0(k_0+1)}{(k+1)(k+2)} \frac{2 \sigma^2}{L} +  \sigma^2 \frac{1}{(k+1)(k+2)} \left(\sum_{t=k_0+1}^{k} \frac{4 (t+1)(t+2)}{\mu(t+2)^2}\right) \\
& \leq  \frac{k_0}{(k+1)(k+2)} \frac{4 \sigma^2}{\mu} +   \frac{4\sigma^2}{\mu(k+2)},
\end{split}
\end{equation*}
where the second inequality uses the fact that $\frac{\mu}{2}\|x_0-x^\star\|^2 \leq F(x_0)-F^\star \leq \frac{2 \sigma^2}{L}$, and then we use Lemmas~\ref{lemma:step} and~\ref{lemma:simple}. The term on the right is of order $O(\sigma^2/\mu k)$ whereas the term on the left becomes of the same order or smaller whenever $k \geq k_0 = O(L/\mu)$.
This leads to the desired iteration complexity.
\end{proof}
We may now study the case $\mu=0$, first with a constant step size.
   The next corollary consists of simply applying the uniform averaging strategy of Lemma~\ref{lemma:averaging} to Proposition~\ref{prop:keyprop}, noting that $\delta_k=\frac{1}{k+1}$ for all $k \geq 0$ if $\mu=0$ and $\gamma_0=1/\eta$.
\begin{corollary}[\bf Proximal variants of SGD with constant step size, $\mu = 0$]\label{corollary:convex}
   Assume \newline that $f$ is convex, choose a constant step size $\eta_k = \eta \leq \frac{1}{L}$ with  Algorithm~(\ref{eq:opt1}) or~(\ref{eq:opt2}) with $\gamma_0=1/\eta$. % delta=eta gamma 
Then,
\begin{equation}
\E\left[F(\hat{x}_k)- F^\star\right] 
   \leq   \frac{d_0(x^\star)-d_0^\star}{k}+ \eta{\sigma^2}, 
\label{eq:convex}
\end{equation}
   where $\hat{x}_k = \frac{1}{k}\sum_{i=1}^k x_i$. Note that $d_0(x^\star)-d_0^\star=\frac{1}{2\eta}\|x_0-x^\star\|^2$ for Algorithm~(\ref{eq:opt1}).
\end{corollary}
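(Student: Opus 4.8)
The plan is to obtain~(\ref{eq:convex}) as a direct specialization of Proposition~\ref{prop:keyprop} combined with the uniform-averaging branch of Lemma~\ref{lemma:averaging}. First I would pin down the sequences $(\delta_k)$, $(\gamma_k)$, $(\Gamma_k)$ produced by the constant step-size choice. With $\mu=0$ the system~(\ref{eq:gamma}) reduces to $\gamma_k=(1-\delta_k)\gamma_\kmone$ together with $\delta_k=\eta\gamma_k$, so a one-line induction starting from $\gamma_0=1/\eta$ gives $\gamma_k=\frac{1}{(k+1)\eta}$ and hence $\delta_k=\frac{1}{k+1}$ for all $k\geq 0$. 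Consequently $\Gamma_k=\prod_{t=1}^k(1-\delta_t)=\prod_{t=1}^k\frac{t}{t+1}=\frac{1}{k+1}$ by telescoping (this is also the case $\delta_k=1/(k+1)$ recorded in Lemma~\ref{lemma:step}).

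Next I would cast the key relation into the template~(\ref{eq:av1}) of Lemma~\ref{lemma:averaging}. Taking $\alpha=1$, $T_k=\E[d_k(x^\star)-d_k^\star]$, and $\beta_k=\eta_k\delta_k\omega_k^2\leq\eta\delta_k\sigma^2$ (using $\eta_k=\eta$ and the standing assumption $\omega_k^2\leq\sigma^2$), Proposition~\ref{prop:keyprop} says exactly that $\frac{\delta_k}{\alpha}\E[F(x_k)-F^\star]+T_k\leq(1-\delta_k)T_\kmone+\beta_k$. Since here $\delta_k=\frac{1}{k+1}$, the uniform-averaging part of Lemma~\ref{lemma:averaging} applies with $\hat{x}_k=\frac1k\sum_{i=1}^k x_i$ and yields $\E[F(\hat{x}_k)-F^\star]+T_k\leq\frac{1}{k}\bigl(T_0+\sum_{t=1}^k\beta_t/\Gamma_t\bigr)$.

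It then remains to bound the right-hand side. The quantity $T_0=d_0(x^\star)-d_0^\star$ is deterministic, and for the noise term I would use Lemma~\ref{lemma:simple}: $\sum_{t=1}^k\beta_t/\Gamma_t\leq\eta\sigma^2\sum_{t=1}^k\delta_t/\Gamma_t=\eta\sigma^2\bigl(\tfrac{1}{\Gamma_k}-1\bigr)=\eta\sigma^2 k$. Dropping the nonnegative term $T_k$ on the left gives $\E[F(\hat{x}_k)-F^\star]\leq\frac{d_0(x^\star)-d_0^\star}{k}+\eta\sigma^2$, which is~(\ref{eq:convex}). For the closing remark, the explicit value $d_0(x^\star)-d_0^\star=\frac{1}{2\eta}\|x_0-x^\star\|^2$ for variant~(\ref{eq:opt1}) follows immediately from the form~(\ref{eq:d0}) of $d_0$ with $\gamma_0=1/\eta$.

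I do not expect any genuine obstacle: this is a straightforward specialization. The only point requiring care is the bookkeeping that the prescribed $\gamma_0=1/\eta$ forces $\delta_k=1/(k+1)$, which is precisely what licenses the \emph{uniform} (simple-average) branch of Lemma~\ref{lemma:averaging} rather than the generic weighted-average branch; alternatively, one could instead invoke the generic averaging identity~(\ref{eq:av3}) with $\hat{x}_0=x_0$, which for $\mu=0$ reduces to the same averaging scheme and gives an identical bound.
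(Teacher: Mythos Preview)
Your proposal is correct and follows essentially the same approach as the paper, which merely states that the corollary is obtained by applying the uniform-averaging branch of Lemma~\ref{lemma:averaging} to Proposition~\ref{prop:keyprop} after noting that $\delta_k=\frac{1}{k+1}$ when $\mu=0$ and $\gamma_0=1/\eta$. You have simply spelled out the bookkeeping (the computation of $\gamma_k$, $\delta_k$, $\Gamma_k$, and the bound on $\sum_{t=1}^k\beta_t/\Gamma_t$ via Lemma~\ref{lemma:simple}) that the paper leaves implicit.
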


%    First note that the averaging strategy is slightly different than in 
%    Theorem~\ref{thm:conv} since the definition of~$\hat{x}_k$ does not involve $x_0$.
%    In other words, averaging starts at iteration $2$ instead of $1$.
%    Yet, it is possible to apply Theorem~\ref{thm:conv} by shifting indices
%    appropriately, when applied to a method initialized at $x_1$.  
%    By choosing $\gamma_1=1/\eta$ and denoting by $\Gamma'_k=\prod_{t=2}^k(1-\delta_t)$, it is easy to show by induction for $k \geq 1$ that $\delta_k=\Gamma'_k=1/k$ and the quantity $\hat{x}_k$ defined in Theorem~\ref{thm:conv} coincides with $\frac{1}{k}\sum_{i=1}^k x_i$.
%    Then, Theorem~\ref{thm:conv}  yields
% \begin{equation}
%    \begin{split}
% \E\left[F(\hat{x}_k)- F^\star\right] 
%       & \leq \frac{1}{k}\left(F(x_1)- F^\star + \frac{1}{2\eta}\|x_1-x^\star\|^2 +   (k-1){\eta\sigma^2} \right), 
%    \end{split} \label{eq:convex_aux}
% \end{equation}
% where we use the relation $d_1(x^\star)-d_1^\star = \frac{\gamma_1}{2}\|x_1-x^\star\|^2$.
% 
%    It remains then to apply Theorem~\ref{thm:conv} to the first iteration, without averaging. As this analysis is independent to the previous one, we may choose $\gamma_0=1/\eta$, leading to different values $\delta_1=\Gamma_1=1/2$ and $\gamma_1=1/2\eta$, and we obtain
% \begin{equation*}
%    \E\left[F({x}_1)- F^\star + \frac{1}{2\eta}\|x_1-x^\star\|^2\right]  
%    \leq   \frac{1}{2\eta}\|x_0 - x^\star\|^2 + {\eta{\sigma^2}}.
% \end{equation*}
%    We may now combine this last relation with~(\ref{eq:convex_aux}), which yields~(\ref{eq:convex}).
The noise dependency is now illustrated for Algorithm~(\ref{eq:opt1}) in the next corollary, obtained in a finite horizon setting.
\begin{corollary}[\bf Proximal variants of SGD with $\mu = 0$, finite horizon] \label{corollary:sgd2}
Consider the same setting as in the previous corollary. Assume that we have a budget of $K$ iterations for Algorithm~(\ref{eq:opt1}).
Choose a constant step size
$$\eta_k = \min\left(\frac{1}{L}, \sqrt{\frac{T_0}{K \sigma^2}}\right) ~~~\text{with}~~~ 
T_0 = 
\dr12 \|x_0-x^\star\|^2.$$
   Then, with $\gamma_0=1/\eta$ and when using the averaging strategy from Corollary~\ref{corollary:convex},
\begin{equation}
\E[F(\hat{x}_K)-F^\star] \leq \frac{LT_0}{K}  + 2\sigma \sqrt{\frac{T_0}{K}}. \label{eq:sgd_simple}
\end{equation}
\end{corollary}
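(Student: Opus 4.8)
The plan is to deduce Corollary~\ref{corollary:sgd2} as an immediate consequence of Corollary~\ref{corollary:convex}, followed by a two-case analysis that balances the bias and variance terms produced by the prescribed step size.

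First I would apply Corollary~\ref{corollary:convex} with the constant step size $\eta = \eta_k = \min\!\left(\tfrac1L, \sqrt{T_0/(K\sigma^2)}\right)$, which is admissible since $\eta \le 1/L$ by construction, together with $\gamma_0 = 1/\eta$. For variant~(\ref{eq:opt1}), Corollary~\ref{corollary:convex} records the identity $d_0(x^\star)-d_0^\star = \frac{1}{2\eta}\|x_0-x^\star\|^2 = T_0/\eta$, so after $K$ iterations the uniform average $\hat{x}_K = \frac1K\sum_{i=1}^K x_i$ satisfies
\[
\E[F(\hat{x}_K)-F^\star] \le \frac{T_0}{\eta K} + \eta\sigma^2 .
\]

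Next I would split on whether $\sqrt{T_0/(K\sigma^2)} \ge 1/L$. In that first case $\eta = 1/L$, so the right-hand side becomes $\frac{L T_0}{K} + \frac{\sigma^2}{L}$; the defining inequality $1/L \le \sqrt{T_0/(K\sigma^2)}$ rearranges exactly to $\sigma^2/L \le \sigma\sqrt{T_0/K}$, hence the bound is at most $\frac{L T_0}{K} + \sigma\sqrt{T_0/K} \le \frac{L T_0}{K} + 2\sigma\sqrt{T_0/K}$. In the complementary case $\eta = \sqrt{T_0/(K\sigma^2)}$, a one-line computation shows that both $\frac{T_0}{\eta K}$ and $\eta\sigma^2$ equal $\sigma\sqrt{T_0/K}$, so the right-hand side is $2\sigma\sqrt{T_0/K}$, which is trivially dominated by $\frac{L T_0}{K} + 2\sigma\sqrt{T_0/K}$ since the first term is nonnegative. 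Combining the two cases yields~(\ref{eq:sgd_simple}).

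I do not expect any real obstacle here: the argument is essentially a substitution into an already-established corollary plus elementary algebra. The only points requiring a moment of care are checking that the step-size choice is feasible ($\eta\le 1/L$) and that the exact form $d_0(x^\star)-d_0^\star = \frac{1}{2\eta}\|x_0-x^\star\|^2$ is what lets the two error terms be balanced in the finite-horizon setting; this is precisely why the statement is restricted to variant~(\ref{eq:opt1}), since for variant~(\ref{eq:opt2}) the quantity $d_0(x^\star)-d_0^\star$ is larger and would propagate into a worse constant.
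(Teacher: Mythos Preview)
Your proposal is correct and matches the paper's own proof essentially line for line: the paper states that the corollary follows by applying Corollary~\ref{corollary:convex} and then ``considering both cases $\eta = 1/L$ and $\eta = \sqrt{T_0/(K\sigma^2)}$,'' which is precisely your two-case analysis. Your additional remark about why the restriction to variant~(\ref{eq:opt1}) is needed is accurate and consistent with the paper's earlier discussion of $d_0(x^\star)-d_0^\star$ for the two variants.
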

This corollary is obtained by optimizing the right side of~(\ref{eq:convex})
with respect to $\eta$ under the constraint $\eta \leq 1/L$.
Considering both cases $\eta = 1/L$ and $\eta = \sqrt{{T_0}/{K \sigma^2}}$, it is easy to check that we have~(\ref{eq:sgd_simple}) in all cases.
Whereas this last result is not a practical one since the step size depends on
unknown quantities, it shows that our analysis is nevertheless able to recover
the optimal noise-dependency in $O(\sigma \sqrt{{T_0}/{K}})$,~\citep[see][]{nemirovski}.

\section{Proofs of the Main Results}
\label{appendix:applications}

\subsection{Proof of Proposition~\ref{prop:nonu}}
\label{appendix:prop.nonu}

\begin{proof}
The proof borrows a large part of the analysis of~\citet{proxsvrg} for
controlling the variance of the gradient estimate in the SVRG algorithm.
First, we note that all the gradient estimators we consider may be written in the generic form~(\ref{eq:gk2}), with $\beta=0$ for SAGA or SVRG.
Then, we will write $\tildenabla f_{i_k}(x_\kmone) = \nabla f_{i_k}(x_\kmone) + \zeta_k$, where $\zeta_k$ is a zero-mean variable with variance $\tilde{\sigma}^2$ drawn at iteration $k$,
and $z_k^i = u_k^i + \zeta_k^i$ for all $k,i$, where~$\zeta_k^i$ has zero-mean with variance $\tilde{\sigma}^2$ and was drawn during the previous iterations.
   Let us  denote by $\omega_k^2=\E[\|g_k-f(x_\kmone)\|^2]$ and let us introduce the quantity $A_k =\E\left[ \frac{1}{(q_{i_k} n)^2} \|\zeta_k\|^2 \right]$.
   Then,
\begin{align}
\nonumber
\omega_k^2 & = \E\left\| \frac{1}{q_{i_k} n}(\tildenabla f_{i_k}(x_{\kmone}) - \beta x_\kmone - z^{i_k}_{\kmone}) + \bar{z}_{\kmone} + \beta x_\kmone - \nabla f(x_\kmone)\right\|^2 \\\nonumber
& = \E\left\| \frac{1}{q_{i_k} n} (\nabla f_{i_k}(x_{\kmone}) - \beta x_\kmone - z^{i_k}_{\kmone}) + \bar{z}^{\kmone} + \beta x_\kmone - \nabla f(x_\kmone) \right\|^2  + \E\left[ \frac{1}{(q_{i_k} n)^2} \|\zeta_k\|^2 \right] \\\nonumber
& \leq \E\left\| \frac{1}{q_{i_k} n}(\nabla f_{i_k}(x_{\kmone}) -\beta x_{\kmone} - z^{i_k}_{\kmone})\right\|^2 +  A_k  \\\nonumber
& = \frac{1}{n}\sum_{i=1}^n\frac{1}{q_i n}\E\left[\|\nabla f_{i}(x_{\kmone}) -\beta x_{\kmone} - z_\kmone^i \|^2\right] + A_k \\\nonumber
& = \frac{1}{n}\sum_{i=1}^n\frac{1}{q_i n}\E\left[\|\nabla f_{i}(x_{\kmone}) -\beta x_{\kmone} - u_\star^i + u_\star^i - z_\kmone^i \|^2\right] + A_k \\\nonumber
& \leq \frac{2}{n}\sum_{i=1}^n\frac{1}{q_i n}\E\left[\|\nabla f_{i}(x_{\kmone}) -\beta x_{\kmone} - u_\star^i \|^2 \right] +   \frac{2}{n}\sum_{i=1}^n\frac{1}{q_i n}\E\left[\|z_\kmone^i - u_\star^i  \|^2\right] + A_k \\\nonumber
& \leq \frac{2}{n}\sum_{i=1}^n\frac{1}{q_i n}\E\left[\|\nabla f_{i}(x_{\kmone})\!-\! \nabla f_i(x^\star) \!-\!\beta (x_{\kmone}\!-\!x^\star)\|^2 \right] \!+\!   \frac{2}{n}\sum_{i=1}^n\frac{1}{q_i n}\E\left[\|u_\kmone^i \!-\! u_\star^i\|^2\right] + 3A_k \\\nonumber
& \leq \frac{4}{n}\sum_{i=1}^n\frac{L_i}{q_i n}\E\left[f_{i}(x_{\kmone})\!-\! f_i(x^\star) \!-\! \nabla f_i(x^\star)^\top (x_\kmone \!-\! x^\star) \right] \!+\!   \frac{2}{n}\sum_{i=1}^n\frac{1}{q_i n}\E\left[\|u_\kmone^i \!-\! u_\star^i\|^2\right] \!+\! 3 A_k \\
& \leq {4L_Q}\E\left[f(x_{\kmone})- f(x^\star) - \nabla f(x^\star)^\top (x_\kmone - x^\star) \right] +   \frac{2}{n}\sum_{i=1}^n\frac{1}{q_i n}\E\left[\|u_\kmone^i - u_\star^i\|^2\right] + 3 A_k,
\label{eq:aux1}
\end{align}
where the first inequality uses the relation $\E[\|X- \E[X]\|^2] \leq \E[\|X\|^2]$ for all random variable $X$, taking here expectation with respect to the index $i_k \sim Q$ and conditioning on~$\Fcal_\kmone$; the second inequality uses the relation $\|a+b\|^2 \leq 2\|a\|^2 + 2\|b\|^2$; the last inequality uses Lemma~\ref{lemma:useful}.

We have now two possibilities to control the quantity $A_k$ related to~$\zeta_k$. First, we may simply upper bound it as follows
\eq{
   A_k= \E\left[ \frac{1}{(q_{i_k} n)^2} \|\zeta_k\|^2 \right] \le \rho_Q\tilde{\sigma}^2.
}
Then, since $x^\star$ minimizes $F$, we have $0 \in \nabla f(x^\star) + \partial \psi(x^\star)$ and thus $-\nabla f(x^\star)$ is a subgradient in $\partial \psi(x^\star)$. By using as well the convexity inequality $\psi(x) \geq \psi(x^\star) - \nabla f(x^\star)^\top(x-x^\star)$, we have
\begin{equation}
f(x_{\kmone})- f(x^\star) - \nabla f(x^\star)^\top (x_\kmone - x^\star)
\leq  F(x_\kmone) - F^\star, 
\label{eq:reduc_variant}
\end{equation}
leading finally to~(\ref{eq:var2}).

The second possibility is to relate $A_k$ to $\tilde{\sigma}_\star^2$, under
   the assumption that each $f_i$ may be written as $f_i(x)=\E_{\xi}\brb{\tilde{f}_i(x,\xi)},i\in[1,\dots,n]$ with $\tilde{f}_i(.,\xi)$ $L_i$-smooth with $L_i \geq \mu$ for all~$\xi$.
Then, 
\eqm{
    \E&\left[\dr{1}{(q_{i_k} n)^2} \n{\zeta_k}
    \right] = 
    \E\brb{ 
        \dr{1}{(q_{i_k} n)^2} 
        \n{
            \tildenabla f_{i_k}(\xkm) -
            \nabla f_{i_k}(\xkm)
        }
    }
    \car\quad\quad=\!
    \E\left[
    \dr{1}{(q_{i_k} n)^2} \!
    \n{
        \tilde{\nb} f_{i_k}(\xkm) \!-\! 
        \tilde{\nb} f_{i_k}(\xo) \!+\! 
        \tilde{\nb} f_{i_k}(\xo) \!-\!
        \nb f_{i_k}(\xo) \!+\!
        \nb f_{i_k}(\xo) \!-\!
        \nb f_{i_k}(\xkm)}
    \right]
    \car\quad\quad\le
    \E\left[
    \dr{1}{(q_{i_k} n)^2}
    \brb{ \n{
        \tilde{\nb} f_{i_k}(\xkm) - 
        \tilde{\nb} f_{i_k}(\xo) +
        \tilde{\nb} f_{i_k}(\xo) -
        \nb f_{i_k}(\xo)
     }}\right]
    \car\quad\quad\le
    2\E\left[
    \dr{1}{(q_{i_k} n)^2}
    \brb{ 
    \n{
        \tilde{\nb} f_{i_k}(\xkm) - 
        \tilde{\nb} f_{i_k}(\xo)
    } + 
    \n{
        \tilde{\nb} f_{i_k}(\xo) -
        \nb f_{i_k}(\xo)
     }}\right]
    \car\quad\quad\le
    4\E\left[
    \dr{L_{i_k}}{(q_{i_k} n)^2}
    \brc{
        \fer{f_{i_k}}{\xkm} - 
        \bra{\nb f_{i_k}(\xo), \xkm-\xo}
     }\right] + 
    2\E\brb{
        \dr1{(q_{i_k} n)^2}
        \tilde{\sigma}_{i_k,\star}^2
    }
    \car\quad\quad\le
    4L_Q\E\left[
    \dr{1}{q_{i_k} n}
    \brc{
        \fer{f_{i_k}}{\xkm} - 
        \bra{\nb f_{i_k}(\xo), \xkm-\xo}
     }\right] + 
    2\rho_Q\tilde{\sigma}_{\star}^2
    \car\quad\quad=
    4L_Q \brc{
        \fer{f}{\xkm} - 
        \bra{\nb f(\xo), \xkm-\xo}
    } + 
    2\rho_Q\tilde{\sigma}_{\star}^2,
    \label{app.tsi.exp}
}
where we use the relation $\E[\|X- \E[X]\|^2] \leq \E[\|X\|^2]$ for the first inequality, the well-known inequality for a convex norm~$\n{a+b}\le 2\n{a}+2\n{b}$ for the second inequality and the definition~$\tilde{\sigma}_\star = \average \tilde{\sigma}_{i,\star}^2$.

Then, we may combine~(\ref{app.tsi.exp}) with~(\ref{eq:aux1}) and
use~(\ref{eq:reduc_variant}) to obtain~(\ref{eq:var3}).

\end{proof}

\subsection{Proof of Proposition~\ref{thm:lyapunov}}
\label{appendix:prop.lyapunov}
\begin{proof}
To make the notation more compact, we call
\begin{displaymath}
F_k = \E[F(x_k)-F^\star],\qquad D_k = \E[d_k(x^\star)-d_k^\star] \qquad \text{and}~~~~
C_k = \E\left[\frac{1}{n} \sum_{i=1}^n \frac{1}{q_i n} \| u^i_k - u^i_\star\|^2\right].
\end{displaymath}
Then, according to~Proposition~\ref{prop:nonu}, we have
\begin{displaymath}
\omega_k^2 \leq 4 L_Q F_\kmone + 2 C_\kmone + {3 \rho_Q \tilde{\sigma}^2},
\end{displaymath}
and according to Proposition~\ref{prop:keyprop},
\begin{equation}
\delta_k F_k + D_k \leq (1-\delta_k) D_\kmone  + 4 L_Q \eta_k \delta_k F_\kmone + 2 \eta_k \delta_k C_\kmone + {3\rho_Q \eta_k \delta_k \tilde{\sigma}^2}. \label{eq:key}
\end{equation}
Then, we note that both for the SVRG and SAGA/MISO/SDCA strategies, we have (with $\beta=0$ for SVRG),
\begin{displaymath}
\E[\| u_k^i - u^i_\star\|^2] = \left(1 - \frac{1}{n}\right)\E[\| u^i_\kmone - u^i_\star\|^2] + \frac{1}{n}\E \|\nabla f_i(x_k) - \nabla f_i(x^\star) + \beta(x_k - x^\star)\|^2.
\end{displaymath}
By taking a weighted average, this yields
\begin{equation*}
\begin{split}
C_k & \leq \left(1 - \frac{1}{n}\right)C_\kmone + \frac{1}{n^2}\sum_{i=1}^n\frac{1}{q_i n}\E\left[\|\nabla f_{i}(x_{k})- \nabla f_i(x^\star) -\beta (x_{k}-x^\star)\|^2 \right] \\
& \leq \left(1 - \frac{1}{n}\right)C_\kmone + \frac{1}{n^2}\sum_{i=1}^n\frac{2L_i}{q_i n}\E\left[f_i(x_k)- f_i(x^\star)-\nabla f_i(x^\star)^\top(x_k-x^\star) \right] \\
& \leq \left(1 - \frac{1}{n}\right)C_\kmone + \frac{2 L_Q F_k}{n},
\end{split}
\end{equation*}
where the second inequality comes from Lemma~\ref{lemma:useful} and the last one uses similar arguments as in the proof of Proposition~\ref{prop:nonu}.
Then, we add a quantity $\beta_k C_k$ on both sides of the relation~(\ref{eq:key}) with some $\beta_k > 0$ that we will specify later:
\begin{multline*}
\left(\delta_k-\beta_k \frac{2L_Q}{n}\right) F_k + D_k + \beta_k C_k \\ \leq (1-\delta_k) D_\kmone + \left(\beta_k\left(1 - \frac{1}{n}\right)+2 \eta_k \delta_k \right) C_\kmone + 4 L_Q \eta_k \delta_k F_\kmone  + {3\rho_Q \eta_k \delta_k\tilde{\sigma}^2},
\end{multline*}
and then choose  $\frac{\beta_k}{n} = \frac{5}{2} \eta_k \delta_k $, which yields
\begin{equation*}
\delta_k\left(1-5 L_Q \eta_k\right) F_k + D_k + \beta_k C_k \leq (1-\delta_k) D_\kmone + \beta_k\left(1 - \frac{1}{5n}\right) C_\kmone + 4 L_Q \eta_k \delta_k F_\kmone  + {3 \rho_Q \eta_k \delta_k\tilde{\sigma}^2}. %\label{eq:key}
\end{equation*}
Remember that $\tau_k = \min\left(\delta_k, \frac{1}{5n}\right)$, notice that the sequences $(\beta_k)_{k \geq 0}, (\eta_k)_{k \geq 0}$ and
$(\delta_k)_{k \geq 0}$ are non-increasing and note that ${4} \leq {5}(1-\frac{1}{5n})$ for all $n \geq 1$. Then,
\begin{multline*}
\delta_k\left(1-10 L_Q \eta_k\right) F_k + \underbrace{5L_Q \eta_k \delta_k +  D_k + \beta_k C_k}_{T_k} \\ \leq (1-\tau_k) \left(D_\kmone + \beta_\kmone C_\kmone + 5 L_Q \eta_\kmone \delta_\kmone F_\kmone\right)  + {3 \rho_Q \eta_k \delta_k\tilde{\sigma}^2},
\end{multline*}
which immediately yields~(\ref{eq:aux2}) with the appropriate definition of $T_k$, and by noting that $(1-10 L_Q\eta_k) \geq \frac{1}{6}$.
\end{proof}

\subsection{Proof of Theorem~\ref{thm:svrg}}
\label{appendix:thm:svrg}
\begin{proof}
The first part of the theorem is a direct application of Lemma~\ref{lemma:averaging} to
Proposition~\ref{thm:lyapunov}, by noting that~(\ref{eq:av1})
holds---when  replacing the notation $\delta_t$ by $\tau_t$
in~(\ref{eq:av1})---since for a fixed number of iterations~$K$, we have the relation
$\frac{\tau_k \delta_K}{6 \tau_K}\E[F(x_k)-F^\star] + T_k  \leq \left( 1 -
\tau_k \right)T_\kmone  + {3 \rho_Q \eta_k \delta_k\tilde{\sigma}^2}$ for all $k \leq K$.
Indeed, $\delta_k = \frac{\tau_k \delta_k}{\tau_k} \geq \frac{\tau_k \delta_K}{\tau_K}$ since the ratio $\delta_t/\tau_t$ is non-increasing.
   Then, we may now prove~(\ref{eq:T0}):
\begin{displaymath}
   \begin{split}
      T_0 & = {5} L_Q\eta_0 \delta_0 (F(x_0)-F^\star) + d_0(x^\star)-d_0^\star + \frac{5 \eta_0 \delta_0}{2}\frac{1}{n} \sum_{i=1}^n \frac{1}{q_i n} \| u^i_0 - u^i_\star\|^2 \\
       & \leq {5} L_Q\eta_0 \delta_0 (F(x_0)-F^\star) + d_0(x^\star)-d_0^\star \\
       & \qquad \qquad + \frac{5 \eta_0 \delta_0}{2}\frac{1}{n} \sum_{i=1}^n \frac{2 L_i}{q_i n} (f_i(x_0) - f_i(x^\star) - \nabla f_i(x^\star)^\top( x_0-x^\star))  \\
       & \leq {5} L_Q\eta_0 \delta_0 (F(x_0)-F^\star) + d_0(x^\star)-d_0^\star + {5 \eta_0 \delta_0}{L_Q} (f(x_0) - f(x^\star) - \nabla f(x^\star)^\top( x_0-x^\star))  \\
       & \leq {10} L_Q\eta_0 \delta_0 (F(x_0)-F^\star) + d_0(x^\star)-d_0^\star,  
   \end{split}
\end{displaymath}
   where the first inequality uses Lemma~\ref{lemma:useful}, and the second one uses the definition of $L_Q$, whereas the last one uses~(\ref{eq:reduc_variant}).
\end{proof}

\subsection{Proof of Corollary~\ref{corollary:svrg0}}
\label{subsec.corollary:svrg0}
\begin{proof}
First, notice that $\delta_k = \eta_k \gamma_k = \frac{\mu}{12 L_Q}$ and
that $\alpha=\frac{6 \tau_k}{\delta_k}$. Then, we apply Theorem~\ref{thm:svrg} and obtain
\begin{displaymath}
\begin{split}
   \E\left[F(\hat{x}_k)-F^\star  + \alpha T_k\right] & \leq \Theta_k \left( F(\hat{x}_0)-F^\star + \alpha T_0 + \frac{18 \rho_Q \tau_k \tilde{\sigma}^2}{\delta_k} \sum_{t=1}^k \frac{\eta_t \delta_t}{\Theta_t}     \right) \\
   & = \Theta_k \left( F(\hat{x}_0)-F^\star + \alpha T_0 + \frac{3 \rho_Q \tilde{\sigma}^2}{2L_Q} \sum_{t=1}^k \frac{\tau_t}{\Theta_t}     \right) \\
   & \leq \Theta_k \left( F(\hat{x}_0)-F^\star + \alpha T_0 \right)  + \frac{3 \rho_Q \tilde{\sigma}^2}{2L_Q}. \\
\end{split}
\end{displaymath}
% Then,
%    according to~(\ref{eq:T0}),
%  \begin{displaymath}
%     T_0 \leq \frac{5\delta_0}{6} (F(x_0)-F^\star) + \frac{\mu}{2}\|x_0-x^\star\|^2.
%  \end{displaymath}
%    Then, we immediately obtain the first line of~(\ref{eq:svrg_constant}). Then, we conclude by noting that $5 \tau \leq 1$, and that $\alpha \leq 3 \mu$ and we use Lemma~\ref{lemma:second}.
\end{proof}

% \subsection{Proof of Corollary~\ref{corollary:svrg1}}
% \label{subsec.corollary:svrg1}
% \begin{proof}
% We follow similar steps as in the proof of Corollary~\ref{corollary:svrg0}.
% We note that with the choice of $\eta_k$, we have $\delta_k = \tau_k$ for all $k$. Then, we apply Theorem~\ref{thm:svrg} and obtain
% \begin{displaymath}
% \begin{split}
% \E\left[F(\hat{x}_k)-F^\star  + {3\mu} \|x_k-x^\star\|^2\right] & \leq \Theta_k \left( F(x_0)-F^\star + 6 T_0 + {18 \rho_Q \tilde{\sigma}^2 \eta} \sum_{t=1}^k \frac{\tau_t}{\Theta_t}     \right)\\
% & \leq \Theta_k \left( F(x_0)-F^\star + {6} T_0\right) + {18 \rho_Q \tilde{\sigma}^2 \eta}.  \\
% \end{split}
% \end{displaymath}
% Then, we use the same upper-bound on $T_0$ as in the proof of
% Corollary~\ref{corollary:svrg0}, giving us $6 T_0 \leq 5 \delta_0
% (F(x_0)-F^\star) + 3 \mu \|x_0-x^\star\|^2 \leq 7 (F(x_0)-F^\star)$ since
% $\delta_0 = \mu \eta \leq 1/5$,
% which is sufficient to conclude.
% \end{proof}

\subsection{Proof of Corollary~\ref{corollary:svrg2}}
\label{appendix:cor.svrg2}
\begin{proof}
Since the convergence rate~(\ref{eq:svrg_constant2}) applies for the first stage with a constant step size, the number of iterations to ensure the condition $\E[F(\hat{x}_k)-F^\star + 6 T_k] \leq 24\eta\rho_Q \tilde{\sigma}^2$ is upper bounded by $K$ with
\begin{displaymath}
   K = O\left( \left(n + \frac{L_Q}{\mu}\right) \log\left(\frac{F({x}_0)-F^\star + d_0(x^\star)-d_0^\star}{\varepsilon} \right)  \right),
\end{displaymath}
   when using the upper-bound~(\ref{eq:T0}) on~$T_0$. 
   Then, we restart the optimization procedure, using $x_0'=x_K$ and $\hat{x}_0' = \hat{x}'_K$, assuming from now on that $\E[F(\hat{x}_0')-F^\star + 6 T_0'] \leq 24\eta \rho_Q \tilde{\sigma}^2$, with decreasing step sizes
$\eta_k = \min\left( \frac{2}{\mu(k+2)}, {\eta} \right)$. 
Then, since $\delta_k = \mu \eta_k \leq \frac{1}{5n}$, we have that $\tau_k = \delta_k$ for all $k$, and
Theorem~\ref{thm:svrg} gives us---note that here $\Gamma_k=\Theta_k$---
\begin{displaymath}
   \E\left[F(\hat{x}_k')-F^\star\right] \leq \Gamma_k \left( F(\hat{x}_0')-F^\star + {6}T_0' + {18 \rho_Q \tilde{\sigma}^2} \sum_{t=1}^k \frac{\eta_t \delta_t}{\Gamma_t}     \right)~~~\text{with}~~~ \Gamma_k = \prod_{t=1}^k(1-\delta_t).
\end{displaymath}
Then, after taking the expectation with respect to the output of the first stage,
\begin{displaymath}
\E\left[F(\hat{x}_k')-F^\star\right] 
 \leq \Gamma_k \left( { 24 \rho_Q \eta \tilde{\sigma}^2} + {18 \rho_Q\tilde{\sigma}^2} \sum_{t=1}^k \frac{\eta_t \delta_t}{\Gamma_t}     \right).
\end{displaymath}
Denote now by $k_0$ the largest index such that $\frac{2}{\mu(k_0+2)} \geq {\eta} $ and thus $k_0 = \lceil 2/(\mu {\eta}) - 2 \rceil$.
Then, according to Lemma~\ref{lemma:step}, for $k \geq k_0$,
\begin{displaymath}
\begin{split}
  \E\left[F(\hat{x}_k)-F^\star\right] & \leq \Gamma_k \left( {24 \rho_Q \eta\tilde{\sigma}^2} + {18 \rho_Q {\eta} \tilde{\sigma}^2 }\sum_{t=1}^{k_0-1} \frac{\delta_t}{\Gamma_t} + {18 \rho_Q \tilde{\sigma}^2} \sum_{t=k_0}^{k} \frac{2\delta_t}{\mu \Gamma_t (t+2)}\right) \\
& \leq \frac{k_0(k_0+1)}{(k+1)(k+2)} \left( \Gamma_{k_0-1}{24 \rho_Q {\eta} \tilde{\sigma}^2} + {18 {\eta} \rho_Q \tilde{\sigma}^2 } \Gamma_{k_0-1}\sum_{t=1}^{k_0-1} \frac{\delta_t}{\Gamma_t}\right) \\
   & \qquad \qquad + {36\rho_Q \tilde{\sigma}^2} \sum_{t=k_0}^{k} \frac{\delta_t \Gamma_k}{\mu \Gamma_t (t+2)} \\
& \leq \frac{k_0(k_0+1)}{(k+1)(k+2)} {24 {\eta} \rho_Q \tilde{\sigma}^2} + {36 \rho_Q \tilde{\sigma}^2} \sum_{t=k_0}^{k} \frac{(t+1)(t+2)}{\mu (k+1)(k+2) (t+2)^2} \\
& \leq \frac{k_0{\eta}}{k+2} {24 \rho_Q \tilde{\sigma}^2} + \frac{36 \rho_Q \tilde{\sigma}^2}{\mu (k+2)} = O\left( \frac{\rho_Q \tilde{\sigma}^2}{\mu k}\right),\\
\end{split}
\end{displaymath}
which gives the desired complexity.

\end{proof}

\subsection{Proof of Corollary~\ref{thm:svrg.mu0}}
\label{app.thm:svrg.mu0}
\begin{proof}
Let us call $x_0'$ the point obtained by running one iteration
   of~(\ref{eq:opt1}) with step-size $\eta \leq \frac{1}{12L_Q}$ and gradient
estimator~$(1/n)\sum_{i=1}^n \tildenabla f_i(\xz)$, whose variance is $\tilde{\sigma}^2/n$.
Then, since $\delta_1=\Gamma_1=1/2$,  according to Theorem~\ref{thm:conv}, we have
\begin{equation}
   \E\left[F(x_0')-F^\star  + \frac{1}{2\eta}\|x_0' - x^\star\|^2\right] \leq \frac{1}{2\eta}\|x_0 - x^\star\|^2 +  \frac{\eta\tilde{\sigma}^2}{n}.  \label{eq:svrg.mu0.aux}
\end{equation}
   Then, we consider the main run of the algorithm, and apply Theorem~\ref{thm:svrg}, replacing $x_0$ by $x_0'$. With the chosen setup, we have $\delta_k = \frac{1}{k+1}$ and since $K \geq 5n$, we have $\delta_K=\tau_K$, such that~(\ref{eq:lyapunov.svrg}) becomes
\begin{equation*}
\E\left[F(\hat{x}_K)-F^\star\right] \leq \Theta_K \left( F(x_0')-F^\star + 6 T_0 + {18 \rho_Q\eta\tilde{\sigma}^2} \sum_{t=1}^k \frac{\delta_t}{\Theta_t}\right),
\end{equation*}
   and from~(\ref{eq:T0}), we have
   $$T_0 \leq {10} L_Q\eta (F(x_0')-F^\star) + \frac{1}{2\eta}\|x_0'-x^\star\|^2 \leq \frac{5}{6} (F(x_0')-F^\star) + \frac{1}{2\eta}\|x_0'-x^\star\|^2,$$
   which yields, combined with~(\ref{eq:svrg.mu0.aux}),
\begin{equation*}
   \E[F(x_0')-F^\star + 6 T_0] \leq 6\E\left[F(x_0')-F^\star + \frac{1}{2\eta}\|x_0'-x^\star\|^2 \right] \leq \frac{3}{\eta}\|x_0 - x^\star\|^2 +  \frac{6\eta\tilde{\sigma}^2}{n}.
\end{equation*}
   Note that Lemma~\ref{lemma:step} gives us that $\Theta_k =  (1-1/5n)^{5n-1}\frac{5n}{k+1} \leq \frac{3n}{k+1}$ for $k \geq 5n$ and since $1+\sum_{t=1}^K \frac{\tau_t}{\Theta_t} = \frac{1}{\Theta_K}$ according to Lemma~\ref{lemma:simple}, 
\begin{equation*}
   \begin{split}
      \E\left[F(\hat{x}_K)-F^\star\right] & \leq \Theta_K \left(  \frac{3}{\eta}\|x_0 - x^\star\|^2 +  \frac{6\eta\tilde{\sigma}^2}{n}+ {18 \rho_Q\eta\tilde{\sigma}^2} \sum_{t=1}^K \frac{ \delta_t}{\Theta_t}\right), \\
      & \leq \frac{9 n}{\eta (K+1)}\|x_0 - x^\star\|^2 + 6 \eta \tilde{\sigma}^2 \rho_Q \Theta_K\left( \frac{1}{n} + 3\sum_{t=1}^K \frac{ \tau_t}{\Theta_t} + 3\sum_{t=1}^{5n-1} \frac{ \delta_t}{\Theta_t} \right) \\
      & \leq \frac{9 n}{\eta (K+1)}\|x_0 - x^\star\|^2 + 6 \eta \tilde{\sigma}^2 \rho_Q \left( \frac{\Theta_K}{n} + 3(1-\Theta_K) + \frac{15 n}{K+1}\sum_{t=1}^{5n-1}  \delta_t \right) \\
      & \leq \frac{9 n}{\eta (K+1)}\|x_0 - x^\star\|^2 + 18 \eta \tilde{\sigma}^2 \rho_Q \left( 1 + \frac{5n}{K+1}\log(5n) \right) \\
      & \leq \frac{9 n}{\eta (K+1)}\|x_0 - x^\star\|^2 + 36 \eta \tilde{\sigma}^2 \rho_Q. 
   \end{split}
\end{equation*}
It remains to optimize it over~$\eta$ to get the left side of~(\ref{eq:cor.lyapunov.svrg}).
\end{proof}

\subsection{Proof of Lemma~\ref{lemma:acc}}
\label{appendix:lemma.acc}
\begin{proof}
Let us assume that the relation $y_\kmone = (1-\theta_\kmone) x_\kmone + \theta_\kmone v_\kmone$ holds and let us show that it also holds for $y_{k}$.
Since the estimate sequences $d_k$ are quadratic functions, we have
\begin{displaymath}
\begin{split}
v_k & = (1-\delta_k)\frac{\gamma_\kmone}{\gamma_k} v_{\kmone} + \frac{\mu \delta_k}{\gamma_k} y_\kmone - \frac{\delta_k}{\gamma_k}(g_k + \psi'(x_k)) \\
& = (1-\delta_k)\frac{\gamma_\kmone}{\gamma_k} v_{\kmone} + \frac{\mu \delta_k}{\gamma_k} y_\kmone - \frac{\delta_k}{\gamma_k\eta_k}(y_\kmone - x_k) \\
& = (1-\delta_k)\frac{\gamma_\kmone}{\gamma_k\theta_\kmone} \left(y_\kmone - (1-\theta_\kmone) x_\kmone  \right) + \frac{\mu \delta_k}{\gamma_k} y_\kmone - \frac{\delta_k}{\gamma_k\eta_k}(y_\kmone - x_k) \\
& = (1-\delta_k)\frac{\gamma_\kmone}{\gamma_k\theta_\kmone} \left(y_\kmone - (1-\theta_\kmone) x_\kmone  \right) + \frac{\mu \delta_k}{\gamma_k} y_\kmone - \frac{1}{\delta_k}(y_\kmone - x_k) \\
& = \left(\frac{(1-\delta_k)\gamma_\kmone}{\gamma_k\theta_\kmone}  + \frac{\mu \delta_k}{\gamma_k} - \frac{1}{\delta_k} \right)y_\kmone   -  \frac{(1-\delta_k)\gamma_\kmone(1-\theta_\kmone)}{\gamma_k\theta_\kmone} x_\kmone   + \frac{1}{\delta_k}x_k \\
& = \left( 1 + \frac{(1-\delta_k)\gamma_\kmone (1-\theta_\kmone)}{\gamma_k\theta_\kmone} - \frac{1}{\delta_k} \right)y_\kmone   -  \frac{(1-\delta_k)\gamma_\kmone(1-\theta_\kmone)}{\gamma_k\theta_\kmone} x_\kmone   + \frac{1}{\delta_k}x_k.
\end{split}
\end{displaymath}
Then note that $\theta_\kmone = \frac{\delta_k \gamma_\kmone}{\gamma_\kmone + \delta_k \mu}$ and thus, $\frac{\gamma_\kmone (1-\theta_\kmone)}{\gamma_k\theta_\kmone} = \frac{1}{\delta_k}$, and
\begin{displaymath}
\begin{split}
v_k & = x_\kmone +
\frac{1}{\delta_k}(x_k-x_\kmone).
\end{split}
\end{displaymath}
Then, we note that $x_k - x_\kmone = \frac{\delta_k}{1-\delta_k}(v_k  - x_k)$ and we are left with
\begin{displaymath}
y_k = x_k + \beta_k(x_k-x_\kmone) = \frac{\beta_k \delta_k}{1-\delta_k} v_k  +   \left(  1-\frac{\beta_k \delta_k}{1-\delta_k}\right) x_k.
\end{displaymath}
Then, it is easy to show that
\begin{displaymath}
\beta_k = \frac{(1-\delta_k)\delta_{k+1} \gamma_k}{\delta_k( \gamma_{k+1} + \delta_{k+1}\gamma_k)} = \frac{(1-\delta_k)\delta_{k+1} \gamma_k}{\delta_k( \gamma_{k} + \delta_{k+1}\mu)} = \frac{(1-\delta_k)\theta_k}{\delta_k} ,
\end{displaymath}
which allows us to conclude that  $y_k = (1-\theta_k) x_k + \theta_k v_k$ since the relation holds trivially for $k=0$.
\end{proof}

\subsection{Proof of Lemma~\ref{lemma:key_acc}}
\label{appendix:lemma.key_acc}

\begin{proof}
\begin{align*}
\E[F(x_k)] & = \E[f(x_k) + \psi(x_k)] \\
& \leq \E\left[ f(y_{k-1}) + \nabla f(y_{k-1})^\top (x_k - y_{k-1}) + \frac{L}{2}\|x_k-y_{k-1}\|^2 + \psi(x_k)\right] \\
& = \E\left[A_k\right]  +  \E\left[ (\nabla f(y_{k-1})-g_k)^\top (x_k - y_{k-1}) \right] \\
& = \E\left[A_k\right]  +  \E\left[ (\nabla f(y_{k-1})-g_k)^\top x_k \right] \\
& = \E\left[A_k\right]  +  \E\left[ (\nabla f(y_{k-1})-g_k)^\top (x_k - w_k) \right] \\
& \leq \E\left[A_k\right]  +  \E\left[ \|\nabla f(y_{k-1})-g_k \|\| x_k - w_k \| \right] \\
& \leq \E\left[A_k\right]  +  \E\left[ \eta_k\|\nabla f(y_{k-1})-g_k \|^2\right] \\
& \leq \E\left[A_k \right] + \eta_k \omega_k^2,
\end{align*}
where $A_k = f(y_{k-1}) + g_k^\top (x_k - y_{k-1}) + \frac{L}{2}\|x_k-y_{k-1}\|^2 + \psi(x_k)$ and $w_k = \Prox_{\eta_k\psi}[y_{k-1} - \eta_k \nabla f(y_{k-1})]$. The first inequality is due to the $L$-smoothness of $f$ (Lemma~\ref{lemma:upper}); then, the next three relations exploit the fact that $\E[(\nabla f(y_{k-1})-g_k)^\top z] = 0$ for all $z$ that is deterministic with respect to the algebra~$\mathcal{F}_{k-1}$; the third inequality uses the non-expansiveness of the proximal operator.
Using the definition~(\ref{eq:lk}) for~$l_k$, we proceed with
\begin{displaymath}
\begin{split}
\E[F(x_k)] & \leq \E\left[f(y_\kmone) + g_k^\top (x_k - y_\kmone) + \frac{L}{2}\|x_k-y_\kmone\|^2 + \psi(x_k)\right]  +  \eta_k \omega_k^2, \\
& = \E\left[l_k(y_\kmone) + \tilde{g}_k^\top (x_k - y_\kmone) + \frac{L}{2}\|x_k-y_\kmone\|^2\right]  +  \eta_k\omega_k^2, \\
& \leq \E\left[l_k(y_\kmone)\right] + \left(\frac{L\eta_k^2}{2} - \eta_k\right)\E\left[\|\tilde{g}_k\|^2\right]  +  \eta_k\omega_k^2, \\
\end{split}
\end{displaymath}
where we use the fact that $x_k = y_\kmone - \eta_k \tilde{g}_k$ and $\tilde{g}_k=g_k + \psi'(x_k)$.
\end{proof}

\subsection{Proof of Corollary~\ref{corollary:acc_sgd2}}\label{appendix:corollary:acc_sgd2}
\begin{proof}
The proof is similar to that of Corollary~\ref{corollary:sgd} for unaccelerated SGD.
The first stage with constant step-size requires $O\left( \sqrt{\frac{L}{\mu}} \log\left(\frac{F(x_0)- F^\star}{\varepsilon}\right)\right)$ iterations. Then, we restart the optimization
procedure, and assume that $\E\left[F(x_0)-F^\star + \frac{\mu}{2}\|x^\star-x_0\|^2\right] \leq \frac{2\sigma^2}{\sqrt{\mu L}}$.
With the choice of parameters, we have $\gamma_k = \mu$ and $\delta_k = \sqrt{\gamma_k \eta_k} = \min\left( \sqrt{ \frac{\mu}{L} }, \frac{2}{k+2} \right)$. We may then apply Theorem~\ref{thm:acc_sgd} where the value of $\Gamma_k$ is given by Lemma~\ref{lemma:step}. This yields for $k \geq k_0 = \left\lceil 2\sqrt{\frac{L}{\mu}} - 2 \right\rceil$,
\begin{align*}
\E[F({x}_k)\!-\! F^\star]  & \leq \Gamma_k\left( \E\left[F(x_0)-F^\star + \frac{\mu}{2}\|x_0-x^\star\|^2\right] + \sigma^2 \sum_{t=1}^k \frac{\eta_t}{\Gamma_t} \right) \\
& \leq  \Gamma_k \left( \frac{2 \sigma^2}{\sqrt{\mu L}} + \frac{\sigma^2}{L} \sum_{t=1}^{k_0-1} \frac{1}{\Gamma_t} + \sigma^2 \sum_{t=k_0}^{k} \frac{4}{\Gamma_t \mu(t+2)^2}\right) \\
& =  \frac{k_0(k_0+1)}{(k+1)(k+2)} \left( \Gamma_{k_0-1}\frac{2 \sigma^2}{\sqrt{ \mu L}} + \frac{\sigma^2}{L} \Gamma_{k_0-1}\!\sum_{t=1}^{k_0-1} \frac{1}{\Gamma_t}\right) + \sigma^2 \sum_{t=k_0}^{k} \!\frac{4\Gamma_k}{\Gamma_t \mu(t+2)^2} \\
& =  \frac{k_0(k_0+1)}{(k+1)(k+2)} \left( \Gamma_{k_0-1}\frac{2 \sigma^2}{\sqrt{ \mu L}} + (1-\Gamma_{k_0 -1 })\frac{\sigma^2}{\sqrt{\mu L}}\right) + \sigma^2 \sum_{t=k_0}^{k} \frac{4 \Gamma_k}{\Gamma_t \mu(t+2)^2} \\
& \leq  \frac{k_0(k_0+1)}{(k+1)(k+2)} \frac{2 \sigma^2}{\sqrt{\mu L}} +  \sigma^2 \frac{1}{(k+1)(k+2)} \left(\sum_{t=k_0+1}^{k} \frac{4 (t+1)(t+2)}{\mu(t+2)^2}\right) \\
& \leq  \frac{k_0}{(k+1)(k+2)} \frac{4 \sigma^2}{\mu} +   \frac{4\sigma^2}{\mu(k+2)}  \leq  \frac{8\sigma^2}{\mu(k+2)},
\end{align*}
where we use Lemmas~\ref{lemma:step} and~\ref{lemma:simple}.
This leads to the desired iteration complexity.
\end{proof}

\subsection{Proof of Corollary~\ref{corollary:acc_sgd3}}\label{appendix:corollary:acc_sgd3}
\begin{proof}
   Let us call $x_0'$ the point obtained by running on step of
   iteration~(\ref{eq:opt1}), which according to Theorem~\ref{thm:conv}
   satisfies, with $\gamma_0=1/\eta$,
   \begin{displaymath}
   \E\left[F(x_0')-F^\star  + \frac{1}{2\eta}\|x_0' - x^\star\|^2\right] \leq \frac{1}{2\eta}\|x_0 - x^\star\|^2 +  {\eta{\sigma}^2}.
   \end{displaymath}
Then, we note that
according to Lemma~\ref{eq:acc_rate_gamma}, we have
$$\Gamma_k \leq \frac{4}{\left(2 + k \sqrt{  \gamma_0\eta} \right)^2} \leq \frac{4}{\gamma_0 \eta  \left(1 + k\right)^2},$$
and we apply Theorem~\ref{thm:acc_sgd} to obtain the relation
\begin{displaymath}
\begin{split}
   \E[F({x}_K)-F^\star] & \leq \Gamma_K \E\left[ F(x_0')-F^\star + \frac{1}{2\eta}\|x_0'-x^\star\|^2 \right] + \sigma^2 \eta \Gamma_K \sum_{t=1}^K \frac{1}{\Gamma_t} \\
   & \leq \Gamma_K \left( \frac{\|x_0 - x^\star\|^2}{2\eta} + \eta \sigma^2  \right) + \sigma^2 \eta K \\
& \leq \frac{2}{(1+K)^2\eta}\|x_0 - x^\star\|^2 + \sigma^2 \eta (K+1). \\
\end{split}
\end{displaymath}
Optimizing with respect to $\eta$ under the constraint $\eta \leq 1/L$ gives~(\ref{eq:sgd_simple2}).
\end{proof}

\subsection{Proof of Proposition~\ref{prop:nonu2}}
\label{appendix:prop.nonu2}
\begin{proof}
\begin{equation*}
\begin{split}
\omega_k^2 & =  \E\left\|\frac{1}{q_{i_k}n} \left(\tildenabla f_{i_k}(y_{\kmone}) - \tildenabla f_{i_k}(\tilde{x}_\kmone) \right) + \tildenabla f(\tilde{x}_\kmone) - \nabla f(y_\kmone)\right\|^2 \\
& =  \E\left\|\frac{1}{q_{i_k}n} \left(\nabla f_{i_k}(y_{\kmone}) + \zeta_k - \zeta'_k - \nabla f_{i_k}(\tilde{x}_\kmone) \right) + \nabla f(\tilde{x}_\kmone) + \bar{\zeta}_\kmone - \nabla f(y_\kmone) \right\|^2, \\
& \leq  \E\left\|\frac{1}{q_{i_k}n} \left(\nabla f_{i_k}(y_{\kmone}) - \nabla f_{i_k}(\tilde{x}_\kmone) \right) + \nabla f(\tilde{x}_\kmone) + \bar{\zeta}_\kmone - \nabla f(y_\kmone) \right\|^2 + {2 \rho_Q \tilde{\sigma}^2}, \\
\end{split}
\end{equation*}
where $\zeta_k$ and $\zeta'_k$ are perturbations drawn at iteration $k$, and $\bar{\zeta}_\kmone$ was drawn last time $\tilde{x}_\kmone$ was updated.
Then, by noticing that for any deterministic quantity $Y$ and random variable $X$, we have $\E[\|X-\E[X] - Y\|^2] \leq \E[\|X\|^2] + \|Y\|^2$, taking expectation with respect to the index $i_k \sim Q$ and conditioning on~$\Fcal_\kmone$, we have
\begin{equation}
\begin{split}
\omega_k^2 & \leq  \E\left\|\frac{1}{q_{i_k}n} \left(\nabla f_{i_k}(y_{\kmone}) - \nabla f_{i_k}(\tilde{x}_\kmone) \right) \right\|^2 + \E[\|\bar{\zeta}_\kmone\|^2]  + {2 \rho_Q \tilde{\sigma}^2} \\
& \leq  \frac{1}{n}\sum_{i=1}^n \frac{1}{q_i n}\E\left\|\nabla f_{i}(y_{\kmone}) - \nabla f_{i}(\tilde{x}_\kmone)\right\|^2  + {3\rho_Q \tilde{\sigma}^2} \\
& \leq  \frac{1}{n}\sum_{i=1}^n \frac{2 L_i}{q_i n}\E\left[ f_{i}(\tilde{x}_\kmone) - f_{i}(y_{\kmone}) -  \nabla f_{i}(y_{\kmone})^\top (\tilde{x}_\kmone-y_\kmone)\right]  + {3 \rho_Q \tilde{\sigma}^2} \\
& \leq  \frac{1}{n}\sum_{i=1}^n {2 L_Q}\E\left[ f_{i}(\tilde{x}_\kmone) - f_{i}(y_{\kmone}) -  \nabla f_{i}(y_{\kmone})^\top (\tilde{x}_\kmone-y_\kmone)\right]  + {3 \rho_Q \tilde{\sigma}^2} \\
& =  {2 L_Q}\E\left[ f(\tilde{x}_\kmone) - f(y_{\kmone}) -  \nabla f(y_{\kmone})^\top (\tilde{x}_\kmone-y_\kmone)\right]  + {3 \rho_Q \tilde{\sigma}^2} \\
& =  {2 L_Q}\E\left[ f(\tilde{x}_\kmone) - f(y_{\kmone}) -  g_k^\top (\tilde{x}_\kmone-y_\kmone)\right]  + {3 \rho_Q \tilde{\sigma}^2},
\end{split} \label{eq:variance_svrg}
\end{equation}
where the second inequality uses the  upper-bound $\E[\|\bar{\zeta}\|^2] = \frac{\sigma^2}{n} \leq {\rho_Q \sigma^2}$, and the third one uses Theorem 2.1.5 in~\cite{nesterov}.
\end{proof}

\subsection{Proof of Lemma~\ref{lemma:key_acc_svrg}}\label{appendix:lemma:key_acc_svrg}
\begin{proof}
We can show that Lemma~\ref{lemma:key_acc} still holds and thus,
\begin{displaymath}
\begin{split}
\E[F(x_k)] & \leq
\E\left[l_k(y_\kmone) \right] + \left(\frac{L\eta_k^2}{2} - \eta_k\right)\E\left[\|\tilde{g}_k\|^2\right]  +  \eta_k\omega_k^2. \\
& \leq  \E\left[l_k(y_\kmone) + a_k f(\tilde{x}_\kmone) - a_k f(y_\kmone) + a_k g_k^\top (y_\kmone - \tilde{x}_\kmone)\right] \\
& \qquad\qquad\qquad\qquad + \E\left[\left(\frac{L\eta_k^2}{2} - \eta_k\right)\|\tilde{g}_k\|^2\right] + {3\rho_Q \eta_k\tilde{\sigma}^2}, \\
\end{split}
\end{displaymath}
Note also that
\begin{displaymath}
\begin{split}
l_k(y_\kmone) + f(\tilde{x}_\kmone) & - f(y_\kmone) = \psi(x_k) + \psi'(x_k)^\top(y_\kmone - x_k) + f(\tilde{x}_\kmone) \\
& \leq \psi(\tilde{x}_\kmone) - \psi'(x_k)^\top(\tilde{x}_\kmone - x_k) + \psi'(x_k)^\top(y_\kmone - x_k) + f(\tilde{x}_\kmone)  \\
& = F(\tilde{x}_\kmone) + \psi'(x_k)^\top(y_\kmone-\tilde{x}_\kmone).
\end{split}
\end{displaymath}
Therefore, by noting that $l_k(y_\kmone) + a_k f(\tilde{x}_\kmone) - a_k f(y_\kmone) \leq (1-a_k)l_k(y_\kmone) + a_k F(\tilde{x}_\kmone) + a_k \psi'(x_k)^\top(y_\kmone-\tilde{x}_\kmone)$,  we obtain the desired result.
\end{proof}

\subsection{Proof of Corollary~\ref{corollary:acc_svrg}}\label{appendix:corollary:acc_svrg}
\begin{proof}
The proof is similar to that of Corollary~\ref{corollary:acc_sgd2} for accelerated SGD.
The first stage with constant step-size $\eta$ requires $O\left( \left(n + \sqrt{\frac{nL_Q}{\mu}}\right) \log\left(\frac{F(x_0)- F^\star}{\varepsilon}\right)\right)$ iterations.
Then, we restart the optimization
procedure, and assume that $\E\left[F(x_0)-F^\star\right] \leq B$ with $B = 3\rho_Q\tilde{\sigma}^2 \sqrt{\eta/\mu n}$.

With the choice of parameters, we have $\gamma_k = \mu$ and $\delta_k = \sqrt{\frac{5\mu \eta_k}{3n}} = \min\left( \sqrt{\frac{5\mu \eta}{3n}}, \frac{2}{k+2} \right)$. We may then apply Theorem~\ref{thrm:acc_svrg} where the value of $\Gamma_k$ is given by Lemma~\ref{lemma:step}. This yields for $k \geq k_0 = \left\lceil \sqrt{\frac{12 n}{5 \mu \eta}} - 2 \right\rceil$,
\begin{equation*}
\begin{split}
\E[F({x}_k)- F^\star]  & \leq \Gamma_k\left( \E\left[F(x_0)-F^\star + \frac{\mu}{2}\|x_0-x^\star\|^2\right] + \frac{ 3 \rho_Q \tilde{\sigma}^2}{n} \sum_{t=1}^k \frac{\eta_t}{\Gamma_t} \right) \\
& \leq  \Gamma_k \left( 2 B + \frac{3 \rho_Q \tilde{\sigma}^2 \eta }{n} \sum_{t=1}^{k_0-1} \frac{1}{\Gamma_t} +  \frac{3 \rho_Q \tilde{\sigma}^2}{n}\sum_{t=k_0}^{k} \frac{12 n}{5\Gamma_t \mu(t+2)^2}\right) \\
& =  \frac{k_0(k_0+1)}{(k+1)(k+2)} \left( \Gamma_{k_0-1}2B + \frac{3 \rho_Q \tilde{\sigma}^2 \eta }{n} \sum_{t=1}^{k_0-1} \frac{\Gamma_{k_0-1}}{\Gamma_t}\right)\text{+}\frac{36 \rho_Q \tilde{\sigma}^2}{5\mu}\sum_{t=k_0}^{k} \frac{\Gamma_k}{\Gamma_t(t+2)^2} \\
& =  \frac{k_0(k_0+1)}{(k+1)(k+2)} \left( \Gamma_{k_0-1}2B + (1-\Gamma_{k_0 -1 }) \frac{3 \rho_Q \tilde{\sigma}^2 \eta }{n\delta_{k_0}}\right)\text{+}\frac{36 \rho_Q \tilde{\sigma}^2}{5\mu}\sum_{t=k_0}^{k} \frac{\Gamma_k}{\Gamma_t(t+2)^2} \\
& \leq  \frac{2 k_0(k_0+1)B}{(k+1)(k+2)}  +   \frac{8 \rho_Q \tilde{\sigma}^2}{\mu(k+1)(k+2)} \left(\sum_{t=k_0+1}^{k} \frac{(t+1)(t+2)}{(t+2)^2}\right) \\
& \leq  \frac{2 k_0 B}{k+2}  +   \frac{8 \rho_Q \tilde{\sigma}^2}{\mu(k+2)}, \\
\end{split}
\end{equation*}
where we use Lemmas~\ref{lemma:step} and~\ref{lemma:simple}. Then, note that $k_0 B \leq 6 \rho_Q \tilde{\sigma}^2/\mu$ and we obtain
the right  iteration complexity.
\end{proof}

\subsection{Proof of Corollary~\ref{corollary:acc_svrg_convex2}}\label{appendix:corollary:acc_svrg_convex2}
\begin{proof}
Let us call $x_0'$ the point obtained by running one iteration
   of~(\ref{eq:opt1}) with step-size $\eta \leq \frac{1}{3L_Q}$ and gradient
estimator~$(1/n)\sum_{i=1}^n \tildenabla f_i(\xz)$, whose variance is $\tilde{\sigma}^2/n$.
   Following the proof of Corollary~\ref{thm:svrg.mu0}, the relation~(\ref{eq:svrg.mu0.aux}) holds.
   Then, we consider the main run of the algorithm, and apply Theorem~\ref{thrm:acc_svrg}, replacing $x_0$ by $x_0'$, which yields, combined with~(\ref{eq:svrg.mu0.aux})
\begin{equation*}
   \begin{split}
      \E\left[F(x_k)-F^\star\right] & \leq \Gamma_k \left(F(x_0')-F^\star + \frac{1}{2\eta}\|x_0'-x^\star\|^2 + \frac{3\rho_Q\tilde{\sigma}^2 }{n}\sum_{t=1}^k \frac{\eta_t}{\Gamma_t} \right)  \\
       & \leq \Gamma_k \left(\frac{1}{2\eta}\|x_0'-x^\star\|^2 + \eta\frac{\tilde{\sigma}^2}{n} + \frac{3\rho_Q\tilde{\sigma}^2 }{n}\sum_{t=1}^k \frac{\eta_t}{\Gamma_t} \right). 
   \end{split}
\end{equation*}
   Then, we note that $\delta_k = \min\left( \sqrt{\frac{5 \Gamma_k}{3n}}  , \frac{1}{3n}\right)$ such that $\Gamma_k=\left(1-\frac{1}{3n} \right)^k$ for $k \leq k_0$, where $k_0$ is the index such that
$\left(1-\frac{1}{3n}\right)^{k_0+1} \leq \frac{1}{15n} < \left(1-\frac{1}{3n}\right)^{k_0}$,
   which gives us $(3n-1)\log(15n) \leq k_0 \leq 3n (\log (15n))$.
   For $k > k_0$, we are in a constant step size regime, and we may then use 
 Lemma~\ref{eq:acc_rate_gamma} to obtain
 \begin{displaymath}
    \Gamma_k = \Gamma_{k_0} \frac{4}{\left(2 + (k -k_0)\sqrt{\frac{5 \gamma_{k_0}\eta}{3 n}}\right)^2} \leq \Gamma_{k_0} \frac{4}{(k -k_0)^2{\frac{5 \Gamma_{k_0}}{3 n}}} \leq \frac{3n}{(k  -k_0)^2}.
 \end{displaymath}
   Then, noticing that $K \geq 2 k_0+1$, we have $K-k_0 \geq (K+1)/2$, and we conclude that
\begin{equation*}
   \E\left[F(x_K)-F^\star\right] \leq \frac{3n\|x_0'-x^\star\|^2}{2 \eta (K  - k_0)^2} + \frac{3\eta\rho_Q\tilde{\sigma}^2(K + 1)}{n} \leq \frac{6n\|x_0'-x^\star\|^2}{\eta (K+1)^2} + \frac{3\eta\rho_Q\tilde{\sigma}^2(K + 1)}{n}. 
\end{equation*}
   Then, it remains to optimize with respect to $\eta$, under the constraint $\eta \leq 1/(3L_Q)$, which provides~(\ref{eq:svrg.convex2}).

\end{proof}

\vskip 0.2in
%\bibliography{sample}

%\bibliographystyle{plainnat}
\bibliography{bib}

\end{document}